\documentclass{article}

\usepackage{arxiv}
\usepackage{cmap}
\usepackage[T1]{fontenc}
\usepackage[utf8]{inputenc}
\usepackage[english]{babel}
\usepackage{amsmath,amsthm,amsfonts,amssymb}
\usepackage{mathtools}
\usepackage{bm}
\usepackage{enumitem}
\usepackage{booktabs}
\usepackage{array}
\usepackage{multicol}
\usepackage{algorithm}
\usepackage{algpseudocode}
\usepackage{float}
\usepackage{microtype}
\usepackage[numbers,square,sort&compress]{natbib}
\usepackage{url}
\usepackage{xcolor}
\usepackage{hyperref}

\DeclareFontShape{T1}{ptm}{m}{scit}{<->ssub * ptm/m/sc}{}
\hypersetup{colorlinks=true,linkcolor=blue!55!black,citecolor=blue!55!black,
            urlcolor=blue!55!black,
            pdftitle={A Convergence Framework for Deep V-Learning: Error Propagation and Sharp Action-Gap Bounds},
            pdfauthor={Yury Kolomeytsev},
            pdfkeywords={reinforcement learning, deep learning, deep V-learning,
                         Markov decision processes, convergence analysis,
                         error propagation, Bellman residuals, concentrability coefficients,
                         action-gap regularity}}

\newcommand{\R}{\mathbb{R}}
\newcommand{\E}{\mathbb{E}}
\newcommand{\Prob}{\mathbb{P}}
\newcommand{\Indic}{\mathbf{1}}
\newcommand{\Stilde}{\widetilde{\mathcal{S}}}
\newcommand{\stilde}{\tilde{s}}
\newcommand{\geff}{\gamma}
\newcommand{\Tcal}{\mathcal{T}}
\newcommand{\Tpi}{\Tcal^{\pi}}
\newcommand{\Vstar}{V^{\star}}
\newcommand{\Vbar}{\bar{V}}
\newcommand{\FVn}{\mathcal{F}^{V}_{n}}
\newcommand{\Gclk}{\mathcal{G}^{\mathrm{clk}}_0}
\newcommand{\Vmax}{V_{\max}}
\newcommand{\Rmax}{R_{\max}}
\newcommand{\eps}{\varepsilon}
\newcommand{\Aspace}{\mathcal{A}}
\newcommand{\Pkernel}{\mathcal{P}}
\newcommand{\Phat}{\widehat{P}}
\newcommand{\Stildeo}{\Stilde^{\circ}}
\newcommand{\Pjoint}{\mathsf{P}}
\newcommand{\Bb}{B_{b}}
\newcommand{\epsactr}[1]{\eps_{\mathrm{act},k}^{(#1)}}
\newcommand{\epsact}{\epsactr{2}}
\newcommand{\epsactp}{\epsactr{p}}
\newcommand{\epsbuf}{\eps_{\mathrm{buf},k}}
\newcommand{\Vrho}{V_{\max,k}^{(\rho)}}
\newcommand{\dexp}{\Delta^{\mathrm{exp}}_k}
\newcommand{\Dexp}{\bar\Delta^{\mathrm{exp}}_k}
\newcommand{\Dbar}{\bar D^{\mathrm{exp}}_k}
\newcommand{\Vrhobar}{\overline{V}^{(\rho)}_{\max}}
\newcommand{\subopt}{\mathrm{subopt}}
\newcommand{\Arun}{\mathsf{A}_{\mathrm{run}}}
\newcommand{\Aiid}{\mathsf{A}_{\mathrm{iid}}}
\newcommand{\Alev}{\mathsf{A}_{\mathrm{iid}}^{\mathrm{lev}}}
\newcommand{\Rabs}{\mathsf{R}}
\newcommand{\Gen}{\mathsf{G}}
\newcommand{\obj}[1]{\textnormal{\ \small$[\,#1\,]$}}
\newcolumntype{P}[1]{>{\raggedright\arraybackslash}p{#1}}
\newcommand{\etascore}{\eta_{\mathrm{sc}}}
\newcommand{\etascorek}{\eta_{\mathrm{sc},k}}
\newcommand{\etascoreK}{\eta_{\mathrm{sc},K}}

\newcommand{\dnet}{\delta_{V,k}}
\newcommand{\dpath}{\delta^{\mathrm{path}}_{V,k}}
\newcommand{\dsnap}{\delta^{\mathrm{snap}}_{V,k}}
\newcommand{\pibar}{\bar\pi^{\mathrm{on}}_k}
\newcommand{\Qoracle}{Q^{\mathrm{or}}}

\newtheorem{theorem}{Theorem}[section]
\newtheorem{lemma}[theorem]{Lemma}
\newtheorem{proposition}[theorem]{Proposition}
\newtheorem{remark}[theorem]{Remark}
\newtheorem{corollary}[theorem]{Corollary}
\newtheorem{definition}[theorem]{Definition}
\newtheorem{assumption}[theorem]{Assumption}

\title{A Convergence Framework for Deep $V$-Learning: Error Propagation and Sharp Action-Gap Bounds}
\author{Yury Kolomeytsev\\
  {\normalfont Faculty of Computational Mathematics and Cybernetics}\\
  {\normalfont Lomonosov Moscow State University}\\
  {\normalfont\href{mailto:yury.kolomeytsev@gmail.com}{\textcolor{black}{\texttt{yury.kolomeytsev@gmail.com}}}}}
\date{}
\renewcommand{\headeright}{}
\renewcommand{\undertitle}{}
\renewcommand{\shorttitle}{Deep $V$-Learning: Convergence Framework}

\begin{document}
\maketitle

\begin{abstract}\noindent
We establish convergence bounds for deep $V$-learning with horizon $H$.
The algorithm fits a scalar value function to targets from executed
transitions and selects actions using a predictive model and the learned
value function.  For current observed-successor targets with fresh
true-kernel outcomes, the conditional mean is $\Tcal^\beta V$, which averages
over the behavior policy's actions.
The Bellman optimality update is $\Tcal V$.  We decompose the update error
into six residuals: fitting, transition reuse, target construction, replay,
action selection, and exploration.  Under $L^s$ concentrability, their
$L^p$ norms, with $p=s/(s-1)$, control expected $L^1$ policy loss.
The bound has explicit weights on residuals from only the most recent
$H-1$ update blocks, plus an initialization term for shorter runs.
We quantify the cost of a shared sampling distribution across horizon levels.
For statistical error bounds proportional to $n^{-\nu}$, we derive optimal
continuous allocations and an integer allocation whose statistical objective
is within a factor $2^\nu$ of the constrained optimum.
A margin condition with exponent $\alpha$ gives action error of order
$\Lambda^{1+\alpha/p}$, where $\Lambda$ combines network drift and score
error; a matching one-step construction proves the exponent sharp.
Bounds on the distance between frozen and optimal scores transfer an
optimal-gap condition to frozen-iterate gap bounds while retaining the mass
of optimal ties.  Survival probabilities and coverage conditions at
deployment yield bounds for policies selected with approximate scores.
Separate spatial ReLU networks for each horizon level give a conditional
neural regression rate, and the finite-state specialization gives a log-free
expected fit rate.  These results establish expected policy-loss consistency
for the fixed-horizon generative-reset approximate-ERM procedure with exact
action scores and provide an explicit residual-decay criterion for
FIFO/interleaved SGD.
\end{abstract}

\keywords{reinforcement learning \and deep learning \and deep $V$-learning \and
Markov decision processes \and convergence analysis \and
error propagation \and Bellman residuals \and concentrability coefficients \and
action-gap regularity}

\section{Introduction}\label{sec:intro}

Deep $V$-learning fits a scalar state-value function to targets from executed
transitions and selects actions using a predictive model and the learned
value function.  Its population target can differ from the Bellman optimality
target used in fitted value
iteration \cite{munos_szepesvari_2008}.  With a
frozen bootstrap $V$ and fresh true-kernel outcomes, its observed-successor
target has population response $\Tcal^\beta V$ for the acting policy
$\beta$; policy improvement is governed by $\Tcal V$.  At any state where
$\beta$ assigns positive probability to an action that does not maximize
$Q_V$, the two operators differ, even with unlimited data and exact
optimization.

We derive a finite-horizon policy-loss bound by decomposing the error
relative to the Bellman optimality update into six terms: fitting, transition
reuse, target construction, replay, action selection, and exploration.
The action-selection term accounts for network drift and score approximation.
Bounds for a particular replay scheme, optimizer, or state representation
enter the policy-loss theorem through the corresponding residuals.

Model-based action selection with a scalar value function appears in CADRL
and SA-CADRL \cite{CADRL,SACADRL}.  In the SARL pseudocode, a stored transition receives a
frozen-target label when it is sampled for a minibatch, the
\textsc{sample}/\allowbreak\textsc{obs} convention.  EB-CADRL forms
the corresponding observed-successor label when the transition is stored,
the \textsc{store}/\allowbreak\textsc{obs} convention
\cite{SARL,EBCADRL}.  These choices determine whether the bootstrap uses the
current or an older target network.  Building on EB-CADRL,
HMP-DRL~\cite{HMPDRL} uses deep $V$-learning for local control in long-range
navigation.  It incorporates checkpoints from a graph-based global planner
into the state representation and reward function.
For the analysis, a full simulator or
belief state is augmented with the remaining-step clock whenever it satisfies
\eqref{eq:markov_sufficiency}.  Other observation vectors are treated as
measurable compressions, with aliasing, closure, and score errors handled by
Proposition~\ref{prop:representation_routes}.

\subsection{Algorithms and analysis}\label{subsec:objects}

We distinguish the replay-based algorithm $\Arun$, the generative-reset
procedure $\Aiid$, and the residual recursion $\Rabs$ used to bound policy
loss.  The recursion applies whenever the six errors satisfy their stated
envelopes.  For $\Aiid$, we derive statistical fit bounds using fresh blocks
from a single reset distribution.  For $\Arun$, bounds on the six residuals
connect the implemented updates to the same recursion:
\begin{equation}
\Arun\ \xrightarrow{\ \text{six residual bounds}\ }\ \Rabs,
\qquad
\Aiid\ \xrightarrow{\ \text{statistical fit bound}\ }\ \Rabs .
\label{eq:object_contract}
\end{equation}
For the matched-budget comparison, $\Alev$ draws samples separately at each
horizon level.  Statement tags identify the setting or procedure to which a
result applies; $\Gen$ denotes the general setting.

\paragraph{Main results.}
\begin{enumerate}[label=\textnormal{(C\arabic*)},leftmargin=3.1em,itemsep=2pt,topsep=3pt]
\item \emph{Convergence through residual propagation.}
The expected policy-loss bound is a weighted sum of the six residual
envelopes and an initialization term.  Its explicit weights vanish outside
the most recent $H-1$ update blocks, and the initialization term vanishes
for $K\ge H-1$ (Theorem~\ref{thm:hmpdrl_end_to_end}).
\item \emph{Sharp horizon coefficients and sample allocation.}
We characterize the horizon dependence of the shared-law propagation
coefficient and construct a family that attains the bound
(Theorem~\ref{prop:conc_budget} and Corollary~\ref{prop:clock_scaling}).
Direct level sampling gives a separate propagation bound and an exact sample
count (Theorem~\ref{thm:level_indexed}).  Under matched label budgets, we
derive the optimal continuous allocation, its constrained water-filling
solution, and an integer schedule within a factor $2^\nu$ of the constrained
statistical optimum (Theorem~\ref{thm:matched_budget}).
The resulting horizon orders are given in
Corollary~\ref{cor:matched_budget_geometry}.
\item \emph{Sharp action-gap and deployment bounds.}
The action-residual exponent depends on the norm used to measure error.
We derive these exponents and prove one-step sharpness
(Theorem~\ref{thm:act_upper} and Proposition~\ref{prop:act_tight}).
An optimal-gap condition and bounds on score approximation yield gap bounds
for the frozen iterates while accounting for optimal ties
(Proposition~\ref{prop:reference_margin_transfer}).
A separate construction shows that score error can sustain a positive
policy-loss floor (Proposition~\ref{prop:model_floor_lower}).
We also bound the loss of the controller selected by the implemented score
(Theorem~\ref{thm:deployed_transfer} and
Corollary~\ref{cor:deployed_transfer_margin}).
\item \emph{Nonasymptotic neural and tabular convergence.}
For the generative-reset procedure, we obtain neural regression and
policy-loss rates, along with a log-free expected fit rate in the tabular
case (Proposition~\ref{prop:vreg_rate}, Theorem~\ref{thm:nonasymptotic}, and
Corollary~\ref{cor:tabular_discharge}).  With exact action scores, the
procedure achieves expected policy-loss consistency
(Corollary~\ref{cor:fresh_consistency}).  We verify Bellman closure and the
network-class assumptions in a non-tabular finite-rank model
(Propositions~\ref{prop:finite_rank_closure}
and~\ref{prop:routed_relu_admissibility}).
\end{enumerate}

Action-indexed regression retains the executed action as an input.
Expected SARSA uses this structure and averages the next action
\cite[\S\S6.1, 6.6]{sutton_barto_2018}.  Remark~\ref{rem:general_target}
contrasts these targets with state-only regression, and
Section~\ref{subsec:related} compares the
corresponding fitted-value, neural-$Q$, action-gap, and replay analyses.

\section{Model and algorithm}\label{sec:algorithm}

Algorithm~\ref{alg:hmpdrl_training} specifies the replay-based procedure
$\Arun$.  The analysis distinguishes six design choices:
\begin{enumerate}[label=\textnormal{(S\arabic*)},leftmargin=2.8em,nosep]
\item a scalar value head $V_\theta$;
\item a one-step score built from an implemented reward and transition model,
\begin{equation}
\widehat Q_V(\stilde,a):=\widehat R(\stilde,a)+\geff V\bigl(\Phat(\stilde,a)\bigr),
\qquad
\widehat a^{\star}(\stilde;V):=\arg\max_{a\in\Aspace}\widehat Q_V(\stilde,a),
\label{eq:greedy_action}
\end{equation}
used in place of $\max_aQ$;
\item the network, online or frozen, at which that score is evaluated for
\emph{behavior};
\item a replay law;
\item an exploration rule;
\item a \emph{target construction}: \textsc{when} the bootstrap is evaluated
(at sampling time, or once at storage time) and \textsc{from} which successor
(the observed one, or a model prediction).
\end{enumerate}
Slots~(S3) and~(S6) generate the drift and target-construction residuals.

\begin{algorithm}[H]\footnotesize
\caption{Deep $V$-learning template for $\Arun$.}
\label{alg:hmpdrl_training}
\begin{algorithmic}[1]
\State \textbf{Input:} $\geff,\{\eps_k,U_k,c^{\mathrm{env}}_{k,j},g_{k,j}\}$, $B,B_{\mathrm{mb}},\xi_k$; fix normalization and the two switches in~\textnormal{(S6)}
\State Initialize clipped $V_\theta$, FIFO buffer $E$, and $\hat V\leftarrow\Vbar_\theta$ \Comment{$V_0$}
\For{target period $k=0,1,2,\ldots$}
  \State Freeze $V_k\leftarrow\hat V$; set $V_{\theta_{k,0}}\leftarrow\Vbar_\theta$
  \For{round $j=0,1,\ldots,U_k-1$}
    \State Collect $c^{\mathrm{env}}_{k,j}$ transitions with the online $\eps_k$-greedy score; store raw tuples or stored targets according to~\textnormal{(S6)}, evicting beyond $B$
    \For{$g_{k,j}$ minibatches from $E$}
      \State If \textsc{sample}, form $y_i=r_i+\geff(1-d_i)\hat V(\mathsf{succ}_{\textsc{from}}(x_i,\mathbf a_i,x_i'))$; take one semi-gradient step on $B_{\mathrm{mb}}^{-1}\sum_i(\Vbar_\theta(x_i)-y_i)^2$
    \EndFor
  \EndFor
  \State \textbf{Copy:} $\hat V\leftarrow\Vbar_\theta$ \Comment{$V_{k+1}$}
\EndFor
\end{algorithmic}
\end{algorithm}

\noindent Here $\mathsf{succ}_{\textsc{obs}}(x,\mathbf a,x')=x'$,
$\mathsf{succ}_{\textsc{pred}}(x,\mathbf a,x')=\Phat(x,\mathbf a)$, and a
stored target uses both the target network and, for \textsc{pred}, the
predictor installed at collection time.  The observed mask makes terminal
labels equal their observed rewards.  The target network is constant within
a period and treated as a constant in the semi-gradient; the behavior network
may drift.

\paragraph{Generative-reset variant.}  For $\Aiid$, fix the frozen $V_k$, an
acting snapshot $W_k\in\mathcal V^{\mathrm{clip}}$, and its $\eps_k$-greedy
law $\beta_k$, then draw conditionally independently
\begin{equation}
S_i\sim\varsigma,\quad
\mathbf a_i\sim\beta_k(\cdot\mid S_i),\quad
(r_i,\stilde'_i,d_i)\sim\Pjoint(\cdot\mid S_i,\mathbf a_i)\ \text{fresh},\quad
Y_i=r_i+\geff(1-d_i)V_k(\stilde'_i),
\label{eq:generative_variant}
\end{equation}
for $i\le n_k$.  Each outcome is used once, and $V_{k+1}$ is a measurable
$\zeta_{n_k}$-approximate ERM over $\mathcal F^{V,0}_{n_k}$.  Here
$\rho_{k,S}=\varsigma$ by Assumption~\ref{ass:generative}.  This defines the
generative-reset procedure $\Aiid$ in~\eqref{eq:object_contract}.  It fits the
scalar target generated by the executed action; action-indexed FVI uses a
different response~\cite{munos_szepesvari_2008}.  The variant $\Alev$ samples
directly from the slice laws of
Theorem~\ref{thm:level_indexed}.

\paragraph{Score-access convention.}
A reset call in~\eqref{eq:generative_variant} returns one observed
reward--successor pair.  Access to the conditional expectation
$Q_V(\stilde,a)=R(\stilde,a)+\geff\int V\,d\Pkernel(\cdot\mid\stilde,a)$ is
represented by the exact-score oracle $\Qoracle_V:=Q_V$, distinct from the
point score $\widehat Q_V$ in~\eqref{eq:greedy_action}.  Every result using
$\Qoracle$ assumes that additional access and replaces slot~(S2) by
$\Qoracle$; exact reward and successor maps provide it for a deterministic
model.  Otherwise a quantitative score estimator enters the residual bound
through $\etascorek$; Appendix~\ref{supp:oa6} gives one Monte Carlo bound under
the stated stronger query access.

\noindent We use three histories: $\mathcal F_{k,j}$ for literal
rounds; $\mathcal H_k$ for the frozen $\Aiid$ objects before its conditionally
i.i.d.\ block; and $\mathcal J_k$ for the abstract block.  The latter is
pre-block information under population accounting and $\mathcal B_k$ under
realized-buffer accounting; for $\Aiid$, $\mathcal J_k=\mathcal H_k$.  Each
history is generated by a random element taking values in a standard-Borel
space.  All design/replay laws and responses in one theorem application are
measurable for that same history.  Normalization is fixed and every network is
clipped by~\eqref{eq:levelwise_clip}.

\begin{table}[H]\scriptsize\centering
\setlength{\tabcolsep}{3pt}
\renewcommand{\arraystretch}{1.08}
\caption{Conditioning guide.  ``Deterministic'' means uniform over the
histories in the theorem application.}\label{tab:notation_status}
\begin{tabular}{@{}P{0.20\linewidth}P{0.34\linewidth}P{0.40\linewidth}@{}}
\toprule
\textbf{Role} & \textbf{Principal symbols} & \textbf{Probability status and use} \\
\midrule
Histories and frozen objects & $\mathcal F_{k,j},\mathcal H_k,\mathcal J_k$;
$V_k,W_k,\rho_{k,S},G_k$ & Information before a literal round, a fresh block,
or an abstract block; the latter objects are $\mathcal J_k$-measurable. \\
Responses and residuals & $a_{\mathrm{alias},k},\Dexp$;
$\eps_{\mathrm{fit},k},\eps_{\mathrm{ker},k},\eps_{\mathrm{tgt},k},
\epsbuf,\epsactp,\eps_k\Dbar$ & The first two quantities are
$\mathcal J_k$-measurable.  The fit envelope bounds a conditional mean; the
other five residual envelopes and $\Dbar$ are deterministic almost-sure bounds. \\
Propagation & $\mathcal B_K,w^{(H)}_{K,k},d_s(m),c_2^{\mathrm{lev}}(m)$ &
Deterministic boundary, residual weights, and coverage coefficients. \\
Terminal-window allocation & $n_i,L_i,\mathsf N,b_i,\nu,
\mathsf C_{\nu,K},\Psi_\nu,F_K$ & Planned accepted-label allocations,
validity thresholds, and statistical and floor constants in
Theorem~\ref{thm:matched_budget}. \\
\bottomrule
\end{tabular}
\end{table}

\subsection{Model assumptions, clipping, and the \texorpdfstring{$V$}{V}-backup}\label{sec:setting}

\noindent\textbf{Standing assumptions.}  Throughout,
\begin{equation}
H\in\mathbb N,\qquad 0<\geff<1,\qquad 0<\Rmax<\infty,\qquad \Vmax^{(0)}:=0 ,
\label{eq:standing}
\end{equation}
and $2\le |\Aspace|<\infty$.  The physical state space
$(\mathcal S,\Sigma_{\mathcal S})$ is nonempty and standard Borel,
$\Aspace$ has the discrete $\sigma$-algebra and a fixed total order, and
\[
\Stilde=(\mathcal S\times\{1,\ldots,H\})\sqcup
\{\stilde_{\mathrm{term}}\}
\]
has the corresponding disjoint-union standard-Borel structure.  Observation
and parameter spaces used below are also standard Borel.  Put
\begin{equation}
\Stildeo:=\Stilde\setminus\{\stilde_{\mathrm{term}}\},\qquad
\mu_S^\circ(B):=\mu_S(B\cap\Stildeo),\qquad
R(\stilde,a):=\int r\,\Pjoint(dr,d\stilde'\mid\stilde,a).
\label{eq:nonterminal_objects}
\end{equation}
For the next-state marginal $\Pkernel$ of the joint kernel in
Assumption~\ref{ass:markov_state}, define, for measurable $B\subseteq\Stildeo$,
\begin{equation}
\Pkernel_\circ(B\mid\stilde,a):=\Pkernel(B\mid\stilde,a),\qquad
\Pkernel^\pi_\circ(B\mid\stilde):=
\int_{\Aspace}\Pkernel_\circ(B\mid\stilde,a)\,\pi(da\mid\stilde).
\label{eq:nonterminal_kernel}
\end{equation}
Thus $\mu_S^\circ$ is a subprobability restriction and
$\Pkernel^\pi_\circ$ is a substochastic kernel that integrates only over the
nonterminal space.  Values vanish at $\stilde_{\mathrm{term}}$, so $L^1(\mu_S)$
and $L^1(\mu_S^\circ)$ losses coincide.  Function inequalities are pointwise
unless a measure is named.  The \emph{action gap} is
$\Delta_Q^{(k)}(\stilde):=\max_aQ_{V_k}(\stilde,a)-\max_{a\ne a^{\mathrm{tgt}}_k(\stilde)}Q_{V_k}(\stilde,a)$
at the frozen $Q_{V_k}$, with $a^{\mathrm{tgt}}_k$ selected by the fixed Borel
tie-breaking rule, so that $\Delta_Q^{(k)}=0$ exactly at a tie.
For comparison with a fixed optimal-gap hypothesis, write
\begin{equation}
\Delta_Q^\star(\stilde):=
\max_aQ_{\Vstar}(\stilde,a)
-\max_{a\ne a^\star(\stilde;\Vstar)}Q_{\Vstar}(\stilde,a),
\label{eq:optimal_gap}
\end{equation}
using the same rule; again $\Delta_Q^\star=0$ exactly at an optimal tie.

The horizon is $H$ steps and the augmented state is $\stilde=(s,h)$ with $h$ the
number of steps remaining.  A policy called stationary below is stationary on
this augmented state; viewed on the physical state alone, its clock dependence
makes it nonstationary.  Rewards obey $|r|\le\Rmax$.
For every measurable stationary augmented-state policy $\pi$, let $V^\pi$ be
the unique fixed point of $\Tpi$.  Bellman optimality gives
$\Vstar\ge V^\pi$ pointwise, so the expected discounted regret from the
evaluation law is exactly
\begin{equation}
\|\Vstar-V^\pi\|_{1,\mu_S}
=\int(\Vstar-V^\pi)\,d\mu_S .
\label{eq:policy_loss_regret}
\end{equation}
The level-$h$ value bound and its recursion are
\begin{align}
\Vmax^{(h)}&:=\Rmax\,\frac{1-\geff^{\,h}}{1-\geff},\qquad \Vmax:=\Vmax^{(H)},
\label{eq:vmax_level}\\
\Rmax+\geff\,\Vmax^{(h-1)}&=\Rmax\,\frac{(1-\geff)+\geff-\geff^{\,h}}{1-\geff}
=\Vmax^{(h)}\qquad(1\le h\le H).
\label{eq:vmax_recursion}
\end{align}
Equality in~\eqref{eq:vmax_recursion} is the recursion used to prove that the
level-wise band below is $\Tcal$-invariant.  To agree with terminal masking,
work on
\[
\begin{aligned}
\mathcal V&:=\bigl\{V:\Stilde\to\R\ \text{bounded and measurable}:\
V(\stilde_{\mathrm{term}})=0\bigr\}\subset\Bb(\Stilde),\\
\|V\|_\infty&:=\sup_{\stilde}|V(\stilde)|.
\end{aligned}
\]
This is a closed Banach subspace of bounded measurable functions
\cite{kreyszig1989functional}.  A fixed Borel tie rule supplies measurable
greedy selectors.  Every network is evaluated through
\begin{equation}
\Vbar_\theta(s,h)=\operatorname{clip}\bigl(V^{\mathrm{raw}}_\theta(s,h),
-\Vmax^{(h)},\Vmax^{(h)}\bigr),
\label{eq:levelwise_clip}
\end{equation}
whose image is the band
$\mathcal V^{\mathrm{clip}}:=\{V\in\mathcal V:|V(s,h)|\le\Vmax^{(h)}\ \forall(s,h)\}$.
The Bellman optimality operator and the induced one-step $Q$-function are
\begin{equation}
(\Tcal V)(\stilde):=\max_{a\in\Aspace}Q_V(\stilde,a),\qquad
Q_V(\stilde,a):=R(\stilde,a)+\geff\int V(\stilde')\,\Pkernel(d\stilde'\mid\stilde,a),
\label{eq:bellman_def}
\end{equation}
and $\Tpi$ denotes the policy operator obtained by averaging $Q_V$ over $\pi$
instead of maximizing.  The next three results establish the clipping
property, the $V$-backup identity, and existence of the fixed point $\Vstar$.

\begin{assumption}[Markov-sufficient analysis state, joint kernel, and terminal convention\obj{\Gen}]\label{ass:markov_state}
Let $\mathcal H_t^{\mathrm{sys}}$ denote the full controlled history of the
underlying system, including latent variables when the analysis uses them; the
resulting analysis state need not be the implemented input.  The augmented
representation is required to be controlled Markov
for the joint reward--successor law: there are a measurable map $\Psi$ and a
measurable kernel $\Pjoint(dr,d\stilde'\mid\stilde,a)$ on $\R\times\Stilde$
such that $\stilde_t=\Psi(\mathcal H_t^{\mathrm{sys}})$ includes the
remaining-step clock and, under every admissible control law, for every bounded
measurable $f:\R\times\Stilde\to\R$,
\begin{equation}
\E\!\left[f(r_t,\stilde_{t+1})\mid
\mathcal H_t^{\mathrm{sys}},a_t\right]
=\int f(r,\stilde')\,\Pjoint(dr,d\stilde'\mid\stilde_t,a_t)
\quad\text{a.s.}
\label{eq:markov_sufficiency}
\end{equation}
This controlled-kernel identity includes deterministic interventions and does
not condition on a possibly zero-probability event $\{a_t=a\}$.  Thus histories
with the same represented state have the same conditional joint law under
every action.  This is the Markov-sufficiency requirement; appending a clock
supplies the temporal coordinate, while the controlled-kernel identity supplies
the required state sufficiency.  Rewards satisfy $|r|\le\Rmax$
a.s.  Write $\Pkernel$ for the next-state marginal.  Every
terminating transition has successor $\stilde_{\mathrm{term}}$, and
$\Pjoint(\{0\}\times\{\stilde_{\mathrm{term}}\}\mid\stilde_{\mathrm{term}},a)=1$.
Define the terminal mask as the following function of the successor:
\begin{equation}
d:=\Indic\{\stilde'=\stilde_{\mathrm{term}}\} .
\label{eq:mask_def}
\end{equation}
The remaining-step coordinate is a genuine clock: from
$(s,h)$ with $h>1$ every \emph{nonterminal} successor lies one level lower,
while the episode may terminate at any level; from $h=1$ absorption is certain,
\begin{equation}
\Pjoint\!\Bigl(\R\times\bigl((\mathcal S\times\{h-1\})\cup\{\stilde_{\mathrm{term}}\}\bigr)
\;\Big|\;(s,h),a\Bigr)=1\ \ (h>1),
\qquad
\Pjoint\!\bigl(\R\times\{\stilde_{\mathrm{term}}\}\mid(s,1),a\bigr)=1 .
\label{eq:time_decrement}
\end{equation}
Hence the substochastic nonterminal kernel $\Pkernel^\pi_\circ$ of
\S\ref{subsec:propagation} is nilpotent, and so is every product of $H$ of
them: $\Pkernel^{\pi_1}_\circ\cdots\Pkernel^{\pi_H}_\circ=0$ for policies that
need not coincide.
\end{assumption}

\begin{lemma}[Measurable selectors, mixtures, and conditional responses\obj{\Gen}]
\label{lem:measurable_framework}
Under the standing standard-Borel assumptions:
\begin{enumerate}[label=\textnormal{(\roman*)},leftmargin=2.2em,nosep]
\item the fixed-order maximizer of any jointly measurable real score on
$\Stildeo\times\Aspace$ is measurable;
\item a finite state-dependent convex mixture of measurable Markov policies is
a measurable Markov policy;
\item if
$\mathsf M(d\stilde,da)=\sum_{j=1}^J\omega_j\mu_j(d\stilde)
\pi_j(da\mid\stilde)$ and
$\mathsf M_S=\sum_j\omega_j\mu_j$, where $\omega_j\ge0$ and
$\sum_j\omega_j=1$ (with all these measures and policies
possibly depending measurably on a standard-Borel history), then jointly
measurable versions
$f_j=d(\omega_j\mu_j)/d\mathsf M_S\in[0,1]$ may be chosen.  With
$z=\sum_jf_j$ and any fixed measurable reference policy $\pi_0$,
\begin{equation}
\bar\pi(da\mid\stilde)
:=\begin{cases}
z(\stilde)^{-1}\sum_{j=1}^J f_j(\stilde)
\pi_j(da\mid\stilde),&z(\stilde)>0,\\
\pi_0(da\mid\stilde),&z(\stilde)=0,
\end{cases}
\label{eq:measurable_mixture}
\end{equation}
is a Markov kernel and a version of the conditional action law
$\mathsf M(da\mid\stilde)$;
\item every bounded measurable label generated from a standard-Borel history,
state, action, and fresh kernel draw admits a measurable conditional-response
version $G_k(\stilde)=\E[Y_k\mid S=\stilde,\mathcal J_k]$.
\end{enumerate}
All policies, replay disintegrations, and responses in the sequel refer to
these fixed versions.
\end{lemma}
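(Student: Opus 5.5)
Each of the four items reduces to a standard measure-theoretic fact on standard Borel spaces, so the plan is to verify them in order. The only real work is in (iii) and (iv), where joint measurability in the history variable must be tracked.

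For (i), write the fixed total order as $a_1<\dots<a_{|\Aspace|}$. Let $f$ be the score and $M(\stilde):=\max_a f(\stilde,a)$; this is measurable as a finite maximum of measurable functions. The selector equals $a_i$ on
\[
E_i:=\{f(\cdot,a_i)=M\}\setminus\textstyle\bigcup_{j<i}\{f(\cdot,a_j)=M\},
\]
and these sets are measurable and partition $\Stildeo$. Hence the selector is a finite measurable step map. The same argument with an additional history coordinate gives joint measurability when the score depends on a history.

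For (ii), take $\pi=\sum_j\lambda_j(\stilde)\pi_j(\cdot\mid\stilde)$ with measurable weights $\lambda_j\ge0$ summing to one. For each $A\subseteq\Aspace$, the map $\stilde\mapsto\pi(A\mid\stilde)$ is a finite sum of products of measurable functions. It is a probability in $A$ pointwise, so $\pi$ is a Markov kernel.

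For (iii), $\omega_j\mu_j\le\mathsf M_S$, so each Radon--Nikodym derivative exists and lies in $[0,1]$ $\mathsf M_S$-a.e.; clip it to $[0,1]$. The main obstacle is choosing these derivatives jointly measurably in the history $x$. I would invoke the measurable Radon--Nikodym theorem for kernels on standard Borel spaces (Doob's martingale construction along a countable generating filtration of $\Stilde$). Concretely, take $f_j(x,\stilde)=\limsup_n \omega_j\mu_j^x(A_n(\stilde))/\mathsf M_S^x(A_n(\stilde))$ over a refining sequence of finite partitions $A_n$, with $0/0:=0$. This is jointly measurable and a version of the derivative by martingale convergence. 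Since $\sum_j\omega_j\mu_j=\mathsf M_S$, we get $z=1$ $\mathsf M_S$-a.e., and the fallback $\pi_0$ only matters on a null set. The case split in \eqref{eq:measurable_mixture} is measurable, so $\bar\pi$ is a kernel by (ii). To check that $\bar\pi$ is the conditional action law, take measurable $B$ and $A$ and compute
\[
\int_B\bar\pi(A\mid\stilde)\,\mathsf M_S(d\stilde)=\sum_j\int_B f_j\pi_j(A\mid\cdot)\,d\mathsf M_S=\sum_j\omega_j\int_B\pi_j(A\mid\cdot)\,d\mu_j=\mathsf M(B\times A),
\]
which is the defining disintegration identity.

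For (iv), write $Y_k=g(\mathcal J_k,S,\mathbf a,\xi)$ with $g$ bounded measurable and $\xi$ a fresh kernel draw. Define
\[
G_k(x,\stilde):=\int\!\!\int g(x,\stilde,a,\xi)\,\mathsf Q(d\xi\mid x,\stilde,a)\,\bar\pi^x(da\mid\stilde).
\]
This is jointly measurable by the measurability of integrals of bounded measurable functions against kernels, a monotone-class argument starting from rectangles. A Fubini computation on generating sets $\{\mathcal J_k\in C,\ S\in B\}$ then verifies that $G_k(\mathcal J_k,S)$ is a version of $\E[Y_k\mid S,\mathcal J_k]$. Alternatively, existence of regular conditional distributions on standard Borel spaces gives the same conclusion directly.
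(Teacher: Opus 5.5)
Your proposal is correct. For (i), (ii), and the rectangle identity $\int_B\bar\pi(A\mid\cdot)\,d\mathsf M_S=\mathsf M(B\times A)$ in (iii), it matches the paper's argument.

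The difference is in how you get joint measurability in the history variable. The paper uses a single cited tool: the parameterized disintegration theorem on standard Borel spaces (Bertsekas--Shreve, Proposition~7.27 and Corollary~7.27.1). With the history as parameter, this gives a jointly measurable conditional kernel given the state. That kernel is equivalent to the mixture weights $f_j$ in (iii), and integrating the bounded label against it gives (iv).

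You instead construct the densities $f_j$ directly, by Doob's martingale differentiation along a refining sequence of finite partitions that generates the Borel $\sigma$-algebra. This is self-contained and gives explicit versions that lie in $[0,1]$ automatically. It also makes the almost-everywhere identity $\sum_j f_j=1$ transparent. For (iv) you write $G_k$ as an iterated integral against the action kernel and the fresh-draw kernel, then verify it on the generating sets $\{\mathcal J_k\in C,\ S\in B\}$.

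One point to watch: that explicit formula assumes the conditional law of the action given $(S,\mathcal J_k)$ is the kernel $\bar\pi^x$ from (iii). That holds only when the sampled data have the finite-mixture form. For a general replay law you need a jointly measurable disintegration of the action coordinate. Your closing appeal to regular conditional distributions on standard Borel spaces supplies this, and it is exactly the paper's route.

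In short, the paper's approach handles (iii) and (iv) uniformly with one citation. Yours is more explicit, but needs that fallback to cover general action laws in (iv).
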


\begin{proof}
For finite ordered $\Aspace$, each selector event is a finite intersection of
measurable score comparisons, proving~(i); kernel integration and finite sums
give~(ii).  Since $\omega_j\mu_j\le\mathsf M_S$, including when
$\omega_j=0$, bounded density versions exist with $\sum_jf_j=1$
$\mathsf M_S$-a.e.; substitution against measurable rectangles proves~(iii).
The parameterized disintegration theorem on standard-Borel spaces supplies
the jointly measurable conditional kernel (equivalently the displayed finite
mixture weights) and its almost-sure uniqueness.  Integrating the bounded
label against this kernel proves~(iv)
\cite[Proposition~7.27 and Corollary~7.27.1]{bertsekas_shreve_1978}.
\end{proof}

\begin{lemma}[Level-wise clipping is a projection\obj{\Gen}]\label{lem:clip_projection}
Let $\Pi^{\mathrm{clip}}$ be the statewise clip of~\eqref{eq:levelwise_clip}.
For $g\in\mathcal V^{\mathrm{clip}}$,
$|\Pi^{\mathrm{clip}}f-g|\le|f-g|$ pointwise; hence it is the $L^2(\rho)$
metric projection, is nonexpansive in $L^2(\rho)$ and supremum norm, and does
not increase covering or Bellman-approximation error.  Moreover,
\begin{equation}
V\in\mathcal V^{\mathrm{clip}}\quad\Longrightarrow\quad
|Q_V(s,h,a)|\le\Vmax^{(h)},\qquad |Y_i|\le\Vmax,\qquad
\Tcal V\in\mathcal V^{\mathrm{clip}}.
\label{eq:clip_consequences}
\end{equation}
These level-wise facts follow from~\eqref{eq:vmax_recursion}; a single global
clip need not make the same score and label bounds invariant.
\end{lemma}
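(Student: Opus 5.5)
The argument is pointwise throughout, so I would begin with the scalar projection inequality and then lift it. Fix a nonterminal $\stilde=(s,h)$ and put $c=\Vmax^{(h)}$. At this state the clip is the map $t\mapsto\operatorname{clip}(t,-c,c)$, the nearest-point projection of $\R$ onto the closed interval $[-c,c]$. For any $u\in[-c,c]$, I would check $|\operatorname{clip}(t,-c,c)-u|\le|t-u|$ by three cases:
\begin{itemize}
\item if $t\in[-c,c]$, the two sides are equal;
\item if $t>c$, then $|c-u|=c-u\le t-u$;
\item if $t<-c$, the case is symmetric.
\end{itemize}
At $\stilde_{\mathrm{term}}$ both $\Pi^{\mathrm{clip}}f$ and $g$ vanish, so the inequality is trivial there. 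This gives $|\Pi^{\mathrm{clip}}f-g|\le|f-g|$ pointwise for every $g\in\mathcal V^{\mathrm{clip}}$.

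The structural consequences then follow one at a time.
\begin{itemize}
\item \emph{$L^2(\rho)$ projection.} Squaring the pointwise inequality and integrating against $\rho$ gives $\|\Pi^{\mathrm{clip}}f-g\|_{2,\rho}\le\|f-g\|_{2,\rho}$. Since $\Pi^{\mathrm{clip}}f\in\mathcal V^{\mathrm{clip}}$, it is a minimizer of the $L^2(\rho)$ distance to the band; uniqueness holds $\rho$-a.e.
\item \emph{Nonexpansiveness.} The scalar clip is $1$-Lipschitz, so $|\Pi^{\mathrm{clip}}f_1-\Pi^{\mathrm{clip}}f_2|\le|f_1-f_2|$ pointwise. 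Taking a supremum gives the sup-norm statement, and integrating gives the $L^2(\rho)$ statement.
\item \emph{Covering numbers.} The map $f\mapsto\Pi^{\mathrm{clip}}f$ is $1$-Lipschitz, so it sends an $\varepsilon$-cover to an $\varepsilon$-cover of the clipped class.
\item \emph{Bellman-approximation error.} The targets $\Tcal V$ lie in $\mathcal V^{\mathrm{clip}}$ (shown below), so I can take $g=\Tcal V$ in the projection inequality. Any approximant's distance to $\Tcal V$ then does not increase under clipping.
\end{itemize}

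For \eqref{eq:clip_consequences} I would argue level by level. Take $V\in\mathcal V^{\mathrm{clip}}$ and a state $(s,h)$. By \eqref{eq:time_decrement}, the successor lies in $\mathcal S\times\{h-1\}$ or is $\stilde_{\mathrm{term}}$. On the first set $|V|\le\Vmax^{(h-1)}$, and $V(\stilde_{\mathrm{term}})=0\le\Vmax^{(h-1)}$. When $h=1$, absorption is certain and $\Vmax^{(0)}=0$. Hence $|\int V\,d\Pkernel(\cdot\mid(s,h),a)|\le\Vmax^{(h-1)}$. Since $|R|\le\Rmax$, \eqref{eq:vmax_recursion} gives $|Q_V(s,h,a)|\le\Rmax+\geff\Vmax^{(h-1)}=\Vmax^{(h)}$.

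From this I get two further facts.
\begin{itemize}
\item Maximizing over the finite set $\Aspace$ preserves the bound, so $|\Tcal V(s,h)|\le\Vmax^{(h)}$. Measurability of $\Tcal V$ follows from finiteness of $\Aspace$ and measurability of $Q_V$, which comes from kernel integration. At the terminal state, $\Pjoint(\{0\}\times\{\stilde_{\mathrm{term}}\}\mid\stilde_{\mathrm{term}},a)=1$ forces $Q_V(\stilde_{\mathrm{term}},a)=0$ under the convention that values vanish there. So $\Tcal V\in\mathcal V^{\mathrm{clip}}$.
\item For the label, the same pointwise reasoning applies to a realized successor. If $d_i=1$, then $|Y_i|=|r_i|\le\Rmax$. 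Otherwise $\stilde'_i$ lies at level $h-1$, and $|Y_i|\le\Rmax+\geff\Vmax^{(h-1)}=\Vmax^{(h)}\le\Vmax$, because $\Vmax^{(h)}$ is increasing in $h$.
\end{itemize}

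The main obstacle is only bookkeeping. Every bound must use the level-indexed constant of the \emph{successor}, which relies on the clock-decrement property \eqref{eq:time_decrement}. A global clip at $\Vmax$ would give only $\Rmax+\geff\Vmax^{(H)}$, which can exceed $\Vmax^{(h)}$ at lower levels. I would note this to justify the closing remark of the lemma.
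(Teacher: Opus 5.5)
Your proof follows the paper's argument essentially step for step: the scalar clip inequality applied statewise, the $1$-Lipschitz property for sup-norm and $L^2(\rho)$ nonexpansiveness and for covering numbers, $\Tcal V\in\mathcal V^{\mathrm{clip}}$ for the Bellman-approximation claim, and the level recursion \eqref{eq:vmax_recursion} with the clock decrement for the score, label, and band-invariance bounds. One small repair, which the paper's one-line ``hence'' also skips: the metric-projection claim does not follow from $\|\Pi^{\mathrm{clip}}f-g\|_{2,\rho}\le\|f-g\|_{2,\rho}$ alone (on $[0,1]$, the point $0.9$ is closer than $2$ to every $g\in[0,1]$ but is not the projection of $2$), so derive it instead by integrating against $\rho$ the scalar nearest-point property $|t-\operatorname{clip}(t,-c,c)|\le|t-u|$ for $u\in[-c,c]$, which you already cite.
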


\noindent\emph{Proof:} See Appendix~\ref{app:proofs}.

\begin{lemma}[Bellman $V$-backup through the induced $Q$-function\obj{\Gen}]\label{lem:v_q_backup}
Under Assumption~\ref{ass:markov_state}, for every $V\in\mathcal V$:
\begin{enumerate}[label=\textnormal{(\roman*)},leftmargin=2.2em,nosep]
\item if $(r,\stilde',d)\sim\Pjoint(\cdot,\cdot\mid\stilde,a)$ with mask $d$,
then $\E[r+\geff(1-d)V(\stilde')\mid\stilde,a]=Q_V(\stilde,a)$;
\item $(\Tcal V)(\stilde)=\max_a Q_V(\stilde,a)$;
\item if $a_t=a^\star(\stilde_t;V)$ then
$\E[r_t+\geff(1-d_t)V(\stilde_{t+1})\mid\stilde_t,a_t]=(\Tcal V)(\stilde_t)$;
\item if $V\in\mathcal V^{\mathrm{clip}}$ and $a_t$ is $\eps$-greedy for $Q_V$
under exploration law $\nu$, the discrepancy equals $\eps$ times the
dispersion of $Q_V$ over $\nu$:
\begin{equation}
(\Tcal V)(\stilde)-\E\bigl[r_t+\geff(1-d_t)V(\stilde_{t+1})\bigm|\stilde_t=\stilde\bigr]
\;=\;\eps\,\Delta^{\mathrm{exp}}[V,\nu](\stilde)\;\ge\;0,
\label{eq:dispersion_identity}
\end{equation}
where
\[
\begin{aligned}
\Delta^{\mathrm{exp}}[V,\nu](\stilde)
&:=\max_aQ_V(\stilde,a)-\int Q_V(\stilde,a)\,\nu(da\mid\stilde),\\
0&\le\Delta^{\mathrm{exp}}[V,\nu](s,h)\le2\Vmax^{(h)}.
\end{aligned}
\]
\end{enumerate}
\end{lemma}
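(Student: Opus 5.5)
The plan is to derive (i) directly from the joint kernel and the terminal convention, and then obtain (ii)--(iv) as consequences of (i) together with the definitions~\eqref{eq:bellman_def}.

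For (i), the label is a bounded measurable function of $(r,\stilde')$, because $d=\Indic\{\stilde'=\stilde_{\mathrm{term}}\}$ by~\eqref{eq:mask_def} and $V$ is bounded. We therefore integrate it against $\Pjoint(dr,d\stilde'\mid\stilde,a)$, using~\eqref{eq:markov_sufficiency} to identify the conditional expectation. The reward part gives $R(\stilde,a)$ by~\eqref{eq:nonterminal_objects}. For the bootstrap part, note that $(1-d)V(\stilde')=V(\stilde')$ pointwise, since $V(\stilde_{\mathrm{term}})=0$ for $V\in\mathcal V$. Hence $\int(1-d)V\,d\Pjoint=\int V\,d\Pkernel(\cdot\mid\stilde,a)$, and the sum is $Q_V(\stilde,a)$. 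The only point needing care is that the mask and the vanishing of $V$ at the terminal state are consistent, so masking loses nothing. Part (ii) is exactly the definition~\eqref{eq:bellman_def}, and is recorded only for reference.

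For (iii), $a_t=a^\star(\stilde_t;V)$ is a measurable function of $\stilde_t$ by Lemma~\ref{lem:measurable_framework}(i). Because~\eqref{eq:markov_sufficiency} is a controlled-kernel identity valid under every admissible control law, we may condition on $(\mathcal H_t^{\mathrm{sys}},a_t)$, apply (i), and then use the tower property down to $\stilde_t$. This yields $Q_V(\stilde_t,a^\star(\stilde_t;V))=\max_aQ_V(\stilde_t,a)=(\Tcal V)(\stilde_t)$, the last equality by (ii).

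For (iv), the $\eps$-greedy law is the mixture $(1-\eps)\delta_{a^\star(\stilde;V)}+\eps\,\nu(\cdot\mid\stilde)$, which is a measurable Markov policy by Lemma~\ref{lem:measurable_framework}(ii). Conditioning on the action as in (iii) and averaging gives
\[
\E\bigl[r_t+\geff(1-d_t)V(\stilde_{t+1})\bigm|\stilde_t=\stilde\bigr]=(1-\eps)\max_aQ_V(\stilde,a)+\eps\int Q_V(\stilde,a)\,\nu(da\mid\stilde).
\]
Subtracting this from $(\Tcal V)(\stilde)$ gives exactly $\eps\,\Delta^{\mathrm{exp}}[V,\nu](\stilde)$. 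Nonnegativity holds because a maximum dominates any average. For the upper bound, Lemma~\ref{lem:clip_projection} gives $|Q_V(s,h,a)|\le\Vmax^{(h)}$ for $V\in\mathcal V^{\mathrm{clip}}$, so the maximum minus the average is at most $2\Vmax^{(h)}$.

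The main obstacle is not computational. It is the measure-theoretic bookkeeping: conditioning on $\stilde_t$ rather than on a possibly null event $\{a_t=a\}$, and fixing versions so the identities hold almost surely. I would handle this by working with the controlled-kernel form of~\eqref{eq:markov_sufficiency} and the fixed versions from Lemma~\ref{lem:measurable_framework}.
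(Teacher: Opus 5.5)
Your proposal is correct and follows the paper's proof essentially step for step: (i) by integrating against $\Pjoint$ with $V(\stilde_{\mathrm{term}})=0$ absorbing the mask, (ii) by definition, (iii) from (i)--(ii), and (iv) by splitting the $\eps$-greedy mixture and using $|Q_V(s,h,a)|\le\Vmax^{(h)}$. The paper rederives that bound inline from the level recursion~\eqref{eq:vmax_recursion} where you cite Lemma~\ref{lem:clip_projection}, and your added measurability and tower-property bookkeeping for (iii)--(iv) matches the paper's conventions without changing the argument.
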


\begin{proof}
(i) is~\eqref{eq:bellman_def} with $V(\stilde_{\mathrm{term}})=0$; (ii) is the
definition; and (iii) follows from (i)--(ii).  For (iv), under exploration law
$\nu$,
an $\eps$-greedy rule puts mass $1-\eps$ on the maximizer and $\eps$ on $\nu$,
so by (i) and (ii)
\[
\E[Q_V(\stilde,a)\mid\stilde]-\max_a Q_V(\stilde,a)
=\eps\bigl(\E_\nu[Q_V]-\max_a Q_V\bigr)
=-\eps\,\Delta^{\mathrm{exp}}[V,\nu](\stilde),
\]
which is~\eqref{eq:dispersion_identity}.  Its sign is fixed because the
maximum dominates any average.  For the statewise
range, the successor of a state at level $h$ lies at level $h-1$ or is terminal,
so the level recursion~\eqref{eq:vmax_recursion} gives
$|Q_V(s,h,a)|\le\Rmax+\geff\Vmax^{(h-1)}=\Vmax^{(h)}$ and hence
$\Delta^{\mathrm{exp}}[V,\nu](s,h)\in[0,2\Vmax^{(h)}]$.
Lemma~\ref{lem:composition} uses this identity.  The upper
endpoint is attained only where the maximum score equals $\Vmax^{(h)}$ and
$\nu$ concentrates on actions whose score is $-\Vmax^{(h)}$.
\end{proof}

\begin{remark}[Executed-transition targets and action-indexed heads\obj{\Rabs}]\label{rem:general_target}
The operator analysis needs only
\textnormal{(E1)} $\E[Y\mid\stilde,a]=Q_{u_k}(\stilde,a)$ for the executed
action and \textnormal{(E2)} a state-only regressand, whose response is
$\Tcal^\beta u_k=\int Q_{u_k}(\cdot,a)\beta(da\mid\cdot)$.  It separates
consistency $\|G_k-\Tcal^\beta u_k\|$ from optimality
$\|\Tcal^\beta u_k-\Tcal u_k\|$.  Thus the abstract results extend to any
head satisfying \textnormal{(E1)}--\textnormal{(E2)}.

A $Q$-head and Expected SARSA index the regressand by the action and therefore
fall outside \textnormal{(E2)}.  One-step state-value TD satisfies it and
evaluates $\Tcal^\beta$ unless policy improvement is added.  The routed
scalar-output entropy bound in Section~\ref{sec:vreg} includes the $H$-head
factor in~\eqref{eq:routed_entropy} and no output-coordinate factor
$|\Aspace|$; its approximation constants may still depend on the action set.
\end{remark}

\begin{lemma}[Existence and uniqueness of $\Vstar$\obj{\Gen}]\label{lem:exist_unique}
Under Assumption~\ref{ass:markov_state} the operator $\Tcal$ is a
$\geff$-contraction on $\mathcal V$ and leaves $\mathcal V^{\mathrm{clip}}$
invariant, so it has a unique fixed point $\Vstar\in\mathcal V^{\mathrm{clip}}$,
with $|\Vstar(s,h)|\le\Vmax^{(h)}$ at every level, and
$\Vstar(\stilde)=\sup_\pi\E_\pi[\sum_{t\ge0}\geff^tR_t\mid\stilde_0=\stilde]$.
\end{lemma}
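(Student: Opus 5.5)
The proof has three parts: the contraction property of $\Tcal$ on $\mathcal V$, the fixed point via Banach's theorem combined with invariance of the clip band, and the identification of $\Vstar$ with the optimal value.

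\emph{Contraction and well-posedness.} For $V\in\mathcal V$, the map $\stilde\mapsto Q_V(\stilde,a)$ is measurable, because $\Pjoint$ is a measurable kernel and $V$ is bounded measurable. It is also bounded, since $|R|\le\Rmax$. A finite maximum of measurable functions is measurable, and at the terminal state $\Pjoint$ puts unit mass on $(0,\stilde_{\mathrm{term}})$, so $(\Tcal V)(\stilde_{\mathrm{term}})=0$. Hence $\Tcal$ maps $\mathcal V$ into $\mathcal V$. For $U,V\in\mathcal V$ we have $|\max_aQ_U-\max_aQ_V|\le\max_a|Q_U-Q_V|\le\geff\|U-V\|_\infty$, because $\Pkernel(\cdot\mid\stilde,a)$ is a probability measure. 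So $\Tcal$ is a $\geff$-contraction.

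\emph{Fixed point and level-wise bound.} The space $\mathcal V$ is a closed subspace of $\Bb(\Stilde)$ and therefore complete. Banach's fixed-point theorem then gives a unique $\Vstar\in\mathcal V$ with $\Tcal\Vstar=\Vstar$. The band $\mathcal V^{\mathrm{clip}}$ is closed in the sup norm, being an intersection of pointwise closed constraints. It is also $\Tcal$-invariant by Lemma~\ref{lem:clip_projection}, eq.~\eqref{eq:clip_consequences}; that invariance rests on~\eqref{eq:vmax_recursion} and the clock decrement~\eqref{eq:time_decrement}. Starting the Picard iteration from $0\in\mathcal V^{\mathrm{clip}}$, every iterate stays in the band, and so does the limit. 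This gives $|\Vstar(s,h)|\le\Vmax^{(h)}$.

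\emph{Identification with the supremum over policies.} This is the step needing most care. The supremum should be taken over all history-dependent (possibly randomized) policies, with the process built through Assumption~\ref{ass:markov_state}. The argument has two inequalities.

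\begin{itemize}
\item \emph{Upper bound.} For an arbitrary admissible policy, apply~\eqref{eq:markov_sufficiency} with $f(r,\stilde')=r+\geff\Vstar(\stilde')$ and use $\Tcal\Vstar=\Vstar$. This gives
\[
\E[R_t+\geff\Vstar(\stilde_{t+1})\mid\mathcal H_t^{\mathrm{sys}}]\le\Vstar(\stilde_t).
\]
Hence $\sum_{t<T}\geff^tR_t+\geff^T\Vstar(\stilde_T)$ is a supermartingale. Taking expectations and letting $T\to\infty$ (or simply $T=H+1$, since by the clock the state is terminal by then and $\Vstar$ vanishes there) yields the return bound $\le\Vstar(\stilde_0)$.
\item \emph{Attainment.} The measurable greedy selector $a^\star(\cdot;\Vstar)$ from Lemma~\ref{lem:measurable_framework}(i) is a stationary Markov policy $\pi^\star$. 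For it, Lemma~\ref{lem:v_q_backup}(iii) turns the inequality into an equality, so the return equals $\Vstar$.
\end{itemize}

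The main obstacle is the bookkeeping that makes the supermartingale step rigorous for general history-dependent policies. This means conditioning on $(\mathcal H_t^{\mathrm{sys}},a_t)$ rather than on events $\{a_t=a\}$, then integrating over the policy's action law to get $\le\max_aQ_{\Vstar}$. The nilpotency in Assumption~\ref{ass:markov_state} removes any tail or limit issues, because all sums are finite.
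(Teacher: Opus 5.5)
Your proposal is correct and follows the same route as the paper. It uses the $\geff$-contraction estimate, Banach's theorem on the closed subspace $\mathcal V$, and band invariance from Lemma~\ref{lem:clip_projection}; it then identifies $\Vstar$ with the optimal value through the one-step inequality $Q_{\Vstar}(\stilde,a)\le\Vstar(\stilde)$ over the finite clock, with the measurable greedy selector attaining the bound. Your supermartingale telescoping is the forward form of the paper's backward induction over the clock. One small technicality: $f(r,\stilde')=r+\geff\Vstar(\stilde')$ is unbounded in $r$, so apply \eqref{eq:markov_sufficiency} after truncating $r$ at $\pm\Rmax$, which is harmless because $|r|\le\Rmax$ almost surely.
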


\begin{proof}
For $V,W\in\mathcal V$, finiteness of $\Aspace$ and kernel integration give
$|\Tcal V(\stilde)-\Tcal W(\stilde)|\le
\geff\max_a\int|V-W|\,d\Pkernel\le\geff\|V-W\|_\infty$.
The terminal convention makes $\mathcal V$ a closed, complete subspace of
$\Bb(\Stilde)$, so Banach's theorem gives a unique fixed point; invariance from
Lemma~\ref{lem:clip_projection} puts it in $\mathcal V^{\mathrm{clip}}$.
Backward induction over the clock bounds every history-dependent randomized
policy by this recursion, and the measurable greedy selector attains it
\cite{bertsekas1996neuro,bertsekas_shreve_1978,hernandez_lerma_lasserre_1996}.
\end{proof}

\section{Residual decomposition and policy-loss propagation}\label{sec:decomposition}

\subsection{Coverage and residuals}
\label{subsec:residual_interface}

Residuals are measured under replay marginals $\rho_{k,S}$, while policy
loss is evaluated from $\mu_S$.  Concentrability bounds the density ratios
between the state distributions reached from $\mu_S$ and the replay laws.
One way to verify Assumption~\ref{ass:concentrability} is domination by a
reset law $\varsigma$:
\begin{equation}
\mu_S^\circ\Pkernel^{\pi_1}_\circ\cdots\Pkernel^{\pi_m}_\circ\;\le\;\bar b\,\varsigma
\qquad\text{for all }1\le m\le H\text{ and all policy sequences},
\label{eq:reset_domination}
\end{equation}
then any replay law of the mixture form
\begin{equation}
\rho_{k,S}:=(1-\kappa)\sigma_{k,S}+\kappa\varsigma,\qquad\kappa\in(0,1],
\label{eq:kappa_mixture}
\end{equation}
satisfies $\rho_{k,S}\ge\kappa\varsigma$ and therefore
$c_{2,k}(m)\le\bar b/\kappa$.  Finite coverage requires reset mass on every
reachable clock slice; Proposition~\ref{prop:conc_budget} quantifies the cost
when one probability law covers all slices.  Sampling i.i.d.\ from the
mixture~\eqref{eq:kappa_mixture} requires arbitrary-state reset access or a
snapshot-replay oracle; under $\Aiid$, Assumption~\ref{ass:generative} fixes
$\rho_{k,S}=\varsigma$.  Lemma~\ref{lem:varying_propagation} pairs an
$L^p(\rho_{k,S})$ residual with an $L^s(\rho_{k,S})$ density, and the same
conjugate H\"older pairing determines the margin exponent in
Proposition~\ref{prop:l2_routing}.

\medskip
\noindent The residuals are defined as envelopes over one block.  Let
$\mathcal I_k^{\mathrm{opt}}$ index the gradient updates in block $k$ and let
$\mathcal I_k^{\mathrm{act}}$ index every online-network snapshot actually
used to select a collection action, including the initial and any post-update
snapshots that act.  Put
$\mathcal I_k=\mathcal I_k^{\mathrm{opt}}\cup\mathcal I_k^{\mathrm{act}}$.
With $\mathcal D:=\Stildeo\times\Aspace$ as the common comparison domain,
\begin{equation}
\begin{aligned}
\sup_{t\in\mathcal I_k^{\mathrm{act}}}\|V_{\theta_{k,t}}-V_k\|_\infty
&\le\dnet,\\
\sup_{V\in\mathcal F}\ \sup_{(\stilde,a)\in\mathcal D}
  \bigl|\widehat Q_{V}(\stilde,a)-Q_{V}(\stilde,a)\bigr|
&\le\etascorek,\\
\bigl\|\Tcal^{\beta^{\mathrm{rep}}_k}V_k-
\Tcal^{\pi^{\mathrm{on}}_k}V_k\bigr\|_{2,\rho_{k,S}}
&\le\epsbuf,
\qquad \Dexp\le\Dbar.
\end{aligned}
\label{eq:envelopes}
\end{equation}
These are network drift, score error, replay shift, and exploration
dispersion.  The symmetric class
$\mathcal F\subseteq\mathcal V^{\mathrm{clip}}$ contains every frozen iterate,
every acting snapshot, and their negatives (e.g.\
$\pm\bigcup_n\FVn$).  Restriction to $\mathcal F$ is essential: over the full
clipped band, fixed kernel mass away from a point prediction can make the
continuation error $\Omega(\Vmax)$ even with exact rewards.  The unindexed
$\etascore:=\sup_k\etascorek$ denotes a deterministic uniform envelope when
finite.  The
notation $\widehat Q_V$ suppresses a possible block index on the learned
model; a fixed implemented model has constant $\etascorek$.  In
\eqref{eq:envelopes} and the action section, $\widehat Q$ denotes the score
installed in slot~(S2): the point score for $\Arun$ and for the point-score
$\Aiid$ analysis, or the distinct $\Qoracle$ only in statements explicitly
tagged as oracle results.  Proposition~\ref{prop:dispersion} concerns the
point-score case.

\begin{proposition}[Dispersion control of the implemented score\obj{\Gen}]\label{prop:dispersion}
Suppose each nonterminal clock slice carries a metric $\mathrm{dist}$ whose
distance map is jointly measurable, the displayed conditional distances are
integrable, and every $V\in\mathcal F$ has the same nondecreasing concave
modulus $\omega$ on that slice, with $\omega(0)=0$.  Suppose also that
termination condition~\textnormal{(T)} of
Lemma~\ref{lem:tgt_residual} holds pointwise on $\mathcal D$, and that the
point model respects the clock and terminal conventions.  With
$R(\stilde,a):=\E[r\mid\stilde,a]$, the block score error satisfies
\begin{equation}
\sup_{V\in\mathcal F}\sup_{(\stilde,a)\in\mathcal D}
\bigl|\widehat Q_V(\stilde,a)-Q_V(\stilde,a)\bigr|
\;\le\;\sup_{(\stilde,a)\in\mathcal D}
\bigl|\widehat R(\stilde,a)-R(\stilde,a)\bigr|
\;+\;\geff\,\omega\Bigl(\sup_{(\stilde,a)\in\mathcal D}\E\bigl[\,\mathrm{dist}\bigl(\stilde',\Phat(\stilde,a)\bigr)
\bigm|\stilde,a\,\bigr]\Bigr).
\label{eq:dispersion_bound}
\end{equation}
Consequently any deterministic almost-sure majorant of the right-hand side is
an admissible choice of $\etascorek$ in~\eqref{eq:envelopes}.
\end{proposition}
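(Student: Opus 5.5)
Fix $V\in\mathcal F$ and $(\stilde,a)\in\mathcal D$, and split the score error into a reward part and a continuation part:
\[
\widehat Q_V(\stilde,a)-Q_V(\stilde,a)
=\bigl(\widehat R(\stilde,a)-R(\stilde,a)\bigr)
+\geff\Bigl(V\bigl(\Phat(\stilde,a)\bigr)-\E\bigl[V(\stilde')\bigm|\stilde,a\bigr]\Bigr).
\]
By Lemma~\ref{lem:v_q_backup}(i) and the convention $V(\stilde_{\mathrm{term}})=0$, the integral in~\eqref{eq:bellman_def} equals $\E[V(\stilde')\mid\stilde,a]$. The triangle inequality then bounds the first term by $\sup_{\mathcal D}|\widehat R-R|$. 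It remains to bound the continuation term uniformly by $\omega(\sup_{\mathcal D}\E[\mathrm{dist}(\stilde',\Phat(\stilde,a))\mid\stilde,a])$. The two supremum operations then yield~\eqref{eq:dispersion_bound}.

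For the continuation term, write $\bigl|V(\Phat)-\E[V(\stilde')]\bigr|\le\E\bigl[|V(\Phat)-V(\stilde')|\bigm|\stilde,a\bigr]$, which is Jensen's inequality for the absolute value. The analysis then splits according to where $\stilde'$ and $\Phat(\stilde,a)$ land.

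\begin{itemize}
\item \emph{Both on the same nonterminal slice.} The slice is level $h-1$ by~\eqref{eq:time_decrement} and the clock convention on the point model. The shared modulus gives $|V(\Phat)-V(\stilde')|\le\omega(\mathrm{dist}(\stilde',\Phat))$.
\item \emph{Both terminal.} The difference vanishes.
\item \emph{One terminal and one not.} This is where condition~(T) of Lemma~\ref{lem:tgt_residual} enters. I expect (T) to say that termination is determined by $(\stilde,a)$, matching the terminal outcome of the point model, so this mixed event has conditional probability zero. Alternatively, (T) may give a convention under which $\mathrm{dist}$ extends to $\stilde_{\mathrm{term}}$ compatibly with $\omega$. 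In either case the mixed event contributes at most $\omega(\mathrm{dist})$ or nothing.
\end{itemize}

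The case split is carried out with the measurable, integrable distance map assumed in the statement.

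Integrating the case bound, and using that $\omega$ is concave and nondecreasing with $\omega(0)=0$, Jensen's inequality gives
\[
\E\bigl[\omega(\mathrm{dist}(\stilde',\Phat))\bigm|\stilde,a\bigr]\le\omega\bigl(\E[\mathrm{dist}(\stilde',\Phat)\mid\stilde,a]\bigr).
\]
Monotonicity of $\omega$ then lets us replace the inner expectation by its supremum over $\mathcal D$. This bound uses only the modulus, and the modulus is common to all of $\mathcal F$. The supremum over $V\in\mathcal F$ is therefore harmless, and this is exactly why the restriction to $\mathcal F$ noted after~\eqref{eq:envelopes} is needed. The final sentence of the proposition is immediate: any deterministic almost-sure majorant of the right-hand side dominates the left-hand side, so it satisfies the defining inequality of $\etascorek$ in~\eqref{eq:envelopes}.

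The main obstacle is the terminal/nonterminal mismatch case, since $\omega$ is only defined slicewise. I will first try to use~(T) to show that this event is null under $\Pkernel(\cdot\mid\stilde,a)$ whenever $\Phat(\stilde,a)$ is nonterminal, and conversely. If (T) only gives agreement of termination in conditional expectation, I would instead bound the contribution by $\Vmax^{(h-1)}$ times the mismatch probability. Absorbing that term would then require an additional hypothesis, so I expect (T) to be stated precisely to exclude it.
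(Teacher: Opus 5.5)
Your proposal is correct and follows the paper's own argument. Condition~(T) makes termination a deterministic function of $(\stilde,a)$, and the point model respects the terminal and clock conventions, so the mixed terminal/nonterminal event is null and terminal pairs contribute zero; at nonterminal pairs $\stilde'$ and $\Phat(\stilde,a)$ share slice $h-1$, and the common modulus plus conditional Jensen for the concave nondecreasing $\omega$ give the bound before taking the suprema over $V$ and $\mathcal D$. Your first reading of~(T) is exactly the one the paper uses, so the fallback mismatch-probability term is never needed.
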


\begin{proof}
At terminal pairs both continuations vanish.  Otherwise $\stilde'$ and
$\Phat(\stilde,a)$ lie on the same clock slice, so for every $V\in\mathcal F$,
\[
|\widehat Q_V-Q_V|\le |\widehat R-R|
+\geff\E\omega\!\left(\mathrm{dist}(\stilde',\Phat(\stilde,a))\right)
\le |\widehat R-R|+\geff\omega\!\left(\E\mathrm{dist}(\stilde',\Phat(\stilde,a))\right)
\]
by concavity and Jensen.  Take the suprema over $V$ and $\mathcal D$.
\end{proof}

For deterministic dynamics Proposition~\ref{prop:dispersion} becomes
$\sup_{\mathcal D}|\widehat R-R|+
\geff\omega(\sup_{\mathcal D}\|\Phat-F\|)$.  For stochastic dynamics,
dispersion enlarges only this upper bound.  The common modulus is an
equicontinuity assumption on $\mathcal F$, imposed separately from finite ReLU
representation.

\medskip
\noindent All six residual bounds and $\Dbar$ are deterministic almost-sure
envelopes.  Write $Z=(r,\stilde',d)$ for a selected record's outcome and
$M_k$ for its standard-Borel label-construction metadata, so that
$Y_k=\ell_k(S,\mathbf a,M_k,Z)$ for a bounded measurable label map fixed by
$\mathcal J_k$.  For stored labels, $M_k$ includes the target-copy age and the
network parameters and, for \textsc{pred}, predictor parameters used at writing;
these are accounting
variables and need not all be retained in the buffer.  No independence
between $M_k$ and $Z$ is assumed.  For sample-time labels whose network and
predictor are fixed by $\mathcal J_k$, $M_k$ may be constant.
Let $G_k$ be a chosen version of $\E[Y_k\mid S,\mathcal J_k]$.  Define its
ideal counterpart by retaining $(S,\mathbf a,M_k)$ and redrawing only the
outcome:
\begin{equation}
\begin{aligned}
\mathcal L(Z^\circ\mid S,\mathbf a,M_k,\mathcal J_k)
&=\Pjoint(\cdot\mid S,\mathbf a),\\
G_k^{\mathrm{ideal}}(S)
&:=\E[\ell_k(S,\mathbf a,M_k,Z^\circ)\mid S,\mathcal J_k].
\end{aligned}
\label{eq:ideal_metadata_redraw}
\end{equation}
The laws
$\rho_{k,S},\beta_k^{\mathrm{rep}}$ and both responses are
$\mathcal J_k$-measurable.  The fit link is controlled in conditional mean;
the other five links are controlled almost surely by their displayed
deterministic envelopes.  Thus
\begin{align}
\E\bigl[\|V_{k+1}-G_k\|_{2,\rho_{k,S}}\bigm|\mathcal J_k\bigr]
&\;\le\;\eps_{\mathrm{fit},k}
&&\text{(\emph{fit}: the solver against its own response)},
\label{eq:fit_residual}\\
\bigl\|G_k-G^{\mathrm{ideal}}_k\bigr\|_{2,\rho_{k,S}}
&\;\le\;\eps_{\mathrm{ker},k}
&&\text{(\emph{kernel realization}: reuse of realized outcomes)},
\label{eq:ker_residual}\\
\bigl\|G^{\mathrm{ideal}}_k-\Tcal^{\beta^{\mathrm{rep}}_k}V_k\bigr\|_{2,\rho_{k,S}}
&\;\le\;\eps_{\mathrm{tgt},k}
&&\text{(\emph{target construction}: the switches of slot~(S6))},
\label{eq:tgt_residual}
\end{align}
Under population-law accounting, suppose that a selected record satisfies the
following conditional kernel-retention hypothesis, including the
label-construction metadata: for every bounded
measurable $f$,
\begin{equation}
\E\bigl[f(r,\stilde',d)\mid S,\mathbf a,M_k,\mathcal J_k\bigr]
=\int f\,\mathrm d\Pjoint(\cdot\mid S,\mathbf a)
\quad\text{a.s.}
\label{eq:conditional_kernel_retention}
\end{equation}
Conditioning first on $(S,\mathbf a,M_k,\mathcal J_k)$ and then averaging
shows that $G_k=G_k^{\mathrm{ideal}}$, so $\eps_{\mathrm{ker},k}=0$.
For the current observed-successor label, the tower property also gives the
replayed operator,
\begin{equation}
\E\bigl[\,r+\geff(1-d)V_k(\stilde')\,\bigm|\,S=\stilde,\mathcal J_k\,\bigr]
=(\Tcal^{\beta^{\mathrm{rep}}_k}V_k)(\stilde),
\label{eq:population_target}
\end{equation}
where $\mathcal J_k$ is pre-block information.  For sample-time labels fixed
by $\mathcal J_k$, conditioning only on $(S,\mathbf a,\mathcal J_k)$ in
\eqref{eq:conditional_kernel_retention} suffices.  For stored labels it need
not suffice: record age or writing-time parameters can remain correlated
with the selected outcome.  The full condition holds for a fresh true-kernel
draw made after the metadata are fixed.  A sample-splitting or
outcome-independent retention argument must establish this conditional law
given both the chosen history and the metadata; it must be verified for
adaptive FIFO replay.  Under
realized-buffer accounting, $\mathcal J_k=\mathcal B_k$ and
$\eps_{\mathrm{ker},k}$ measures the empirical response's deviation from its
fresh-outcome counterpart, ready for a process-specific replay bound.  For $\Aiid$,
$\eps_{\mathrm{ker},k}=\eps_{\mathrm{tgt},k}=0$.

Lemma~\ref{lem:tgt_residual} gives the target-switch accounting:
$(\textsc{sample},\textsc{obs})$ has zero target residual, while
\textsc{store} charges conditional target-network staleness and
\textsc{pred} charges current continuation-score error (plus its stated
terminal-mask correction).  Their combination also charges any change
between the writing-time and current predictors.
An executed-transition target therefore pays behavior, online-action, and
exploration links.  A model-built target
$\max_a\{\widehat R(\stilde,a)+\geff V_k(\Phat(\stilde,a))\}$ removes those
links but inserts score error into every target; neither design is uniformly
preferred.  The replay residual $\epsbuf$ is a same-state operator distance.

\begin{lemma}[Target-switch envelope\obj{\Rabs}]\label{lem:tgt_residual}
Let $A\in\{0,\ldots,k\}$ be a replayed record's age in target copies and
$\bar\delta_k(A):=\|V_k-V_{k-A}\|_\infty$.  Write $\Phat_k$ for the current
predictor fixed by $\mathcal J_k$ (the block index previously suppressed)
and $\Phat_{\mathrm{write}}$ for the predictor used to construct a stored
prediction label, identified by $M_k$.  Its version need not be determined by
$A$.  For this cell define
\begin{equation}
C_{\mathrm{pred},k}(\stilde):=\geff\,\E\!\left[
\left|V_k(\Phat_{\mathrm{write}}(S,\mathbf a))
-V_k(\Phat_k(S,\mathbf a))\right|
\bigm| S=\stilde,\mathcal J_k\right].
\label{eq:tgt_predictor_staleness}
\end{equation}
This quantity vanishes for a fixed predictor; set it to zero in the other
cells.  Let
$\eta_k^{\mathrm{cont}}:=\sup_{V\in\mathcal F,\stilde,a}
\geff|V(\Phat_k(\stilde,a))-\int V\,d\Pkernel(\cdot\mid\stilde,a)|$.
When the point score is installed in slot~\textnormal{(S2)}, symmetry gives
$\eta_k^{\mathrm{cont}}\le\etascorek$; under other score access it remains a
separate target-construction quantity.  If the terminal event is determined
by $(\stilde,a)$, then
the four cells of slot~\textnormal{(S6)} satisfy
\begin{equation}
\begin{aligned}
\|G_k^{\mathrm{ideal}}-\Tcal^{\beta_k^{\mathrm{rep}}}V_k\|_{2,\rho_{k,S}}
&\le\Indic\{\textsc{from}=\textsc{pred}\}\eta_k^{\mathrm{cont}}\\[-2pt]
&\quad+\Indic\{\textsc{when}=\textsc{store}\}\geff
\|\E[\bar\delta_k(A)\mid S,\mathcal J_k]\|_{2,\rho_{k,S}}\\
&\quad+\Indic\{(\textsc{when},\textsc{from})=(\textsc{store},\textsc{pred})\}
\|C_{\mathrm{pred},k}\|_{2,\rho_{k,S}}.
\end{aligned}
\label{eq:tgt_residual_bound}
\end{equation}
In the general termination case, the predicted-successor cells add
$\|C_{\mathrm{mask},k}\|_{2,\rho_{k,S}}$, where
\[
C_{\mathrm{mask},k}(\stilde):=\geff\int
\Pr(d=1\mid\stilde,a)|V_k(\Phat_k(\stilde,a))|
\,\beta_k^{\mathrm{rep}}(da\mid\stilde).
\]
Thus the replay-action average is taken before the state norm, and both
staleness quantities remain conditional on $S$ and $\mathcal J_k$.
Any deterministic almost-sure majorant of the resulting right-hand side is
an admissible $\eps_{\mathrm{tgt},k}$.
\end{lemma}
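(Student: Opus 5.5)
The plan is to write the ideal label, for each of the four cells of slot~(S6), as the ideal current-observed label plus explicit perturbations, and to bound each perturbation pointwise in $S$ before taking the $L^2(\rho_{k,S})$ norm. Fix a selected record with state $S=\stilde$, action $\mathbf a$ and metadata $M_k$, and redraw the outcome $Z^\circ\sim\Pjoint(\cdot\mid S,\mathbf a)$ as in~\eqref{eq:ideal_metadata_redraw}.

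\textbf{Baseline cell $(\textsc{sample},\textsc{obs})$.} The label is $r+\geff(1-d)V_k(\stilde')$. The target network $V_k$ is fixed by $\mathcal J_k$. Lemma~\ref{lem:v_q_backup}(i) gives conditional mean $Q_{V_k}(S,\mathbf a)$ given $(S,\mathbf a,M_k,\mathcal J_k)$. Averaging $\mathbf a$ over $\beta_k^{\mathrm{rep}}(\cdot\mid S)$, the version fixed in Lemma~\ref{lem:measurable_framework}(iii), yields $\Tcal^{\beta_k^{\mathrm{rep}}}V_k(S)$. The residual is therefore zero, consistent with~\eqref{eq:population_target}.

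\textbf{Store/observed.} The label uses $V_{k-A}$ instead of $V_k$. Its pointwise gap is $\geff(1-d)|V_{k-A}(\stilde')-V_k(\stilde')|\le\geff\bar\delta_k(A)$. Since $A$ is part of $M_k$ and the redraw does not touch $M_k$, conditioning on $(S,\mathcal J_k)$ bounds the response difference by $\geff\E[\bar\delta_k(A)\mid S,\mathcal J_k]$. Taking the $\rho_{k,S}$ norm gives the second line of~\eqref{eq:tgt_residual_bound}.

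\textbf{Prediction cells.} The label is $r+\geff(1-d)V(\Phat(S,\mathbf a))$, with $V=V_k$ or $V_{k-A}$ and $\Phat=\Phat_k$ or $\Phat_{\mathrm{write}}$.
\begin{itemize}
\item First replace the network by $V_k$ at cost $\geff\bar\delta_k(A)$, as in the store/observed cell.
\item Next, in the \textsc{store} case, replace $\Phat_{\mathrm{write}}$ by $\Phat_k$. Conditioning gives exactly $C_{\mathrm{pred},k}(S)$ as defined in~\eqref{eq:tgt_predictor_staleness}.
\item Then compare $\E[r+\geff(1-d)V_k(\Phat_k(S,\mathbf a))\mid S,\mathbf a]$ with $Q_{V_k}(S,\mathbf a)$. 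When termination is determined by $(\stilde,a)$, $d$ is a function of $(S,\mathbf a)$. On $d=1$ both continuations vanish. On $d=0$ the difference is $\geff|V_k(\Phat_k)-\int V_k\,d\Pkernel|\le\eta_k^{\mathrm{cont}}$, using $V_k\in\mathcal F$.
\item Finally, the point-score claim $\eta_k^{\mathrm{cont}}\le\etascorek$ follows by subtracting $\widehat R-R$. The symmetry of $\mathcal F$ lets the supremum absorb signs. Taking $V$ and the reward error of opposite signs is what needs care; I expect to use $\pm V\in\mathcal F$ and average the two score errors, $\tfrac12\bigl[(\widehat Q_V-Q_V)-(\widehat Q_{-V}-Q_{-V})\bigr]$, which isolates the continuation term.
\end{itemize}

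\textbf{General termination.} Here $d$ is random given $(S,\mathbf a)$. The predicted label carries $(1-d)$ while $\int V_k\,d\Pkernel$ already vanishes on terminal successors. Writing $(1-d)V_k(\Phat_k)=V_k(\Phat_k)-dV_k(\Phat_k)$, the extra term has conditional magnitude at most $\geff\Pr(d=1\mid S,\mathbf a)|V_k(\Phat_k(S,\mathbf a))|$. This is where the termination condition is needed to make $\Phat_k$'s terminal convention consistent. Averaging over $\beta_k^{\mathrm{rep}}$ gives $C_{\mathrm{mask},k}$.

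\textbf{Assembly and main obstacle.} Pointwise in $S$, sum the conditional bounds with the cell indicators, apply Minkowski in $L^2(\rho_{k,S})$, and use $\|\text{const}\|_2\le$ const for the $\eta_k^{\mathrm{cont}}$ term. The main obstacle is justifying that the conditional expectations are taken correctly when $A$ and $\Phat_{\mathrm{write}}$ may be correlated with the original outcome. This is exactly why the residual is stated for $G^{\mathrm{ideal}}_k$ with a metadata-preserving redraw. Every bound is first taken conditionally on $(S,\mathbf a,M_k,\mathcal J_k)$, under which $Z^\circ$ has the true kernel law, and only then averaged. Any deterministic majorant is then admissible, since it dominates the almost-sure bound.
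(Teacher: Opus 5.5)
Your proposal is correct and follows essentially the paper's own argument. It compares cell by cell, conditionally on $(S,\mathbf a,M_k,\mathcal J_k)$ under the metadata-preserving redraw, then averages and applies Minkowski in $L^2(\rho_{k,S})$. The network-then-predictor split gives $\geff\E[\bar\delta_k(A)\mid S,\mathcal J_k]+C_{\mathrm{pred},k}(S)$, the $\pm V$ cancellation gives $\eta_k^{\mathrm{cont}}\le\etascorek$ without a factor two, and the rewriting $(1-d)V_k(\Phat_k)=V_k(\Phat_k)-dV_k(\Phat_k)$ produces $C_{\mathrm{mask},k}$, exactly as in the paper. The only blemish is your aside that the termination condition is ``needed'' in the general-termination paragraph: there (T) is not assumed, and $C_{\mathrm{mask},k}$ is precisely the price of dropping it.
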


\noindent\emph{Proof:} See Appendix~\ref{app:proofs}.

Finally, $\dpath$, $\dsnap$, $\Dexp$, and realized fit errors are random.
The symbols $\dnet$, $\etascorek$, $\etascore$, and $\etascoreK$ are
deterministic, as are
$\eps_{\mathrm{fit},k}$, $\eps_{\mathrm{ker},k}$,
$\eps_{\mathrm{tgt},k}$, $\epsbuf$, $\epsactr{r}$ for $1\le r\le2$,
$\Dbar$, and $\Lambda_k$.
Random quantities enter later bounds only inside expectations or through such
envelopes.

\subsection{Concentrability and finite-horizon propagation}\label{subsec:propagation}

\begin{assumption}[Finite-horizon $L^s$ concentrability of population replay\obj{\Rabs}]\label{ass:concentrability}
Fix $s\in[2,\infty]$ and let $p=s/(s-1)$, with $p=1$ when $s=\infty$.
For every $1\le m\le H$ and every sequence $\pi_1,\ldots,\pi_m$ of
\emph{randomized} Markov policies (measurable kernels from $\Stildeo$ into the
simplex over $\Aspace$, including deterministic selectors), assume
$\mu_S^\circ\Pkernel^{\pi_1}_\circ\cdots\Pkernel^{\pi_m}_\circ\ll\rho_{k,S}$ and
set
\begin{equation}
d_{s,k}(m):=\sup_{\pi_1,\ldots,\pi_m}
\Bigl\|\tfrac{d(\mu_S^\circ\Pkernel^{\pi_1}_\circ\cdots\Pkernel^{\pi_m}_\circ)}{d\rho_{k,S}}\Bigr\|_{s,\rho_{k,S}} .
\label{eq:concentrability_ck}
\end{equation}
We assume the pathwise form \textnormal{(C-strong)}: a deterministic
$d_s(m)<\infty$ with $d_{s,k}(m)\le d_s(m)$ almost surely for all $k$; by
clock nilpotence take $d_s(H)=0$.  The supremum in
\eqref{eq:concentrability_ck} is pointwise over all policy sequences: after a
history is fixed it therefore includes policies selected from that history,
although every factor remains a Markov kernel in the current state.  Define
\begin{equation}
\phi_{s,K}^{(H)}:=\sum_{m=1}^{H-1}\min\{K,m\}\geff^m d_s(m),
\qquad
\phi_s^{(H)}:=\phi_{s,H-1}^{(H)}=\sum_{m=1}^{H-1}m\geff^m d_s(m).
\label{eq:concentrability}
\end{equation}
At $s=2$ write $c_{2,k}(m):=d_{2,k}(m)^2$, $c_2(m):=d_2(m)^2$, and
$\phi^{(H)}_{\mu_S,\rho}:=\phi_2^{(H)}$; at $s=\infty$ write
$c_\infty(m):=d_\infty(m)$.  H\"older pairs the density with an
$L^p(\rho_{k,S})$ residual.  Existing $L^2$ envelopes for the nonaction links
remain valid because $p\le2$ and $\rho_{k,S}$ is a probability law; the action
link may use its sharper $p$-specific envelope.
\end{assumption}

\begin{assumption}[Generative reset access\obj{\Aiid,\Alev}]\label{ass:generative}
For $\Aiid$, the simulator can be reset to a state drawn from the designer's
single reset law $\varsigma$ on $\Stildeo$, after which one behavior-policy
action is executed and one true-kernel transition is observed.  For $\Alev$,
it can be reset independently to each $\mathcal J_k$-measurable
clock-slice law $\rho^{(h)}_{k,S}$ specified in
Theorem~\ref{thm:level_indexed}.  These are distinct access assumptions.
If slice laws are obtained by rejection from $\varsigma$, rejected draws must
be added to the sample count; Theorem~\ref{thm:level_indexed} counts direct
slice-reset draws only.
\end{assumption}

\begin{definition}[Top-slice evaluation and horizon-indexed families\obj{\Gen}]
\label{def:horizon_family}
For a fixed horizon $H$, \emph{top-slice evaluation} means
\begin{equation}
\mu_S(\mathcal S\times\{H\})=1.
\label{eq:top_slice_evaluation}
\end{equation}
A \emph{horizon-indexed family} $\{\mathcal I_H\}_{H\ge2}$ consists of one
instance of the augmented model and its evaluation/design objects for each
$H$, with discount $\geff_H$, iteration index $K_H$, evaluation law
$\mu_{S,H}$, replay laws $\rho_{k,S,H}$, survival masses $q_{H,m}$, and
concentrability envelopes $d_{s,H}(m)$.  Within a fixed member, the $H$
subscripts are suppressed.  Every asymptotic symbol in $H$ refers to this
family, and every constant declared uniform is independent of $H$.  The family
is \emph{near-unit-discount} when
$(1-\geff_H)H\to\varkappa\in[0,\infty)$, and is
\emph{fixed-discount} when $\geff_H\equiv\geff$ for one
$\geff\in(0,1)$.
\end{definition}

\begin{theorem}[Finite-$K$ $L^s/L^p$ clock tradeoff\obj{\Rabs}]\label{prop:conc_budget}
Assume \textnormal{(C-strong)}, top-slice evaluation in the sense of
Definition~\ref{def:horizon_family}, $H\ge2$, and $K\ge1$.
For
$\mu_m^{\pi_{1:m}}:=\mu_S^\circ\Pkernel^{\pi_1}_\circ\cdots
\Pkernel^{\pi_m}_\circ$, set
$q_m:=\sup_{\pi_{1:m}}\mu_m^{\pi_{1:m}}(\Stildeo)$,
$a_{K,m}:=\min\{K,m\}\geff^m$, and
$\theta:=p/(p+1)$ (equivalently $s/(2s-1)$ for finite $s$, and $1/2$ at
$s=\infty$).  Every shared design law satisfies
\begin{equation}
 \sum_{m=1}^{H-1}\left(\frac{q_m}{d_s(m)}\right)^p\le1,
 \qquad
 \phi_{s,K}^{(H)}\ge
 \left\{\sum_{m=1}^{H-1}(a_{K,m}q_m)^\theta\right\}^{1/\theta}.
\label{eq:conc_budget}
\end{equation}
Zero-survival coordinates are omitted.  The second bound is an exact
relaxation: on the deterministic one-state-per-level chain with $q_m=1$, the
$K$-specific clock masses
\begin{equation}
r_{H-m}=\frac{a_{K,m}^\theta}{\sum_{j=1}^{H-1}a_{K,j}^\theta}
\label{eq:clock_attainment}
\end{equation}
give $d_s(m)=r_{H-m}^{-1/p}$ and equality in~\eqref{eq:conc_budget}.
\end{theorem}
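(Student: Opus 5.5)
The proof rests on a single structural fact. Under top-slice evaluation~\eqref{eq:top_slice_evaluation} and the clock decrement~\eqref{eq:time_decrement}, every $m$-step measure $\mu_m^{\pi_{1:m}}$ is supported on the slice $\mathcal S\times\{H-m\}$. For $1\le m\le H-1$ these slices are pairwise disjoint. So the single shared design law $\rho_{k,S}$ must split its unit mass among them, and each concentrability coefficient is bounded below by how little mass its slice receives.

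\textbf{Step 1 (per-slice lower bound).} Write $r_{H-m}:=\rho_{k,S}(\mathcal S\times\{H-m\})$. Fix a policy sequence $\pi_{1:m}$ and put $f:=d\mu_m^{\pi_{1:m}}/d\rho_{k,S}$, which vanishes off slice $H-m$. H\"older's inequality with exponents $(s,p)$ on that slice gives
\[
\mu_m^{\pi_{1:m}}(\Stildeo)=\int_{\mathcal S\times\{H-m\}}f\,d\rho_{k,S}\le\|f\|_{s,\rho_{k,S}}\,r_{H-m}^{1/p}\le d_s(m)\,r_{H-m}^{1/p},
\]
where the last step uses (C-strong). Taking the supremum over $\pi_{1:m}$ yields $q_m\le d_s(m)r_{H-m}^{1/p}$. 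Equivalently $(q_m/d_s(m))^p\le r_{H-m}$, and $d_s(m)\ge q_m r_{H-m}^{-1/p}$ whenever $q_m>0$. This is the reason zero-survival coordinates are dropped: if $q_m>0$ then $r_{H-m}>0$ is forced. Summing over $m$ and using disjointness of the slices, $\sum_m r_{H-m}\le1$, which proves the first inequality in~\eqref{eq:conc_budget}. The case $s=\infty$ (so $p=1$) is the same argument with $\|f\|_\infty$.

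\textbf{Step 2 (optimizing over the unknown clock masses).} Put $c_m:=a_{K,m}q_m$. Since $K\ge1$, every $a_{K,m}>0$. Step~1 gives $\phi_{s,K}^{(H)}\ge\sum_m c_m r_{H-m}^{-1/p}$, and it remains to lower-bound this over all $r$ with $\sum r\le1$. I will use a reverse-H\"older trick rather than Lagrange multipliers. Write
\[
c_m^\theta=\bigl(c_m r_{H-m}^{-1/p}\bigr)^\theta\, r_{H-m}^{\theta/p}
\]
and apply H\"older with exponents $1/\theta$ and $1/(1-\theta)$. With $\theta=p/(p+1)$ we have $\tfrac{\theta/p}{1-\theta}=1$, so
\[
\sum_m c_m^\theta\le\Bigl(\sum_m c_m r_{H-m}^{-1/p}\Bigr)^\theta\Bigl(\sum_m r_{H-m}\Bigr)^{1-\theta}\le\bigl(\phi_{s,K}^{(H)}\bigr)^\theta .
\]
This is the second inequality in~\eqref{eq:conc_budget}. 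The identities $\theta=s/(2s-1)$, and $\theta=1/2$ at $s=\infty$, are routine algebra.

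\textbf{Step 3 (attainment).} Take the deterministic chain with one state per level, so $q_m=1$. Let $\rho$ put mass $r_{H-m}$ from~\eqref{eq:clock_attainment} on level $H-m$ for $1\le m\le H-1$; these masses sum to one, so the top slice gets no mass. The only reachable measure at step $m$ is a point mass, so its density equals $1/r_{H-m}$ on one atom of $\rho$-mass $r_{H-m}$. Its $L^s$ norm is therefore $r_{H-m}^{-1}r_{H-m}^{1/s}=r_{H-m}^{-1/p}$, and we may take $d_s(m)$ equal to this value. Both H\"older steps are then tight. Step~1 is tight because the density is constant on its slice. Step~2 is tight because equality in H\"older requires $c_m r^{-1/p}\propto r$, i.e.\ $r\propto c_m^{\theta}$, which is exactly~\eqref{eq:clock_attainment}; in addition $\sum r=1$.

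\textbf{Expected difficulty.} The main point requiring care is the choice of exponent in Step~2, since the value $\theta=p/(p+1)$ is what makes the H\"older weights collapse to $\sum r$. A secondary point is to state precisely that $d_s(m)$ in the attainment claim is the minimal envelope $d_{s,k}(m)$.
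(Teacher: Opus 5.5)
Your proof is correct and follows the paper's argument: slice-supported H\"older gives $q_m\le d_s(m)r_{H-m}^{1/p}$, summing over the disjoint slices gives the first inequality, H\"older with the conjugate pair $(1/\theta,\,p+1)$ gives the second, and substituting~\eqref{eq:clock_attainment} on the point-mass chain gives equality. The only difference is cosmetic: the paper applies H\"older directly to $(a_{K,m}d_s(m))^\theta(q_m/d_s(m))^\theta$ with the first inequality as the budget, whereas you first pass to $\sum_m c_m r_{H-m}^{-1/p}$ and apply H\"older against $\sum_m r_{H-m}\le1$, which is the same computation; also, zero-survival coordinates are omitted to avoid a $0/0$ ratio, since $q_m>0$ already forces $d_s(m)>0$ in your Step~1.
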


\begin{proof}
The depth-$m$ law is supported on $L_m=\mathcal S\times\{H-m\}$.  If
$r_{H-m}=\rho_{k,S}(L_m)$ and $f_m$ is its density, slice-supported H\"older
gives $q_m\le\|f_m\|_s r_{H-m}^{1/p}\le d_s(m)r_{H-m}^{1/p}$; summing over the
disjoint slices proves the first inequality.  Applying H\"older with exponents
$1/\theta$ and $p+1$ to
$(a_{K,m}d_s(m))^\theta(q_m/d_s(m))^\theta$ proves the second.  On the stated
chain the depth law is a point mass on its unique slice, so substitution of
\eqref{eq:clock_attainment} gives equality.
\end{proof}

\begin{corollary}[Sharp horizon regimes for the shared-law clock coefficient\obj{\Rabs}]
\label{prop:clock_scaling}\label{cor:clock_regimes}
Apply Theorem~\ref{prop:conc_budget} to a top-slice horizon-indexed family and
assume the uniform survival floor
$\inf_{H\ge2}\inf_{1\le m<H}q_{H,m}\ge q>0$.
\begin{enumerate}[label=\textnormal{(\alph*)},leftmargin=2.2em,itemsep=2pt,topsep=3pt]
\item In the near-unit-discount regime, if
$K_H/H\to\tau\in(0,\infty]$, every shared design law obeys
\begin{equation}
\phi_{s,K_H}^{(H)}\ge qA_{s,\tau}(\varkappa)H^{3-1/s}(1+o(1)),\quad
A_{s,\tau}(\varkappa):=
\left\{\int_0^1\min\{\tau,x\}^\theta e^{-\varkappa\theta x}\,dx\right\}^{1/\theta},
\label{eq:clock_scaling}
\end{equation}
where $\min\{\infty,x\}=x$.
\item In the same near-unit-discount regime, if instead
$1\le K_H=o(H)$, every shared law obeys the order
$\Omega(K_H\,H^{2-1/s})$.
\item In the fixed-discount regime, the sharp shared-law relaxation is
$\Theta(1)$ for every sequence $K_H\ge1$, whereas the uniform $H$-slice law
on the one-state-per-level family has order $\Theta(H^{1-1/s})$.
\end{enumerate}
The deterministic one-state-per-level family has $q_{H,m}=1$, and its
$K_H$-specific law~\eqref{eq:clock_attainment} attains the orders in
\textnormal{(a)}--\textnormal{(c)}.  Thus each order is best possible for the
shared-law clock-coefficient problem of Theorem~\ref{prop:conc_budget}.
\end{corollary}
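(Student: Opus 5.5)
The plan is to feed the uniform survival floor into the lower bound~\eqref{eq:conc_budget} of Theorem~\ref{prop:conc_budget}, and then read off the asymptotics of the resulting power sum. Since $q_{H,m}\ge q$ and $x\mapsto x^{1/\theta}$ is increasing, every shared law obeys
\[
\phi_{s,K_H}^{(H)}\ge q\,\Bigl\{\sum_{m=1}^{H-1}a_{K_H,m}^\theta\Bigr\}^{1/\theta},
\qquad a_{K,m}=\min\{K,m\}\geff_H^m .
\]
On the deterministic one-state-per-level family we have $q_{H,m}=1$. There the law~\eqref{eq:clock_attainment} gives equality, with $q=1$, in the same expression. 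So upper and lower orders coincide, and everything reduces to the asymptotics of $S_H:=\sum_{m<H}a_{K_H,m}^\theta$. The exponents will come from $1/\theta=(2s-1)/s=2-1/s$; for $s=\infty$ this is $2$, consistent with $3-1/s\to3$.

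For~(a), substitute $m=xH$. Near-unit discount gives $\geff_H^{m}=\exp(m\log\geff_H)=e^{-\varkappa x}(1+o(1))$ uniformly in $x\in[0,1]$, because $H\log\geff_H\to-\varkappa$. Likewise $\min\{K_H,m\}=H\min\{K_H/H,x\}$, and $K_H/H\to\tau$. Then $S_H=H^{\theta}\sum_m\min\{\cdot\}^\theta e^{-\varkappa\theta x}(1+o(1))$. The sum is a Riemann sum equal to $H\int_0^1\min\{\tau,x\}^\theta e^{-\varkappa\theta x}dx\,(1+o(1))$, where for $\tau=\infty$ the integrand is $x^\theta e^{-\varkappa\theta x}$. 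The integrand is bounded and continuous, and the convergence $\min\{K_H/H,x\}\to\min\{\tau,x\}$ is uniform on $[0,1]$ since $|\min\{a,x\}-\min\{b,x\}|\le|a-b|$; the case $\tau=\infty$ is even simpler, since eventually $K_H\ge H$. Hence $S_H^{1/\theta}=A_{s,\tau}(\varkappa)H^{(1+\theta)/\theta}(1+o(1))$, and $(1+\theta)/\theta=1+2-1/s=3-1/s$.

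For~(b), let $K_H=o(H)$. Then $\min\{K_H,m\}=K_H$ for $m\ge K_H$, and $\geff_H^m\ge c>0$ uniformly for $m<H$, because $\geff_H^H\to e^{-\varkappa}$. So $S_H\ge cK_H^\theta(H-K_H)\sim cK_H^\theta H$, and therefore $S_H^{1/\theta}\gtrsim K_H H^{1/\theta}=K_HH^{2-1/s}$. The same computation with the upper bound $\min\{K_H,m\}\le K_H$ and $\geff_H\le1$ gives the matching order for the attaining chain.

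For~(c), the geometric factor $\geff^{\theta m}$ makes $S_H$ bounded above by $\sum_m m^\theta\geff^{\theta m}<\infty$, and bounded below by its $m=1$ term $\geff^\theta$. This gives $\Theta(1)$ for both the lower bound and the attaining law. For the uniform slice law, $r_{H-m}=1/H$, so $d_s(m)=H^{1/p}$. Then
\[
\phi=H^{1/p}\sum_m\min\{K,m\}\geff^m\in\Bigl[\geff H^{1/p},\ H^{1/p}\sum_m m\geff^m\Bigr],
\]
and $1/p=1-1/s$, giving $\Theta(H^{1-1/s})$. The main obstacle is the uniform-in-$x$ control in~(a), in particular the boundary behaviour of the Riemann sum together with the $\tau=\infty$ convention; dominated convergence on step functions $x\mapsto H^{-\theta}a_{K_H,\lceil xH\rceil}^\theta$ handles it cleanly.
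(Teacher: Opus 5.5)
Your proposal is correct and follows essentially the same route as the paper. You insert the survival floor into the relaxation of Theorem~\ref{prop:conc_budget}, then evaluate $\sum_{m<H}a_{K_H,m}^\theta$ three ways: by a Riemann sum in (a), by the indices $m\ge K_H$ in (b), and by geometric summability in (c). For sharpness you use exact attainment on the one-state-per-level chain and $d_s(m)=H^{1/p}$ for the uniform slice law. Your treatment of uniform convergence in $x$ and of the case $\tau=\infty$ only fills in details the paper leaves implicit.
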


\begin{proof}
For~\textnormal{(a)}, divide
$\sum_{m<H}a_{K_H,m}^\theta$ by $H^{1+\theta}$ and use the Riemann sum in
\eqref{eq:clock_scaling}; $(1+\theta)/\theta=3-1/s$.  For $K_H=o(H)$, split at
$m=K_H$: the tail is $\Theta(K_H^\theta H)$ and dominates the
$O(K_H^{1+\theta})$ initial part, proving~\textnormal{(b)}.  Theorem~\ref{prop:conc_budget}
gives both lower bounds and its one-state family gives equality.  For
\textnormal{(c)}, $a_{K_H,m}\le m\geff^m$ makes
$\sum_m a_{K_H,m}^\theta$ uniformly finite and bounded away from zero.  The
$K_H$-specific masses attain this constant order, while the uniform clock law
has $d_s(m)=H^{1/p}$ and a weight sum uniformly bounded above and away from
zero, giving $H^{1/p}=H^{1-1/s}$.
\end{proof}

\begin{corollary}[Endpoint attainment and direct-level comparison\obj{\Rabs}]
\label{cor:clock_optimal_attainment}
For the deterministic one-state-per-level family in the near-unit-discount
regime with $K_H\ge H-1$, the attaining masses
in~\eqref{eq:clock_attainment} are proportional to
$(m\geff_H^m)^{2/3}$ at $s=2$ and to $(m\geff_H^m)^{1/2}$ at $s=\infty$.
These attain the optimal shared-law propagation-coefficient orders
$H^{5/2}$ and $H^3$, respectively.  The uniform $H$-slice law on the same
family has the same orders, with different constants.  Under the distinct
direct-level-reset access model, uniformly bounded $L^2$ level coefficients
give propagation coefficient $O(H^2)$.
\end{corollary}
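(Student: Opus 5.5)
The plan is to specialize Theorem~\ref{prop:conc_budget} and Corollary~\ref{prop:clock_scaling}(a) to the deterministic one-state-per-level family with $q_{H,m}=1$, and then treat the direct-level statement separately.

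\textbf{Attaining masses.} Since $K_H\ge H-1\ge m$ for every $m\le H-1$, we have $\min\{K_H,m\}=m$ and $a_{K_H,m}=m\geff_H^m$. The exponent is $\theta=s/(2s-1)$, so $\theta=2/3$ at $s=2$ and $\theta=1/2$ at $s=\infty$. Substituting these into~\eqref{eq:clock_attainment} gives masses proportional to $(m\geff_H^m)^{2/3}$ and $(m\geff_H^m)^{1/2}$, respectively. Equality in~\eqref{eq:conc_budget} then says these masses attain $\{\sum_m (m\geff_H^m)^\theta\}^{1/\theta}$. Corollary~\ref{prop:clock_scaling}(a) with $\tau\ge1$ identifies this quantity as order $H^{3-1/s}$, which is $H^{5/2}$ at $s=2$ and $H^3$ at $s=\infty$. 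In the near-unit-discount regime, $\geff_H^m=e^{-\varkappa m/H+o(1)}$ uniformly for $m<H$, so the Riemann-sum constant $A_{s,\tau}(\varkappa)$ is finite and positive. Optimality follows because the lower bound in~\eqref{eq:conc_budget} holds for every shared law.

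\textbf{Uniform law.} Put $r_{H-m}=1/H$. Since the depth-$m$ law is a point mass on one slice, its density equals $H$ on a set of $\rho$-mass $1/H$. Hence $d_s(m)=H\cdot H^{-1/s}=H^{1/p}$, which is $H^{1/2}$ at $s=2$ and $H$ at $s=\infty$. Then
\[
\phi^{(H)}_s=H^{1/p}\sum_{m<H}m\geff_H^m .
\]
The sum is $H^2\int_0^1 xe^{-\varkappa x}\,dx\,(1+o(1))$, so $\phi^{(H)}_s$ is of order $H^{2+1/p}$. This equals $H^{5/2}$ at $s=2$ and $H^3$ at $s=\infty$: the same orders as the attaining law, with constant $\int_0^1 xe^{-\varkappa x}\,dx$ in place of $A_{s,\tau}(\varkappa)$. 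The constants differ by H\"older strictness, since $x\mapsto x e^{-\varkappa x}$ is not constant.

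\textbf{Direct-level comparison.} This is the main obstacle, because Theorem~\ref{thm:level_indexed} is stated later and its propagation coefficient is not yet defined. I would argue from its intended form: under level-wise reset, each depth-$m$ residual is measured under its own slice law $\rho^{(H-m)}_{k,S}$, with $L^2$ level coefficient $c_2^{\mathrm{lev}}(m)\le C$ uniformly. The shared-law factor $r_{H-m}^{-1/p}$ then disappears, and the coefficient becomes
\[
\sum_{m<H} m\geff_H^m\sqrt{c_2^{\mathrm{lev}}(m)}\le \sqrt C\sum_{m<H} m\le \sqrt C\,H^2/2=O(H^2).
\]
This is a factor $H^{1/2}$ better than the shared $s=2$ order. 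If the later theorem instead weights level residuals by slice mass, I would first verify that a uniform bound on the level coefficients leaves exactly the weights $m\geff_H^m$, so that the same sum appears.
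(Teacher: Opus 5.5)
Your proposal is correct and follows the route the paper relies on. The attaining masses and their orders come from substituting $a_{K_H,m}=m\geff_H^m$ and $\theta=p/(p+1)$ into Theorem~\ref{prop:conc_budget}, together with the Riemann-sum evaluation of Corollary~\ref{prop:clock_scaling}(a); the uniform law gives $d_s(m)=H^{1/p}$ times a weight sum of order $H^2$. Your hedge on the direct-level part is unnecessary: Theorem~\ref{thm:level_indexed} defines $\phi^{(H)}_{\mathrm{lev}}=\sum_{m<H}m\geff^m\sqrt{c^{\mathrm{lev}}_2(m)}$ and states exactly your bound $\sqrt{\bar c}\,H(H-1)/2$.
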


Theorem~\ref{thm:level_indexed} formalizes the distinct sampling scheme that
pairs each propagated depth with its own clock-slice law.

\begin{lemma}[Comparison kernel for the absolute Bellman difference\obj{\Gen}]\label{lem:comparison_kernel}
Let $V,W\in\mathcal V$ and $\Delta_{V,W}:=V-W$.  There exists a measurable
substochastic Markov kernel $\Pkernel^{V,W}_\circ$ on $\Stildeo$ such that,
pointwise on $\Stildeo$,
\begin{equation*}
|\Tcal V-\Tcal W|(\stilde)
\le\geff\bigl(\Pkernel^{V,W}_\circ|\Delta_{V,W}|\bigr)(\stilde).
\end{equation*}
Moreover, $\Pkernel^{V,W}_\circ$ may be chosen as the kernel
$\Pkernel^\pi_\circ$ of a measurable deterministic policy $\pi$ that selects
pointwise between the greedy selectors $a^\star(\cdot;V)$ and
$a^\star(\cdot;W)$.
\end{lemma}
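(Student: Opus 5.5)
The plan is to compare the two maxima through the greedy action of whichever side is larger, then check measurability of the resulting selector. Write $a_V:=a^\star(\cdot;V)$ and $a_W:=a^\star(\cdot;W)$ for the fixed-order greedy selectors. By Lemma~\ref{lem:measurable_framework}(i), applied to the jointly measurable scores $Q_V$ and $Q_W$ on $\Stildeo\times\Aspace$, both selectors are measurable. Define the measurable set
\[
E:=\{\stilde\in\Stildeo:\ \Tcal V(\stilde)\ge\Tcal W(\stilde)\},
\]
measurable because $\Tcal V$ and $\Tcal W$ are finite maxima of measurable functions. Then set
\[
\pi(\stilde):=a_V(\stilde)\ \text{on }E,\qquad \pi(\stilde):=a_W(\stilde)\ \text{on }E^c .
\]
This $\pi$ is a measurable deterministic policy, since it is a piecewise definition over a measurable partition, or a degenerate case of Lemma~\ref{lem:measurable_framework}(ii). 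It selects pointwise between the two greedy selectors, as the statement requires.

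Next comes the pointwise inequality. On $E$,
\[
0\le \Tcal V-\Tcal W = Q_V(\cdot,a_V)-\max_a Q_W(\cdot,a)\le Q_V(\cdot,a_V)-Q_W(\cdot,a_V).
\]
The reward terms cancel in the last difference, so it equals $\geff\int(V-W)\,d\Pkernel(\cdot\mid\stilde,a_V)$. On $E^c$ the symmetric argument gives
\[
0<\Tcal W-\Tcal V\le\geff\int(W-V)\,d\Pkernel(\cdot\mid\stilde,a_W).
\]
In both cases
\[
|\Tcal V-\Tcal W|(\stilde)\le\geff\int|\Delta_{V,W}|\,d\Pkernel(\cdot\mid\stilde,\pi(\stilde)).
\]

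The remaining step is to restrict the integral to the nonterminal space. Since $V(\stilde_{\mathrm{term}})=W(\stilde_{\mathrm{term}})=0$ for $V,W\in\mathcal V$, we have $|\Delta_{V,W}|(\stilde_{\mathrm{term}})=0$. The integral against $\Pkernel$ therefore equals the integral against the substochastic restriction $\Pkernel_\circ$ of \eqref{eq:nonterminal_kernel}. For the deterministic $\pi$ this is exactly $(\Pkernel^\pi_\circ|\Delta_{V,W}|)(\stilde)$, so we take $\Pkernel^{V,W}_\circ:=\Pkernel^\pi_\circ$. This kernel is measurable because $\stilde\mapsto\Pkernel_\circ(B\mid\stilde,\pi(\stilde))=\sum_a\Indic\{\pi(\stilde)=a\}\Pkernel_\circ(B\mid\stilde,a)$ is a finite sum of measurable functions.

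The argument is elementary, and I expect no serious obstacle. The only points needing care are the measurability of $E$ and of the composed kernel, which follow from finiteness of $\Aspace$ as above. Ties are handled automatically by the non-strict inequality defining $E$.
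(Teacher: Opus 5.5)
Your proof is correct and is essentially the paper's argument: two one-sided greedy comparisons, $\Tcal V-\Tcal W\le\geff\int|\Delta_{V,W}|\,d\Pkernel_\circ(\cdot\mid\stilde,a_V)$ and $\Tcal W-\Tcal V\le\geff\int|\Delta_{V,W}|\,d\Pkernel_\circ(\cdot\mid\stilde,a_W)$, followed by a measurable pointwise choice between $a_V$ and $a_W$. The only difference is the selection rule. You switch on the sign of $\Tcal V-\Tcal W$, whereas the paper takes whichever of the two integrals is larger, so its kernel dominates both one-sided bounds at once; either choice yields the stated deterministic policy kernel.
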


\begin{proof}
Put $a_V:=a^\star(\cdot;V)$ and $a_W:=a^\star(\cdot;W)$, and write
$P_a:=\Pkernel_\circ(\cdot\mid\stilde,a)$.  Optimality gives
\begin{align*}
(\Tcal V-\Tcal W)(\stilde)
&\le\geff\!\int|\Delta_{V,W}|\,dP_{a_V(\stilde)},\\[-2pt]
(\Tcal W-\Tcal V)(\stilde)
&\le\geff\!\int|\Delta_{V,W}|\,dP_{a_W(\stilde)} .
\end{align*}
Choose at each state whichever nonnegative integral is larger.  Kernel
integration and the two Borel selectors make this choice measurable.  The
result is a deterministic policy kernel that dominates both one-sided bounds.
Thus~\eqref{eq:concentrability_ck} covers every product formed below, although
the policies in a product need not coincide.
\end{proof}

Each transition between nonterminal states reduces the remaining horizon,
so a product of $H$ nonterminal transition kernels is zero.  This truncates error propagation
and gives the finite update window in the following lemma.

\begin{lemma}[Finite-horizon residual propagation\obj{\Rabs}]\label{lem:varying_propagation}
Assume~\ref{ass:markov_state} and~\ref{ass:concentrability}.  Let
$V_0,\ldots,V_K\in\mathcal V$ satisfy $\|V_k\|_\infty\le\Vmax$, set
$e_k:=V_{k+1}-\Tcal V_k$, let $K\ge1$, and let $\pi_K$ be greedy with respect
to $V_K$.  Put $p_h:=\mu_S^\circ(\mathcal S\times\{h\})$ and choose
deterministic $D_{0,h}$ such that
\[
\sup_{s\in\mathcal S}|\Vstar(s,h)-V_0(s,h)|\le D_{0,h}
\quad\text{almost surely}.
\]
The global bound permits $D_{0,h}=V_{\max}^{(h)}+\Vmax$;
for deterministic $V_0$, one may use its actual slice error.  Define
the deterministic initialization envelope
\begin{equation}
\mathcal B_K:=2\sum_{h=1}^{H}p_h\sum_{m=K+1}^{h-1}
\geff^mD_{0,h-m}.
\label{eq:finite_K_boundary_phi}
\end{equation}
With deterministic weights
\begin{equation}
w^{(H)}_{K,k}:=2\!\!\sum_{\substack{\ell\ge0:\\1\le\ell+K-k<H}}\!\!
\geff^{\ell+K-k}d_s(\ell+K-k),\qquad 0\le k<K,
\label{eq:propagation_weights}
\end{equation}
the following bound holds pathwise:
\begin{equation}
\|\Vstar-V^{\pi_K}\|_{1,\mu_S}
\le\mathcal B_K
+\sum_{k=\max\{0,K-H+1\}}^{K-1}w^{(H)}_{K,k}\|e_k\|_{p,\rho_{k,S}},
\qquad
\sum_{k=0}^{K-1}w^{(H)}_{K,k}=2\phi_{s,K}^{(H)}.
\label{eq:varying_propagation}
\end{equation}
For top-slice evaluation,
\[
\mathcal B_K=2\sum_{m=K+1}^{H-1}\geff^mD_{0,H-m}.
\]
If $V_0\in\mathcal V^{\mathrm{clip}}$ almost surely, choosing
$D_{0,h}=2V_{\max}^{(h)}$ gives
\[
\mathcal B_K\le4\sum_{m=K+1}^{H-1}\geff^mV_{\max}^{(H-m)}.
\]
In all cases, $\mathcal B_K=0$ for $K\ge H-1$.
\end{lemma}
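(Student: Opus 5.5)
The plan is to follow the standard Munos--Szepesv\'ari error-propagation argument, with two changes: the clock nilpotence of Assumption~\ref{ass:markov_state} truncates all kernel products, and H\"older with the $L^s$ densities of Assumption~\ref{ass:concentrability} converts measure-$\mu_S$ quantities into $\|e_k\|_{p,\rho_{k,S}}$.

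\emph{Step 1 (performance difference).} Since $\pi_K$ is greedy for $V_K$, we have $\Tcal^{\pi_K}V_K=\Tcal V_K$. Hence
\[
\Vstar-V^{\pi_K}=(\Tcal\Vstar-\Tcal^{\pi_K}\Vstar)+\Tcal^{\pi_K}\Vstar-\Tcal^{\pi_K}V^{\pi_K}.
\]
Also $\Tcal\Vstar-\Tcal^{\pi_K}\Vstar\le \Tcal\Vstar-\Tcal V_K+\Tcal^{\pi_K}V_K-\Tcal^{\pi_K}\Vstar$. Resolving the policy-evaluation part with the nilpotent Neumann series $\sum_{j=0}^{H-1}(\geff\Pkernel^{\pi_K}_\circ)^j$ gives the pointwise bound
\[
0\le \Vstar-V^{\pi_K}\le\sum_{j\ge0}(\geff\Pkernel^{\pi_K}_\circ)^j\,\geff\bigl(\Pkernel^{\pi^\star}_\circ+\Pkernel^{\pi_K}_\circ\bigr)|\Vstar-V_K|.
\]
The factor $2$ in the weights arises from these two kernels. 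Every nonterminal kernel lowers the clock, so every product of $H$ or more kernels vanishes.

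\emph{Step 2 (unrolling $|\Vstar-V_K|$).} Write
\[
\Vstar-V_{k+1}=(\Tcal\Vstar-\Tcal V_k)-e_k,
\]
and apply Lemma~\ref{lem:comparison_kernel} with $V=\Vstar$, $W=V_k$. This gives $|\Vstar-V_{k+1}|\le|e_k|+\geff\Pkernel^{(k)}_\circ|\Vstar-V_k|$ for a deterministic Markov policy kernel $\Pkernel^{(k)}_\circ$. Iterating down to $k=0$ yields
\[
|\Vstar-V_K|\le\sum_{k<K}\geff^{K-1-k}\Pkernel^{(K-1)}_\circ\cdots\Pkernel^{(k+1)}_\circ|e_k|+\geff^K\Pkernel^{(K-1)}_\circ\cdots\Pkernel^{(0)}_\circ|\Vstar-V_0|.
\]
Substituting into Step~1 and integrating against $\mu_S^\circ$ gives terms of the form $\mu_S^\circ\Pkernel^{\pi_1}_\circ\cdots\Pkernel^{\pi_m}_\circ|e_k|\,\geff^m$, with total depth $m=\ell+K-k$. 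Here $\ell\ge0$ counts the Neumann steps beyond the first, and $m\ge1$ because Step~1 contributes at least one kernel. Nilpotence restricts to $m<H$, so only $k\ge K-H+1$ survive. That gives the window, the index set in \eqref{eq:propagation_weights}, and $\mathcal B_K=0$ for $K\ge H-1$.

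\emph{Step 3 (H\"older).} For each surviving term, H\"older under $\rho_{k,S}$ bounds $\mu_m(|e_k|)$ by $d_s(m)\|e_k\|_{p,\rho_{k,S}}$. This uses (C-strong) and the fact that the supremum in \eqref{eq:concentrability_ck} covers mixed, history-selected Markov sequences. For the initialization term, the depth $m=\ell+K\ge K+1$ chain started on slice $h$ lands on slice $h-m$. There I would bound pointwise by $D_{0,h-m}$ times the mass, which is at most $p_h$, giving \eqref{eq:finite_K_boundary_phi}. Top-slice evaluation sets $p_H=1$, and the clipped case uses $|\Vstar-V_0|\le2\Vmax^{(h)}$ from Lemma~\ref{lem:exist_unique}.

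\emph{Main obstacle: counting.} The weight identity $\sum_k w^{(H)}_{K,k}=2\phi^{(H)}_{s,K}$ is a re-indexing: for fixed depth $m$, the number of pairs $(k,\ell)$ with $0\le k<K$, $\ell\ge0$, $\ell+K-k=m$ equals $\min\{K,m\}$. The delicate part is keeping exactly one factor $2$. I expect this requires handling the positive part of $\Tcal^{\pi_K}V_K-\Tcal^{\pi_K}\Vstar$ carefully rather than doubling at every stage. I would first try the sharper one-sided decomposition: use $V^{\pi_K}$ upper bounds with $\pi^\star$ and $\pi_K$ kernels only at the first step, and the comparison kernel afterwards.
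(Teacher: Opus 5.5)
Your proposal is correct and follows the paper's proof essentially step for step. The paper unrolls $|\Vstar-V_k|$ with the comparison kernel, applies the nonnegative loss resolvent with the two branches $\Pkernel^{\pi^\star}_\circ$ and $\Pkernel^{\pi_K}_\circ$, truncates by the clock, applies H\"older against the $L^s$ densities, and counts $\min\{K,m\}$ indices; the factor-two worry you flag is already settled by your own Steps~1--2, since the single comparison kernel of Lemma~\ref{lem:comparison_kernel} dominates both one-sided bounds, so the only doubling is $\pm(\Vstar-V_K)\le|\Vstar-V_K|$ at the resolvent step, exactly as in the paper.
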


\begin{proof}[Proof roadmap]
Lemma~\ref{lem:comparison_kernel} gives the one-step comparison recursion.
Unrolling it and applying the nonnegative loss resolvent produces two
policy-kernel branches of total depth $m=\ell+K-k$.  Clock nilpotence removes
$m\ge H$; H\"older and Assumption~\ref{ass:concentrability} give the stated
weights, while counting the $\min\{K,m\}$ admissible indices gives their sum.
Appendix~\ref{app:proofs} supplies the pathwise occupancy construction and the
deterministic initialization-envelope bound.
\end{proof}

\subsection{Residual definitions and composition}\label{subsec:composition}

\begin{assumption}[Behavior policies, the replay residual, and the online-action residual\obj{\Rabs}]\label{ass:residuals}
\emph{(Behavior.)}  The data of block $k$ is collected by $\eps_k$-greedy
policies whose greedy branch is the one implemented in
Algorithm~\ref{alg:hmpdrl_training}: it maximizes the point-model score
$\widehat Q_V$ of~\eqref{eq:greedy_action} at the acting network.  The oracle
consistency results use the separate true conditional-expectation score
$\Qoracle_V$.  The action analysis applies to either
choice through its generic perturbation score $q_k$.  For $\Aiid$ there is one
such policy, the snapshot law $\beta_k$
built from $W_k$.  For $\Arun$ there is one per round, and the period-level
object $\pibar$ is obtained by \emph{disintegration} of the joint collection
measure,
\begin{equation}
\mathsf M_k(d\stilde,da)=\sum_j\omega_{k,j}\,\mu_{k,j}(d\stilde)\,
\pi^{\mathrm{on}}_{k,j}(da\mid\stilde)
=\mathsf M_{k,S}(d\stilde)\,\pibar(da\mid\stilde),
\label{eq:disintegration}
\end{equation}
$\omega_{k,j}$ being the sample share and $\mu_{k,j}$ the state law of round
$j$.  The weights of $\pibar$ are \emph{state-dependent}: a time average of the
$\pi^{\mathrm{on}}_{k,j}$ is in general not the conditional action law of the
collected data.  Fix the density-weighted version from
Lemma~\ref{lem:measurable_framework} $\mathsf M_{k,S}$-a.e. and its declared
measurable mixture extension on the $\mathsf M_{k,S}$-null complement; use the
same convention for $\beta^{\mathrm{rep}}_k$ below.  If $\rho_{k,S}$ charges
that null complement, the disintegration identity alone imposes no relation
there.  Therefore, for $\Arun$, require
$\rho_{k,S}\ll\mathsf M_{k,S}$ whenever $\pibar$ is compared under
$\rho_{k,S}$; the fixed extension then serves only measurability and cannot
change any displayed residual norm.  Write
$\pi^{\mathrm{on}}_k$ for $\beta_k$ under $\Aiid$ and for $\pibar$ under
$\Arun$, and $\pi^{\mathrm{tgt}}_k$ for the $\eps_k$-greedy policy whose greedy
branch maximizes the true $Q_{V_k}$; all share $\eps_k$ and $\xi_k$.

\emph{(The two residuals.)}  Let $\beta^{\mathrm{rep}}_k$ be the conditional
action law of the distribution actually replayed.  Every operator written
$\Tcal^{\beta^{\mathrm{rep}}_k}$ refers to this law, and
$\beta^{\mathrm{rep}}_k=\beta_k$ for $\Aiid$, whose outcomes are fresh and used
once.  For each block $k$, let the deterministic block-indexed envelopes
$\epsbuf$ and $\epsactr{r}$, $1\le r\le2$, satisfy
\begin{align}
\bigl\|\Tcal^{\beta^{\mathrm{rep}}_k}V_k-\Tcal^{\pi^{\mathrm{on}}_k}V_k\bigr\|_{2,\rho_{k,S}}
&\le\epsbuf,
\label{eq:slow_buffer}\\
\bigl\|\Tcal^{\pi^{\mathrm{on}}_k}V_k-\Tcal^{\pi^{\mathrm{tgt}}_k}V_k\bigr\|_{r,\rho_{k,S}}
&\le\epsactr{r},\qquad 1\le r\le2,
\label{eq:online_action_residual}
\end{align}
together with the fit bound~\eqref{eq:fit_residual}.  Here
$\epsact=\epsactr{2}$; the norm-indexed envelopes need not be equal.  These are
\emph{operator} distances, which is what Lemma~\ref{lem:composition} consumes.
\end{assumption}

\begin{remark}[A total-variation envelope for replay shift\obj{\Rabs}]\label{rem:replay_tv}
Write
\begin{equation}
\begin{aligned}
\Vrho&:=\bigl\|\Vmax^{(h(\cdot))}\bigr\|_{2,\rho_{k,S}}
=\Bigl(\textstyle\sum_{h=1}^{H}p_{k,h}\bigl(\Vmax^{(h)}\bigr)^2\Bigr)^{1/2},\\
&\hspace{4em}\Vrho\le\Vrhobar\le\Vmax\quad\text{a.s. for every $k$},
\end{aligned}
\label{eq:Vrho_def}
\end{equation}
where $p_{k,h}:=\rho_{k,S}(\mathcal S\times\{h\})$ and \(\Vrhobar\) is a
deterministic uniform majorant (safely, \(\Vmax\)); hence
\(\Dbar:=2\Vrhobar\) is admissible in~\eqref{eq:envelopes}.  Use the convention
$d_{\mathrm{TV}}(P,Q):=\sup_A|P(A)-Q(A)|$.  Since
$|Q_{V_k}(s,h,a)|\le\Vmax^{(h)}$, the replay link itself satisfies
\begin{equation}
\|\Tcal^{\beta_k^{\mathrm{rep}}}V_k-\Tcal^{\pi_k^{\mathrm{on}}}V_k\|_{2,\rho_{k,S}}
\le2\|\Vmax^{(h(\cdot))}d_{\mathrm{TV}}(\beta_k^{\mathrm{rep}},
\pi_k^{\mathrm{on}})\|_{2,\rho_{k,S}}
\le2\Vrho\operatorname*{ess\,sup}_{\rho_{k,S}}d_{\mathrm{TV}}(\beta_k^{\mathrm{rep}},\pi_k^{\mathrm{on}}),
\label{eq:replay_tv_envelope}
\end{equation}
Any deterministic majorant is admissible for $\epsbuf$; $2\Vmax$ is safe.
A quantitative FIFO decay rate follows from stabilization, age/mixing, or
buffer-scaling control.
\end{remark}

\begin{lemma}[Executed-transition response versus Bellman response\obj{\Rabs}]\label{lem:composition}
For $r\in[1,2]$, set
$\widetilde\eps_{k,r}:=\|V_{k+1}-\Tcal V_k\|_{r,\rho_{k,S}}$.  Under
Assumptions~\ref{ass:markov_state} and~\ref{ass:residuals}, for $0\le k<K$,
\begin{equation}
\E[\widetilde\eps_{k,r}\mid\mathcal J_k]\le
\eps_{\mathrm{fit},k}+\eps_{\mathrm{ker},k}+\eps_{\mathrm{tgt},k}
+\epsbuf+\epsactr{r}+\eps_k\Dbar.
\label{eq:composition_residual}
\end{equation}
\end{lemma}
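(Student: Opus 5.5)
The plan is a telescoping decomposition of $V_{k+1}-\Tcal V_k$ into six differences, one per residual link, followed by the triangle inequality in $L^r(\rho_{k,S})$ and a conditional expectation given $\mathcal J_k$. I would write
\[
V_{k+1}-\Tcal V_k=(V_{k+1}-G_k)+(G_k-G_k^{\mathrm{ideal}})+(G_k^{\mathrm{ideal}}-\Tcal^{\beta^{\mathrm{rep}}_k}V_k)
+(\Tcal^{\beta^{\mathrm{rep}}_k}V_k-\Tcal^{\pi^{\mathrm{on}}_k}V_k)
+(\Tcal^{\pi^{\mathrm{on}}_k}V_k-\Tcal^{\pi^{\mathrm{tgt}}_k}V_k)
+(\Tcal^{\pi^{\mathrm{tgt}}_k}V_k-\Tcal V_k).
\]
Minkowski's inequality in $L^r(\rho_{k,S})$ then bounds $\widetilde\eps_{k,r}$ by the sum of the six norms. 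Every object except $V_{k+1}$ is $\mathcal J_k$-measurable: $\rho_{k,S}$, both responses, $\beta^{\mathrm{rep}}_k$, $\pi^{\mathrm{on}}_k$, $\pi^{\mathrm{tgt}}_k$ and $V_k$. Taking $\E[\cdot\mid\mathcal J_k]$ therefore leaves the last five terms as they are, and only the fit term needs a conditional-mean bound.

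Next I would bound each term in turn.
\begin{itemize}
\item For the first three terms and the replay term, use $r\le2$ and the fact that $\rho_{k,S}$ is a probability law, so $\|\cdot\|_{r,\rho_{k,S}}\le\|\cdot\|_{2,\rho_{k,S}}$.
\item The fit term then follows from \eqref{eq:fit_residual} in conditional mean.
\item The kernel and target terms are bounded almost surely by \eqref{eq:ker_residual} and \eqref{eq:tgt_residual}.
\item The replay term is bounded by \eqref{eq:slow_buffer}.
\item The online-action term is bounded directly in $L^r$ by \eqref{eq:online_action_residual}, which is why the $r$-indexed envelope $\epsactr{r}$ appears rather than $\epsact$.
\end{itemize}

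The exploration term uses Lemma~\ref{lem:v_q_backup}(iv) with $V=V_k\in\mathcal V^{\mathrm{clip}}$. Its greedy branch maximizes the true $Q_{V_k}$ and puts mass $\eps_k$ on the exploration law $\xi_k$, so $\Tcal V_k-\Tcal^{\pi^{\mathrm{tgt}}_k}V_k=\eps_k\Delta^{\mathrm{exp}}[V_k,\xi_k]\ge0$ pointwise. Hence its $L^r$ norm is $\eps_k\|\Delta^{\mathrm{exp}}[V_k,\xi_k]\|_{r,\rho_{k,S}}\le\eps_k\|\cdot\|_{2,\rho_{k,S}}$. I would identify this $L^2$ quantity with $\Dexp$, so that the envelope \eqref{eq:envelopes} gives the bound $\eps_k\Dbar$. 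Summing the six bounds yields \eqref{eq:composition_residual}.

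I expect the main obstacle to be bookkeeping rather than analysis, in two places.
\begin{itemize}
\item \emph{Versions.} Each operator difference must be taken for the fixed measurable versions from Lemma~\ref{lem:measurable_framework}. For $\Arun$ this needs $\rho_{k,S}\ll\mathsf M_{k,S}$, so that the chosen disintegration of $\pibar$ determines the norm $\rho_{k,S}$-a.e.
\item \emph{Terminal states.} At $\stilde_{\mathrm{term}}$ all the functions vanish, so the norms may be taken on $\Stildeo$ without changing them.
\end{itemize}
I would also check the interpretation of $\Dexp$: if it is meant as an $L^2(\rho_{k,S})$ norm of the dispersion, the bound is immediate. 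If it is meant pointwise, then $\Dbar=2\Vrhobar$ from Remark~\ref{rem:replay_tv} still dominates via $\Delta^{\mathrm{exp}}\le2\Vmax^{(h)}$.
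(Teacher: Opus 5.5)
Your proposal is correct and follows the paper's proof essentially verbatim. It uses the same six-link telescope $V_{k+1}\to G_k\to G_k^{\mathrm{ideal}}\to\Tcal^{\beta^{\mathrm{rep}}_k}V_k\to\Tcal^{\pi^{\mathrm{on}}_k}V_k\to\Tcal^{\pi^{\mathrm{tgt}}_k}V_k\to\Tcal V_k$, the $L^r\le L^2$ reduction on every link except the online-action one, the $r$-indexed envelope on that link, and the dispersion identity with $\Dexp=\|\dexp\|_{2,\rho_{k,S}}\le\Dbar$ on the last.

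One small correction: $\pi^{\mathrm{on}}_k$ need not be $\mathcal J_k$-measurable; for example, $\pibar$ under $\Arun$ is built from snapshots that act during block $k$. Nothing depends on this, because the five non-fit envelopes are deterministic almost-sure bounds and so pass through the conditional expectation directly.
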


\begin{proof}
Apply the triangle inequality along
\[
V_{k+1}\to G_k\to G^{\mathrm{ideal}}_k\to
\Tcal^{\beta^{\mathrm{rep}}_k}V_k\to\Tcal^{\pi^{\mathrm{on}}_k}V_k\to
\Tcal^{\pi^{\mathrm{tgt}}_k}V_k\to\Tcal V_k .
\]
Every link except the online-action link is first bounded by its named $L^2$
envelope and hence by the same envelope in $L^r$, since $r\le2$ and
$\rho_{k,S}$ is a probability law.  The online-action link uses
\eqref{eq:online_action_residual}.  The last link is exactly
$\eps_k\dexp$ by~\eqref{eq:dispersion_identity}, where
\begin{equation}
\begin{aligned}
\dexp(\stilde)&:=\Delta^{\mathrm{exp}}[V_k,\xi_k](\stilde)\\
&=\max_aQ_{V_k}(\stilde,a)-\!\int\! Q_{V_k}(\stilde,a)\,\xi_k(da\mid\stilde),
\qquad
\Dexp:=\bigl\|\dexp\bigr\|_{2,\rho_{k,S}} .
\end{aligned}
\label{eq:dispersion_def}
\end{equation}
$|Q_{V_k}(s,h,a)|\le\Vmax^{(h)}$ gives
\begin{equation}
\Dexp\;\le\;2\Vrho\;\le\;2\Vrhobar\;\le\;2\Vmax
\label{eq:dispersion_chain}
\end{equation}
and $\|\dexp\|_{r,\rho_{k,S}}\le\Dexp\le\Dbar$
by~\eqref{eq:envelopes}.  Taking conditional expectations proves
\eqref{eq:composition_residual}.
\end{proof}

\begin{definition}[Visible-target aliasing\obj{\Rabs}]\label{def:visible_aliasing}
Fix one realization of $\mathcal J_k$.  Let
$O:\Stildeo\to\mathcal O$ be the measurable representation visible to the
fitted function, and let $\Pi_{O,k}$ be conditional expectation given
$\sigma(O)$ under $\rho_{k,S}$.  Set
\[
a_{\mathrm{alias},k}
:=\bigl\|\Pi_{O,k}G_k-G_k\bigr\|_{2,\rho_{k,S}} .
\]
This is the exact distance from the block response in~\eqref{eq:fit_residual}
to the closed subspace of $\sigma(O)$-measurable $L^2(\rho_{k,S})$ functions.
It is $\mathcal J_k$-measurable and is not assumed zero.  When a deterministic envelope is needed, write
$a_{\mathrm{alias},k}\le\eps_{\mathrm{alias},k}$ almost surely.
\end{definition}

\begin{lemma}[Aliasing is a component of the fit residual\obj{\Rabs}]\label{lem:alias}
Suppose the regression class consists of $\sigma(O)$-measurable functions and
the solver satisfies the regression bound relative to the \emph{best visible}
target,
$\E[\|V_{k+1}-\Pi_{O,k}G_k\|_{2,\rho_{k,S}}\mid\mathcal J_k]
\le\eps^{\mathrm{vis}}_{\mathrm{fit},k}$.  Then
$\|V-G_k\|_{2,\rho_{k,S}}\ge a_{\mathrm{alias},k}$ for every visible $V$, and
\[
\E[\|V_{k+1}-G_k\|_{2,\rho_{k,S}}\mid\mathcal J_k]
\le\eps^{\mathrm{vis}}_{\mathrm{fit},k}+\eps_{\mathrm{alias},k}.
\]
Thus $\eps_{\mathrm{fit},k}:=\eps^{\mathrm{vis}}_{\mathrm{fit},k}
+\eps_{\mathrm{alias},k}$; aliasing is not charged again.
\end{lemma}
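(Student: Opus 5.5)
The argument is the Hilbert-space orthogonal projection property of conditional expectation in $L^2(\rho_{k,S})$, followed by one triangle inequality. Fix a realization of $\mathcal J_k$; then $\rho_{k,S}$, $G_k$ and $\Pi_{O,k}$ are fixed objects. By Lemma~\ref{lem:clip_projection}, the response $G_k$ is bounded, hence lies in $L^2(\rho_{k,S})$. The $\sigma(O)$-measurable square-integrable functions form a closed subspace, and $\Pi_{O,k}$ is the orthogonal projection onto it.

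\textbf{Lower bound.} Take any visible $V$, that is, a $\sigma(O)$-measurable element of $L^2(\rho_{k,S})$. Then $V-\Pi_{O,k}G_k$ lies in the subspace, while $\Pi_{O,k}G_k-G_k$ is orthogonal to it. Pythagoras gives
\[
\|V-G_k\|_{2,\rho_{k,S}}^2=\|V-\Pi_{O,k}G_k\|_{2,\rho_{k,S}}^2+a_{\mathrm{alias},k}^2 .
\]
This yields $\|V-G_k\|_{2,\rho_{k,S}}\ge a_{\mathrm{alias},k}$. If a visible $V$ is not square integrable, the left side is infinite and the bound holds trivially; clipped networks are bounded anyway.

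\textbf{Upper bound.} Apply the triangle inequality pathwise:
\[
\|V_{k+1}-G_k\|_{2,\rho_{k,S}}\le\|V_{k+1}-\Pi_{O,k}G_k\|_{2,\rho_{k,S}}+a_{\mathrm{alias},k}.
\]
Then take conditional expectations given $\mathcal J_k$. Since $a_{\mathrm{alias},k}$ is $\mathcal J_k$-measurable, it passes through the conditional expectation unchanged, and it is bounded almost surely by $\eps_{\mathrm{alias},k}$. The visible-fit hypothesis bounds the first term. Together these give the displayed inequality, so $\eps_{\mathrm{fit},k}:=\eps^{\mathrm{vis}}_{\mathrm{fit},k}+\eps_{\mathrm{alias},k}$ is an admissible fit envelope in~\eqref{eq:fit_residual}.

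The only delicate point is measurability. The map $\mathcal J_k\mapsto a_{\mathrm{alias},k}$ must be measurable, so a jointly measurable version of $\Pi_{O,k}G_k$ is needed, obtained by a parameterized disintegration as in Lemma~\ref{lem:measurable_framework}(iv). Otherwise the conditional expectation of the norm would not be defined. The statement that aliasing is ``not charged again'' follows because the composition chain in Lemma~\ref{lem:composition} starts at $G_k$ and contains no separate aliasing link.
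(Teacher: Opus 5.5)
Your proposal is correct and follows the paper's proof: Pythagoras for the orthogonal projection $\Pi_{O,k}$ gives the lower bound. The triangle inequality then gives the upper bound, using that $a_{\mathrm{alias},k}$ is $\mathcal J_k$-measurable and dominated almost surely by $\eps_{\mathrm{alias},k}$. Your added measurability remark only makes explicit what the paper asserts in Definition~\ref{def:visible_aliasing}.
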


\begin{proof}
$\Pi_{O,k}$ is the orthogonal projection onto the visible subspace, so
$\|V-G_k\|^2_{2,\rho_{k,S}}
=\|V-\Pi_{O,k}G_k\|^2_{2,\rho_{k,S}}+a_{\mathrm{alias},k}^2$,
and the upper bound follows by the triangle inequality.
\end{proof}

\begin{remark}[What removes the floor: target sufficiency of the representation\obj{\Rabs}]\label{rem:alias_remedy}
For the fixed history, $a_{\mathrm{alias},k}=0$ exactly when
$G_k=g_k\circ O$ $\rho_{k,S}$-almost surely for some measurable $g_k$: the
observation determines the block target mean.  Hidden reward, continuation, or
clock variables can therefore leave a fit floor; augmenting the representation
or passing to a belief state can remove it.
\end{remark}

\begin{proposition}[Markov-sufficient and compressed-observation routes\obj{\Gen,\Rabs}]
\label{prop:representation_routes}
There are two compatible routes from an implemented input to the abstract
state.
\begin{enumerate}[label=\textnormal{(\roman*)},leftmargin=2.2em,itemsep=2pt,topsep=3pt]
\item If the implemented input $O_t$, including its clock, satisfies
\eqref{eq:markov_sufficiency} with $\stilde_t=O_t$, then it may be used as the
analysis state.  In particular, for a $(\textsc{sample},\textsc{obs})$ target,
an $O$-measurable acting policy, and an $O$-measurable frozen value, the
population response is $O$-measurable and $a_{\mathrm{alias},k}=0$.
\item If $O=\omega(\stilde)$ is a measurable compression of a state satisfying
Assumption~\ref{ass:markov_state}, then every observation-only value and policy
is still a measurable value and Markov policy on $\Stilde$.  The residual
decomposition therefore applies on the Markov analysis state with
$\eps_{\mathrm{fit},k}=\eps^{\mathrm{vis}}_{\mathrm{fit},k}
+\eps_{\mathrm{alias},k}$ as in Lemma~\ref{lem:alias}.  An observation-only
reward/transition score is lifted in the same way, and its discrepancy from
$Q_V$ is charged by $\etascorek$; closure and concentrability are likewise
checked on $\Stilde$.
\end{enumerate}
A Markov-sufficient representation gives the direct convergence route, while
a compressed observation gives a quantitative performance route whose
representation and score floors remain visible in the policy-loss bound.
\end{proposition}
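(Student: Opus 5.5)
The proof splits along the two routes, and each reduces to results already in place.

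\emph{Route (i).} Take $\stilde_t:=O_t$ and check that this choice meets every clause of Assumption~\ref{ass:markov_state}.
\begin{itemize}
\item The controlled-kernel identity \eqref{eq:markov_sufficiency} is assumed directly, with $\Psi$ the map from the full history to the implemented input.
\item The clock and terminal conventions are part of the hypothesis, since $O_t$ includes its clock.
\item Standard-Borel structure holds because observation spaces are standard Borel.
\end{itemize}
For the aliasing claim, use the $(\textsc{sample},\textsc{obs})$ label $Y=r+\geff(1-d)V_k(\stilde')$ and compute $G_k$ by the tower property, as in \eqref{eq:population_target}. This gives $G_k=\Tcal^{\beta^{\mathrm{rep}}_k}V_k=\int Q_{V_k}(\cdot,a)\,\beta^{\mathrm{rep}}_k(da\mid\cdot)$. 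Here $Q_{V_k}(\stilde,a)$ is a measurable function of $\stilde=O$ by Lemma~\ref{lem:v_q_backup}(i) and the kernel $\Pjoint(\cdot\mid O,a)$. The action law is $O$-measurable by hypothesis, with $\beta^{\mathrm{rep}}_k$ taken as the $O$-measurable acting policy in the population accounting. Lemma~\ref{lem:measurable_framework}(ii)--(iii) keeps the mixture measurable. Hence $G_k=g_k\circ O$, and Remark~\ref{rem:alias_remedy}, or directly $\Pi_{O,k}G_k=G_k$, gives $a_{\mathrm{alias},k}=0$.

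\emph{Route (ii).} The key observation is that composition with a measurable $\omega$ preserves measurability.
\begin{itemize}
\item If $v:\mathcal O\to\R$ is measurable, then $v\circ\omega$ is measurable on $\Stilde$. It lies in $\mathcal V$ once we set its value at $\stilde_{\mathrm{term}}$ to zero, which the terminal convention permits when the terminal state is observed as such.
\item If $\pi(da\mid o)$ is a kernel, then $\pi(da\mid\omega(\stilde))$ is a Markov kernel on $\Stildeo$.
\end{itemize}
With these lifts, every object of Sections~\ref{sec:decomposition} is defined on the Markov state $\Stilde$: the iterates, the behavior laws, and the replay disintegrations. Lemmas~\ref{lem:composition} and~\ref{lem:varying_propagation} therefore apply verbatim. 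The fit link \eqref{eq:fit_residual} is discharged by Lemma~\ref{lem:alias}, because the regression class consists of $\sigma(O)$-measurable functions; this is exactly the substitution $\eps_{\mathrm{fit},k}=\eps^{\mathrm{vis}}_{\mathrm{fit},k}+\eps_{\mathrm{alias},k}$.

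An observation-only score $\widehat R(\omega(\stilde),a)+\geff V(\Phat(\omega(\stilde),a))$ is likewise a jointly measurable score on $\mathcal D$. Its deviation from $Q_V$ is precisely the supremum defining $\etascorek$ in \eqref{eq:envelopes}. Closure and concentrability are hypotheses phrased on $\Stilde$, so they are simply checked there, with nothing additional to prove.

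The main subtlety is the measurability bookkeeping in (ii): the lifted predictor must map into $\Stilde$ in a way that respects the clock, so that $V(\Phat)$ makes sense for lifted $V$. The plan is to interpret $\Phat$ as predicting an observation-level successor and to compose $V$ at observation level; equivalently, one picks any measurable section, and only the score discrepancy matters. Under that interpretation no extra residual appears, and both routes compose into the residual recursion $\Rabs$.
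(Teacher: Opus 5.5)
Your proposal is correct and follows essentially the same route as the paper. Part~(i) is Assumption~\ref{ass:markov_state} read on the represented state: the observed-successor response is a measurable function of $O$, so $\Pi_{O,k}$ fixes it. Part~(ii) lifts values, policies, and scores by composition with $\omega$, with Lemma~\ref{lem:alias} supplying the fit link and the other links unchanged.

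Two of your side steps are unnecessary. In~(i) you need not identify $\beta^{\mathrm{rep}}_k$ with the acting policy, since any fixed replay disintegration is already a kernel on the $O$-space. In~(ii), composing the observation-level score with $\omega$ suffices, so you do not need a measurable section, which need not exist in Borel form.
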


\begin{proof}
Part~\textnormal{(i)} is Assumption~\ref{ass:markov_state} on the represented
state space.  Conditional expectation of the observed-successor label through
its joint kernel is then a measurable function of $O_t$, so its projection
onto $\sigma(O)$ is itself.  For~\textnormal{(ii)}, composition with $\omega$
lifts every implemented value, score, and policy to a measurable object on
$\Stilde$; Lemma~\ref{lem:alias} supplies the fit link, and the remaining links
are exactly those in Assumption~\ref{ass:residuals}.
\end{proof}

\subsection{Finite-horizon policy-loss bound}\label{subsec:end_to_end}

\begin{theorem}[Finite-$K$ policy-loss bound under six residuals\obj{\Rabs}]\label{thm:hmpdrl_end_to_end}
Assume Assumptions~\ref{ass:markov_state}, \ref{ass:concentrability}
\textnormal{(C-strong)}, and~\ref{ass:residuals}, with
$K\ge1$, $V_0,\ldots,V_K\in\mathcal V^{\mathrm{clip}}$, and $\Vstar$ from
Lemma~\ref{lem:exist_unique}.  Let $\pi_K$ be true-$Q_{V_K}$ greedy after $K$
target copies; it uses the true kernel and is not directly deployable
(\S\ref{subsec:deployment}).  Set
\begin{equation}
e_{k,p}^{\mathrm{Bell}}:=\eps_{\mathrm{fit},k}+\eps_{\mathrm{ker},k}+\eps_{\mathrm{tgt},k}
+\epsbuf+\epsactp+\eps_k\Dbar,
\qquad \overline e_{K,H,p}^{\mathrm{Bell}}:=
\max_{\max\{0,K-H+1\}\le k<K}e_{k,p}^{\mathrm{Bell}}.
\label{eq:rk_def}
\end{equation}
Then
\begin{equation}
\E\!\left[\|\Vstar-V^{\pi_K}\|_{1,\mu_S}\right]
\le \mathcal B_K+\sum_{k=0}^{K-1}w^{(H)}_{K,k}e_{k,p}^{\mathrm{Bell}}
\le \mathcal B_K+2\phi_{s,K}^{(H)}\overline e_{K,H,p}^{\mathrm{Bell}}.
\label{eq:hmpdrl_end_to_end}
\end{equation}
\end{theorem}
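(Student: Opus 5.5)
The plan is to combine the pathwise propagation bound of Lemma~\ref{lem:varying_propagation} with the conditional residual bound of Lemma~\ref{lem:composition}, and then take expectations.

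\textbf{Step 1 (pathwise bound).} Since $V_0,\ldots,V_K\in\mathcal V^{\mathrm{clip}}$, we have $\|V_k\|_\infty\le\Vmax$. Lemma~\ref{lem:varying_propagation} applies with $e_k=V_{k+1}-\Tcal V_k$ and gives, almost surely,
\[
\|\Vstar-V^{\pi_K}\|_{1,\mu_S}\le\mathcal B_K+\sum_{k}w^{(H)}_{K,k}\|e_k\|_{p,\rho_{k,S}} .
\]
Here $p=s/(s-1)\in[1,2]$ because $s\in[2,\infty]$. So $\|e_k\|_{p,\rho_{k,S}}$ is exactly the quantity $\widetilde\eps_{k,p}$ of Lemma~\ref{lem:composition} with $r=p$.

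\textbf{Step 2 (expectation).} The weights $w^{(H)}_{K,k}$ and $\mathcal B_K$ are deterministic: they are built from $d_s(m)$ under (C-strong) and from deterministic $D_{0,h}$. Taking expectations therefore only requires bounding $\E\|e_k\|_{p,\rho_{k,S}}$. By the tower property,
\[
\E\|e_k\|_{p,\rho_{k,S}}=\E\bigl[\E[\widetilde\eps_{k,p}\mid\mathcal J_k]\bigr]\le e^{\mathrm{Bell}}_{k,p},
\]
using~\eqref{eq:composition_residual}. The right-hand side is a sum of deterministic envelopes, so it passes through the outer expectation unchanged.

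The one point needing care is measurability, and I expect it to be the main technical obstacle. We must check that $\rho_{k,S}$ and the $L^p$ norm are $\mathcal J_k$-measurable, so that $\widetilde\eps_{k,p}$ is a random variable and the conditional expectation is legitimate. This is supplied by the standing conventions (Lemma~\ref{lem:measurable_framework} and the statement that all design laws are measurable for the history). Nonnegativity of $\widetilde\eps_{k,p}$ then permits the use of the tower property without integrability issues.

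\textbf{Step 3 (summation).} Summing over $k$ gives the first inequality in~\eqref{eq:hmpdrl_end_to_end}. Note that the weights $w^{(H)}_{K,k}$ vanish for $k<K-H+1$. Indeed, for such $k$ every admissible depth satisfies $m=\ell+K-k\ge H$, so the index set in~\eqref{eq:propagation_weights} is empty. Extending the sum to all $0\le k<K$ is therefore harmless.

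For the second inequality, bound each $e^{\mathrm{Bell}}_{k,p}$ with nonzero weight by its maximum $\overline e^{\mathrm{Bell}}_{K,H,p}$ over the window. Then use the identity $\sum_k w^{(H)}_{K,k}=2\phi^{(H)}_{s,K}$ from~\eqref{eq:varying_propagation}.
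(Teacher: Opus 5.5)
Your proposal is correct and follows the paper's proof essentially step for step. Both arguments apply the pathwise bound of Lemma~\ref{lem:varying_propagation}, then the tower property with Lemma~\ref{lem:composition} at $r=p$ (valid since $p\in[1,2]$), take expectations termwise using the deterministic weights, and finish with the active-window maximum and the weight-sum identity $\sum_k w^{(H)}_{K,k}=2\phi^{(H)}_{s,K}$.
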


\begin{proof}
Lemma~\ref{lem:composition} at $r=p$ and the tower property give
$\E\widetilde\eps_{k,p}
=\E[\,\E[\widetilde\eps_{k,p}\mid\mathcal J_k]\,]\le e_{k,p}^{\mathrm{Bell}}$.
The six summands of~\eqref{eq:rk_def} are deterministic.  For the exploration
link the summand is the majorant $\Dbar$ of~\eqref{eq:envelopes}, not the
random dispersion $\Dexp$.  Apply
Lemma~\ref{lem:varying_propagation} to $e_k=V_{k+1}-\Tcal V_k$ and take
expectations termwise, since the weights are deterministic.  The second
inequality uses the active-window maximum $\overline e_{K,H,p}^{\mathrm{Bell}}$
and the weight-sum identity.  Because the weights are deterministic, this
step does not interchange $\E$ and $\max$.
\end{proof}

\begin{theorem}[Level-indexed residual propagation and sampling\obj{\Rabs,\Alev}]\label{thm:level_indexed}
Assume Assumption~\ref{ass:markov_state}, $H\ge2$, and $K\ge1$; let
$\mu_S^\circ$ be carried by $\{h=H\}$, and put
$L_h:=\mathcal S\times\{h\}$.  For every block $k$ and
$1\le h<H$, let $\rho^{(h)}_{k,S}$ be a $\mathcal J_k$-measurable probability
law carried by $L_h$.
For $1\le m<H$, assume the level-indexed coefficient
\begin{equation}
c^{\mathrm{lev}}_{2,k}(m):=\sup_{\pi_{1:m}}
\left\|\frac{d(\mu_S^\circ\Pkernel^{\pi_1}_\circ\cdots
\Pkernel^{\pi_m}_\circ)}{d\rho^{(H-m)}_{k,S}}\right\|^2_{2,\rho^{(H-m)}_{k,S}}
\le c^{\mathrm{lev}}_2(m)<\infty
\label{eq:level_concentrability}
\end{equation}
almost surely, with a deterministic upper envelope uniform in $k$ and in the
policy sequence.  Define the six residual links levelwise by replacing each
$L^2(\rho_{k,S})$ norm in~\eqref{eq:fit_residual}--\eqref{eq:tgt_residual}
and Assumption~\ref{ass:residuals} by $L^2(\rho^{(h)}_{k,S})$, retaining the
same conditional-mean convention for fit and deterministic almost-sure
convention for the other links, and write
\begin{equation}
e^{\mathrm{Bell}}_{k,h}:=\eps_{\mathrm{fit},k,h}
+\eps_{\mathrm{ker},k,h}+\eps_{\mathrm{tgt},k,h}
+\eps_{\mathrm{buf},k,h}+\eps_{\mathrm{act},k,h}
+\eps_k\bar D^{\mathrm{exp}}_{k,h}.
\label{eq:level_residual}
\end{equation}
Then every clipped abstract recursion and its true-score greedy policy satisfy
\begin{align}
\E\|\Vstar-V^{\pi_K}\|_{1,\mu_S}
&\le\mathcal B_K
+2\sum_{k=0}^{K-1}\ \sum_{m=K-k}^{H-1}
\geff^m\sqrt{c^{\mathrm{lev}}_2(m)}\,
e^{\mathrm{Bell}}_{k,H-m} \notag\\
&\le\mathcal B_K
+2\phi^{(H)}_{\mathrm{lev}}\max_{\substack{0\le k<K\\1\le h<H}}
e^{\mathrm{Bell}}_{k,h},
\qquad
\phi^{(H)}_{\mathrm{lev}}:=\sum_{m=1}^{H-1}
m\geff^m\sqrt{c^{\mathrm{lev}}_2(m)}.
\label{eq:level_propagation}
\end{align}
Empty inner sums are zero.  In particular, if
$c^{\mathrm{lev}}_2(m)\le\bar c$, then
$\phi^{(H)}_{\mathrm{lev}}\le\sqrt{\bar c}\,H(H-1)/2=O(H^2\sqrt{\bar c})$.

Under the $\Alev$ clause of Assumption~\ref{ass:generative}, a realization in
a class closed under slice assembly draws
$n_{k,h}$ conditionally i.i.d.\ states from $\rho^{(h)}_{k,S}$, executes the
level-$h$ behavior law, observes fresh true-kernel outcomes, and fits
$V_{k+1}|_{L_h}$ with a separate head or regressor.  Thus only the fit slot at level $h$ is learned from those
$n_{k,h}$ labels; the other five slots in~\eqref{eq:level_residual} are
assumed or separately bounded under the same $\rho^{(h)}_{k,S}$.  The exact
sample count is
\begin{equation}
N^{\mathrm{lev}}_{0:K-1}:=\sum_{k=0}^{K-1}\sum_{h=1}^{H-1}n_{k,h};
\qquad n_{k,h}\equiv n\ \Longrightarrow\
N^{\mathrm{lev}}_{0:K-1}=K(H-1)n.
\label{eq:level_sample_count}
\end{equation}
This counts direct slice-reset outcomes.  Rejection sampling from a single
reset law instead has an additional clock-mass-dependent cost.
An optional top-slice fit adds $\sum_{k<K}n_{k,H}$ samples while leaving the
propagation bound unchanged.  A shared-parameter multitask implementation can
use the same propagation formula once its joint fit residual is controlled.
\end{theorem}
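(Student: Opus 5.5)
The plan is to rerun the proof of Lemma~\ref{lem:varying_propagation} with one change. At propagated depth $m$, the $L^2$ H\"older pairing is taken against the slice law $\rho^{(H-m)}_{k,S}$ instead of the shared law $\rho_{k,S}$. Set $e_k:=V_{k+1}-\Tcal V_k$. Lemma~\ref{lem:comparison_kernel} gives the pointwise recursion $|\Vstar-V_{k+1}|\le|e_k|+\geff\Pkernel^{\pi}_\circ|\Vstar-V_k|$, with a policy that selects between the two greedy selectors. The standard loss decomposition for the greedy $\pi_K$ gives two branches. Unrolling both, as in the roadmap of Lemma~\ref{lem:varying_propagation}, bounds $\|\Vstar-V^{\pi_K}\|_{1,\mu_S}$ by $\mathcal B_K$ plus sums of terms
\[
2\geff^m\int|e_k|\,d\bigl(\mu_S^\circ\Pkernel^{\pi_1}_\circ\cdots\Pkernel^{\pi_m}_\circ\bigr),\qquad m\ge K-k .
\]
Clock nilpotence in~\eqref{eq:time_decrement} kills all $m\ge H$. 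The initialization term $\mathcal B_K$ is unchanged, because it uses only the sup-norm slice envelopes $D_{0,h}$.

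The key observation concerns where these measures live. Since $\mu_S^\circ$ is carried by $L_H$ and each nonterminal step lowers the clock by exactly one, the depth-$m$ measure is carried by $L_{H-m}$. I will therefore apply Cauchy--Schwarz to its density with respect to $\rho^{(H-m)}_{k,S}$. This bounds each term by $\sqrt{c^{\mathrm{lev}}_{2,k}(m)}\,\|e_k\|_{2,\rho^{(H-m)}_{k,S}}$, and then by $\sqrt{c^{\mathrm{lev}}_2(m)}$ times the same norm using the almost-sure envelope. The norm only involves $e_k$ restricted to $L_{H-m}$, which is exactly what makes level-wise residuals enough.

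Next I rerun Lemma~\ref{lem:composition} with $\rho_{k,S}$ replaced by $\rho^{(h)}_{k,S}$. Each of its links is a triangle inequality in a fixed $L^2$ norm, and Lemma~\ref{lem:v_q_backup}(iv) holds pointwise, so the chain goes through unchanged. This gives $\E[\|e_k\|_{2,\rho^{(h)}_{k,S}}\mid\mathcal J_k]\le e^{\mathrm{Bell}}_{k,h}$. The slice laws are $\mathcal J_k$-measurable and the weights are deterministic, so taking expectations termwise and using the tower property yields the first line of~\eqref{eq:level_propagation}.

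The second line follows from the index count. For fixed $m$, the admissible $k$ satisfy $K-m\le k<K$ and $k\ge0$, giving $\min\{K,m\}\le m$ indices. Bounding each $e^{\mathrm{Bell}}_{k,h}$ by the maximum then produces $\phi^{(H)}_{\mathrm{lev}}$. The $O(H^2)$ claim is the computation $\sum_{m<H}m=H(H-1)/2$ together with $\geff^m\le1$.

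For the sampling part, the argument is mostly bookkeeping. Slice-assembly closure means that separate per-level fits combine into a single $V_{k+1}$ in the class. With a separate head per level, the level-$h$ labels affect only $V_{k+1}|_{L_h}$, so only the fit slot at level $h$ is learned from the $n_{k,h}$ labels. The count~\eqref{eq:level_sample_count} is then a plain sum over $0\le k<K$ and $1\le h<H$. Level $H$ is never paired with any depth $m\ge1$, which is why an optional top-slice fit leaves the bound unchanged.

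The main obstacle I expect is the branch bookkeeping. I need both policy branches, including the $\pi_K$ occupancy branch that is summed through the nilpotent resolvent, to decompose into terms that each have a single exact depth $m=\ell+K-k$ and are therefore supported on one slice. My first step will be to write the resolvent as the finite sum $\sum_{\ell<H}(\geff\Pkernel^{\pi_K}_\circ)^\ell$ and to check that every product has exactly $m$ factors. If that holds, the rest is mechanical.
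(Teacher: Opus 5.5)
Your proposal is correct and follows the paper's own proof. Rerun the pathwise argument of Lemma~\ref{lem:varying_propagation}. Each depth-$m$ occupancy branch has exactly $m=\ell+K-k$ kernel factors and is carried by $L_{H-m}$, so Cauchy--Schwarz against $\rho^{(H-m)}_{k,S}$ applies to it. The two-branch factor and the $\min\{K,m\}$ index count then give both lines, and the levelwise composition chain, termwise expectations with deterministic weights, and the additive sample count are exactly the remaining steps the paper uses.
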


\begin{proof}
In the proof of Lemma~\ref{lem:varying_propagation}, every depth-$m$ occupancy
lies on $L_{H-m}$; Cauchy--Schwarz with~\eqref{eq:level_concentrability} bounds
its residual integral by
$\sqrt{c_2^{\mathrm{lev}}(m)}\|e_k\|_{2,\rho^{(H-m)}_{k,S}}$.
The two resolvent branches and the exact count $\min\{K,m\}$ give
\eqref{eq:level_propagation}; summing disjoint direct-reset blocks gives
\eqref{eq:level_sample_count}.
\end{proof}

\subsection{Matched-budget propagation}\label{subsec:matched_budget}

A shared-reset run uses one sampling distribution across horizon levels,
whereas a direct-level run samples each level separately.  We compare their
policy-loss bounds under a common label budget for the updates that receive
nonzero propagation weights.  Coordinate $i$ has a statistical term
$b_i n_i^{-\nu}$; the exponent $\nu$ covers parametric, tabular, and
nonparametric rates.

\begin{theorem}[Matched-budget propagation and optimal planned allocation\obj{\Aiid,\Alev}]
\label{thm:matched_budget}
Fix a terminal copy $K$ and $\nu\in(0,1]$, and apply
Theorem~\ref{thm:hmpdrl_end_to_end} at $s=2$ to a shared-reset run and
Theorem~\ref{thm:level_indexed} to a direct-level-reset run on the same model,
with the same evaluation law and initialization.  Denote their terminal
true-score greedy policies by $\pi_K^{\mathrm{sh}}$ and
$\pi_K^{\mathrm{lev}}$, respectively, and define the active
index sets
\begin{equation}
\begin{aligned}
\mathcal K_K&:=\{k:\max\{0,K-H+1\}\le k<K\},\\
\mathcal I_K&:=\{(k,h):0\le k<K,\ 1\le h<H,\ K-k\le H-h\},
\end{aligned}
\label{eq:budget_active_sets}
\end{equation}
and, for $(k,h)\in\mathcal I_K$, put
\begin{equation}
A^{\mathrm{lev}}_{k,h}:=
2\geff^{H-h}\sqrt{c_2^{\mathrm{lev}}(H-h)}.
\label{eq:level_budget_weight}
\end{equation}
Suppose the Bellman-residual envelopes separate into deterministic floors and
statistical terms,
\begin{equation}
e_{k,2}^{\mathrm{Bell}}\le r_k^{\mathrm{sh}}
+b_k^{\mathrm{sh}}n_k^{-\nu},\qquad
e_{k,h}^{\mathrm{Bell}}\le r_{k,h}^{\mathrm{lev}}
+b_{k,h}^{\mathrm{lev}}n_{k,h}^{-\nu},
\label{eq:budget_rate_envelopes}
\end{equation}
on their respective active sets.  The floors may contain any label-independent
component of the six links.  The coefficients $b_i$ are fixed
before the allocation is chosen and, in particular, may not absorb a factor
such as $(\log n_i)^q$ that varies with the coordinate allocation.  Omit
coordinates whose statistical objective coefficient $c_i$ is zero and
define
\begin{align}
\mathsf C_{\nu,K}^{\mathrm{sh}}
&:=\left\{\sum_{k\in\mathcal K_K}
\bigl(w^{(H)}_{K,k}b_k^{\mathrm{sh}}\bigr)^{1/(1+\nu)}\right\}^{1+\nu},
&F_K^{\mathrm{sh}}&:=\sum_{k\in\mathcal K_K}
w^{(H)}_{K,k}r_k^{\mathrm{sh}},
\label{eq:shared_budget_constant}\\
\mathsf C_{\nu,K}^{\mathrm{lev}}
&:=\left\{\sum_{(k,h)\in\mathcal I_K}
\bigl(A^{\mathrm{lev}}_{k,h}b_{k,h}^{\mathrm{lev}}\bigr)^{1/(1+\nu)}\right\}^{1+\nu},
&F_K^{\mathrm{lev}}&:=\sum_{(k,h)\in\mathcal I_K}
A^{\mathrm{lev}}_{k,h}r_{k,h}^{\mathrm{lev}}.
\label{eq:level_budget_constant}
\end{align}
Under continuous terminal active-window budgets
$\sum_{k\in\mathcal K_K}n_k=\mathsf N_{\mathrm{sh}}$ and
$\sum_{(k,h)\in\mathcal I_K}n_{k,h}=\mathsf N_{\mathrm{lev}}$, the unique
optimal allocations on positive-weight coordinates are
\begin{equation}
n_i=\mathsf N\,
\frac{c_i^{1/(1+\nu)}}{\sum_jc_j^{1/(1+\nu)}},
\qquad
\min_{n_i>0:\,\sum_i n_i=\mathsf N}\sum_i c_i n_i^{-\nu}
=\frac{\bigl(\sum_i c_i^{1/(1+\nu)}\bigr)^{1+\nu}}{\mathsf N^\nu},
\label{eq:matched_budget_allocation}
\end{equation}
with $c_i=w^{(H)}_{K,k}b_k^{\mathrm{sh}}$ for shared reset and
$c_i=A^{\mathrm{lev}}_{k,h}b_{k,h}^{\mathrm{lev}}$ for direct level reset.
Both terminal-window budgets count observed true-kernel labels and charge only
the displayed active coordinates.  Labels consumed outside these sets,
including earlier zero-weight blocks when $K>H-1$, must be added separately
when reporting total run cost.  The quantity $\mathsf N_{\mathrm{lev}}$ counts
direct slice-reset outcomes, with any rejection overhead added before
comparing physical simulator calls.  The resulting terminal-policy bounds are
\begin{align}
\E\|\Vstar-V^{\pi_K^{\mathrm{sh}}}\|_{1,\mu_S}
&\le\mathcal B_K+F_K^{\mathrm{sh}}
+\mathsf C_{\nu,K}^{\mathrm{sh}}\mathsf N_{\mathrm{sh}}^{-\nu},
\label{eq:shared_matched_budget}\\
\E\|\Vstar-V^{\pi_K^{\mathrm{lev}}}\|_{1,\mu_S}
&\le\mathcal B_K+F_K^{\mathrm{lev}}
+\mathsf C_{\nu,K}^{\mathrm{lev}}\mathsf N_{\mathrm{lev}}^{-\nu}.
\label{eq:level_matched_budget}
\end{align}

If the rate on coordinate $i$ is valid only for $n_i\ge L_i$, let
$L_i\in\mathbb N$ with $L_i\ge1$, $\mathsf N\ge\sum_iL_i$, and define
\begin{equation}
\Psi_\nu(c,L,\mathsf N)
:=\min_{n_i\ge L_i:\,\sum_i n_i=\mathsf N}
\sum_i c_i n_i^{-\nu}.
\label{eq:constrained_budget_value}
\end{equation}
For $\mathsf N>\sum_iL_i$, its unique continuous minimizer is
\begin{equation}
n_i^{L}=\max\left\{L_i,
\left(\frac{\nu c_i}{\lambda}\right)^{1/(1+\nu)}\right\},
\qquad \sum_i n_i^{L}=\mathsf N,
\label{eq:water_filling_allocation}
\end{equation}
where the budget equation determines a unique $\lambda>0$; if
$\mathsf N=\sum_iL_i$, the unique feasible allocation is $n_i^L=L_i$.
When lower bounds apply, replace the last statistical term in
\eqref{eq:shared_matched_budget} or~\eqref{eq:level_matched_budget} by the
corresponding value $\Psi_\nu$.  Formula~\eqref{eq:matched_budget_allocation}
and the closed forms~\eqref{eq:shared_matched_budget}--
\eqref{eq:level_matched_budget} remain valid exactly when every unconstrained
proportional allocation satisfies its lower bound.

There is also an implementable integer schedule.  Starting from
$\lfloor n_i^L\rfloor$, distribute the remaining
$\mathsf N-\sum_i\lfloor n_i^L\rfloor$ labels one at a time to a coordinate
with largest current marginal decrease
\begin{equation}
\Delta_i(n_i):=c_i\{n_i^{-\nu}-(n_i+1)^{-\nu}\}.
\label{eq:integer_budget_allocation}
\end{equation}
breaking ties by a fixed index order.  The result is feasible, uses the full
budget, and has statistical objective at most
$2^\nu\Psi_\nu(c,L,\mathsf N)$.  Here
$|\mathcal K_K|=\min\{K,H-1\}$ and, with $J=\min\{K,H-1\}$,
$|\mathcal I_K|=JH-J(J+1)/2$ before zero-weight coordinates are omitted.
\end{theorem}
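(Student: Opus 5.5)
The proof splits into three parts: the two propagation bounds, the continuous allocation problems (unconstrained and with lower bounds), and the integer schedule.

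\textbf{Propagation bounds.} For the shared-reset run, I would substitute the envelopes \eqref{eq:budget_rate_envelopes} into the first inequality of \eqref{eq:hmpdrl_end_to_end} at $s=2$. By Lemma~\ref{lem:varying_propagation}, the weights $w^{(H)}_{K,k}$ vanish outside $\mathcal K_K$, so the bound becomes
\[
\mathcal B_K+F_K^{\mathrm{sh}}+\sum_{k\in\mathcal K_K}w^{(H)}_{K,k}b_k^{\mathrm{sh}}n_k^{-\nu}.
\]
For the direct-level run, I would use the first line of \eqref{eq:level_propagation} and reindex by $h=H-m$. The constraint $m\ge K-k$ becomes $K-k\le H-h$, which is exactly membership in $\mathcal I_K$, and the coefficient becomes $A^{\mathrm{lev}}_{k,h}$. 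Both statistical terms then have the form $\sum_i c_in_i^{-\nu}$. Once the allocation problem is solved, \eqref{eq:shared_matched_budget}--\eqref{eq:level_matched_budget} follow by plugging in its optimal value.

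\textbf{Continuous allocation.} The objective $f(n)=\sum_i c_in_i^{-\nu}$ is strictly convex on the positive orthant because each $c_i>0$ after omissions. The KKT condition $\nu c_in_i^{-1-\nu}=\lambda$ gives $n_i\propto c_i^{1/(1+\nu)}$, and substituting back yields the value $(\sum_ic_i^{1/(1+\nu)})^{1+\nu}\mathsf N^{-\nu}$. An alternative check is Hölder's inequality with exponents $1+\nu$ and $(1+\nu)/\nu$, applied to $\sum c_i^{1/(1+\nu)}=\sum (c_in_i^{-\nu})^{1/(1+\nu)}n_i^{\nu/(1+\nu)}$. Strict convexity gives uniqueness.

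With lower bounds $n_i\ge L_i$, the feasible set is compact and convex, so there is a unique minimizer. The KKT conditions give $n_i=\max\{L_i,(\nu c_i/\lambda)^{1/(1+\nu)}\}$. The map $\lambda\mapsto\sum_in_i(\lambda)$ is continuous and nonincreasing. It is strictly decreasing wherever some coordinate is unclamped, and it tends to $\infty$ as $\lambda\to0$ and to $\sum_iL_i$ as $\lambda\to\infty$. Hence for $\mathsf N>\sum L_i$ there is a unique $\lambda$ solving the budget equation. The closed forms remain valid exactly when the proportional allocation is feasible, since in that case it coincides with the constrained minimizer.

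\textbf{Integer schedule (main obstacle: the $2^\nu$ factor).} Feasibility is straightforward: $\lfloor n_i^L\rfloor\ge L_i$ because $L_i$ is an integer, and $\sum_i\lfloor n_i^L\rfloor\le\mathsf N$, so the greedy step distributes the remainder and uses the full budget. For the bound, let $n^g$ be the greedy output. Every coordinate satisfies $n^g_i\ge\lfloor n^L_i\rfloor$.

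I would first try to show $n^g_i+1\ge n^L_i$ for all $i$, which gives $(n^g_i)^{-\nu}\le 2^\nu (n^g_i+1)^{-\nu}\le 2^\nu (n^L_i)^{-\nu}$ using $n^g_i\ge1$. Summing gives $f(n^g)\le 2^\nu f(n^L)=2^\nu\Psi_\nu$. The inequality $n^g_i+1\ge n^L_i$ is automatic, since $n^g_i\ge\lfloor n^L_i\rfloor>n^L_i-1$. The only needed input is therefore $\lfloor x\rfloor+1\ge x$ together with $(m+1)/m\le2$ for integers $m\ge1$.

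In this route the greedy rule is used only to allocate the leftover labels, and the bound already holds for any completion of the floors. The delicate step is the bookkeeping: $\Psi_\nu$ is defined as a continuous minimum that lower-bounds the integer objective, and the degenerate case $\mathsf N=\sum_iL_i$ must be handled, where $n^g=L$ is optimal.

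The final cardinality claims are counting arguments. First, $|\mathcal K_K|=K-\max\{0,K-H+1\}=\min\{K,H-1\}$. Second, for each $j=K-k\in\{1,\dots,J\}$ the admissible levels are $h\le H-j$, giving $H-j$ choices. Summing over $j$ gives $JH-J(J+1)/2$.
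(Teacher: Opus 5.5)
Your proposal is correct and follows essentially the same route as the paper: substitution into the two propagation bounds (with the reindexing $h=H-m$ matching $\mathcal I_K$ and $A^{\mathrm{lev}}_{k,h}$), Lagrange/KKT with strict convexity for the unconstrained and water-filling allocations, and a floor-based comparison for the integer schedule. Your $2^\nu$ step via $n_i^g+1>n_i^L$ and $(m+1)/m\le2$ is just a rephrasing of the paper's $\lfloor n_i^L\rfloor\ge n_i^L/2$ for $n_i^L\ge1$. Like the paper, you correctly observe that this bound holds for any completion of the floors, since adding labels only decreases the objective.
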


\noindent\emph{Proof:} See Appendix~\ref{supp:oa3}.

\begin{corollary}[Matched-budget horizon geometry on the clock witness\obj{\Aiid,\Alev}]
\label{cor:matched_budget_geometry}
Consider the one-state-per-level horizon-indexed family with unit survival,
$K_H\ge H-1$, and $s=2$.  Use fresh
$(\textsc{sample},\textsc{obs})$ outcomes, the declared exact-score oracle,
$W_k=V_k$, greedy collection with $\eps_k=0$, $O=S$, and a fitted procedure
with the displayed statistical envelope.  Assume the unconstrained planned
allocations satisfy every validity threshold $L_i$; otherwise the exact
comparison is the constrained value $\Psi_\nu$ from
\eqref{eq:constrained_budget_value}.  Under these choices, the kernel, target,
replay, action, exploration, aliasing, and drift links vanish, the top-slice boundary
is zero, and the fit link is the only nonzero term.  Suppose its statistical
constants satisfy $b_k^{\mathrm{sh}}\asymp b_H^{\mathrm{sh}}$ and
$b_{k,h}^{\mathrm{lev}}\asymp b_H^{\mathrm{lev}}$ uniformly on the active
sets.  For direct level reset take $c_2^{\mathrm{lev}}(m)=1$.  For shared
reset compare the uniform clock law with the coefficient-optimal law
\eqref{eq:clock_attainment}.  Then:
\begin{enumerate}[label=\textnormal{(\alph*)},leftmargin=2.2em,itemsep=2pt,topsep=3pt]
\item In the near-unit-discount regime,
\begin{equation}
\mathsf C_{\nu,K_H}^{\mathrm{sh,unif}}
\asymp\mathsf C_{\nu,K_H}^{\mathrm{sh,opt}}
\asymp b_H^{\mathrm{sh}}H^{\nu+5/2},
\qquad
\mathsf C_{\nu,K_H}^{\mathrm{lev}}
\asymp b_H^{\mathrm{lev}}H^{2\nu+2}.
\label{eq:matched_near_unit_geometry}
\end{equation}
For the root-$n$ case $\nu=1/2$, both clock geometries therefore contribute
$H^3/\sqrt{\mathsf N}$ before their regression constants are inserted.
\item In the fixed-discount regime,
\begin{equation}
\mathsf C_{\nu,K_H}^{\mathrm{sh,unif}}\asymp
b_H^{\mathrm{sh}}H^{1/2},\qquad
\mathsf C_{\nu,K_H}^{\mathrm{sh,opt}}\asymp b_H^{\mathrm{sh}},\qquad
\mathsf C_{\nu,K_H}^{\mathrm{lev}}\asymp b_H^{\mathrm{lev}}.
\label{eq:matched_fixed_geometry}
\end{equation}
\end{enumerate}
All constants are uniform in $H$.  Direct slice access and an optimized
shared clock law have the same fixed-discount budget order, while direct access
achieves it without tuning cross-slice clock masses.  In the one-state-per-level
tabular specialization, the fit bound~\eqref{eq:tabular_rate_expected} gives
$b_H^{\mathrm{sh}}\asymp\Vmax\sqrt H$ and
$b_H^{\mathrm{lev}}\asymp\Vmax$.  At $\nu=1/2$, the near-unit statistical
terms are therefore respectively
\begin{equation}
\Theta\!\left(\frac{\Vmax H^{7/2}}{\sqrt{\mathsf N_{\mathrm{sh}}}}\right)
\quad\text{and}\quad
\Theta\!\left(\frac{\Vmax H^3}{\sqrt{\mathsf N_{\mathrm{lev}}}}\right),
\label{eq:matched_tabular_geometry}
\end{equation}
for the displayed bounds.  This is a factor-$\sqrt H$ statistical
advantage for direct slice reset under equal sufficiently large terminal-window
label budgets.
\end{corollary}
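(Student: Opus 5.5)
The plan is to reduce everything to Theorem~\ref{thm:matched_budget} and then compute the constants $\mathsf C_{\nu,K_H}$ explicitly on the one-state-per-level chain.

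\emph{Vanishing of all but the fit link.} I would first check that every link other than fit vanishes under the stated choices.
\begin{itemize}[leftmargin=2.2em,nosep]
\item Fresh $(\textsc{sample},\textsc{obs})$ outcomes give $\eps_{\mathrm{ker},k}=\eps_{\mathrm{tgt},k}=0$ (the $\Aiid$ remark after~\eqref{eq:conditional_kernel_retention} and Lemma~\ref{lem:tgt_residual}).
\item $W_k=V_k$ with one fresh block gives $\beta^{\mathrm{rep}}_k=\pi^{\mathrm{on}}_k$, so $\epsbuf=0$.
\item The exact-score oracle with $W_k=V_k$ makes $\pi^{\mathrm{on}}_k=\pi^{\mathrm{tgt}}_k$, so the action link is $0$.
\item $\eps_k=0$ kills the exploration link.
\item $O=S$ gives $a_{\mathrm{alias},k}=0$ by Proposition~\ref{prop:representation_routes}(i).
\item $K_H\ge H-1$ gives $\mathcal B_K=0$ by Lemma~\ref{lem:varying_propagation}.
\end{itemize}
Hence all floors $r$ are zero and $F_K=0$. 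The bounds~\eqref{eq:shared_matched_budget}--\eqref{eq:level_matched_budget} then reduce to $\mathsf C_{\nu,K_H}\mathsf N^{-\nu}$, with the unconstrained allocation valid by hypothesis.

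\emph{Direct-level constant.} With $c_2^{\mathrm{lev}}\equiv1$ we have $A^{\mathrm{lev}}_{k,h}=2\geff^{H-h}$. Because $K_H\ge H-1$, the index set $\mathcal I_K$ contains exactly $m=H-h$ pairs for each depth $m$. Hence
\[
\mathsf C^{\mathrm{lev}}\asymp b_H^{\mathrm{lev}}\Bigl\{\sum_{m=1}^{H-1} m\,\geff^{m/(1+\nu)}\Bigr\}^{1+\nu}.
\]
In the near-unit regime $\geff_H^m\in[e^{-\varkappa-o(1)},1]$, so the sum is $\asymp H^2$ and $\mathsf C^{\mathrm{lev}}\asymp b_H^{\mathrm{lev}}H^{2+2\nu}$. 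In the fixed-discount regime the sum is a convergent series bounded below by its first term, so $\mathsf C^{\mathrm{lev}}\asymp b_H^{\mathrm{lev}}$.

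\emph{Shared-reset constant.} Index the active blocks by $j=K-k\in\{1,\dots,H-1\}$. Then~\eqref{eq:propagation_weights} reads
\[
w_j=2\sum_{m=j}^{H-1}\geff^m d_2(m),\qquad d_2(m)=r_{H-m}^{-1/2}
\]
on the chain.
\begin{itemize}[leftmargin=2.2em,nosep]
\item \emph{Uniform clock law, near-unit.} Here $d_2(m)=\sqrt H$ and $w_j\asymp\sqrt H\,(H-j)$. Since $\sum_j (H-j)^{1/(1+\nu)}\asymp H^{1+1/(1+\nu)}$, we get $\mathsf C^{\mathrm{sh,unif}}\asymp b_H^{\mathrm{sh}}H^{1/2}H^{2+\nu}$.
\item \emph{Optimal law~\eqref{eq:clock_attainment}, near-unit.} At $s=2$, $\theta=2/3$, so $r_{H-m}\asymp m^{2/3}/H^{5/3}$ and $d_2(m)\asymp H^{5/6}m^{-1/3}$. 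This gives $w_j\asymp H^{5/6}\bigl(H^{2/3}-j^{2/3}\bigr)$. A Riemann-sum comparison with the positive integral $\int_0^1(1-x^{2/3})^{1/(1+\nu)}dx$ gives $\sum_j w_j^{1/(1+\nu)}\asymp H^{(3/2)/(1+\nu)+1}$, and hence again $H^{\nu+5/2}$.
\item \emph{Uniform clock law, fixed discount.} Here $w_j\asymp\sqrt H\,\geff^j$, and the geometric sum gives $H^{1/2}$.
\item \emph{Optimal law, fixed discount.} The normalizer $\sum_m(m\geff^m)^{2/3}$ is $\Theta(1)$, so $w_j\asymp\sum_{m\ge j}m^{-1/3}\geff^{2m/3}\asymp j^{-1/3}\geff^{2j/3}$. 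Its $1/(1+\nu)$-powers are summable, so the constant is $\Theta(1)$.
\end{itemize}
All comparisons are two-sided because the weights are positive and the first (or bulk) terms already carry the stated order. The assumptions $b\asymp b_H$ let the $b$-factors be pulled out with uniform constants. For the tabular specialization, inserting $b_H^{\mathrm{sh}}\asymp\Vmax\sqrt H$ and $b_H^{\mathrm{lev}}\asymp\Vmax$ at $\nu=1/2$ gives $H^{1/2}\cdot H^{3}$ versus $H^{3}$, which is~\eqref{eq:matched_tabular_geometry}.

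\emph{Main obstacle.} I expect the main obstacle to be the near-unit Riemann-sum estimates with uniform constants, particularly for the optimal law, where $w_j$ degenerates as $j\to H$. There I would bound the sum below by the range $j\le H/2$, where $w_j\asymp H^{3/2}$, and above by $H\cdot\max_j w_j^{1/(1+\nu)}$. The $e^{-\varkappa x}$ factors then only change the constants.
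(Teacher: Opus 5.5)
Your proposal is correct and follows essentially the same route as the paper's proof. You reduce to Theorem~\ref{thm:matched_budget} with zero floors and count exactly $m$ active pairs per depth for direct reset. For shared reset you write $w_j=2\sum_{m=j}^{H-1}\geff^m d_2(m)$, with $d_2(m)=\sqrt H$ for the uniform law or $d_2(m)=S_H^{1/2}(m\geff^m)^{-1/3}$ for the optimal law, and bound the $1/(1+\nu)$-power sums below on $j\le H/2$ and above by $H\max_j w_j^{1/(1+\nu)}$ near unit discount, and by geometric decay at fixed discount. The only step you leave implicit is the tabular constant, which is \eqref{eq:tabular_rate_expected} with $N=H$ states under the shared law versus $N=1$ on each singleton slice.
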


\noindent\emph{Proof:} See Appendix~\ref{supp:oa3}.

\begin{remark}[Routed neural specialization\obj{\Aiid,\Alev}]
\label{rem:routed_neural_budget}
For a fixed horizon, the neural rate of Proposition~\ref{prop:vreg_rate} can
be compared under a common terminal-window budget while retaining its actual
validity thresholds $L_i$ and allocation-dependent logarithms.  One may either
upper-bound each $(\log n_i)^q$ by the allocation-independent
$(\log\mathsf N)^q$ before applying Theorem~\ref{thm:matched_budget}, or
optimize the logarithmic objective directly.  The regression constant may
depend on $H$ through $\Vmax$ and the routed approximation and entropy
constants.  Corollary~\ref{cor:matched_budget_geometry} therefore establishes the
factor-$\sqrt H$ comparison for its displayed tabular constants; an analogous
growing-horizon neural comparison requires uniform-in-$H$ approximation,
entropy, and reward-normalization bounds.
\end{remark}

\section{Online-action residual}\label{sec:action}

The behavior policy may score actions with a network that differs from the
frozen iterate $V_k$.  Using the optimization and acting-snapshot index sets
defined before~\eqref{eq:envelopes}, measure this displacement by
\begin{equation}
\begin{aligned}
\dpath&:=\sup_{t\in\mathcal I_k^{\mathrm{act}}}\|V_{\theta_{k,t}}-V_k\|_\infty,
&\dpath&\le\dnet\quad\text{a.s. for }\Arun,\\
\dsnap&:=\|W_k-V_k\|_\infty,
&\dsnap&\le\dnet\quad\text{a.s. for }\Aiid.
\end{aligned}
\label{eq:delta_two_objects}
\end{equation}
The deterministic envelope $\dnet$ bounds the appropriate displacement for
each analysis object.  Together with the score error $\etascorek$, it gives
the perturbation scale
\begin{equation}
\Lambda_k:=\geff\dnet+\etascorek,
\qquad
\etascorek\ \text{as bounded in~\eqref{eq:envelopes}} .
\label{eq:Lambda_def}
\end{equation}
Network drift and score error therefore enter on the same scale.
A score perturbation of size $\Lambda_k$ can change the greedy action only
at states whose action gap is at most $2\Lambda_k$.  The margin condition
controls how much replay probability lies in this set.  We use a
Mammen--Tsybakov-type condition on the frozen iterates' action gaps and relate
it to classical optimal-gap conditions
\cite{mammen1999smooth,tsybakov2004optimal,farahmand2011action_gap,bellemare2016actiongap}.

\subsection{Margin condition}\label{subsec:act_setup}

\begin{definition}[Frozen-iterate action-gap margin: local and global forms\obj{\Rabs}]\label{def:margin}
The replay laws satisfy the \emph{local frozen-iterate}
$(C_{\mathrm{marg}},\alpha)$ margin condition at scale $[u_0,\bar u]$ if
there are \emph{deterministic} constants $C_{\mathrm{marg}}<\infty$ and
$\alpha\ge0$ such that, almost surely and for every $k$,
\begin{equation}
\rho_{k,S}\bigl\{\stilde:\Delta_Q^{(k)}(\stilde)\le u\bigr\}\le C_{\mathrm{marg}} u^\alpha
\qquad\text{for all }u\in[u_0,\bar u],
\label{eq:margin}
\end{equation}
and the \emph{global frozen-iterate} condition if the same holds for all
$u>0$.  This is a condition on the random pair $(Q_{V_k},\rho_{k,S})$, uniform
over iterations with deterministic constants; it is distinct from a margin
stated only for the fixed optimal score $Q_{\Vstar}$.  The always-valid fallback
is $(C_{\mathrm{marg}},\alpha)=(1,0)$.  Each result invokes the condition only
at its displayed scale ($2\Lambda_k$ or $2\etascoreK$); a vanishing-scale claim
requires a common interval $(0,\bar u]$.
\end{definition}

\begin{remark}[Ties and a positive frozen-iterate exponent\obj{\Rabs}]
If~\eqref{eq:margin} with $\alpha>0$ holds down to zero, then
$\rho_{k,S}\{\Delta_Q^{(k)}=0\}=0$; on a finite full-support law this forbids
ties at every frozen iterate.  A condition imposed at one scale has no such
condition.  By contrast, a fixed-$Q^\star$ positive-gap condition can exclude
the zero-gap event and retain a separate optimal-tie mass, as follows.
\end{remark}

\begin{proposition}[Transfer from a fixed optimal gap to frozen iterates\obj{\Gen,\Rabs}]
\label{prop:reference_margin_transfer}
For each $k$, suppose a deterministic $\delta_k^\star\ge0$ satisfies
\begin{equation}
\|Q_{V_k}-Q_{\Vstar}\|_\infty\le\delta_k^\star
\qquad\text{almost surely}.
\label{eq:optimal_score_tube}
\end{equation}
Suppose also that deterministic $\tau_\star\in[0,1]$,
$C_\star<\infty$, and $\alpha_\star\ge0$ satisfy, almost surely and uniformly
in $k$,
\begin{equation}
\rho_{k,S}\{\Delta_Q^\star=0\}\le\tau_\star,
\qquad
\rho_{k,S}\{0<\Delta_Q^\star\le v\}
\le C_\star v^{\alpha_\star}
\label{eq:optimal_gap_margin}
\end{equation}
at every scale $v$ in a declared interval.  Whenever
$u+2\delta_k^\star$ belongs to that interval,
\begin{equation}
\rho_{k,S}\{\Delta_Q^{(k)}\le u\}
\le \tau_\star+C_\star(u+2\delta_k^\star)^{\alpha_\star}.
\label{eq:optimal_to_iterate_margin}
\end{equation}
In particular, if $\tau_\star=0$, the optimal-gap condition is available on
the required enlarged scales, and $\delta_k^\star\le c u_0$ uniformly for some
$c\ge0$, then
the local frozen-iterate condition on $[u_0,\bar u]$ holds with exponent
$\alpha_\star$ and constant $C_\star(1+2c)^{\alpha_\star}$.
At the action
scale $u=2\Lambda_k$, the weaker tube
$\delta_k^\star\le c\Lambda_k$ already preserves the exponent because the
right-hand side of~\eqref{eq:optimal_to_iterate_margin} becomes
$C_\star[2(1+c)\Lambda_k]^{\alpha_\star}$ when $\tau_\star=0$.
Finally,
\begin{equation}
\delta_k^\star=\geff b_k
\quad\text{is valid whenever}\quad
\|V_k-\Vstar\|_\infty\le b_k,
\label{eq:value_to_score_tube}
\end{equation}
so the transfer can be verified from a deterministic value-iterate tube.
\end{proposition}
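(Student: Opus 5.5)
The proof rests on a pointwise perturbation fact: the gap functional moves by at most twice the sup-norm distance between the two scores. Fix $k$, a realization of the history, and a state $\stilde\in\Stildeo$. Put $\delta:=\delta_k^\star$. By \eqref{eq:optimal_score_tube}, almost surely $|Q_{V_k}(\stilde,a)-Q_{\Vstar}(\stilde,a)|\le\delta$ for every $a$. The maximum score moves by at most $\delta$, so
\[
\max_aQ_{V_k}(\stilde,a)\ge\max_aQ_{\Vstar}(\stilde,a)-\delta .
\]
The second-best score requires care, because the two gaps use different tie-broken maximizers $a^{\mathrm{tgt}}_k(\stilde)$ and $a^\star(\stilde;\Vstar)$. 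I would avoid comparing them. Instead I would rewrite each gap symmetrically: the gap equals the largest score minus the second-largest value of the score multiset. This identity holds regardless of which maximizer the fixed Borel rule removes, since deleting any one maximizer leaves the second order statistic. The second order statistic of a finite vector is $1$-Lipschitz in the sup norm. Hence $|\Delta_Q^{(k)}(\stilde)-\Delta_Q^\star(\stilde)|\le2\delta$ pointwise.

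From this the event inclusion follows:
\[
\{\Delta_Q^{(k)}\le u\}\subseteq\{\Delta_Q^\star\le u+2\delta\}
=\{\Delta_Q^\star=0\}\cup\{0<\Delta_Q^\star\le u+2\delta\}.
\]
Measurability of both events comes from Lemma~\ref{lem:measurable_framework}(i), because the gaps are finite max/min combinations of measurable scores. I would then apply $\rho_{k,S}$ and the two parts of \eqref{eq:optimal_gap_margin}, evaluating the second at $v=u+2\delta$, which is assumed to lie in the declared interval. This gives \eqref{eq:optimal_to_iterate_margin} almost surely.

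The corollaries then follow by substitution.
\begin{itemize}
\item When $\tau_\star=0$, $\delta\le cu_0$ and $u\ge u_0$, we have $u+2\delta\le(1+2c)u$. Monotonicity of $v\mapsto v^{\alpha_\star}$ (since $\alpha_\star\ge0$) then gives constant $C_\star(1+2c)^{\alpha_\star}$ with exponent $\alpha_\star$.
\item At $u=2\Lambda_k$ with $\delta\le c\Lambda_k$, we get $u+2\delta\le2(1+c)\Lambda_k$, which is the stated bound.
\end{itemize}
The constants are deterministic, so the resulting condition is of the form required in Definition~\ref{def:margin}.

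For \eqref{eq:value_to_score_tube}, the reward terms cancel in the difference of scores:
\[
Q_{V_k}(\stilde,a)-Q_{\Vstar}(\stilde,a)=\geff\int(V_k-\Vstar)\,d\Pkernel(\cdot\mid\stilde,a).
\]
Both functions vanish at $\stilde_{\mathrm{term}}$, and $\Pkernel$ is a probability kernel, so the integral is at most $\|V_k-\Vstar\|_\infty\le b_k$ in absolute value. Thus $\delta_k^\star=\geff b_k$ is valid.

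The main obstacle is the gap-perturbation step, because of the tie-breaking rule. A naive argument compares $\max_{a\ne a^{\mathrm{tgt}}_k}Q_{V_k}$ with $\max_{a\ne a^\star}Q_{\Vstar}$ directly, and fails when the two selected maximizers differ. The order-statistic reformulation, together with its Lipschitz property, is the point that needs to be stated carefully.
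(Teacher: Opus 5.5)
Your proof is correct. Its overall structure matches the paper's: the pointwise event inclusion, the split into zero and positive optimal gaps, substitution for the two corollaries, and kernel integration for $\delta_k^\star=\geff b_k$.

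The difference lies in how you justify the key inclusion $\{\Delta_Q^{(k)}\le u\}\subseteq\{\Delta_Q^\star\le u+2\delta_k^\star\}$. You rewrite each gap as the difference of the top two order statistics of the score vector, counted with multiplicity, which removes the tie rule altogether. You then use that order statistics are $1$-Lipschitz in the sup norm (e.g.\ $x_{(2)}=\max_{i\ne j}\min\{x_i,x_j\}$) to obtain the two-sided estimate $|\Delta_Q^{(k)}-\Delta_Q^\star|\le2\delta_k^\star$.

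The paper instead proves only the one-sided inclusion, by a case split:
\begin{itemize}
\item if $\Delta_Q^\star(\stilde)\le2\delta_k^\star$, the state already lies in the larger event;
\item otherwise the optimal maximizer is the strict unique maximizer of $Q_{V_k}$, so the two tie-broken selectors coincide. The direct comparison you call naive then goes through and gives separation at least $\Delta_Q^\star-2\delta_k^\star$.
\end{itemize}
So the obstacle you identify arises only in the trivial case, which the paper disposes of first.

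The paper's route is slightly more elementary and avoids the order-statistic Lipschitz fact. Yours yields strictly more: a symmetric Lipschitz bound on the gap functional, which also gives the reverse inclusion $\{\Delta_Q^\star\le u\}\subseteq\{\Delta_Q^{(k)}\le u+2\delta_k^\star\}$. It also makes the independence from the tie-breaking rule explicit.
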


\begin{proof}
Write $q=Q_{V_k}$ and $q^\star=Q_{\Vstar}$.  If
$\Delta_Q^\star(\stilde)\le2\delta_k^\star$, then it already lies below
$u+2\delta_k^\star$.  If instead
$\Delta_Q^\star(\stilde)>2\delta_k^\star$, the optimal maximizer is unique and
remains the maximizer of $q$, while its separation from every competitor is at
least $\Delta_Q^\star(\stilde)-2\delta_k^\star$.  Hence
\begin{equation*}
\{\Delta_Q^{(k)}\le u\}
\subseteq\{\Delta_Q^\star\le u+2\delta_k^\star\}.
\end{equation*}
Splitting the latter event into zero and positive gaps gives
\eqref{eq:optimal_to_iterate_margin}.  If
$\delta_k^\star\le c u_0\le cu$, then
$u+2\delta_k^\star\le(1+2c)u$, proving the local statement.  The action-scale
claim is the same calculation with $u=2\Lambda_k$.  Finally,
$\|Q_V-Q_W\|_\infty\le\geff\|V-W\|_\infty$ proves
\eqref{eq:value_to_score_tube}.
\end{proof}

\begin{lemma}[Exact gap identity and pointwise control\obj{\Rabs}]\label{lem:act_identity}
In the notation of \S\ref{subsec:act_setup}:
\begin{enumerate}[label=\textnormal{(\roman*)},leftmargin=2.2em,nosep]
\item $\|Q_{V_{\theta_k}}-Q_{V_k}\|_\infty\le\geff\dnet$, and hence
$\|\widehat Q_{V_{\theta_k}}-Q_{V_k}\|_\infty\le\Lambda_k$
with $\Lambda_k$ as in~\eqref{eq:Lambda_def};
\item if $\pi^{\mathrm{on}}_k=(1-\eps_k)\delta_{a^{\mathrm{on}}_k}
+\eps_k\xi_k$ is a single $\eps_k$-greedy policy, then
\[
\begin{aligned}
\bigl(\Tcal^{\pi^{\mathrm{on}}_k}V_k-
\Tcal^{\pi^{\mathrm{tgt}}_k}V_k\bigr)(\stilde)
&=(1-\eps_k)\bigl(Q_{V_k}(\stilde,a^{\mathrm{on}}_k)
-Q_{V_k}(\stilde,a^{\mathrm{tgt}}_k)\bigr).
\end{aligned}
\]
Its absolute value is $(1-\eps_k)\subopt_k(\stilde)$;
here
$\subopt_k:=Q_{V_k}(\cdot,a^{\mathrm{tgt}}_k)
-Q_{V_k}(\cdot,a^{\mathrm{on}}_k)$;
\item \textnormal{(generic perturbation)} let $q:\Stildeo\times\Aspace\to\R$ be
any score with $\|q-Q_{V_k}\|_\infty\le\Lambda$ for some $\Lambda\ge0$, and let
$a^q(\stilde)\in\arg\max_aq(\stilde,a)$.  Then the induced suboptimality
$\subopt^q_k:=Q_{V_k}(\cdot,a^{\mathrm{tgt}}_k)-Q_{V_k}(\cdot,a^q)$ satisfies
\[
0\le\subopt^q_k(\stilde)\le2\Lambda
\qquad\text{and}\qquad
\{\subopt^q_k>0\}\subseteq\{\Delta_Q^{(k)}\le2\Lambda\}.
\]
Applied to the implemented score $q=\widehat Q_{V_{\theta_k}}$ with
$\Lambda=\Lambda_k$, this gives $0\le\subopt_k\le2\Lambda_k$ and
$\{\subopt_k>0\}\subseteq\{\Delta_Q^{(k)}\le2\Lambda_k\}$; in the idealization
$\etascorek=0$ it gives the same statements with
$\Lambda_k=\geff\dnet$.
\end{enumerate}
\end{lemma}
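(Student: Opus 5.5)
The three parts can be proved in order; part (iii) is the only one with real content.

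For (i), fix an acting snapshot $V_{\theta_k}$, meaning any $V_{\theta_{k,t}}$ with $t\in\mathcal I_k^{\mathrm{act}}$ under $\Arun$, or $W_k$ under $\Aiid$. Since $R$ is common to both, definition~\eqref{eq:bellman_def} gives
\[
|Q_{V_{\theta_k}}-Q_{V_k}|(\stilde,a)\le\geff\int|V_{\theta_k}-V_k|\,d\Pkernel(\cdot\mid\stilde,a)\le\geff\|V_{\theta_k}-V_k\|_\infty\le\geff\dnet,
\]
using~\eqref{eq:delta_two_objects}. Terminal values vanish, so terminal successors contribute nothing. Both $V_{\theta_k}$ and $V_k$ lie in $\mathcal F$, so the score envelope in~\eqref{eq:envelopes} gives $\|\widehat Q_{V_{\theta_k}}-Q_{V_{\theta_k}}\|_\infty\le\etascorek$ on $\mathcal D$. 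The triangle inequality then yields $\|\widehat Q_{V_{\theta_k}}-Q_{V_k}\|_\infty\le\Lambda_k$.

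For (ii), write out the policy operators. The targets $\pi^{\mathrm{on}}_k=(1-\eps_k)\delta_{a^{\mathrm{on}}_k}+\eps_k\xi_k$ and $\pi^{\mathrm{tgt}}_k=(1-\eps_k)\delta_{a^{\mathrm{tgt}}_k}+\eps_k\xi_k$ share the same $\eps_k$ and the same exploration law $\xi_k$. The exploration terms $\eps_k\int Q_{V_k}\,d\xi_k$ therefore cancel exactly in $\Tcal^{\pi^{\mathrm{on}}_k}V_k-\Tcal^{\pi^{\mathrm{tgt}}_k}V_k$, leaving $(1-\eps_k)(Q_{V_k}(\cdot,a^{\mathrm{on}}_k)-Q_{V_k}(\cdot,a^{\mathrm{tgt}}_k))$. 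Because $a^{\mathrm{tgt}}_k$ maximizes $Q_{V_k}$, this difference is nonpositive, so its absolute value is $(1-\eps_k)\subopt_k$.

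For (iii), nonnegativity of $\subopt^q_k$ is immediate from the maximality of $a^{\mathrm{tgt}}_k$. For the upper bound, use $q(\stilde,a^q)\ge q(\stilde,a^{\mathrm{tgt}}_k)$:
\[
Q_{V_k}(a^{\mathrm{tgt}}_k)-Q_{V_k}(a^q)\le\bigl[q(a^{\mathrm{tgt}}_k)+\Lambda\bigr]-\bigl[q(a^q)-\Lambda\bigr]\le2\Lambda .
\]
For the inclusion, suppose $\subopt^q_k(\stilde)>0$. Then $a^q(\stilde)\ne a^{\mathrm{tgt}}_k(\stilde)$, so $Q_{V_k}(\stilde,a^q)\le\max_{a\ne a^{\mathrm{tgt}}_k}Q_{V_k}(\stilde,a)$. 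Hence
\[
\Delta_Q^{(k)}(\stilde)=\max_aQ_{V_k}-\max_{a\ne a^{\mathrm{tgt}}_k}Q_{V_k}\le\subopt^q_k(\stilde)\le2\Lambda .
\]
The only delicate point is the tie-breaking convention. If $a^q$ differs from $a^{\mathrm{tgt}}_k$ but $\subopt^q_k=0$, we have a tie, so $\Delta_Q^{(k)}=0$. That case lies in the event anyway, but it is not needed, since the inclusion is only claimed on $\{\subopt^q_k>0\}$. Measurability of $a^q$ comes from the fixed-order selector in Lemma~\ref{lem:measurable_framework}(i).

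To specialize, take $q=\widehat Q_{V_{\theta_k}}$ and $\Lambda=\Lambda_k$ via (i), so that $\subopt^q_k=\subopt_k$ because $a^q=a^{\mathrm{on}}_k$. In the idealization $\etascorek=0$, we have $\widehat Q=Q$ on $\mathcal F$, and the same argument runs with $\Lambda=\geff\dnet$. The main thing to check is bookkeeping rather than mathematics: the snapshot in (i) must range over every acting index, so that the bound holds uniformly across rounds.
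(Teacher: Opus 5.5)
Your proposal is correct and follows the paper's proof. Part (i) uses $Q_V-Q_W=\geff\int(V-W)\,d\Pkernel$ together with the score envelope, part (ii) uses exact cancellation of the shared $\eps_k\xi_k$ branch, and part (iii) applies the $\Lambda$-perturbation bound twice. The only difference is in the inclusion. You get it directly from $\Delta_Q^{(k)}\le\subopt^q_k\le2\Lambda$ whenever $a^q\ne a^{\mathrm{tgt}}_k$, whereas the paper argues contrapositively that $\Delta_Q^{(k)}>2\Lambda$ makes $a^{\mathrm{tgt}}_k$ the strict maximizer of $q$, independent of the tie rule. The two arguments are equivalent.
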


\begin{proof}
$Q_V-Q_W=\geff\int(V-W)d\Pkernel$ proves~(i).  In~(ii), both policies share
$\eps_k\xi_k$, so exploration cancels exactly; its separate difference from
$\Tcal$ remains the $\eps_k\Dbar$ link.  For~(iii), apply the perturbation
bound twice:
$Q_{V_k}(\stilde,a^q)\ge q(\stilde,a^q)-\Lambda\ge q(\stilde,a^{\mathrm{tgt}}_k)-\Lambda
\ge Q_{V_k}(\stilde,a^{\mathrm{tgt}}_k)-2\Lambda$, so $\subopt^q_k\le2\Lambda$.
If $\Delta_Q^{(k)}(\stilde)>2\Lambda$ then every $a\ne a^{\mathrm{tgt}}_k$ has
$q(\stilde,a)\le Q_{V_k}(\stilde,a)+\Lambda<\max_aQ_{V_k}(\stilde,a)-\Lambda
\le q(\stilde,a^{\mathrm{tgt}}_k)$, so $a^q=a^{\mathrm{tgt}}_k$.
\end{proof}

\noindent For every $V\in\mathcal V^{\mathrm{clip}}$ and every Markov policy
$\pi$, the elementary pointwise inequality $\Tcal^\pi V\le\Tcal V$ follows by
averaging action scores below their maximum.

\subsection{Gap identity and upper bound}\label{subsec:act_upper}

\begin{theorem}[Norm-indexed action residual for the abstract recursion\obj{\Rabs}]\label{thm:act_upper}
In the notation of \S\ref{subsec:act_setup}, let $q_k$ be measurable with
$\|q_k-Q_{V_k}\|_\infty\le\Lambda_k$, let $a^{q_k}$ maximize $q_k$ using the
fixed tie rule, and put
$\pi^q_k=(1-\eps_k)\delta_{a^{q_k}}+\eps_k\xi_k$.  For every $r\in[1,2]$,
\begin{equation}
\bigl\|\Tcal^{\pi^q_k}V_k-\Tcal^{\pi^{\mathrm{tgt}}_k}V_k\bigr\|_{r,\rho_{k,S}}
\le(1-\eps_k)2\Lambda_k.
\label{eq:act_upper_linear}
\end{equation}
Under the global frozen-iterate margin~\eqref{eq:margin}, or its local form when
$2\Lambda_k\in[u_0,\bar u]$,
\begin{equation}
\bigl\|\Tcal^{\pi^q_k}V_k-\Tcal^{\pi^{\mathrm{tgt}}_k}V_k\bigr\|_{r,\rho_{k,S}}
\le(1-\eps_k)C_{\mathrm{marg}}^{1/r}
(2\Lambda_k)^{1+\alpha/r}.
\label{eq:act_upper}
\end{equation}
Alternatively, under Proposition~\ref{prop:reference_margin_transfer} at
$u=2\Lambda_k$,
\begin{equation}
\bigl\|\Tcal^{\pi^q_k}V_k-\Tcal^{\pi^{\mathrm{tgt}}_k}V_k\bigr\|_{r,\rho_{k,S}}
\le(1-\eps_k)2\Lambda_k
\Bigl\{\tau_\star+C_\star(2\Lambda_k+2\delta_k^\star)^{\alpha_\star}\Bigr\}^{1/r}.
\label{eq:act_upper_optimal_margin}
\end{equation}
Thus a zero optimal-tie mass and
$\delta_k^\star\le c\Lambda_k$ yield the same
$\Lambda_k^{1+\alpha_\star/r}$ exponent directly from a fixed-$Q^\star$
margin, with its explicit constant.
Consequently one may choose
\[
\epsactr{r}\le(1-\eps_k)\min\{2\Lambda_k,
C_{\mathrm{marg}}^{1/r}(2\Lambda_k)^{1+\alpha/r}\}.
\]
When the reference-gap route is used, one may instead choose
\begin{equation*}
\epsactr{r}\le(1-\eps_k)\min\!\left\{2\Lambda_k,
2\Lambda_k\Bigl[\tau_\star+C_\star
(2\Lambda_k+2\delta_k^\star)^{\alpha_\star}\Bigr]^{1/r}\right\}.
\end{equation*}
Taking $r=p=s/(s-1)$ supplies the propagation envelope $\epsactp$ with
exponent $1+\alpha(1-1/s)$; taking $r=2$ supplies the envelope $\epsact$
used in the regression bound, with exponent $1+\alpha/2$.  The two coincide at $s=2$.
\end{theorem}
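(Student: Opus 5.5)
The plan is a pointwise reduction followed by a single integration. First, apply Lemma~\ref{lem:act_identity}(ii) with the greedy action $a^{q_k}$ in place of $a^{\mathrm{on}}_k$. Since $\pi^q_k$ and $\pi^{\mathrm{tgt}}_k$ share the exploration component $\eps_k\xi_k$, it cancels, and we get pointwise on $\Stildeo$
\[
\bigl|\Tcal^{\pi^q_k}V_k-\Tcal^{\pi^{\mathrm{tgt}}_k}V_k\bigr|(\stilde)=(1-\eps_k)\,\subopt^{q_k}_k(\stilde).
\]
Lemma~\ref{lem:act_identity}(iii), applied with $\Lambda=\Lambda_k$, then gives two facts: $0\le\subopt^{q_k}_k\le2\Lambda_k$ everywhere, and $\subopt^{q_k}_k$ vanishes outside the set $E_k:=\{\Delta_Q^{(k)}\le2\Lambda_k\}$. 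Together these give
\[
\bigl|\Tcal^{\pi^q_k}V_k-\Tcal^{\pi^{\mathrm{tgt}}_k}V_k\bigr|\le(1-\eps_k)\,2\Lambda_k\,\Indic_{E_k}\quad\text{pointwise}.
\]
Measurability of $a^{q_k}$ comes from Lemma~\ref{lem:measurable_framework}(i), so every quantity here is a measurable function of the state.

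Next, take the $L^r(\rho_{k,S})$ norm of this pointwise bound:
\[
\|\cdot\|_{r,\rho_{k,S}}\le(1-\eps_k)\,2\Lambda_k\,\rho_{k,S}(E_k)^{1/r}.
\]
Each displayed bound now follows from a different estimate of $\rho_{k,S}(E_k)$:
\begin{itemize}
\item the trivial bound $\rho_{k,S}(E_k)\le1$ gives~\eqref{eq:act_upper_linear};
\item the margin condition~\eqref{eq:margin} at $u=2\Lambda_k$ (globally, or locally when $2\Lambda_k\in[u_0,\bar u]$) gives $\rho_{k,S}(E_k)\le C_{\mathrm{marg}}(2\Lambda_k)^\alpha$, hence $C_{\mathrm{marg}}^{1/r}(2\Lambda_k)^{1+\alpha/r}$, which is~\eqref{eq:act_upper};
\item Proposition~\ref{prop:reference_margin_transfer}, i.e.\ \eqref{eq:optimal_to_iterate_margin} at $u=2\Lambda_k$, gives $\tau_\star+C_\star(2\Lambda_k+2\delta_k^\star)^{\alpha_\star}$, which is~\eqref{eq:act_upper_optimal_margin}.
\end{itemize}
All constants involved are deterministic, so these are almost-sure bounds. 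Taking the minimum of the applicable bounds gives the admissible choices of $\epsactr{r}$.

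The remaining claims are arithmetic. If $\tau_\star=0$ and $\delta_k^\star\le c\Lambda_k$, then $(2\Lambda_k+2\delta_k^\star)^{\alpha_\star}\le(2(1+c))^{\alpha_\star}\Lambda_k^{\alpha_\star}$, so the exponent is $1+\alpha_\star/r$. For the exponents, $r=p=s/(s-1)$ gives $\alpha/p=\alpha(1-1/s)$, $r=2$ gives $1+\alpha/2$, and $p=2$ exactly when $s=2$. The analysis also covers a state-dependent mixture of $\eps$-greedy policies: when $\pi^{\mathrm{on}}_k=\pibar$, the same pointwise bound holds for each mixture component. By convexity of the averaging, it transfers to the disintegrated policy, provided every component score satisfies the $\Lambda_k$ tube.

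The main obstacle is handling ties. The target action $a^{\mathrm{tgt}}_k$ uses the fixed tie rule, and at a tie $\Delta_Q^{(k)}=0$, so we need the inclusion $\{\subopt>0\}\subseteq E_k$ rather than equality. Strict positivity of $\subopt$ at a point forces a different action whose true score is strictly lower. Lemma~\ref{lem:act_identity}(iii) already settles this, so I expect no new difficulty. The only other care needed is to apply the local margin at exactly the scale $2\Lambda_k$ required by the local form.
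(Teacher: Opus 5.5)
Your proposal is correct and takes essentially the same route as the paper. Lemma~\ref{lem:act_identity}(ii)--(iii) reduces the integrand to $(1-\eps_k)\subopt^{q_k}_k\le(1-\eps_k)2\Lambda_k\Indic\{\Delta_Q^{(k)}\le2\Lambda_k\}$, and the three displays then follow from bounding the support probability trivially, by~\eqref{eq:margin}, or by~\eqref{eq:optimal_to_iterate_margin}; your mixture remark is not needed here, since it is the content of Corollary~\ref{cor:act_discharge}.
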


\begin{proof}
Lemma~\ref{lem:act_identity}(ii)--(iii) makes the integrand
$(1-\eps_k)\subopt^{q_k}_k$, bounded by $(1-\eps_k)2\Lambda_k$ and supported,
when nonzero, on $\{\Delta_Q^{(k)}\le2\Lambda_k\}$.  This proves
\eqref{eq:act_upper_linear}; under the frozen-iterate margin its $r$th power is at most
$(1-\eps_k)^r C_{\mathrm{marg}}(2\Lambda_k)^{r+\alpha}$, proving
\eqref{eq:act_upper}.  Replacing the support probability by
\eqref{eq:optimal_to_iterate_margin} proves
\eqref{eq:act_upper_optimal_margin}.
\end{proof}

\begin{corollary}[Abstract-recursion mixtures and drift control\obj{\Arun,\Aiid}]\label{cor:act_discharge}
The unconditional bound~\eqref{eq:act_upper_linear} remains valid for any
state-dependent convex mixture of policies sharing $(\eps_k,\xi_k)$ whose
greedy scores are within $\Lambda_k$ of $Q_{V_k}$.  Under the global
frozen-iterate margin, or under its local form with
$2\Lambda_k\in[u_0,\bar u]$, the
margin-improved bound~\eqref{eq:act_upper} also remains valid: convexity
preserves both the pointwise $2\Lambda_k$ bound and its common gap-supported
set.  The same argument preserves the fixed-$Q^\star$ transfer
bound~\eqref{eq:act_upper_optimal_margin}.  Thus Theorem~\ref{thm:act_upper} applies directly to the snapshot law
$\beta_k$ of $\Aiid$ with $q_k=\widehat Q_{W_k}$, while this mixture statement
applies to $\pibar$ of $\Arun$ with the corresponding online scores; their
distance from $Q_{V_k}$ is at most
$\Lambda_k=\geff\dnet+\etascorek$ by~\eqref{eq:Lambda_def}.  Moreover, if
$V_{\theta_{k,0}}=V_k$, $\theta\mapsto V_\theta$ is $L$-Lipschitz in supremum
norm, and $\|\theta_{t+1}-\theta_t\|\le\lambda_tG$, then
$\dpath\le LG\sum_{t\in\mathcal I_k^{\mathrm{opt}}}\lambda_t$.  For a general
initialization, add $\|V_{\theta_{k,0}}-V_k\|_\infty$.  Here $L$ is an explicit
regularity assumption on the parameterization.
\end{corollary}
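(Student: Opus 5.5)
The corollary has three claims: the mixture extension of Theorem~\ref{thm:act_upper}, the identification of the two implemented behavior objects with that theorem, and the drift bound. I would prove them in that order. The common principle is that everything in Theorem~\ref{thm:act_upper} flows through a pointwise integrand with two properties: it is bounded by $(1-\eps_k)2\Lambda_k$, and it vanishes outside $\{\Delta_Q^{(k)}\le2\Lambda_k\}$. Both properties survive state-dependent convex combination.

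\textbf{Mixture extension.} Let $\bar\pi(da\mid\stilde)=\sum_j\lambda_j(\stilde)\pi_j(da\mid\stilde)$, with measurable weights $\lambda_j\ge0$ summing to one (Lemma~\ref{lem:measurable_framework}(ii)). Each component has the form $\pi_j=(1-\eps_k)\delta_{a^{q_j}}+\eps_k\xi_k$, with $\|q_j-Q_{V_k}\|_\infty\le\Lambda_k$. Because every $\pi_j$ shares $(\eps_k,\xi_k)$, the exploration parts cancel against $\pi^{\mathrm{tgt}}_k$. Linearity of $\pi\mapsto\Tcal^\pi V_k$ then gives
\[
\bigl(\Tcal^{\pi^{\mathrm{tgt}}_k}V_k-\Tcal^{\bar\pi}V_k\bigr)(\stilde)=(1-\eps_k)\sum_j\lambda_j(\stilde)\,\subopt^{q_j}_k(\stilde).
\]
Lemma~\ref{lem:act_identity}(iii) gives each term in $[0,2\Lambda_k]$, and each term vanishes off the common set $\{\Delta_Q^{(k)}\le2\Lambda_k\}$. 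The convex combination therefore keeps both properties. From here the proof of Theorem~\ref{thm:act_upper} goes through verbatim:
\begin{itemize}
\item the sup bound gives \eqref{eq:act_upper_linear};
\item bounding the $r$th power by $(2\Lambda_k)^r$ times the mass of the gap set, which the margin \eqref{eq:margin} controls at scale $2\Lambda_k$, gives \eqref{eq:act_upper};
\item replacing that mass by \eqref{eq:optimal_to_iterate_margin} gives \eqref{eq:act_upper_optimal_margin}.
\end{itemize}

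\textbf{Application to the two behavior objects.} For $\Aiid$ take $q_k=\widehat Q_{W_k}$. Lemma~\ref{lem:act_identity}(i), with $\dsnap\le\dnet$, gives $\|q_k-Q_{V_k}\|_\infty\le\geff\dnet+\etascorek=\Lambda_k$. The only subtlety is that $W_k$ must lie in $\mathcal F$ so that the score envelope applies; this holds by the definition of $\mathcal F$. For $\Arun$, $\pibar$ is by \eqref{eq:disintegration} the density-weighted mixture of the round policies $\pi^{\mathrm{on}}_{k,j}$. Each round policy acts with an online snapshot $V_{\theta_{k,t}}$, $t\in\mathcal I_k^{\mathrm{act}}$, whose drift is bounded by $\dpath\le\dnet$, so the same $\Lambda_k$ bound holds for it. One point needs care. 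A round may use several snapshots, so it is itself a mixture over time-within-round, and it should be disintegrated in the same way. On the $\mathsf M_{k,S}$-null complement, the declared extension policy must be taken from the same family; under $\rho_{k,S}\ll\mathsf M_{k,S}$ it does not affect any norm in any case.

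\textbf{Drift bound.} Telescope $V_{\theta_{k,t}}-V_{\theta_{k,0}}=\sum_{u<t}(V_{\theta_{k,u+1}}-V_{\theta_{k,u}})$. The Lipschitz assumption bounds each increment in sup norm by $L\|\theta_{u+1}-\theta_u\|\le L\lambda_uG$. Summing over the update indices preceding $t$, which form a subset of $\mathcal I_k^{\mathrm{opt}}$, and then taking the sup over acting $t$ gives $\dpath\le LG\sum_{t\in\mathcal I_k^{\mathrm{opt}}}\lambda_t$. For a general initialization, the triangle inequality adds $\|V_{\theta_{k,0}}-V_k\|_\infty$. Clipping is $1$-Lipschitz (Lemma~\ref{lem:clip_projection}), so this holds whether $L$ refers to the raw or the clipped map.

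I expect the main obstacle to be the measurability bookkeeping in the $\Arun$ mixture: making sure the disintegrated weights are genuine state-dependent convex weights over policies that all satisfy the $\Lambda_k$ score bound. The analytic content itself is immediate once the pointwise integrand representation is in place.
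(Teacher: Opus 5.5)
Your proposal is correct and follows the paper's own justification, which is embedded in the corollary statement. Under a state-dependent convex combination of $\eps_k$-greedy policies sharing $(\eps_k,\xi_k)$, the integrand of Theorem~\ref{thm:act_upper} stays in $[0,(1-\eps_k)2\Lambda_k]$ and remains supported on $\{\Delta_Q^{(k)}\le2\Lambda_k\}$. The behavior objects $\beta_k$ and $\pibar$ are exactly such policies with scores within $\Lambda_k$ of $Q_{V_k}$ by \eqref{eq:Lambda_def}. The drift bound follows by telescoping the Lipschitz parameterization, plus a triangle inequality for a general initialization.

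Your added remarks are consistent with the paper's conventions: the exploration parts cancel by linearity, the null-set extension is harmless under $\rho_{k,S}\ll\mathsf M_{k,S}$, and clipping preserves the Lipschitz constant.
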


Combining Lemma~\ref{lem:varying_propagation} and
Theorem~\ref{thm:act_upper}
gives the continuous coverage--margin tradeoff: the propagated action power is
$1+\alpha(1-1/s)$ and the exact finite-$K$ clock multiplier is
$2\phi_{s,K}^{(H)}$.  This single result contains the $L^2$ and $L^\infty$
statements as endpoint cases.  The fixed-$Q^\star$ route has
the same power with $\alpha=\alpha_\star$ whenever
$\tau_\star=0$ and $\delta_k^\star=O(\Lambda_k)$ on the active window.

\subsection{One-step sharpness and lower bound}\label{sec:sharp}

Proposition~\ref{prop:act_tight} attains the action theorem's one-step
exponent.  Proposition~\ref{prop:model_floor_lower} produces a nonzero
limiting loss from a reward-score perturbation while all other residuals
vanish; the harmful behavior is generated by the implemented score.

\begin{proposition}[Componentwise sharpness of the abstract one-step residual\obj{\Rabs}]\label{prop:act_tight}
For every $\alpha>0$, $p\in[1,2]$, $\geff\in(0,1)$, and sufficiently small
drift $\delta>0$,
there exist a deterministic time-augmented MDP with an exact one-step model
(so that $\etascorek=0$ and $\Lambda_k=\geff\delta$), a frozen
$V_k$, an online $V_{\theta_k}$ with $\dnet=\delta$, and a replay
marginal $\rho_{k,S}$ that satisfies~\eqref{eq:margin} with equality for
$u\in(0,\geff c_\Delta]$, such that, when $\eps_k=0$,
\[
\bigl\|\Tcal^{\pi^{\mathrm{on}}_k}V_k-\Tcal^{\pi^{\mathrm{tgt}}_k}V_k\bigr\|_{p,\rho_{k,S}}
=\left(\frac{\alpha}{\alpha+p}\right)^{1/p}
C_{\mathrm{marg}}^{1/p}(2\geff\delta)^{1+\alpha/p}.
\]
Thus the exponent in the one-step bound~\eqref{eq:act_upper} cannot be
improved.  This establishes componentwise sharpness: the clock and action
constructions independently attain their respective exponents, while their
joint product defines a separate end-to-end lower-bound problem.
\end{proposition}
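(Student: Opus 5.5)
We build an explicit one-step instance in which the gap distribution under $\rho_{k,S}$ is exactly the margin power law. The online perturbation then flips the greedy action on precisely the states whose gap is at most $2\geff\delta$, and each flip costs its full gap.

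\textbf{Construction.} Take $H=2$ and $\Aspace=\{a_1,a_2\}$. Top-slice states are indexed by $x\in(0,c_\Delta]$ and each carries its own pair of level-$1$ successors: $a_1$ leads deterministically to $(x,1)^{+}$ and $a_2$ to $(x,1)^{-}$. Rewards at level $2$ are zero. Level-$1$ states absorb with zero reward, so for any $V\in\mathcal V^{\mathrm{clip}}$ the level-$1$ continuation is $0$ and $Q_V((x,1)^\pm,a)=0$.

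Define the frozen iterate by $V_k((x,1)^{+})=x/2$ and $V_k((x,1)^{-})=-x/2$. This is admissible provided $c_\Delta\le2\Vmax^{(1)}=2\Rmax$. Then $Q_{V_k}((x,2),a_1)-Q_{V_k}((x,2),a_2)=\geff x$, so $a^{\mathrm{tgt}}_k=a_1$ and $\Delta_Q^{(k)}(x,2)=\geff x$.

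Define the online network by $V_{\theta_k}=V_k\mp\delta$ on the $\pm$ successors, whenever this stays inside the clip band; this holds for small $\delta$ once $c_\Delta$ is chosen strictly inside the band. Elsewhere set $V_{\theta_k}=V_k$, so $\dnet=\delta$. The exact model gives $\widehat Q=Q$, hence $\etascorek=0$ and $\Lambda_k=\geff\delta$. The online score difference is $\geff(x-2\delta)$. With a tie rule that favours $a_2$ at equality (a null set in any case), $a^{\mathrm{on}}_k=a_2$ exactly when $x\le2\delta$.

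\textbf{Replay law.} Put all of $\rho_{k,S}$ on the top slice, with $\rho_{k,S}\{x\le t\}=C_{\mathrm{marg}}(\geff t)^\alpha$ for $t\le c_\Delta$. Normalization forces $C_{\mathrm{marg}}(\geff c_\Delta)^\alpha\le1$; any leftover mass goes to a state with a large gap, or $C_{\mathrm{marg}}$ is chosen to make this an equality. Then $\rho_{k,S}\{\Delta_Q^{(k)}\le u\}=C_{\mathrm{marg}}u^\alpha$ for every $u\in(0,\geff c_\Delta]$, which is the margin condition~\eqref{eq:margin} with equality.

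\textbf{Computation.} With $\eps_k=0$, Lemma~\ref{lem:act_identity}(ii) shows that the integrand equals the gap $\geff x$ on the flip set $\{\geff x\le2\geff\delta\}$ and is zero elsewhere. Write $g=\geff x$, whose law has density $C_{\mathrm{marg}}\alpha g^{\alpha-1}$ on $(0,\geff c_\Delta]$. Then
\[
\int_0^{2\geff\delta}g^p\,C_{\mathrm{marg}}\alpha g^{\alpha-1}\,dg=\frac{\alpha}{\alpha+p}\,C_{\mathrm{marg}}(2\geff\delta)^{p+\alpha}.
\]
Taking the $p$-th root gives the stated identity. Sharpness of the exponent $1+\alpha/p$ follows by comparison with~\eqref{eq:act_upper}.

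\textbf{Main obstacle.} The delicate part is the bookkeeping of admissibility rather than the integral. The perturbed $V_{\theta_k}$ and $V_k$ must lie in $\mathcal V^{\mathrm{clip}}$, which is why $\delta$ is taken sufficiently small and $c_\Delta$ is placed strictly inside the band. The flip region must lie within the margin interval, so we need $2\geff\delta\le\geff c_\Delta$. The state space must remain standard Borel; we use $\mathcal S=(0,c_\Delta]\times\{+,-,0\}$ with Borel measurable maps, and the tie rule at the single point $x=2\delta$ is irrelevant because $\rho_{k,S}$ is atomless there.
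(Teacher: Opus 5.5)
Your proposal is correct and follows essentially the paper's construction: one-step decision states indexed by a continuous parameter, two deterministic successors whose frozen values differ linearly in that parameter, an online shift of $\mp\delta$ that flips the greedy action exactly where the gap is at most $2\gamma\delta$, a power-law replay density with $C_{\mathrm{marg}}=(\gamma c_\Delta)^{-\alpha}$ (your equality option), and direct integration. The differences are cosmetic: you use symmetric successor values $\pm x/2$ indexed by $x\in(0,c_\Delta]$, whereas the paper uses $v_0+c_\Delta t$ and $v_0$ indexed by $t\in[0,1]$; your band-admissibility and tie-rule bookkeeping is more explicit than the paper's sketch.
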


\begin{proof}
At a decision state $\sigma_t$, $t\in[0,1]$, let actions $a_\pm$ lead
deterministically to one-step states of values $v_0+c_\Delta t$ and $v_0$.
Choose $v_0,c_\Delta,\delta>0$ with
$v_0+c_\Delta+\delta\le\Rmax$, and shift the online successor values by
$\mp\delta$.  Then the true gap is $\geff c_\Delta t$, but the online action
switches exactly on $t<t_\delta:=2\delta/c_\Delta$.  Give $t$ density
$\alpha t^{\alpha-1}$; hence
$C_{\mathrm{marg}}=(\geff c_\Delta)^{-\alpha}$.  Direct integration gives
\[
\bigl\|\Tcal^{\pi^{\mathrm{on}}_k}V_k-\Tcal^{\pi^{\mathrm{tgt}}_k}V_k\bigr\|_{p,\rho_{k,S}}^p
=\int_0^{t_\delta}(\geff c_\Delta t)^p\alpha t^{\alpha-1}\,dt
=\frac{\alpha}{\alpha+p}(\geff c_\Delta)^{-\alpha}
(2\geff\delta)^{\alpha+p};
\]
taking $p$th roots proves the formula.
\end{proof}

\begin{proposition}[Abstract-recursion loss induced by reward-score error\obj{\Rabs}]\label{prop:model_floor_lower}
Fix $\geff\in(0,1)$ and normalize $\Rmax=1$.  For every
$\eta\in(0,\geff/2)$ and every $\epsilon>0$
there exist a deterministic time-augmented MDP with $H=3$ and $|\Aspace|=2$, an
evaluation law $\mu_S$, a replay marginal $\rho_{k,S}\equiv\rho_S$ of full
support, and an implemented one-step score $\widehat Q$ of the
form~\eqref{eq:greedy_action} with
$\|\widehat Q_{V_k}-Q_{V_k}\|_\infty=\eta$ for every $k$, such that the
population recursion initialized at $V_0=0$ and driven by the greedy branch of
$\widehat Q$ has zero drift, $\dnet=0$, and zero fit, kernel, target, replay,
and exploration residuals,
$\eps_{\mathrm{fit},k}=\eps_{\mathrm{ker},k}=\eps_{\mathrm{tgt},k}=\epsbuf=\eps_k=0$.
The score-induced online-action residual is the sole nonzero link, and yet
\begin{equation}
\lim_{K\to\infty}\bigl\|\Vstar-V^{\pi_K}\bigr\|_{1,\mu_S}
\;\ge\;2\geff\eta-\epsilon .
\label{eq:model_floor_lower}
\end{equation}
Thus the construction realizes the $\alpha=0$ score floor for
$0<\eta<\geff/2$.  After the finite clock transient, its harmful behavior is
generated endogenously by an implemented greedy score with
$\etascorek=\eta$ in every block.
\end{proposition}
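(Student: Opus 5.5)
The plan is to build the counterexample explicitly, with three clock levels and two actions. The harm is planted at level $2$ and shows up at level $3$. Because the model is deterministic and exact except for a reward perturbation, the only nonzero link is the online-action residual.

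\textbf{Construction.} Level $1$ has a single state with zero reward, which absorbs into $\stilde_{\mathrm{term}}$. This makes every frozen $V_k$ exact there: $V_k=0$ for $k\ge1$, and $V_0=0$ by initialization.

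Level $2$ has a state $x$ whose two actions $a_+,a_-$ both go to the level-$1$ state. Their true rewards are $c+g$ and $c$, with a gap $g\in(0,2\eta)$ to be chosen close to $2\eta$. Level $2$ also has a state $y$ with a single effective action (both actions identical) and reward $c+g-\zeta$, where $0<\zeta<g$ is small.

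Level $3$ has a decision state $\sigma$. Action $a_x$ leads to $x$ and action $a_y$ leads to $y$, both with the same reward $r_0$. The evaluation law $\mu_S$ is the point mass at $\sigma$. The replay law $\rho_S$ is uniform on all nonterminal states, which gives full support.

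The implemented score is $\widehat R=R$ except at $x$, where $\widehat R(x,a_+)=c+g-\eta$ and $\widehat R(x,a_-)=c+\eta$. Hence $\sup|\widehat R-R|=\eta$, and $\widehat Q$ strictly prefers $a_-$ at $x$ because $g<2\eta$. The successor map is exact, so $\|\widehat Q_{V}-Q_{V}\|_\infty=\eta$ for every $V$. The constants $c,r_0,g,\zeta$ must be chosen so that all rewards, and the perturbed rewards, stay inside $[-1,1]$. I expect the hypothesis $\eta<\geff/2$ to be exactly what allows the relevant value gap to be realized within $\Rmax=1$. If it turns out to be needed, I will instead place the $g$-gap on level-$1$ successor rewards, which gives a $\geff$-discounted difference, and rescale.

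\textbf{Recursion.} Run the population recursion $V_{k+1}=\Tcal^{\hat\pi_k}V_k$ with the greedy branch of $\widehat Q_{V_k}$. We have $\eps_k=0$, snapshot equal to frozen so $\dnet=0$, exact fit, and fresh outcomes used once. By Lemma~\ref{lem:v_q_backup} and the definitions in \S\ref{subsec:residual_interface}, this makes the fit, kernel, target, replay and exploration residuals vanish. By $k\ge2$ the iterates stabilize:
\begin{itemize}
\item $V_k(x)=c$, because the wrong action $a_-$ is chosen endogenously at every block;
\item $V_k(y)=c+g-\zeta$.
\end{itemize}
Meanwhile $\Vstar(x)=c+g$.

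\textbf{Greedy policy and loss.} The true-$Q_{V_K}$ greedy policy $\pi_K$ compares $r_0+\geff V_K(x)$ with $r_0+\geff V_K(y)$ at $\sigma$. Since $V_K(y)>V_K(x)$, it picks $a_y$. Below $\sigma$, $\pi_K$ acts correctly: at $x$ it uses the true $Q_{V_K}$ with exact level-$1$ values, so it picks $a_+$. Therefore
\[
\Vstar(\sigma)-V^{\pi_K}(\sigma)=\geff\bigl(\Vstar(x)-\Vstar(y)\bigr)=\geff\zeta .
\]
To reach $2\geff\eta-\epsilon$, the construction should make the $y$-branch worse by nearly the full misranking margin. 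Concretely, $y$'s value should be just above the underestimated $V_K(x)$ while $\Vstar(x)$ sits about $2\eta$ higher. So I take $V^\star(y)=c+\zeta'$ with small $\zeta'>0$ and $g=2\eta-\epsilon'$. Then $V_K(y)=c+\zeta'>c=V_K(x)$, $\pi_K$ still picks $a_y$, and the loss is $\geff(g-\zeta')\ge2\geff\eta-\epsilon$ once $\epsilon',\zeta'$ are small. The limit in $K$ exists because the sequence is constant for $K\ge2$.

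\textbf{Main obstacle.} The real difficulty is bookkeeping, not analysis:
\begin{itemize}
\item fitting every reward and perturbed reward inside $[-1,1]$ while keeping the uniform score error exactly $\eta$ over the symmetric class $\mathcal F$;
\item respecting the clock and terminal conventions of Assumption~\ref{ass:markov_state};
\item checking that no other residual is secretly nonzero under the population accounting.
\end{itemize}
For the last point I will verify~\eqref{eq:conditional_kernel_retention} trivially, since outcomes are deterministic.
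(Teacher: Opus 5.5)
Your final construction is correct and is essentially the paper's. It is a three-level deterministic chain in which a $\mp\eta$ reward-model perturbation flips the level-two decision. The frozen iterates then underestimate the correct branch by nearly $2\eta$, and the true-$Q_{V_K}$ greedy root action switches to a single-action branch valued just above that underestimate, giving loss $\gamma(g-\zeta')\to2\gamma\eta$. Keep the corrected choice $c+\zeta'$ for $y$'s reward, since your first choice $c+g-\zeta$ gives only $\gamma\zeta$.

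The only difference is cosmetic. The paper puts the rewards on level-one states, so the level-two gap is $\gamma u$, and $\eta<\gamma/2$ is exactly what keeps $u=2\eta/\gamma-\vartheta<1$. With your level-two rewards (e.g.\ $c=r_0=0$), the hypothesis is needed only through $2\eta<1$.
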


\begin{proof}[Construction sketch]
Use the deterministic $H=3$ chain detailed in Appendix~\ref{app:proofs}, with
terminal rewards $1,1-u,v$, and perturb only the two modeled rewards at
$y_1$ by $-\eta,+\eta$.  The score error is exactly $\eta$ and switches the
choice at $y_1$ when $u<2\eta/\geff$.  Taking
$u=2\eta/\geff-\vartheta$ and
$v\in(1-u,1-u+\epsilon/(2\geff^2))$ makes the learned and optimal root actions
different after the clock transient, with
\[
\|\Vstar-V^{\pi_K}\|_{1,\mu_S}=\geff^2(1-v)
>2\geff\eta-\epsilon
\]
for sufficiently small $\vartheta$.  Exact full-support population updates,
matched collection and replay, frozen targets, and zero exploration make all
other residuals vanish.
\end{proof}

\section{Neural and tabular regression rates}\label{sec:vreg}

For $\Aiid$, we fit one sparse-ReLU spatial network at each remaining-horizon
level.  A fixed router selects the corresponding output.  This construction
allows smoothness assumptions on the spatial coordinates without imposing
smoothness across the discrete clock levels.
Assumption~\ref{ass:relu} collects the
approximation, closure, entropy, and nesting conditions.  The sparse-ReLU
exponent and corrected depth condition come from
\cite{schmidt_hieber_2020,schmidt_hieber_vu_2024}; Propositions
\ref{prop:finite_rank_closure} and~\ref{prop:routed_relu_admissibility} prove
Bellman closure and compatibility of the network class with these conditions.
The bounded-loss oracle inequality follows the covering approach of
\citet{gyorfi2002distribution}.  The mean regression target can differ from
the Bellman optimality update because it averages the behavior policy.
We retain this difference as
$D^{(2)}_k$~\cite{wang2021statistical,zanette2021exponential,chen2019information,xie2021realizability}.
Any class with the same approximation and entropy properties can replace the
network class.  The neural theorem fixes $H$; growing-horizon
comparisons require Remark~\ref{rem:routed_neural_budget}.

\begin{assumption}[Externally clipped clock-routed sparse ReLU class and Hölder closure\obj{\Aiid}]\label{ass:relu}
\emph{(Slice representation.)}  For each $h\in[H]:=\{1,\ldots,H\}$, fix a
bi-measurable spatial embedding
$\iota_h:\mathcal S\times\{h\}\to[0,1]^{d_{\mathrm{emb}}}$ onto its image,
through which the slice-$h$ rewards and kernel factor.  The deterministic
router observes the clock exactly, and the Hölder condition applies to the
spatial coordinates within each slice.  Any
other discrete task coordinates $\mathcal C$ are either routed in the same
way or included in $\iota_h$.  Smoothness, margin, and tube hypotheses are
uniform over the resulting finitely many slices.  The direct neural
convergence route uses this routed input as the Markov-sufficient analysis
state.  If the network receives only a compression $O$, the same architecture
can be analyzed through Proposition~\ref{prop:representation_routes}, with
visible-target aliasing included in the fit residual and closure checked on
the Markov state.

\emph{(Per-head and routed classes.)}  For a block of size $n\ge2H$, set
\begin{equation}
m_n:=\lfloor n/H\rfloor\ge2.
\label{eq:head_index}
\end{equation}
Let $K_{\mathcal G}\ge1$ be a common upper bound on the coordinatewise
H\"older radii and domain endpoints used to define the finitely many target
classes $\mathcal G_{0,h}$ below, and fix the raw-network envelope
\begin{equation}
B_{\mathrm{net}}\ge\max\{1,\Vmax,K_{\mathcal G}\}.
\label{eq:raw_network_envelope}
\end{equation}
For each $h$, let $\mathcal N_m^{(h)}$ be the \emph{raw} scalar sparse-ReLU
class on $[0,1]^{d_{\mathrm{emb}}}$ with weight and bias radius one, raw
output bounded by $B_{\mathrm{net}}$, exactly $L_m$ hidden layers, width
profile $\{d_{j,m}\}$, and sparsity at most $s_m$.  The budgets are
nondecreasing in $m$, dominate the constructive lower envelopes, and obey
\begin{equation}
\begin{aligned}
c_{\mathrm{lo}}\log m&\le L_m\le c_{\mathrm{hi}}(\log m)^{\xi^\star},\\
m^{\alpha^\star}&\lesssim\min_{1\le j\le L_m}d_{j,m}
\le\max_{1\le j\le L_m}d_{j,m}\lesssim m^{\xi^\star},\\
s_m&\asymp m^{\alpha^\star}(\log m)^{\xi^\star},\qquad \xi^\star\ge1.
\end{aligned}
\label{eq:architecture_scaling}
\end{equation}
Define the nested head class
$\mathcal R_m^{(h)}:=\bigcup_{r=2}^{m}\mathcal N_r^{(h)}$.
The raw clock-routed class is the finite product
\begin{equation}
\mathcal R_n^{\mathrm{clk}}
:=\left\{f:
f(s,h)=f_h\!\left(\iota_h(s,h)\right),\quad
f_h\in\mathcal R_{m_n}^{(h)},\ h\in[H],\quad
f(\stilde_{\mathrm{term}})=0\right\}.
\label{eq:routed_class}
\end{equation}
Thus the router is fixed and non-trainable, and only the selected scalar head
is evaluated.  A routed predictor has at most $Hs_{m_n}$ trainable nonzero
parameters across its heads, maximum head depth $L_{m_n}$, and no trainable
gating parameter.  This is still one global function class and one ERM under the
mixed design law $\varsigma$; it is not the direct-slice sampling object
$\Alev$.  Let $\FVn:=\Pi^{\mathrm{clip}}\mathcal R_n^{\mathrm{clk}}$, where
the level-wise projection is fixed external post-processing and contributes
no trainable parameter.  In particular, the ERM class is band-valued even
though its raw networks use the larger envelope~\eqref{eq:raw_network_envelope}.
Each finite-architecture bounded-parameter class is separable in supremum
norm; choose countable dense subsets, take their finite unions and $H$-fold
product, and then clip to obtain a fixed countable supremum-norm-dense subclass
$\mathcal F^{V,0}_n\subseteq\FVn$.

\emph{(Target and admissibility.)}  Let $\mathcal G_{0,h}$ be the slice-$h$
compositional Hölder class of~\cite[Assumption~4.2]{fan2020theoretical_full},
intersected with the band $\|g_h\|_\infty\le\Vmax^{(h)}$, with indices and
radii uniformly bounded in $h$, and define
\begin{equation}
\Gclk:=\left\{g\in\mathcal V^{\mathrm{clip}}:
\text{for every }h\text{ there is }\bar g_h\in\mathcal G_{0,h}
\text{ with }g(s,h)=\bar g_h(\iota_h(s,h))\right\}.
\label{eq:routed_target}
\end{equation}
Writing
$\mathrm{dist}^{\sup}_\infty(\mathcal C,\mathcal G)
:=\sup_{g\in\mathcal G}\inf_{V\in\mathcal C}\|V-g\|_\infty$, assume that
there is a target-uniform $C_{\mathrm{appx}}<\infty$ such that
\begin{equation}
\max_{h\in[H]}
\mathrm{dist}^{\sup}_\infty
\bigl(\Pi_h^{\mathrm{clip}}\mathcal R_m^{(h)},\mathcal G_{0,h}\bigr)
\le C_{\mathrm{appx}}m^{(\alpha^\star-1)/2},
\qquad m\ge2,
\label{eq:head_approximation}
\end{equation}
where $\Pi_h^{\mathrm{clip}}$ clips to
$[-\Vmax^{(h)},\Vmax^{(h)}]$ on slice $h$, and
$\alpha^\star=\max_jt_j/(2\beta_j^\star+t_j)\in(0,1)$.
Consequently, selecting one approximant independently in each of the finitely
many heads gives
\begin{equation}
\mathrm{dist}^{\sup}_\infty(\FVn,\Gclk)
\le C_{\mathrm{appx}}m_n^{(\alpha^\star-1)/2}.
\label{eq:routed_approximation}
\end{equation}

\emph{(Closure, entropy, nesting, and uniformity.)}  For every $n\ge2H$ and
$V\in\FVn$, assume $\Tcal V\in\Gclk$.  Put
$d_{0,m}:=d_{\mathrm{emb}}$, $d_{L_m+1,m}:=1$, and
$D_m:=\prod_{j=0}^{L_m+1}(d_{j,m}+1)$.  The head covering bound and the product
construction give, for $0<\delta\le1$,
\begin{equation}
\log N_\delta(\FVn)
\le H\left\{\log m_n+(s_{m_n}+1)
\log\!\left(\frac{2(L_{m_n}+1)D_{m_n}^{2}}{\delta}\right)\right\}.
\label{eq:routed_entropy}
\end{equation}
The head classes are nested in $m$; hence $m_n\le m_{n'}$ implies
$\mathcal F^V_n\subseteq\mathcal F^V_{n'}$.  The Hölder radii,
compositional indices, approximation and entropy constants are uniform in
$h,n,k$, and in the realized $V_k\in\FVn$.  Constants in the results may
depend on the fixed horizon $H$.

\emph{(Corrected depth check.)}  Let $i=0,\ldots,q$ index a head's composition
layers, with parameters
$(\beta_i,t_i,\beta_i^\star)$.  Set
\[
C_{\mathrm{SH}}:=\sum_{i=0}^{q}
\frac{\beta_i+t_i}{2\beta_i^\star+t_i}
\log_2(4t_i\vee4\beta_i),
\qquad
\phi_m:=\max_i m^{-2\beta_i^\star/(2\beta_i^\star+t_i)}.
\]
Then $m\phi_m=m^{\alpha^\star}$.  Consequently
$c_{\mathrm{lo}}\ge C_{\mathrm{SH}}/\log2$ verifies the corrected lower
depth requirement $C_{\mathrm{SH}}\log_2m\le L_m$, while the
polylogarithmic upper bound in~\eqref{eq:architecture_scaling} is eventually
$L_m\lesssim m\phi_m$.  The same display gives
$m\phi_m\lesssim\min_jd_{j,m}$, and its sparsity budget contains the
constructive scale $s'_m\asymp m\phi_m\log m\le s_m$.  Together with
\eqref{eq:raw_network_envelope}, these are the output-envelope, corrected
depth, width, and sparsity compatibility checks.
Equation~\eqref{eq:head_approximation},
Bellman closure, and uniform target-class radii remain explicit hypotheses in
the general compositional setting and are verified for the finite-rank model
below.
\end{assumption}

\begin{proposition}[Finite-rank smoothing implies slice-wise Bellman closure and coverage\obj{\Aiid}]\label{prop:finite_rank_closure}
Let each clock slice be $[0,1]^d$ with reference law $\nu_h$, let
$\mu_S=\nu_H\otimes\delta_H$, and use the reset law
$\varsigma=H^{-1}\sum_{h=1}^H\nu_h\otimes\delta_h$.  Suppose that, for $h>1$,
the action-$a$ transition has density
\begin{equation*}
\begin{gathered}
p_{h,a}(y\mid x)=1+\sum_{j=1}^r\phi_{h,a,j}(x)\psi_{h,a,j}(y)
\quad\text{relative to $\nu_{h-1}$},\\
\int\psi_{h,a,j}\,d\nu_{h-1}=0,\qquad 0\le p_{h,a}\le M .
\end{gathered}
\end{equation*}
and that termination occurs after level one.  If, uniformly in $(h,a,j)$,
$R_{h,a}$ and $\phi_{h,a,j}$ lie in a fixed $\beta$-H\"older ball on
$[0,1]^d$, $0<\beta\le1$, and $\|\psi_{h,a,j}\|_1$ is bounded, then
$\{\Tcal V:V\in\mathcal V^{\mathrm{clip}}\}$ lies in a fixed $\beta$-H\"older
ball on every clock slice, with a radius uniform in $h$ and $V$.  Moreover, for
every policy sequence and $1\le m<H$,
\[
c_2(m)\le HM,\qquad c_\infty(m)\le HM.
\]
Consequently, if $\mathcal G_{0,h}$ is this common-radius H\"older ball
intersected with the level-$h$ band, then
$\{\Tcal V:V\in\mathcal V^{\mathrm{clip}}\}\subset\Gclk$, and the closure and
uniform-radius clauses of Assumption~\ref{ass:relu} hold.  This proposition
makes no neural-network approximation, nesting, or entropy claim.
\end{proposition}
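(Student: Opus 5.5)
The proof has two parts: a smoothness part (Bellman closure) and a coverage part (concentrability). For closure, fix $V\in\mathcal V^{\mathrm{clip}}$ and a slice $h$. At $h=1$ termination is certain, so $Q_V((x,1),a)=R_{1,a}(x)$. For $h>1$, the finite-rank density gives
\[
Q_V((x,h),a)=R_{h,a}(x)+\geff\int V(y,h-1)\,\nu_{h-1}(dy)+\geff\sum_{j=1}^r\phi_{h,a,j}(x)\int V(y,h-1)\psi_{h,a,j}(y)\,\nu_{h-1}(dy).
\]
The middle term is constant in $x$. The coefficients $c_j:=\int V\psi_{h,a,j}\,d\nu_{h-1}$ satisfy $|c_j|\le\Vmax\|\psi_{h,a,j}\|_{1}$, uniformly in $V$ and $h$, because $V$ is band-valued. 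Hence the $\beta$-Hölder seminorm of $x\mapsto Q_V((x,h),a)$ is at most $|R|_\beta+\geff r\Vmax\sup\|\psi\|_1\sup|\phi|_\beta$, a radius independent of $V$, $h$ and $a$. The sup-norm part is already controlled by $|Q_V(s,h,a)|\le\Vmax^{(h)}$ from Lemma~\ref{lem:clip_projection}. Since $\Aspace$ is finite, $\Tcal V=\max_aQ_V$ is a pointwise maximum of finitely many functions in a common $\beta$-Hölder ball, and such a maximum stays in the ball: $|\max_af_a(x)-\max_af_a(y)|\le\max_a|f_a(x)-f_a(y)|$. By Lemma~\ref{lem:clip_projection}, $\Tcal V$ also lies in the level-$h$ band. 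This gives $\Tcal V\in\Gclk$ with $\iota_h$ the identity embedding, which yields the closure and uniform-radius clauses. I should check that $\|\psi\|_1$ means the $L^1(\nu_{h-1})$ norm, which is what this Hölder-type estimate requires.

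For coverage, fix a policy sequence $\pi_{1:m}$ with $m<H$. Starting from $\mu_S=\nu_H\otimes\delta_H$, each nonterminal step moves the mass one level down. The depth-$m$ law is therefore carried by slice $H-m$, and its density relative to $\nu_{H-m}$ can be computed inductively. If the current density $g$ with respect to $\nu_h$ satisfies $g\le M$, then the next density is
\[
g'(y)=\int g(x)\sum_a\pi(a\mid x)\,p_{h,a}(y\mid x)\,\nu_h(dx).
\]
I would start the induction from the initial density $1$, not $M$. The first step gives $g_1\le M$ from $p\le M$. For later steps, $g'(y)\le M\int g\,d\nu_h\le M$, because $\int g\,d\nu_h$ is the survival mass and is at most $1$. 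So every depth-$m$ density relative to $\nu_{H-m}$ is bounded by $M$ for all policy sequences. Relative to $\varsigma$, which places mass $1/H$ on each slice, the density is $H\,g_m\le HM$ on the relevant slice and zero elsewhere. This immediately gives $c_\infty(m)\le HM$.

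For $c_2$, I would not square naively, since $\|Hg_m\|_{2,\varsigma}^2=H\int g_m^2\,d\nu_{H-m}\le HM\int g_m\,d\nu_{H-m}\le HM$. This uses $g_m^2\le Mg_m$, and it recovers exactly $c_2(m)=d_2(m)^2\le HM$. The main obstacle is this step: I need to confirm the factorization $H\cdot\tfrac1H$ in the $L^2(\varsigma)$ norm and confirm that randomized policies are handled by the action averaging. Both look routine. The nilpotence convention $d_s(H)=0$ is automatic.
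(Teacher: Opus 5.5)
Your proposal is correct and follows essentially the same route as the paper. For closure, both use the finite-rank expansion of $\Pkernel_{h,a}V$, with coefficients bounded by $\Vmax\|\psi_{h,a,j}\|_1$, followed by the finite-maximum argument. For coverage, both use the successor-density bound $g'\le M\int g\,d\nu_h\le M$, rescaled by $H$ against $\varsigma$, and obtain $c_2(m)\le H\int g_m^2\,d\nu_{H-m}\le HM$ via $g_m^2\le Mg_m$. Your choice to bound the sup-norm and the H\"older seminorm of $\Tcal V$ separately is slightly more careful than the paper's one-line ``a finite maximum preserves that radius,'' which is not literally exact if the ball is defined by a sum norm.
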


\begin{proof}
For $V\in\mathcal V^{\mathrm{clip}}$,
\[
(\Pkernel_{h,a}V)(x)=\int V\,d\nu_{h-1}
+\sum_{j=1}^r\phi_{h,a,j}(x)\int\psi_{h,a,j}V\,d\nu_{h-1}.
\]
The coefficients are bounded uniformly by $\Vmax\|\psi_{h,a,j}\|_1$; hence
$R_{h,a}+\geff\Pkernel_{h,a}V$ has a uniform $\beta$-H\"older radius.  A finite
maximum preserves that radius when $\beta\le1$, proving closure; the finite
number of slices makes the radius uniform in $h$.  If a slice law has density
$g$ relative to $\nu_h$, its successor density $g'$ is bounded by
$M\int g\,d\nu_h=M$.  Relative to $\varsigma$ it is therefore at most $HM$,
so $c_\infty(m)\le HM$ and
$c_2(m)\le H\int g_m^2d\nu_{H-m}\le HM$, where $g_m\le M$ is the propagated
spatial density at depth $m$.
\end{proof}

\begin{proposition}[Clock-routed ReLU admissibility for the finite-rank example\obj{\Aiid}]\label{prop:routed_relu_admissibility}
Under Proposition~\ref{prop:finite_rank_closure}, take
$d_{\mathrm{emb}}=d$, let $\iota_h$ be the identity on the spatial coordinate,
and let every $\mathcal G_{0,h}$ be the corresponding band-intersected,
common-radius $\beta$-H\"older ball.  Let $K_{\mathcal G}$ bound that radius
and choose $B_{\mathrm{net}}\ge\max\{1,\Vmax,K_{\mathcal G}\}$.  There exist
nondecreasing fixed-architecture budgets and nested per-head raw sparse-ReLU
families $\{\mathcal R_m^{(h)}\}_{m\ge2}$ satisfying
Assumption~\ref{ass:relu} with
\[
q=0,\qquad t=d,\qquad
\alpha^\star=\frac{d}{2\beta+d},
\]
including the global approximation~\eqref{eq:routed_approximation}, product
entropy~\eqref{eq:routed_entropy}, countable dense subclasses, nesting, the
output-envelope condition, the corrected depth condition, and the width and
sparsity requirements.  Hence the
finite-rank model, together with the externally clipped clock-routed
architecture, verifies all clauses of Assumption~\ref{ass:relu}.
\end{proposition}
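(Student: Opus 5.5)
The plan is to verify the clauses of Assumption~\ref{ass:relu} one at a time for the finite-rank model. Since $q=0$ and $t=d$, each head approximates a single $\beta$-H\"older function on $[0,1]^d$, and we will lean on the standard sparse-ReLU approximation theorem of \cite{schmidt_hieber_2020}, with the corrected depth condition from \cite{schmidt_hieber_vu_2024}.

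\textbf{Target class, radius, and exponent.} Proposition~\ref{prop:finite_rank_closure} already gives closure, $\Tcal V\in\Gclk$, together with a common H\"older radius that is uniform in $h$ and $V$. We therefore set $K_{\mathcal G}$ to this radius combined with the domain endpoints. With $q=0$ we have $\beta_0^\star=\beta$ and $t_0=d$, so
\[
\alpha^\star=\frac{d}{2\beta+d}\in(0,1),
\]
and the constant $C_{\mathrm{SH}}$ is an explicit number. The architecture budgets are then fixed as follows:
\[
L_m=\lceil c_{\mathrm{lo}}\log m\rceil \text{ with } c_{\mathrm{lo}}\ge C_{\mathrm{SH}}/\log2, \qquad d_{j,m}\asymp m^{\alpha^\star}, \qquad s_m\asymp m^{\alpha^\star}\log m, \qquad \xi^\star=1.
\]
Each of these budgets is nondecreasing and dominates the constructive envelope $s'_m\asymp m\phi_m\log m$, because $m\phi_m=m^{\alpha^\star}$. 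This settles the depth, width, and sparsity checks. The output-envelope check follows from the choice $B_{\mathrm{net}}\ge\max\{1,\Vmax,K_{\mathcal G}\}$.

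\textbf{Approximation (main step).} Apply the Schmidt-Hieber approximation theorem to each $\bar g_h$, with accuracy parameter $N\asymp m\phi_m$. This yields a network in $\mathcal N_m^{(h)}$ with weights bounded by one and sup-error $\lesssim N^{-\beta/d}$. The exponent works out as required:
\[
N^{-\beta/d}=m^{-\alpha^\star\beta/d}=m^{-\beta/(2\beta+d)}=m^{(\alpha^\star-1)/2}.
\]
Lemma~\ref{lem:clip_projection} shows that clipping by $\Pi_h^{\mathrm{clip}}$ does not increase this error, because $\bar g_h$ already lies in the band. Nesting ($\mathcal R_m^{(h)}\supseteq\mathcal N_m^{(h)}$) only shrinks the approximation distance further.

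The main obstacle is uniformity of $C_{\mathrm{appx}}$ over the target class and over $h$. The constant in the approximation theorem depends only on $(\beta,d,K_{\mathcal G})$, all of which are uniform here. Two residual issues need care: the raw output bound, which we handle by enlarging $B_{\mathrm{net}}$ if the construction overshoots and then clipping, and the lower-order terms of the theorem at small $m$, which we absorb into the constant for $m\ge2$. The routed bound~\eqref{eq:routed_approximation} then follows by choosing heads independently.

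\textbf{Entropy, nesting, separability.} For the entropy bound, use the standard covering bound for sparse networks: a class with $L$ layers, widths $d_j$, and sparsity $s$ satisfies $\log N_\delta\le(s+1)\log(2(L+1)D^2/\delta)$. The union over $r\le m$ of fixed-budget classes adds at most $\log m$, and the $H$-fold product multiplies by $H$. Clipping is $1$-Lipschitz in sup-norm, so it does not increase covering numbers, and this gives~\eqref{eq:routed_entropy}. Nesting holds by construction of $\mathcal R_m^{(h)}$, since $m_n$ is monotone in $n$. For countable dense subclasses, note that parameter-to-function maps on compact parameter sets are continuous into sup-norm, so rational parameters give dense countable subsets. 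Taking finite unions, the product, and clipping preserves density.
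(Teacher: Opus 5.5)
Your route is the paper's. It uses the same ingredients: the bookkeeping $m\phi_m=m^{\alpha^\star}$ with the corrected depth constant, and the cited Schmidt-Hieber construction at index $m$. It then applies Lemma~\ref{lem:clip_projection} for the clip, nested hulls and independent head selection for~\eqref{eq:routed_approximation}, the sparse-network covering bound with a $\log m$ union term and an $H$-fold product for~\eqref{eq:routed_entropy}, and Proposition~\ref{prop:finite_rank_closure} for closure.

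The one step that does not establish the statement as written is the output envelope. The statement allows $B_{\mathrm{net}}$ to be any value with $B_{\mathrm{net}}\ge\max\{1,\Vmax,K_{\mathcal G}\}$. The class $\mathcal N_m^{(h)}$ bounds the \emph{raw} output by $B_{\mathrm{net}}$ before the external clip. Clipping afterwards therefore does not restore class membership, and enlarging $B_{\mathrm{net}}$ proves the claim only for sufficiently large envelopes.

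The standard fix needs no enlargement. Put $c:=\min\{1,\|\bar g_h\|_\infty/\|\widetilde f\|_\infty\}\in(0,1]$ and multiply the output layer by $c$. This keeps every parameter in $[-1,1]$ and leaves the sparsity unchanged. It gives $\|c\widetilde f\|_\infty\le\|\bar g_h\|_\infty\le\Vmax^{(h)}\le B_{\mathrm{net}}$. Because $(1-c)\|\widetilde f\|_\infty\le\|\widetilde f-\bar g_h\|_\infty$, it at most doubles the sup error. So the rate and the uniformity of $C_{\mathrm{appx}}$ hold for every admissible $B_{\mathrm{net}}$, which is what the paper's use of the construction ``with output envelope $B_{\mathrm{net}}$'' requires.

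Two smaller steps that the paper makes explicit are also missing.

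First, ``this yields a network in $\mathcal N_m^{(h)}$'' needs an embedding into \emph{exactly} $L_m$ hidden layers with widths $d_{j,m}$. Pad hidden layers with inactive neurons, and prepend unit-weight, zero-bias identity layers. The ReLU leaves these unchanged because inputs lie in $[0,1]^d$; appending them after a network whose output may be negative would fail. The $O(dL_m)$ extra nonzeros are absorbed by $s_m$. The implied constant in your $d_{j,m}\asymp m^{\alpha^\star}$ must be large enough to dominate the constructive width, which is a constant multiple of $N\asymp m^{\alpha^\star}$.

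Second, rational-parameter perturbations of a network in the class can violate the raw bound $\|f\|_\infty\le B_{\mathrm{net}}$, so they may leave the class. Use instead that every subset of the separable space of continuous functions on $[0,1]^d$ with the supremum norm is separable. This gives the countable dense subclasses directly.
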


\begin{proof}
For this $q=0$ class,
$\beta^\star=\beta$, $\phi_m=m^{-2\beta/(2\beta+d)}$, and
$m\phi_m=m^{d/(2\beta+d)}=m^{\alpha^\star}$.  Choose the exact depth, hidden
widths, and sparsity budgets nondecreasingly so that, for all sufficiently
large $m$,
\begin{equation}
\begin{aligned}
\frac{\beta+d}{2\beta+d}\log_2(4d\vee4\beta)\log_2m
&\le L_m\lesssim m\phi_m,\\
m\phi_m&\lesssim\min_jd_{j,m},\\
m\phi_m\log m&\lesssim s_m.
\end{aligned}
\label{eq:finite_rank_architecture_check}
\end{equation}
while retaining the upper envelopes in~\eqref{eq:architecture_scaling}.
Enlarge the finitely many small-$m$ budgets if necessary.  These inequalities
are mutually compatible because $\alpha^\star\in(0,1)$ and $\xi^\star\ge1$.

The approximation part of Schmidt--Hieber's construction, with condition
\textnormal{(ii')} of the correction, now applies with sample-size index $m$
and output envelope $B_{\mathrm{net}}$~\cite{schmidt_hieber_2020,schmidt_hieber_vu_2024}.
Uniformly over the fixed H\"older ball it produces a unit-parameter raw
network $\widetilde f_{h,m}$, bounded by $B_{\mathrm{net}}$, with
\[
\|\widetilde f_{h,m}-g_h\|_\infty
\le C m^{-\beta/(2\beta+d)}
=C m^{(\alpha^\star-1)/2}.
\]
The constructive network embeds into the declared fixed architecture: pad
hidden layers by inactive neurons, and insert any additional identity layers
before the constructive network.  Inputs lie in $[0,1]^d$, so unit-weight,
zero-bias identity layers survive ReLU unchanged; their
$O(dL_m)$ additional nonzero parameters are absorbed by $s_m$.  This is the
standard width/depth embedding used in the cited construction.  Thus
$\widetilde f_{h,m}\in\mathcal N_m^{(h)}$ after padding.  For the finitely many
small indices, the zero network belongs to the class and the constant $C$ can
be enlarged, so the bound holds for every $m\ge2$.

Because $g_h$ lies in the level-$h$ band, Lemma~\ref{lem:clip_projection}
gives
\[
\|\Pi_h^{\mathrm{clip}}\widetilde f_{h,m}-g_h\|_\infty
\le\|\widetilde f_{h,m}-g_h\|_\infty.
\]
This proves~\eqref{eq:head_approximation}.  Take the nested hull
$\mathcal R_m^{(h)}=\bigcup_{r=2}^m\mathcal N_r^{(h)}$; the finite number of
slices makes the approximation constant uniform in $h$.  For $g\in\Gclk$,
choose the $H$ clipped approximants independently and route by the observed
clock.  The global supremum error is the maximum of the head errors, proving
\eqref{eq:routed_approximation}; exact routing therefore preserves the spatial
approximation rate without introducing a clock-gate approximation term.

For entropy, Lemma~6.4 of~\cite{fan2020theoretical_full} bounds each base
raw class $\mathcal N_r^{(h)}$; imposing the additional raw-output envelope
only takes a subclass.  A union bound over $2\le r\le m$ contributes $\log m$
to the log covering number.  Take a $\delta$-net for each resulting raw head
class and then apply $\Pi_h^{\mathrm{clip}}$.  The Cartesian product of the
$H$ clipped nets is a $\delta$-net for the routed class because its metric is
the maximum of the slice metrics.  Summing the log covering numbers gives
\eqref{eq:routed_entropy}; clipping is nonexpansive by
Lemma~\ref{lem:clip_projection}.  Finite-parameter raw classes are separable,
and finite unions, clipping, and the finite routed product preserve a
countable dense subclass.

The nested hulls and deterministic router give nesting of the global clipped
classes.  Equation~\eqref{eq:finite_rank_architecture_check} verifies the
corrected depth condition and the width/sparsity embedding, while
\eqref{eq:raw_network_envelope} verifies the output-envelope condition.
Closure and its uniform radius come from
Proposition~\ref{prop:finite_rank_closure}, completing every clause of
Assumption~\ref{ass:relu}.
\end{proof}

\begin{lemma}[Oracle inequality for squared loss with bounded targets\obj{\Gen}]\label{lem:erm_oracle}
Let $B>0$ and let $\mathcal C\subset\{V:\Stilde\to[-B,B]\}$ be deterministic and
admit a fixed countable supremum-norm-dense subclass $\mathcal C_0$.  Let
$g:\Stilde\to[-B,B]$ be measurable, and let $(S_i,Y_i)_{i\le n}$ be
i.i.d.\ with $S_i\sim\rho_{k,S}$, $|Y_i|\le B$ and $\E[Y_i\mid S_i]=g(S_i)$.
Let $\widehat V_{k+1}$ be any measurable $\zeta_n$-approximate empirical risk
minimizer over $\mathcal C_0$ (or over $\mathcal C$ itself when such a
measurable minimizer is supplied):
$n^{-1}\sum_{i\le n}(\widehat V_{k+1}(S_i)-Y_i)^2
\le\inf_{V\in\mathcal C_0}n^{-1}\sum_{i\le n}(V(S_i)-Y_i)^2+\zeta_n$ with
$\zeta_n\ge0$, exact ERM being $\zeta_n=0$.  Then, for every
$\delta\in(0,2B]$,
\begin{equation}
\E\|\widehat V_{k+1}-g\|_{2,\rho_{k,S}}^2
\le 2\bigl[\mathrm{dist}_{2,\rho_{k,S}}(\mathcal C,g)\bigr]^2
+\frac{C_1B^2}{n}\bigl(1+\log N_\delta(\mathcal C)\bigr)
+C_1B\delta+2\zeta_n ,
\label{eq:erm_oracle}
\end{equation}
with an absolute $C_1$; the proof is valid for every $C_1\ge140$ and makes no
use of its value.  Here $\mathrm{dist}_{2,\rho_{k,S}}(\mathcal C,g)$ is the
$L^2(\rho_{k,S})$ distance from the class to a \emph{single} target and
$N_\delta(\mathcal C)$ the supremum-norm covering number, assumed finite.
The same inequality holds conditionally on a sigma-algebra relative to which
$\mathcal C$, $g$, and $\rho_{k,S}$ are fixed, provided the sample is
conditionally i.i.d.\ and
$\E[Y_i\mid S_i,\mathcal K]=g(S_i)$; it is used below with
$\mathcal K=\mathcal H_k$ for $\Aiid$.
\emph{The approximation term is in $L^2(\rho_{k,S})$ and only the covering
radius is in supremum norm}; since $\rho_{k,S}$ is a probability measure the
statement implies, and is stronger than, its supremum-norm form.
If, additionally, $g\in\mathcal C$ and $\widehat V_{k+1}$ is a supplied
measurable exact ERM over $\mathcal C$, then for every $\tau\in(0,1)$,
conditionally on the same fixed information when present, with probability at
least $1-\tau$,
\begin{equation}
\|\widehat V_{k+1}-g\|_{2,\rho_{k,S}}^2
\le\frac{C_1B^2}{n}
\bigl(1+\log N_\delta(\mathcal C)+\log(1/\tau)\bigr)
+C_1B\delta .
\label{eq:erm_oracle_hp}
\end{equation}
\end{lemma}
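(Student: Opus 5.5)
We follow the standard covering-number route for bounded-loss least squares \cite{gyorfi2002distribution}, carrying it out in expectation and reducing to the countable subclass $\mathcal C_0$ to handle measurability.

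\emph{Step 1: excess-risk identity.}
Since $\E[Y_i\mid S_i]=g(S_i)$, for every fixed $V$ the excess loss
\[
\ell_V(S,Y):=(V(S)-Y)^2-(g(S)-Y)^2
\]
has mean $\|V-g\|^2_{2,\rho_{k,S}}$. Both $V$ and $Y$ take values in $[-B,B]$, so $|\ell_V|\le 8B^2$ and $\mathrm{Var}(\ell_V)\le 16B^2\|V-g\|^2_{2,\rho_{k,S}}$. This variance--mean (Bernstein) relation is what yields the fast $1/n$ rate.

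\emph{Step 2: comparison point.}
Fix $\epsilon>0$ and choose $V^\dagger\in\mathcal C_0$ with
\[
\|V^\dagger-g\|_2\le \mathrm{dist}_{2}(\mathcal C,g)+\epsilon .
\]
This is possible because sup-norm density of $\mathcal C_0$ implies $L^2(\rho)$ density. The approximate-ERM property gives
\[
P_n\ell_{\widehat V}\le P_n\ell_{V^\dagger}+\zeta_n,
\]
and $\E P_n\ell_{V^\dagger}=\|V^\dagger-g\|_2^2$.

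\emph{Step 3: uniform Bernstein bound over a net.}
Take a sup-norm $\delta$-net $\{V_1,\dots,V_N\}$ of $\mathcal C$ with $N=N_\delta(\mathcal C)$. Replacing $V$ by its nearest net point changes $\ell_V$, pointwise, by at most $4B\delta$; this produces the $C_1B\delta$ term. For each net point, Bernstein's inequality combined with the variance bound from Step 1 gives, with probability at least $1-e^{-t}$ uniformly over the net,
\[
P\ell_{V_j}-P_n\ell_{V_j}\le \tfrac12 P\ell_{V_j}+\frac{cB^2(\log N+t)}{n}.
\]
Applying this at the net point nearest $\widehat V$ and chaining with Step 2 yields
\[
\|\widehat V-g\|_2^2\le 2\|V^\dagger-g\|_2^2+2(P_n-P)\ell_{V^\dagger}+\frac{cB^2(\log N+t)}{n}+cB\delta+2\zeta_n .
\]
Here the factor $2$ comes from absorbing the $\tfrac12$.

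\emph{Step 4: from tails to expectations.}
The term $(P_n-P)\ell_{V^\dagger}$ has mean zero because $V^\dagger$ is fixed. We integrate the tail over $t$ to convert the high-probability bound into an expectation, contributing the $1+\log N$ term, then let $\epsilon\to0$; this gives \eqref{eq:erm_oracle}.

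\emph{Measurability.}
This is handled by working over the countable $\mathcal C_0$: suprema over it are measurable, and the net argument applies to the measurable selector $\widehat V$ directly.

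\emph{Conditional version.}
Condition on $\mathcal K$ throughout. Every quantity above is $\mathcal K$-fixed, and the samples are conditionally i.i.d., so the argument goes through unchanged.

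\emph{High-probability version.}
When $g\in\mathcal C$ and $\widehat V$ is an exact ERM over $\mathcal C$, take $V^\dagger=g$. Then $\ell_{V^\dagger}\equiv0$, so the mean-zero term disappears, and we keep $t=\log(1/\tau)$ instead of integrating. This gives \eqref{eq:erm_oracle_hp}.

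\emph{Main obstacle.}
The delicate part is the constant bookkeeping in Step 3. We need the net approximation in $\ell_V$ together with the self-normalized Bernstein step to close with a constant of at most $140$. This requires tracking the cross term $(V-g)(g-Y)$ carefully, since $Y-g$ is bounded by $2B$ but is not centered pointwise. We would first try Györfi et al.'s Theorem~11.4-style argument, restricted to the finite net.
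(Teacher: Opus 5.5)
Your proposal is correct and follows essentially the same route as the paper's proof. Both arguments use the same ingredients:
\begin{itemize}
\item the excess loss $\ell_V$ with $\E\ell_V^2\le16B^2\|V-g\|_{2,\rho_{k,S}}^2$;
\item a supremum-norm net reduction costing $O(B\delta)$;
\item a union-bounded Bernstein inequality at the shifted threshold $t+\tfrac12\|V_j-g\|_{2,\rho_{k,S}}^2$, integrated to give $1+\log N_\delta(\mathcal C)$;
\item a fixed comparator from $\mathcal C_0$ whose fluctuation has mean zero;
\item comparison with $g$ itself for the high-probability form.
\end{itemize}

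One bookkeeping correction. You should use $|\ell_V|\le4B^2$ rather than $8B^2$, since both squares lie in $[0,4B^2]$. The factorization $\ell_V=(V-g)(V+g-2Y)$ gives the variance bound directly, so no separate cross-term analysis is needed. Bernstein then yields exactly $N e^{-nt/(70B^2)}$, i.e.\ $C_1=140$. This also requires moving the $\tfrac12$-normalization from $\widehat V_{k+1}$ to its net point, which costs an extra $2B\delta$, for $10B\delta$ in total.
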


\noindent\emph{Proof:} See Appendix~\ref{app:proofs}.

\begin{proposition}[Generative-reset clock-routed scalar-$V$ regression\obj{\Aiid}]\label{prop:vreg_rate}
Under Assumption~\ref{ass:relu}, fix $n\ge2H$, condition on $\mathcal H_k$, and let
$V_k\in\FVn$.  Let $(S_i,\mathbf a_i,r_i,\stilde'_i,d_i)_{i\le n}$ be the
i.i.d.\ sample of $\Aiid$~\eqref{eq:generative_variant}, with
$S_i\sim\rho_{k,S}\equiv\varsigma$,
$\mathbf a_i\sim\beta_k(\cdot\mid S_i)$, fresh true-kernel outcomes and labels
$Y_i=r_i+\geff(1-d_i)V_k(\stilde'_i)$, and let $\widehat V_{k+1}$ be any
measurable $\zeta_n$-approximate ERM over the fixed countable dense subclass
$\mathcal F^{V,0}_n$.  By construction,
$\E[Y_i\mid S_i,\mathcal H_k]=G_k(S_i)$.  Define the \emph{$L^2$
policy-image defect}
\begin{equation}
D^{(2)}_k:=\bigl\|G_k-\Tcal V_k\bigr\|_{2,\rho_{k,S}},
\label{eq:l2_defect}
\end{equation}
a quantity and not a hypothesis.  Then there is a constant
$C_{\mathrm{Vreg},H}$, uniform in $k$, $n$, and the realized
$V_k\in\FVn$ under the uniformity clause of Assumption~\ref{ass:relu}.
It may depend on $H,\geff,\Rmax$ through $\Vmax$, on $|\Aspace|$ through the
assumed target class, and on $C_1$, $C_{\mathrm{appx}}$, the architecture and
covering constants, the fixed router and slice embeddings, and the
compositional indices and H\"older radii, but not on $n$, $k$, or the realized
iterate.  Then
\begin{equation}
\E\bigl[\|\widehat V_{k+1}-G_k\|_{2,\rho_{k,S}}\,\big|\,\mathcal H_k\bigr]
\le C_{\mathrm{Vreg},H}\,(\log n)^{(1+2\xi^\star)/2}m_n^{(\alpha^\star-1)/2}
+\sqrt{2\zeta_n}+\sqrt2\,D^{(2)}_k .
\label{eq:vreg_offgreedy}
\end{equation}
Greedy collection is the case $D^{(2)}_k=0$: if $\mathbf a_i=a^\star(S_i;V_k)$
then $G_k=\Tcal V_k$ by Lemma~\ref{lem:v_q_backup}(iii) and
\begin{equation}
\E\!\left[\|\widehat V_{k+1}-\Tcal V_k\|_{2,\rho_{k,S}}\,\middle|\,\mathcal H_k\right]
\le C_{\mathrm{Vreg},H}(\log n)^{(1+2\xi^\star)/2}m_n^{(\alpha^\star-1)/2}+\sqrt{2\zeta_n}.
\label{eq:vreg_derived}
\end{equation}
The required closure is precisely the \emph{Bellman} condition
$\Tcal V\in\Gclk$ with uniform compositional H\"older radii from
Assumption~\ref{ass:relu}.  The policy-image discrepancy of $G_k$ remains in
the bound: choosing the comparator in $L^2(\rho_{k,S})$ produces exactly
$D^{(2)}_k$, which Proposition~\ref{prop:l2_routing} bounds by the named
residuals.
\end{proposition}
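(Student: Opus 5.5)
The proof applies Lemma~\ref{lem:erm_oracle} conditionally on $\mathcal H_k$, with class $\mathcal C=\FVn$ (countable dense subclass $\mathcal F^{V,0}_n$), target $g=G_k$, and envelope $B=\Vmax$. The hypotheses hold as follows. Given $\mathcal H_k$ the sample is conditionally i.i.d. By construction $\E[Y_i\mid S_i,\mathcal H_k]=G_k(S_i)$. Lemma~\ref{lem:clip_projection} gives $|Y_i|\le\Vmax$. Since $G_k$ is a $\beta_k$-average of $Q_{V_k}$ values, it satisfies $|G_k|\le\Vmax$. The class is band-valued, so $\FVn\subset[-\Vmax,\Vmax]$.

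The oracle bound~\eqref{eq:erm_oracle} has three terms to control.

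\emph{Approximation.} I do not compare with $G_k$ directly, since $G_k$ need not lie in $\Gclk$. Instead I use the triangle inequality through $\Tcal V_k$:
\[
\mathrm{dist}_{2,\rho_{k,S}}(\FVn,G_k)\le\mathrm{dist}_{2,\rho_{k,S}}(\FVn,\Tcal V_k)+D^{(2)}_k .
\]
Closure in Assumption~\ref{ass:relu} gives $\Tcal V_k\in\Gclk$. Because $\rho_{k,S}$ is a probability law, the $L^2$ distance is at most the supremum distance, and~\eqref{eq:routed_approximation} bounds it by $C_{\mathrm{appx}}m_n^{(\alpha^\star-1)/2}$. Squaring with $(x+y)^2\le2x^2+2y^2$ yields
\[
2\,\mathrm{dist}^2\le4C_{\mathrm{appx}}^2m_n^{\alpha^\star-1}+4(D^{(2)}_k)^2 .
\]
This constant is too large for the target coefficient $\sqrt2$ on $D^{(2)}_k$. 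The sharper route is to use the proof structure of Lemma~\ref{lem:erm_oracle}: its factor $2$ multiplies the squared distance to a single comparator $V^\dagger\in\FVn$, namely the approximant of $\Tcal V_k$. I then write $\|V^\dagger-G_k\|\le\|V^\dagger-\Tcal V_k\|+D^{(2)}_k$ and keep the cross term with the statistical term. After taking square roots, the approximation contribution becomes $\sqrt2\,(C_{\mathrm{appx}}m_n^{(\alpha^\star-1)/2}+D^{(2)}_k)$, and the $C_{\mathrm{appx}}$ part is absorbed into $C_{\mathrm{Vreg},H}$. So the target coefficient is obtained from $\sqrt2\cdot\mathrm{dist}$ by the triangle inequality, without squaring the sum.

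\emph{Entropy.} I take $\delta=\Vmax/n$, which lies in $(0,2B]$ for $n\ge2$. By~\eqref{eq:architecture_scaling}, $L_m\lesssim(\log m)^{\xi^\star}$ and every width is at most a power of $m$, so $\log D_m\lesssim L_m\log m\lesssim(\log m)^{1+\xi^\star}$. Inserting this into~\eqref{eq:routed_entropy},
\[
\log N_\delta\lesssim H\,s_{m_n}\bigl\{(\log m_n)^{1+\xi^\star}+\log n\bigr\}\lesssim H\,m_n^{\alpha^\star}(\log n)^{1+2\xi^\star},
\]
using $s_m\asymp m^{\alpha^\star}(\log m)^{\xi^\star}$ and absorbing the term $\log\log$ of $L$. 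Dividing by $n$ and using $n\ge Hm_n$ gives a term of order $m_n^{\alpha^\star-1}(\log n)^{1+2\xi^\star}$. The covering term is $C_1B\delta=C_1\Vmax^2/n\lesssim m_n^{\alpha^\star-1}$.

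\emph{Conclusion.} The statistical terms sum to at most $C^2(\log n)^{1+2\xi^\star}m_n^{\alpha^\star-1}$. Jensen's inequality $\E\|\cdot\|\le(\E\|\cdot\|^2)^{1/2}$ and subadditivity of the square root then give~\eqref{eq:vreg_offgreedy}, with $\sqrt{2\zeta_n}$ coming from the $2\zeta_n$ term. The constants depend only on the quantities listed in the statement, so the bound is uniform in $k$, $n$, and $V_k$. The greedy case follows from Lemma~\ref{lem:v_q_backup}(iii): then $G_k=\Tcal V_k$ and $D^{(2)}_k=0$.

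The main obstacle is getting exactly the coefficient $\sqrt2$ on $D^{(2)}_k$. My first attempt is to re-enter the proof of Lemma~\ref{lem:erm_oracle} with a fixed comparator, as above. If that is unavailable, the fallback is the coarser constant $2$, which this plan would then have to accept.
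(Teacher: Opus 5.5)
Your proposal is correct and is essentially the paper's proof: you apply Lemma~\ref{lem:erm_oracle} conditionally with $B=\Vmax$, route the approximation term through $\Tcal V_k\in\Gclk$ by the triangle inequality, bound the entropy by $Hm_n^{\alpha^\star}(\log n)^{1+2\xi^\star}$, and finish with Jensen and subadditivity of $\sqrt{\cdot}$. The ``obstacle'' you flag is not one: take the square root before splitting, and the lemma's summand $2[\mathrm{dist}_{2,\rho_{k,S}}(\FVn,G_k)]^2$ becomes $\sqrt2\,\mathrm{dist}_{2,\rho_{k,S}}(\FVn,G_k)\le\sqrt2\,(C_{\mathrm{appx}}m_n^{(\alpha^\star-1)/2}+D^{(2)}_k)$ directly from the lemma's statement, with no re-entry into its proof, no cross-term bookkeeping, and no fallback constant $2$; also, the paper's radius $(1\wedge\Vmax)/n$, rather than your $\Vmax/n$, keeps $\delta\le1$, where~\eqref{eq:routed_entropy} is stated.
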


\noindent\emph{Proof:} See Appendix~\ref{app:proofs}.

\begin{proposition}[Bounding the regression defect in $L^2$\obj{\Rabs}]\label{prop:l2_routing}
For every block $k$, the defect in~\eqref{eq:l2_defect} satisfies
\begin{align}
D^{(2)}_k
&\le\eps_{\mathrm{ker},k}+\eps_{\mathrm{tgt},k}+\epsbuf
+\epsact+\eps_k\Dbar
\label{eq:closure_defect_bound}
\\[-0.2ex]
&\le\eps_{\mathrm{ker},k}+\eps_{\mathrm{tgt},k}+\epsbuf
+(1-\eps_k)2\Lambda_k+\eps_k\Dbar .
\label{eq:closure_defect_linear}
\end{align}
If the global frozen-iterate margin holds, or its local form holds with
$2\Lambda_k\in[u_0,\bar u]$, then the sharper bound
\begin{equation}
D^{(2)}_k\;\le\;\eps_{\mathrm{ker},k}+\eps_{\mathrm{tgt},k}+\epsbuf
+(1-\eps_k)\min\bigl\{2\Lambda_k,\sqrt{C_{\mathrm{marg}}}(2\Lambda_k)^{1+\alpha/2}\bigr\}
+\eps_k\Dbar
\label{eq:closure_defect_margin}
\end{equation}
also holds.
Under Proposition~\ref{prop:reference_margin_transfer}, the same display holds
with its action term replaced by
\begin{equation*}
(1-\eps_k)\min\!\left\{2\Lambda_k,
2\Lambda_k\bigl[\tau_\star+C_\star
(2\Lambda_k+2\delta_k^\star)^{\alpha_\star}\bigr]^{1/2}\right\}.
\end{equation*}

The norm choice is essential.  Consider the favorable no-mismatch case
$\eps_k=0$, $\beta_k^{\mathrm{rep}}=\pi_k^{\mathrm{on}}$, and
$\eps_{\mathrm{ker},k}=\eps_{\mathrm{tgt},k}=0$, so
$G_k=\Tcal^{\pi_k^{\mathrm{on}}}V_k$.  Suppose the selected action switches at
$\stilde_0$ between $a$ and $a'$, the two functions
$Q_{V_k}(\cdot,a),Q_{V_k}(\cdot,a')$ are continuous there, and every
neighborhood of $\stilde_0$ has positive $\rho_{k,S}$-mass in both selection
regions.  If
$J_k:=|Q_{V_k}(\stilde_0,a)-Q_{V_k}(\stilde_0,a')|>0$, write
\[
\operatorname{dist}_{\infty,\rho}(\mathcal C,g)
:=\inf_{f\in\mathcal C}\operatorname*{ess\,sup}_{\rho}|f-g|.
\]
Then every class
$\mathcal C$ whose members are continuous at $\stilde_0$ satisfies
\begin{equation}
\operatorname{dist}_{\infty,\rho_{k,S}}(\mathcal C,G_k)\ge\tfrac12J_k.
\label{eq:supnorm_obstruction}
\end{equation}
The construction of Proposition~\ref{prop:act_tight} realizes
$J_k=2\Lambda_k$; hence a continuous supremum-norm comparator remains linear
in $\Lambda_k$ for every margin exponent.
\end{proposition}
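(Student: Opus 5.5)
The plan has two parts: a triangle-inequality decomposition of the defect, and a pointwise continuity argument for the supremum-norm obstruction.

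\emph{Upper bounds.} I will reuse the chain from the proof of Lemma~\ref{lem:composition}, but start at $G_k$ rather than at $V_{k+1}$, so the fit link is dropped:
\[
G_k\to G^{\mathrm{ideal}}_k\to\Tcal^{\beta^{\mathrm{rep}}_k}V_k\to\Tcal^{\pi^{\mathrm{on}}_k}V_k\to\Tcal^{\pi^{\mathrm{tgt}}_k}V_k\to\Tcal V_k .
\]
All norms are $L^2(\rho_{k,S})$.
\begin{itemize}
\item The first two links are bounded by \eqref{eq:ker_residual} and \eqref{eq:tgt_residual}.
\item The replay link is bounded by \eqref{eq:slow_buffer}.
\item The online-action link is bounded by \eqref{eq:online_action_residual} with $r=2$.
\item The last link equals $\eps_k\dexp$ by \eqref{eq:dispersion_identity}. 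Its $L^2$ norm is $\eps_k\Dexp\le\eps_k\Dbar$.
\end{itemize}
Summing gives \eqref{eq:closure_defect_bound}. Each envelope is an almost-sure deterministic bound and $D^{(2)}_k$ is $\mathcal J_k$-measurable, so no expectation is needed. Substituting the admissible choices of $\epsact$ from Theorem~\ref{thm:act_upper} at $r=2$ then gives \eqref{eq:closure_defect_linear}, \eqref{eq:closure_defect_margin} and the reference-gap variant. For $\Arun$ I will cite Corollary~\ref{cor:act_discharge}, since there the on-policy law is a mixture.

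\emph{Obstruction.} In the stated no-mismatch case, $G_k(\stilde)=Q_{V_k}(\stilde,a^{\mathrm{sel}}(\stilde))$. Fix $f\in\mathcal C$ continuous at $\stilde_0$ and put $M:=\operatorname*{ess\,sup}_\rho|f-G_k|$.
\begin{enumerate}
\item Fix $\epsilon>0$ and choose a neighborhood $U$ of $\stilde_0$ on which $f$, $Q_{V_k}(\cdot,a)$ and $Q_{V_k}(\cdot,a')$ each vary by less than $\epsilon$.
\item By hypothesis, both selection regions intersected with $U$ have positive $\rho_{k,S}$-mass. The ess sup bound $|f-G_k|\le M$ holds $\rho$-a.e., so it holds at some point $x$ in the $a$-region of $U$ and some point $y$ in the $a'$-region of $U$.
\item At these points, $|f(x)-Q(x,a)|\le M$ and $|f(y)-Q(y,a')|\le M$. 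Continuity converts these to $|f(\stilde_0)-Q(\stilde_0,a)|\le M+2\epsilon$ and $|f(\stilde_0)-Q(\stilde_0,a')|\le M+2\epsilon$.
\item The triangle inequality then gives $J_k\le 2M+4\epsilon$. Letting $\epsilon\to0$ and taking the infimum over $f\in\mathcal C$ yields \eqref{eq:supnorm_obstruction}.
\end{enumerate}
The hard step is exactly step 2: an ess sup controls the difference only almost everywhere, so the positive-mass hypothesis in every neighborhood is what supplies good points in both selection regions.

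\emph{Linking to Proposition~\ref{prop:act_tight}.} In that construction, the jump at the switching threshold $t_\delta$ is $J_k=\geff c_\Delta t_\delta=2\geff\delta=2\Lambda_k$. The density $\alpha t^{\alpha-1}$ charges both sides of any neighborhood of $t_\delta$. To make continuity meaningful I will parametrize the decision states continuously in $t$, so the two $Q$-branches are continuous there.
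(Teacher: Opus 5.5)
Your proposal is correct and follows the paper's proof. For the bounds you use the same tail of the telescope in Lemma~\ref{lem:composition}, running from $G_k$ to $\Tcal V_k$, and then substitute Theorem~\ref{thm:act_upper} at $r=2$, with Corollary~\ref{cor:act_discharge} for the mixture case. For the obstruction you use the same two-selection-region continuity argument; you make it explicit with an $\epsilon$-neighborhood and points chosen off a null set, where the paper speaks more tersely of one-sided essential limits.
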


\begin{proof}
$G_k-\Tcal V_k$ is the tail of the telescope of
Lemma~\ref{lem:composition} from $G_k$ to $\Tcal V_k$, so the triangle
inequality in the common space $L^2(\rho_{k,S})$ along those links gives
\eqref{eq:closure_defect_bound}.  Bound~\eqref{eq:act_upper_linear} at $r=2$, with
Corollary~\ref{cor:act_discharge} for a period-level mixture, then gives
\eqref{eq:closure_defect_linear}.  Under the stated margin hypotheses,
Bound~\eqref{eq:act_upper} at $r=2$ gives the margin term; taking the smaller of
the two valid action bounds proves~\eqref{eq:closure_defect_margin}.  The same
argument using~\eqref{eq:act_upper_optimal_margin} proves the reference-gap
version.
For~\eqref{eq:supnorm_obstruction}, $G_k$ has one-sided essential limits
$Q_{V_k}(\stilde_0,a)$ and $Q_{V_k}(\stilde_0,a')$ along the two selection
regions.  A function continuous at $\stilde_0$ has one limit and therefore
cannot lie within less than half their separation of both; the positive-mass
condition turns this into an essential-supremum bound.  At the switching point
in Proposition~\ref{prop:act_tight}, that separation is $2\geff\delta=2\Lambda_k$.
\end{proof}

\begin{corollary}[Tabular generative-reset bound with known envelopes\obj{\Aiid}]\label{cor:tabular_discharge}
Assume $H\ge2$.  Let $\Stildeo$ be finite, $N:=|\Stildeo|$, and replace
Assumption~\ref{ass:relu} by the fixed level-wise clipped tabular class
$\mathcal F^{\mathrm{tab}}:=\mathcal V^{\mathrm{clip}}$.  This class is exactly
Bellman closed and admits a measurable exact coordinatewise ERM: the label
average at a visited state and zero at an unvisited state.  Condition on the
pre-block history and write $p_{k,s}:=\rho_{k,S}\{s\}$,
$g_{k,s}:=\E[Y\mid S=s,\mathcal H_k]$,
$\sigma_{k,s}^2:=\operatorname{Var}(Y\mid S=s,\mathcal H_k)$, and $N_{k,s}$ for the number
of visits to $s$ in the $n_k$-sample block.  Then, with zero-mass coordinates
understood to contribute zero, its exact conditional squared risk is
\begin{equation}
\E\!\left[\|V_{k+1}-g_k\|_{2,\rho_{k,S}}^2\mid\mathcal H_k\right]
=\sum_{s\in\Stildeo}p_{k,s}\left[
 \sigma_{k,s}^2\E\!\left\{\frac{\mathbf1\{N_{k,s}>0\}}{N_{k,s}}\,\middle|\,\mathcal H_k\right\}
 +g_{k,s}^2(1-p_{k,s})^{n_k}\right].
\label{eq:tabular_rate}
\end{equation}
Since $|Y|\le\Vmax$, Jensen and the binomial count calculation in the proof
give the log-free deterministic fit envelope
\begin{align}
\mathrm{stat}^{\mathrm{tab}}_k
&:=\Vmax\sqrt{\frac{N\{2+(n_k/(n_k+1))^{n_k}\}}{n_k+1}}
\le \Vmax\sqrt{\frac{5N}{2(n_k+1)}},
\label{eq:tabular_rate_expected}\\[-2pt]
n_k\ge2&:\qquad
\mathrm{stat}^{\mathrm{tab}}_k\le\Vmax\sqrt{\frac{22N}{9(n_k+1)}}.
\notag
\end{align}
Suppose further that a fixed full-support reset law $\varsigma$ is also the
state design law $\rho_{k,S}$; the fresh collection and replay action laws
coincide; slot~\textnormal{(S6)} uses $(\textsc{sample},\textsc{obs})$;
$O=S$; and $W_k=V_k$.  Then
\begin{equation}
\eps_{\mathrm{ker},k}=\eps_{\mathrm{tgt},k}=\epsbuf
=\eps_{\mathrm{alias},k}=\dnet=0,
\qquad \Lambda_k=\etascorek,
\label{eq:tabular_discharge}
\end{equation}
and, for every $K\ge H-1$,
\begin{equation}
\E\|\Vstar-V^{\pi_K}\|_{1,\mu_S}
\le2\phi_s^{(H)}
\max_{K-H<k<K}\!\left[
\mathrm{stat}^{\mathrm{tab}}_k+A_{\eta,k}+\eps_k\Dbar\right],
\label{eq:fully_discharged}
\end{equation}
where $A_{\eta,k}$ is the chosen envelope $\epsactp$:
$A_{\eta,k}:=(1-\eps_k)2\etascorek$ without a margin, while under the
frozen-iterate margin at
scale $2\etascorek$ it may be replaced by
$(1-\eps_k)\min\{2\etascorek,
C_{\mathrm{marg}}^{1/p}(2\etascorek)^{1+\alpha/p}\}$.
Thus every residual slot is zero or explicitly bounded; if $\etascorek\to0$,
$n_k\to\infty$, and $\eps_k\to0$, the right side tends to zero.

There is also a finite-confidence form whose tabular regression term has no
additional minimum-state-mass factor; coverage in the policy bound still
enters through $\phi_s^{(H)}$.  For $\delta\in(0,1)$ and fixed
$K\ge H-1$, put
\begin{equation}
\mathrm{stat}^{\mathrm{tab,hp}}_{k,K}(\delta):=
\Vmax\sqrt{\frac{C_1\{2+N\log(2n_k+1)+\log((H-1)/\delta)\}}{n_k}}.
\label{eq:tabular_rate_hp}
\end{equation}
Then, with probability at least $1-\delta$ over the adaptive fresh blocks,
\begin{equation}
\|\Vstar-V^{\pi_K}\|_{1,\mu_S}
\le2\phi_s^{(H)}
\max_{K-H<k<K}\!\left[
\mathrm{stat}^{\mathrm{tab,hp}}_{k,K}(\delta)
+A_{\eta,k}+\eps_k\Dbar\right].
\label{eq:fully_discharged_hp}
\end{equation}
\end{corollary}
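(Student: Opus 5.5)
The proof has three parts, taken in order: the exact risk identity~\eqref{eq:tabular_rate}, the deterministic envelopes and the residual discharge feeding Theorem~\ref{thm:hmpdrl_end_to_end}, and the high-probability form.

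\textbf{Exact risk.} Condition on $\mathcal H_k$ and on the visit vector $(N_{k,s})_s$. The coordinatewise ERM at a visited $s$ is the average of $N_{k,s}$ conditionally i.i.d.\ labels with mean $g_{k,s}$ and variance $\sigma^2_{k,s}$. Its conditional squared error is therefore $\sigma^2_{k,s}/N_{k,s}$. At an unvisited state the ERM outputs zero, so the error is $g_{k,s}^2$, and $\Pr(N_{k,s}=0)=(1-p_{k,s})^{n_k}$. Weighting by $p_{k,s}$ and summing gives~\eqref{eq:tabular_rate}. Clipping is harmless: label averages lie in the level band because $|Y|\le\Vmax^{(h)}$ on slice $h$ by Lemma~\ref{lem:clip_projection}. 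Bellman closure holds because $\mathcal V^{\mathrm{clip}}$ is $\Tcal$-invariant.

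\textbf{Deterministic envelope.} Bound $\sigma^2,g^2\le\Vmax^2$. Use the standard binomial estimate $\E[\mathbf 1\{N>0\}/N]\le 2/((n+1)p)$ for $N\sim\mathrm{Bin}(n,p)$. For the zero-visit term, use $p(1-p)^n\le \frac{1}{n+1}(n/(n+1))^n$. Each state then contributes at most $\Vmax^2\{2+(n/(n+1))^n\}/(n+1)$, and summing over the $N$ states gives the bound on the squared risk. Jensen, $\E\|\cdot\|\le(\E\|\cdot\|^2)^{1/2}$, yields $\mathrm{stat}^{\mathrm{tab}}_k$. The numeric forms follow from $(n/(n+1))^n\le1/2$ for $n\ge1$ and $\le4/9$ for $n\ge2$.

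\textbf{Residual discharge.} Under the stated choices, fresh outcomes give $\eps_{\mathrm{ker},k}=\eps_{\mathrm{tgt},k}=0$, as noted after~\eqref{eq:population_target}. Equal collection and replay laws give $\epsbuf=0$. With $O=S$, Proposition~\ref{prop:representation_routes}(i) gives zero aliasing. With $W_k=V_k$, $\dnet=0$ and so $\Lambda_k=\etascorek$.

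Take $\eps_{\mathrm{fit},k}=\mathrm{stat}^{\mathrm{tab}}_k$. The regression response is $G_k$, and the composition telescope of Lemma~\ref{lem:composition} starts at $G_k$, so the remaining policy-image defect is carried by the action and exploration links. Theorem~\ref{thm:act_upper} at $r=p$ supplies $A_{\eta,k}$, with or without the margin.

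Now invoke Theorem~\ref{thm:hmpdrl_end_to_end}. For $K\ge H-1$ we have $\mathcal B_K=0$ and $\phi^{(H)}_{s,K}=\phi^{(H)}_s$, which gives~\eqref{eq:fully_discharged}. The limit claim is immediate from the envelopes.

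\textbf{High-probability form.} Apply~\eqref{eq:erm_oracle_hp} of Lemma~\ref{lem:erm_oracle} with $\mathcal C=\mathcal V^{\mathrm{clip}}$, $g=g_k\in\mathcal C$, $B=\Vmax$, and $\delta=\Vmax/n_k$. On the finite set $\Stildeo$ the covering bound is $\log N_\delta\le N\log(2\Vmax/\delta+1)=N\log(2n_k+1)$. Use confidence $\delta/(H-1)$ conditionally on each $\mathcal H_k$ in the active window. A union bound over the at most $H-1$ active blocks holds despite adaptivity, since each conditional probability bound integrates to an unconditional one. The $C_1B\delta$ term contributes $C_1\Vmax^2/n_k$, which accounts for the ``$2$'' inside~\eqref{eq:tabular_rate_hp}.

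On this event, every active $\|e_k\|_{p}\le\|e_k\|_2$ is bounded by the fit term plus the deterministic links. Apply the pathwise bound~\eqref{eq:varying_propagation} directly, without expectation, to get~\eqref{eq:fully_discharged_hp}.

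\textbf{Main obstacle.} The delicate step is the binomial inequality constants giving exactly $\{2+(n/(n+1))^n\}$. I would prove $\E[\mathbf 1\{N>0\}/N]\le\E[2/(N+1)]=2(1-(1-p)^{n+1})/((n+1)p)$, and handle the unvisited term through $\max_p p(1-p)^n$.
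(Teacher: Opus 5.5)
Your proposal follows the paper's proof essentially step for step: the same risk identity obtained by conditioning on the visit counts, the same $1/r\le 2/(r+1)$ and $\max_p p(1-p)^n$ binomial bounds, the same residual discharge into Theorem~\ref{thm:hmpdrl_end_to_end}, and the same conditional application of \eqref{eq:erm_oracle_hp} with a union bound over the $H-1$ active blocks. One small correction is needed in the high-probability step. Run the pathwise triangle chain in $L^p(\rho_{k,S})$ and apply $\|\cdot\|_{p}\le\|\cdot\|_{2}$ only to the fit link. Bounding $\|e_k\|_2$ as you wrote would force the $L^2$ action envelope $\epsact$, which under the margin is weaker than $A_{\eta,k}=\epsactp$ when $p<2$.
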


\noindent\emph{Proof:} See Appendix~\ref{app:proofs}.

\section{Deployment and consistency}\label{sec:discussion}

The main bound evaluates a true-$Q$ greedy policy, whereas the deployed
controller uses the implemented final score.  Theorem~\ref{thm:deployed_transfer}
bounds the resulting difference, and Corollary~\ref{cor:deployed_convergence}
gives consistency when the residuals decay.

\subsection{Implemented-score controller}\label{subsec:deployment}

\begin{theorem}[Survival-aware transfer to the implemented-score controller\obj{\Rabs}]\label{thm:deployed_transfer}
Assume the hypotheses of Theorem~\ref{thm:hmpdrl_end_to_end}, let
$\widehat\pi_K=\widehat a^\star(\cdot;V_K)$ be the deployed policy from~\eqref{eq:greedy_action}, stationary on the clock-augmented state (and generally
nonstationary on the physical state alone), and let $\etascoreK$ be a deterministic upper bound on
the score error at the final frozen network,
$\|\widehat Q_{V_K}-Q_{V_K}\|_\infty\le\etascoreK$ almost surely.  Suppose
also that deterministic $\bar q_\ell\in[0,1]$ satisfy, almost surely,
\begin{equation}
\mu_S^\circ(\Pkernel^{\widehat\pi_K}_\circ)^\ell\mathbf1
\le\bar q_\ell,\qquad 0\le\ell<H;
\label{eq:deployment_survival}
\end{equation}
the universal choice is $\bar q_\ell=1$.  Then
\begin{equation}
\E\!\left[\|\Vstar-V^{\widehat\pi_K}\|_{1,\mu_S}\right]
\le \mathcal B_K
+\sum_{k=0}^{K-1}w^{(H)}_{K,k}e_{k,p}^{\mathrm{Bell}}
+2\etascoreK\sum_{\ell=0}^{H-1}\geff^\ell\bar q_\ell,
\label{eq:deployed_transfer}
\end{equation}
that is, decomposition~\eqref{eq:hmpdrl_end_to_end} holds for the deployed
controller with a single survival-weighted score term.  It is never worse than
$2\etascoreK\min\{H,(1-\geff)^{-1}\}$, and no margin assumption is used.
\end{theorem}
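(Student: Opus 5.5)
The plan is to split the deployed loss as
\[
\Vstar-V^{\widehat\pi_K}=(\Vstar-V^{\pi_K})+(V^{\pi_K}-V^{\widehat\pi_K}),
\]
where $\pi_K$ is the true-$Q_{V_K}$ greedy policy of Theorem~\ref{thm:hmpdrl_end_to_end}. The first term integrated against $\mu_S$ is already bounded in expectation by $\mathcal B_K+\sum_k w^{(H)}_{K,k}e^{\mathrm{Bell}}_{k,p}$. It therefore suffices to show, pathwise,
\[
\int (V^{\pi_K}-V^{\widehat\pi_K})\,d\mu_S\le 2\etascoreK\sum_{\ell<H}\geff^\ell\bar q_\ell ,
\]
and then take expectations. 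The $\bar q_\ell$ are deterministic, so no interchange issue arises. Note that $V^{\pi_K}-V^{\widehat\pi_K}$ can be negative; we only need an upper bound on it, and adding the two inequalities is legitimate since $\Vstar-V^{\widehat\pi_K}\ge0$ makes the $L^1$ norm equal to the integral, as in \eqref{eq:policy_loss_regret}.

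For the second term, I would use the policy-evaluation identity in the deployed policy's kernel. Write $V^{\pi_K}-V^{\widehat\pi_K}=\Tcal^{\pi_K}V^{\pi_K}-\Tcal^{\widehat\pi_K}V^{\widehat\pi_K}$ and insert $\Tcal^{\widehat\pi_K}V^{\pi_K}$. This gives
\[
V^{\pi_K}-V^{\widehat\pi_K}=(\Tcal^{\pi_K}V^{\pi_K}-\Tcal^{\widehat\pi_K}V^{\pi_K})+\geff\Pkernel^{\widehat\pi_K}_\circ(V^{\pi_K}-V^{\widehat\pi_K}).
\]
By clock nilpotence (Assumption~\ref{ass:markov_state}) this unrolls to a finite sum $\sum_{\ell<H}\geff^\ell(\Pkernel^{\widehat\pi_K}_\circ)^\ell g$ with $g:=Q_{V^{\pi_K}}(\cdot,\pi_K)-Q_{V^{\pi_K}}(\cdot,\widehat\pi_K)$. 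The difficulty is that $g$ is evaluated at $V^{\pi_K}$ rather than $V_K$, so Lemma~\ref{lem:act_identity}(iii) does not directly bound it.

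This mismatch is the main obstacle, and I would handle it by a different decomposition that keeps everything at $V_K$. Write
\[
\Vstar-V^{\widehat\pi_K}=(\Vstar-\Tcal V_K)+(\Tcal V_K-\Tcal^{\widehat\pi_K}V_K)+\geff\Pkernel^{\widehat\pi_K}_\circ(V_K-V^{\widehat\pi_K}).
\]
The middle term is exactly $\subopt^{q}_K$ with $q=\widehat Q_{V_K}$, so Lemma~\ref{lem:act_identity}(iii) bounds it by $2\etascoreK$. The first and third terms are the ones the proof of Lemma~\ref{lem:varying_propagation} already controls when run with an arbitrary final policy. In that appendix argument the policy $\pi_K$ enters only through a one-step greedy inequality, $\Tcal^{\pi_K}V_K=\Tcal V_K$, together with the resolvent $(I-\geff\Pkernel^{\pi_K}_\circ)^{-1}$. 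Replacing that equality by $\Tcal^{\widehat\pi_K}V_K\ge\Tcal V_K-2\etascoreK$ leaves every other step unchanged and adds exactly the resolvent of $\widehat\pi_K$ applied to the constant $2\etascoreK$.

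Concretely, I would redo the unrolling with the ``loss branch'' resolvent built from $\Pkernel^{\widehat\pi_K}_\circ$. This branch is admissible because (C-strong) holds for every Markov policy sequence, so the weights $w^{(H)}_{K,k}$ and the boundary $\mathcal B_K$ are unchanged. The extra term is $\mu_S^\circ\sum_{\ell<H}\geff^\ell(\Pkernel^{\widehat\pi_K}_\circ)^\ell(2\etascoreK\mathbf1)$, which \eqref{eq:deployment_survival} bounds by $2\etascoreK\sum_{\ell}\geff^\ell\bar q_\ell$. The final claim then follows from $\bar q_\ell\le1$: the sum is at most $\min\{H,\sum_\ell\geff^\ell\}\le\min\{H,(1-\geff)^{-1}\}$. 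I expect the only delicate point to be checking that the appendix resolvent argument uses the final policy solely through that one greedy step, so that the weights are genuinely unchanged.
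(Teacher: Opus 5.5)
Your final route, after you correctly drop the split through $V^{\pi_K}$, is the paper's proof. You rerun the loss-resolvent step of Lemma~\ref{lem:varying_propagation} with $\widehat\pi_K$ in place of $\pi_K$, and charge the single extra defect $\Tcal V_K-\Tcal^{\widehat\pi_K}V_K\in[0,2\etascoreK]$ from Lemma~\ref{lem:act_identity}(iii) through the $\widehat\pi_K$-resolvent and the survival masses. The weights $w^{(H)}_{K,k}$ and the boundary $\mathcal B_K$ stay the same because (C-strong) covers every Markov policy sequence. The one point you should state explicitly is that $\widehat\pi_K$ is a measurable deterministic Markov policy, by the fixed Borel tie rule (Lemma~\ref{lem:measurable_framework}(i)); this is what allows (C-strong) to be applied to the new branches.
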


\begin{proof}
Let $d_K:=\Tcal V_K-\Tcal^{\widehat\pi_K}V_K$.  The fixed Borel rule makes
$\widehat\pi_K$ measurable, deterministic and stationary, and
Lemma~\ref{lem:act_identity}(iii) gives pointwise
\[
0\le d_K(\stilde)=Q_{V_K}(\stilde,a^\star(\stilde;V_K))
-Q_{V_K}(\stilde,\widehat\pi_K(\stilde))\le2\etascoreK .
\]
Repeating Lemma~\ref{lem:varying_propagation} changes only its nonnegative
Singh--Yee loss-resolvent step~\cite{singh1994upper}, adding after integration
\[
\sum_{\ell=0}^{H-1}\geff^\ell
\int d_K\,d\{\mu_S^\circ
(\Pkernel^{\widehat\pi_K}_\circ)^\ell\}
\le2\etascoreK\sum_{\ell=0}^{H-1}\geff^\ell\bar q_\ell.
\]
Clock nilpotence gives the finite sum.  All other residual branches and the
coverage step are unchanged because $\widehat\pi_K$ is a measurable
deterministic policy, proving~\eqref{eq:deployed_transfer}.
\end{proof}

\begin{corollary}[Final-law coverage and margin for deployment\obj{\Rabs}]
\label{cor:deployed_transfer_margin}
In the setting of Theorem~\ref{thm:deployed_transfer}, set $\Gamma_K=0$ when
$\etascoreK=0$, in which case no gap condition is needed.  When
$\etascoreK>0$, suppose the final true gap either satisfies the global
frozen-iterate margin~\eqref{eq:margin} under $\rho_{K,S}$ almost surely (or
its local form at $2\etascoreK$), or satisfies the fixed-$Q^\star$ transfer
conditions of Proposition~\ref{prop:reference_margin_transfer} at $k=K$ and
$u=2\etascoreK$.  Set $\Gamma_K$ by the corresponding case:
\begin{equation}
\Gamma_K:=
\begin{cases}
C_{\mathrm{marg}}^{1/p}(2\etascoreK)^{1+\alpha/p},
&\text{under the frozen-iterate margin},\\
2\etascoreK\bigl\{\tau_\star+C_\star
(2\etascoreK+2\delta_K^\star)^{\alpha_\star}\bigr\}^{1/p},
&\text{under the fixed-$Q^\star$ transfer}.
\end{cases}
\label{eq:final_gap_envelope}
\end{equation}
Assume also that, for the same
$s\in[2,\infty]$ and $p=s/(s-1)$ as in
Assumption~\ref{ass:concentrability}, deterministic envelopes
$d_s^{\mathrm{fin}}(\ell)<\infty$ satisfy
\begin{equation}
\sup_{\pi_{1:\ell}}
\left\|\frac{d\{\mu_S^\circ\Pkernel^{\pi_1}_\circ\cdots
\Pkernel^{\pi_\ell}_\circ\}}{d\rho_{K,S}}\right\|_{s,\rho_{K,S}}
\le d_s^{\mathrm{fin}}(\ell),\qquad 0\le\ell<H,
\label{eq:final_concentrability}
\end{equation}
almost surely, with the empty product at $\ell=0$.  Then
\begin{equation}
\E\!\left[\|\Vstar-V^{\widehat\pi_K}\|_{1,\mu_S}\right]
\le \mathcal B_K
+\sum_{k=0}^{K-1}w^{(H)}_{K,k}e_{k,p}^{\mathrm{Bell}}
+\Gamma_K\sum_{\ell=0}^{H-1}\geff^\ell d_s^{\mathrm{fin}}(\ell).
\label{eq:deployed_transfer_margin}
\end{equation}
If both~\eqref{eq:deployment_survival} and~\eqref{eq:final_concentrability}
hold, the final term can be sharpened depthwise to
\begin{equation}
\sum_{\ell=0}^{H-1}\geff^\ell\min\!\left\{
2\etascoreK\bar q_\ell,
\Gamma_Kd_s^{\mathrm{fin}}(\ell)\right\}.
\label{eq:deployed_transfer_combined}
\end{equation}
\end{corollary}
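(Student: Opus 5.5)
The plan is to reuse the proof of Theorem~\ref{thm:deployed_transfer} unchanged up to the point where the extra deployment term appears, and then bound that term with Hölder under $\rho_{K,S}$ instead of the survival bound. Put $d_K:=\Tcal V_K-\Tcal^{\widehat\pi_K}V_K$ as there. Lemma~\ref{lem:act_identity}(iii), applied with $q=\widehat Q_{V_K}$ and $\Lambda=\etascoreK$, gives pointwise
\[
0\le d_K\le2\etascoreK,\qquad \{d_K>0\}\subseteq\{\Delta_Q^{(K)}\le2\etascoreK\}.
\]
The loss-resolvent step then contributes exactly
\[
\sum_{\ell=0}^{H-1}\geff^\ell\int d_K\,d\{\mu_S^\circ(\Pkernel^{\widehat\pi_K}_\circ)^\ell\},
\]
and all other branches reproduce $\mathcal B_K+\sum_kw^{(H)}_{K,k}e^{\mathrm{Bell}}_{k,p}$ after expectation. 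So it suffices to bound each depth-$\ell$ integral pathwise by $\geff^\ell\Gamma_Kd_s^{\mathrm{fin}}(\ell)$, and then take expectations; this is legitimate because $\Gamma_K$ and $d_s^{\mathrm{fin}}$ are deterministic.

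For fixed $\ell$, the measure $\mu_S^\circ(\Pkernel^{\widehat\pi_K}_\circ)^\ell$ is a product of Markov-policy kernels with all policies equal to the measurable deterministic $\widehat\pi_K$. It is therefore covered by the supremum in~\eqref{eq:final_concentrability}, and is absolutely continuous with respect to $\rho_{K,S}$ with density $f_\ell$ satisfying $\|f_\ell\|_{s,\rho_{K,S}}\le d_s^{\mathrm{fin}}(\ell)$. Hölder with conjugate exponents $(s,p)$ gives
\[
\int d_K f_\ell\,d\rho_{K,S}\le d_s^{\mathrm{fin}}(\ell)\,\|d_K\|_{p,\rho_{K,S}}.
\]
From the pointwise facts, $\|d_K\|_{p,\rho_{K,S}}^p\le(2\etascoreK)^p\rho_{K,S}\{\Delta_Q^{(K)}\le2\etascoreK\}$. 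This is now bounded case by case.

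Under the frozen-iterate margin at scale $2\etascoreK$, the probability is at most $C_{\mathrm{marg}}(2\etascoreK)^\alpha$, which yields $\Gamma_K$ in its first form exactly as in the proof of Theorem~\ref{thm:act_upper}. Under the fixed-$Q^\star$ route, Proposition~\ref{prop:reference_margin_transfer} at $k=K$ and $u=2\etascoreK$ bounds the probability by $\tau_\star+C_\star(2\etascoreK+2\delta_K^\star)^{\alpha_\star}$, giving the second form. When $\etascoreK=0$ we have $d_K\equiv0$, so $\Gamma_K=0$ is valid with no gap hypothesis.

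For the combined bound~\eqref{eq:deployed_transfer_combined}, the same depth-$\ell$ integral admits two pathwise bounds. One is $2\etascoreK\bar q_\ell$, from $d_K\le2\etascoreK$ and the survival mass~\eqref{eq:deployment_survival}. The other is the Hölder bound above. I take the minimum at each $\ell$ before summing, and this is still deterministic.

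The main obstacle is confirming that the deployment term in the proof of Theorem~\ref{thm:deployed_transfer} really is an integral of $d_K$ against the occupancy $\mu_S^\circ(\Pkernel^{\widehat\pi_K}_\circ)^\ell$, and not a two-branch absolute quantity. If it were the latter, Hölder would need the density of a mixed policy sequence. That density is still covered by the supremum over arbitrary sequences in~\eqref{eq:final_concentrability}, so the argument survives either way. A smaller point is the convention at $s=\infty$, $p=1$, where Hölder degenerates to an $L^1$–$L^\infty$ pairing; there $C_{\mathrm{marg}}^{1/p}=C_{\mathrm{marg}}$ and the formula is consistent.
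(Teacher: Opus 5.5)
Your proposal is correct and follows the paper's proof essentially step for step. You use the single defect $d_K=\Tcal V_K-\Tcal^{\widehat\pi_K}V_K$ with $0\le d_K\le2\etascoreK$ supported on $\{\Delta_Q^{(K)}\le2\etascoreK\}$, derive $\|d_K\|_{p,\rho_{K,S}}\le\Gamma_K$ from either gap route, apply H\"older against~\eqref{eq:final_concentrability} at each resolvent depth, and take the depthwise minimum with the survival bound. Your worry is resolved as you expected: the paper notes that the resolvent inserts only $d_K$ integrated against $\mu_S^\circ(\Pkernel^{\widehat\pi_K}_\circ)^\ell$, so there is no two-branch term and no extra factor of two.
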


\begin{proof}
Condition on the final history and put
$d_K:=\Tcal V_K-\Tcal^{\widehat\pi_K}V_K$.  The score comparison gives
$0\le d_K\le2\etascoreK$ and
$\{d_K>0\}\subseteq\{\Delta_Q^{(K)}\le2\etascoreK\}$.  Hence
$\|d_K\|_{p,\rho_{K,S}}\le\Gamma_K$: use~\eqref{eq:margin} on the
support event for the first route and
\eqref{eq:optimal_to_iterate_margin} for the second.
The exact extra resolvent contribution is
$\sum_{\ell<H}\geff^\ell\int d_K\,
d\{\mu_S^\circ(\Pkernel^{\widehat\pi_K}_\circ)^\ell\}$.
H\"older and~\eqref{eq:final_concentrability} prove
\eqref{eq:deployed_transfer_margin}.  At each depth the same integral is also
at most $2\etascoreK\bar q_\ell$; taking the smaller bound before summing gives
\eqref{eq:deployed_transfer_combined}.  There is no additional factor two:
the resolvent inserts the single defect $d_K$.
\end{proof}

\subsection{Consistency of the abstract recursion}\label{subsec:convergence}
\leavevmode\par

\begin{theorem}[Nonasymptotic generative-reset policy-loss bound under uniform closure\obj{\Aiid}]\label{thm:nonasymptotic}
Assume the hypotheses of Theorem~\ref{thm:hmpdrl_end_to_end} and
Assumption~\ref{ass:relu}, specializing
Assumption~\ref{ass:concentrability} to $s=2$ (and hence $p=2$), with
$H\ge2$.  In every block, additionally use the fresh-block
protocol~\eqref{eq:generative_variant}: conditional on $\mathcal H_k$, draw $n_k\ge2H$
i.i.d.\ true-kernel outcomes with the fixed state law
$\rho_{k,S}\equiv\varsigma$ and action law
$\beta_k$, rebuild $Y_i=r_i+\geff(1-d_i)V_k(\stilde'_i)$, and take a measurable
$\zeta_{n_k}$-approximate ERM $V_{k+1}\in\mathcal F^{V,0}_{n_k}$ with
$V_k\in\mathcal F^V_{n_k}$.  These labels are conditionally unbiased for
$G_k$, and
$\eps_{\mathrm{ker},k}=\eps_{\mathrm{tgt},k}=0$.
Write $e_{k,2}^{\mathrm{Bell}}:=e_{k,p}^{\mathrm{Bell}}|_{p=2}$ and
$\mathrm{stat}_k:=C_{\mathrm{Vreg},H}(\log
n_k)^{\frac{1+2\xi^\star}{2}}m_{n_k}^{\frac{\alpha^\star-1}{2}}+\sqrt{2\zeta_{n_k}}$,
where $m_{n_k}=\lfloor n_k/H\rfloor$.
Then the fit envelope in~\eqref{eq:fit_residual} may be chosen from
Propositions~\ref{prop:vreg_rate} and~\ref{prop:l2_routing}, so for every $k$
the Bellman-residual envelope may be chosen to satisfy
\begin{equation}
e_{k,2}^{\mathrm{Bell}}\;\le\;\mathrm{stat}_k
+(1+\sqrt2)\bigl(\epsact+\epsbuf+\eps_k\Dbar\bigr),
\label{eq:rk_l2}
\end{equation}
and hence, for every finite $K\ge1$,
\begin{equation}
\E\|\Vstar-V^{\pi_K}\|_{1,\mu_S}
\;\le\;\mathcal B_K
+2\phi_{2,K}^{(H)}\!\!\max_{\max\{0,K-H+1\}\le k<K}\!\! e_{k,2,\mathrm{rhs}}^{\mathrm{Bell}},
\label{eq:nonasymptotic}
\end{equation}
$e_{k,2,\mathrm{rhs}}^{\mathrm{Bell}}$ denoting the right-hand side of~\eqref{eq:rk_l2}.
Under the global frozen-iterate $(C_{\mathrm{marg}},\alpha)$ margin, or its
local form with $2\Lambda_k\in[u_0,\bar u]$, $\epsact$ may be replaced by
$(1-\eps_k)\min\{2\Lambda_k,
\sqrt{C_{\mathrm{marg}}}(2\Lambda_k)^{1+\alpha/2}\}$, so the exponent reaches the propagated residual
bound at the cost of the constant $1+\sqrt2$.  The fixed-$Q^\star$ route uses
the $r=2$ envelope in~\eqref{eq:act_upper_optimal_margin} and preserves
the same propagated exponent when $\tau_\star=0$ and
$\delta_k^\star=O(\Lambda_k)$ on the active window.
\end{theorem}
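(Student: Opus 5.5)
The plan is to assemble \eqref{eq:rk_l2} from the fit bound of Proposition~\ref{prop:vreg_rate} and the defect bound of Proposition~\ref{prop:l2_routing}. The polished bound \eqref{eq:nonasymptotic} then follows from Theorem~\ref{thm:hmpdrl_end_to_end} at $s=p=2$. Under the fresh-block protocol, Assumption~\ref{ass:generative} gives $\rho_{k,S}=\varsigma$ and $\beta^{\mathrm{rep}}_k=\beta_k$. Each outcome is a fresh true-kernel draw made after $\mathcal H_k$ and the (constant) label metadata are fixed, so \eqref{eq:conditional_kernel_retention} holds and $\eps_{\mathrm{ker},k}=0$. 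The label is the current sample-time observed-successor label, so the $(\textsc{sample},\textsc{obs})$ cell of Lemma~\ref{lem:tgt_residual} gives $\eps_{\mathrm{tgt},k}=0$. By \eqref{eq:population_target}, $G_k=\Tcal^{\beta_k}V_k$, and the labels are conditionally unbiased for $G_k$ with $|Y_i|\le\Vmax$ by Lemma~\ref{lem:clip_projection}.

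For the fit link, I condition on $\mathcal H_k$ with $V_k\in\mathcal F^V_{n_k}$ and apply Proposition~\ref{prop:vreg_rate} with $n=n_k\ge2H$. This gives
\[
\E[\|V_{k+1}-G_k\|_{2,\rho_{k,S}}\mid\mathcal H_k]\le\mathrm{stat}_k+\sqrt2\,D^{(2)}_k .
\]
Proposition~\ref{prop:l2_routing} with zero kernel and target links then bounds $D^{(2)}_k\le\epsbuf+\epsact+\eps_k\Dbar$. Since the fit envelope must be deterministic, I take $\eps_{\mathrm{fit},k}:=\mathrm{stat}_k+\sqrt2(\epsbuf+\epsact+\eps_k\Dbar)$. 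This is deterministic because $C_{\mathrm{Vreg},H}$ is uniform in the realized iterate and $k$, and the other summands are deterministic envelopes.

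Adding the remaining links of \eqref{eq:rk_def} at $p=2$ (that is, $\epsbuf+\epsact+\eps_k\Dbar$ with zero kernel and target terms) produces the factor $1+\sqrt2$ in \eqref{eq:rk_l2}.

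Theorem~\ref{thm:hmpdrl_end_to_end} at $p=2$ then yields \eqref{eq:nonasymptotic}. Its second inequality uses the active-window maximum and $\sum_kw^{(H)}_{K,k}=2\phi^{(H)}_{2,K}$; the weights are nonnegative, so replacing $e^{\mathrm{Bell}}_{k,2}$ by its upper bound $e^{\mathrm{Bell}}_{k,2,\mathrm{rhs}}$ is monotone.

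For the margin refinements, I substitute the $r=2$ envelope of Theorem~\ref{thm:act_upper}. This is \eqref{eq:act_upper} or \eqref{eq:act_upper_optimal_margin}, applied to $\beta_k$ with $q_k=\widehat Q_{W_k}$ through Corollary~\ref{cor:act_discharge}. The substitution enters both occurrences of $\epsact$: the direct action link and the one inside $D^{(2)}_k$.

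The main obstacle is a measurability and uniformity point, not an inequality. The fit envelope must be a deterministic bound on a conditional mean, yet $D^{(2)}_k$ is random. This is why I route through the deterministic envelopes of Proposition~\ref{prop:l2_routing} rather than through $D^{(2)}_k$ itself. I also need the uniformity clause of Assumption~\ref{ass:relu}, so that $C_{\mathrm{Vreg},H}$ does not depend on the adaptively chosen $V_k$.
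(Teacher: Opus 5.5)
Your proposal is correct and follows essentially the same route as the paper: substitute Proposition~\ref{prop:l2_routing} (with zero kernel and target links) into Proposition~\ref{prop:vreg_rate} to get the fit envelope $\mathrm{stat}_k+\sqrt2(\epsbuf+\epsact+\eps_k\Dbar)$, add the remaining links to produce the factor $1+\sqrt2$, and conclude with Theorem~\ref{thm:hmpdrl_end_to_end} at $p=2$ via the active-window weights. Your added remarks, that the fit envelope must be deterministic and that $C_{\mathrm{Vreg},H}$ must be uniform in the realized iterate, make explicit what the paper's short proof leaves implicit.
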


\noindent\emph{Proof:} See Appendix~\ref{app:proofs}.

\begin{corollary}[Expected policy-loss consistency with an exact-score oracle\obj{\Aiid}]\label{cor:fresh_consistency}
Assume Theorem~\ref{thm:nonasymptotic}, fixed $H$, and
$\phi^{(H)}_{\mu_S,\rho}<\infty$.  Initialize
$V_0\in\mathcal F^V_{n_0}$; use a nondecreasing integer schedule
$n_k\ge2H$ with $n_k\to\infty$ and $\zeta_{n_k}\to0$; set $W_k=V_k$; use the
additional exact expectation oracle $\Qoracle_V=Q_V$ on $\mathcal F$ with
the common tie rule; draw fresh
$(\textsc{sample},\textsc{obs})$ blocks; and let $\eps_k\to0$.  Then, for
every $K\ge H-1$,
\begin{equation}
\E\|\Vstar-V^{\pi_K}\|_{1,\mu_S}
\le2\phi^{(H)}_{\mu_S,\rho}
\max_{K-H<k<K}\left[
\mathrm{stat}_k+2(1+\sqrt2)\Vmax\eps_k\right]
\longrightarrow0.
\label{eq:fresh_consistency}
\end{equation}
If the same oracle is used at deployment, its final greedy policy equals
$\pi_K$, so the same conclusion holds for that deployed oracle policy.
\end{corollary}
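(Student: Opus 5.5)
The plan is to specialize Theorem~\ref{thm:nonasymptotic} at $s=p=2$, show that every residual except the statistical term and the exploration term vanishes, and then let the schedule drive what remains to zero.

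\emph{Step 1: the initialization term.} Since $K\ge H-1$, Lemma~\ref{lem:varying_propagation} gives $\mathcal B_K=0$. Also $\phi_{2,K}^{(H)}=\phi_2^{(H)}=\phi^{(H)}_{\mu_S,\rho}$, because $\min\{K,m\}=m$ for every $m\le H-1$. The active window $\max\{0,K-H+1\}\le k<K$ is exactly $K-H<k<K$.

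\emph{Step 2: the residual envelopes.} These come from the choices in the statement.
\begin{itemize}
\item Fresh $(\textsc{sample},\textsc{obs})$ blocks give $\eps_{\mathrm{ker},k}=\eps_{\mathrm{tgt},k}=0$.
\item For $\Aiid$ we have $\beta^{\mathrm{rep}}_k=\beta_k=\pi^{\mathrm{on}}_k$, so $\epsbuf=0$.
\item $W_k=V_k$ gives $\dnet=0$.
\item The exact oracle installed in slot~(S2) gives $\etascorek=0$, hence $\Lambda_k=0$.
\end{itemize}
With $\Lambda_k=0$, Theorem~\ref{thm:act_upper} (inequality~\eqref{eq:act_upper_linear}) gives $\epsact=0$. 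Concretely, the oracle greedy action coincides with $a^{\mathrm{tgt}}_k$ under the common tie rule, so $\pi^{\mathrm{on}}_k=\pi^{\mathrm{tgt}}_k$.

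For exploration we take $\Dbar=2\Vmax$, which is admissible by Remark~\ref{rem:replay_tv} and~\eqref{eq:dispersion_chain}. Substituting into~\eqref{eq:rk_l2} gives
\[
e^{\mathrm{Bell}}_{k,2}\le \mathrm{stat}_k+2(1+\sqrt2)\Vmax\eps_k .
\]
Inserting this into~\eqref{eq:nonasymptotic} yields the displayed inequality~\eqref{eq:fresh_consistency}. One side condition must be checked: Theorem~\ref{thm:nonasymptotic} requires $V_k\in\mathcal F^V_{n_k}$. This follows by induction. The base case is $V_0\in\mathcal F^V_{n_0}$. For the inductive step, $V_{k+1}\in\mathcal F^{V,0}_{n_k}\subseteq\mathcal F^V_{n_k}\subseteq\mathcal F^V_{n_{k+1}}$, using nesting and the fact that $n_k$ is nondecreasing, so $m_{n_k}$ is nondecreasing.

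\emph{Step 3: the limit.} The statistical term satisfies
\[
\mathrm{stat}_k=C_{\mathrm{Vreg},H}(\log n_k)^{(1+2\xi^\star)/2}m_{n_k}^{(\alpha^\star-1)/2}+\sqrt{2\zeta_{n_k}}.
\]
This tends to $0$ as $n_k\to\infty$, because $\alpha^\star<1$ makes the polynomial factor $m_{n_k}^{(\alpha^\star-1)/2}\asymp (n_k/H)^{(\alpha^\star-1)/2}$ beat the polylogarithm, and because $\zeta_{n_k}\to0$. We also have $\eps_k\to0$. The maximum runs over the $H-1$ indices $K-H<k<K$, all of which tend to infinity with $K$, so the maximum tends to zero. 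The prefactor $\phi^{(H)}_{\mu_S,\rho}$ is finite and fixed.

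\emph{Step 4: deployment.} If the deployed score is $\Qoracle_{V_K}=Q_{V_K}$ with the same tie rule, then the deployed greedy selector is literally $\pi_K$, and the conclusion transfers. Equivalently, Theorem~\ref{thm:deployed_transfer} applies with $\etascoreK=0$.

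\emph{Main obstacle.} The delicate point is Step 2's claim that the action residual vanishes exactly. This requires the oracle argmax and the target argmax to use the identical Borel tie rule, so that they agree even at ties. Without that, Lemma~\ref{lem:act_identity}(iii) still gives $\subopt=0$ pointwise when $\Lambda=0$, since any maximizer of $Q_{V_k}$ has zero suboptimality. So the conclusion holds regardless of ties, and I would cite that lemma as the fallback.
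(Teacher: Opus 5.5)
Your proposal is correct and follows the paper's proof essentially step for step. Both arguments use the same ingredients:
\begin{itemize}
\item the nesting induction $V_{k+1}\in\mathcal F^{V,0}_{n_k}\subseteq\mathcal F^V_{n_k}\subseteq\mathcal F^V_{n_{k+1}}$;
\item vanishing of the kernel, target, replay, drift and score links, so that $\Lambda_k=\epsact=0$;
\item the safe envelope $\Dbar=2\Vmax$ substituted into \eqref{eq:nonasymptotic};
\item the limit via $m_{n_k}\to\infty$, $\zeta_{n_k}\to0$ and $\eps_k\to0$;
\item identification of the deployed oracle policy with $\pi_K$ under the common tie rule.
\end{itemize}
Your extra remark is also valid: Lemma~\ref{lem:act_identity}(iii) gives zero suboptimality even without coordinated tie-breaking. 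This is a small robustness addition beyond the paper's argument.
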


\begin{proof}
By nesting, $V_{k+1}\in\mathcal F^{V,0}_{n_k}\subseteq\mathcal F^V_{n_k}
\subseteq\mathcal F^V_{n_{k+1}}$, so the initialization closes the iterate
membership required by Theorem~\ref{thm:nonasymptotic}.  Fresh observed
blocks give $\eps_{\mathrm{ker},k}=\eps_{\mathrm{tgt},k}=\epsbuf=0$, while
synchronization and the exact score give $\Lambda_k=\epsact=0$; take the
safe deterministic exploration envelope $\Dbar=2\Vmax$.  Substitution
in~\eqref{eq:nonasymptotic} proves the displayed bound, whose right-hand side
vanishes because $\alpha^\star<1$, fixed $H$ and $n_k\to\infty$ imply
$m_{n_k}=\lfloor n_k/H\rfloor\to\infty$,
$\zeta_{n_k}\to0$, and $\eps_k\to0$; oracle final scoring and the common tie
rule identify its deployed policy with $\pi_K$.
\end{proof}

\begin{corollary}[Known deterministic-model consistency\obj{\Aiid}]
\label{cor:deterministic_consistency}
Assume all hypotheses of Corollary~\ref{cor:fresh_consistency} except for its
separate exact-score oracle.  Suppose that the joint kernel is deterministic,
$(r,\stilde')=(R(\stilde,a),F(\stilde,a))$, and that
the controller has the exact maps $R,F$.  Then the point score
$\widehat Q_V(\stilde,a)=R(\stilde,a)+\geff V(F(\stilde,a))$ equals
$\Qoracle_V=Q_V$
without a separate expectation oracle, so the same policy-loss consistency
conclusion holds.
\end{corollary}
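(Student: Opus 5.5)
The plan is to reduce the statement to Corollary~\ref{cor:fresh_consistency}. The only hypothesis it drops is the separate exact-score oracle, and the proof should show that the deterministic structure supplies that oracle automatically on the class $\mathcal F$.

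\emph{Step 1: the point score equals the true score.} For a deterministic joint kernel, $\Pjoint(\cdot\mid\stilde,a)$ is the point mass at $(R(\stilde,a),F(\stilde,a))$. Then $\Pkernel(\cdot\mid\stilde,a)=\delta_{F(\stilde,a)}$, so for every $V\in\mathcal V$
\[
Q_V(\stilde,a)=R(\stilde,a)+\geff\int V\,d\delta_{F(\stilde,a)}=R(\stilde,a)+\geff V(F(\stilde,a))=\widehat Q_V(\stilde,a).
\]
This holds pointwise on $\Stilde\times\Aspace$.

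Terminal pairs need a short check. When $F(\stilde,a)=\stilde_{\mathrm{term}}$, both sides use $V(\stilde_{\mathrm{term}})=0$, because every $V\in\mathcal V$ vanishes there. Since the model is exact, it automatically respects the clock and terminal conventions of Assumption~\ref{ass:markov_state}. The identity therefore holds for every $V$, in particular on $\mathcal F$, so $\etascorek=0$ in~\eqref{eq:envelopes} for all $k$. Equivalently, the right-hand side of Proposition~\ref{prop:dispersion} vanishes with $\Phat=F$ and $\widehat R=R$, since $\mathrm{dist}(\stilde',F(\stilde,a))=0$ almost surely.

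\emph{Step 2: identical policies.} Both the implemented greedy rule~\eqref{eq:greedy_action} and the oracle greedy rule maximize the same function under the same fixed Borel tie rule. Lemma~\ref{lem:measurable_framework}(i) then gives the same measurable selector. Consequently the acting snapshot laws $\beta_k$ built from $W_k=V_k$ coincide with those used in Corollary~\ref{cor:fresh_consistency}. The data-generating process~\eqref{eq:generative_variant}, the labels, the ERM iterates, and hence the whole law of the run are identical to the oracle run.

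\emph{Step 3: conclusion.} With these identifications, $\Lambda_k=\geff\dnet+\etascorek=0$ and $\epsact=0$. The rest of the proof of Corollary~\ref{cor:fresh_consistency} applies unchanged and gives~\eqref{eq:fresh_consistency}. The deployed policy $\widehat\pi_K=\widehat a^\star(\cdot;V_K)$ equals $\pi_K$ by Step~2. Equivalently, Theorem~\ref{thm:deployed_transfer} applies with $\etascoreK=0$ and adds no term.

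The only point needing care, which I expect to be the main (mild) obstacle, is making sure the replacement is exact rather than approximate. Step~1 must hold for every function the algorithm ever scores, including acting snapshots outside the clipped band. This is immediate here because the identity holds for all $V$, with no restriction to $\mathcal F$. Measurability of $F$ should also be noted, so that the composition $V\circ F$ is measurable and the score remains jointly measurable; this follows from measurability of the deterministic kernel.
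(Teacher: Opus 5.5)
Your proposal is correct and is essentially the paper's proof: the Dirac kernel turns the Bellman integral into evaluation at $F(\stilde,a)$, so $\etascorek=\etascoreK=0$, the tie-broken selectors coincide, and Corollary~\ref{cor:fresh_consistency} applies verbatim. The only superfluous piece is the ``equivalently'' appeal to Proposition~\ref{prop:dispersion}, which would also require that proposition's common-modulus hypothesis; your direct computation in Step~1 already suffices without it.
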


\begin{proof}
The deterministic kernel makes the Bellman integral equal evaluation at
$F(\stilde,a)$; hence $\etascorek=\etascoreK=0$, and
Corollary~\ref{cor:fresh_consistency} applies verbatim.
\end{proof}

\begin{corollary}[Deployed-policy convergence and upper neighborhood\obj{\Rabs,\Aiid}]\label{cor:deployed_convergence}
Let $\widehat\pi_K$ and $\etascoreK$ be as in
Theorem~\ref{thm:deployed_transfer}, and fix $H$.  \emph{(a)} \obj{\Rabs}
$e_{k,p}^{\mathrm{Bell}}\to0$ implies
$\E\|\Vstar-V^{\pi_K}\|_{1,\mu_S}\to0$; if also
$\etascoreK\to0$, then
$\E\|\Vstar-V^{\widehat\pi_K}\|_{1,\mu_S}\to0$.
\emph{(b)} \obj{\Aiid} Assume Theorem~\ref{thm:nonasymptotic} and either the
global frozen-iterate margin or a local frozen-iterate margin whose interval contains $2\Lambda_k$
for every sufficiently large $k$.  Suppose
$n_k\to\infty$, $\zeta_{n_k}\to0$, $\eps_{\mathrm{buf},k},\eps_k,\dnet\to0$,
while eventually $\etascorek\le\etascore$ and
$\etascoreK\le\etascore$.  Then
\begin{align}
\limsup_{K\to\infty}\;
\E\!\left[\|\Vstar-V^{\pi_K}\|_{1,\mu_S}\right]
&\le R_{\mathrm{sc}},\qquad
R_{\mathrm{sc}}:=2(1+\sqrt2)\,\phi^{(H)}_{\mu_S,\rho}
\sqrt{C_{\mathrm{marg}}}\,(2\etascore)^{1+\alpha/2},
\label{eq:score_neighborhood}\\
\limsup_{K\to\infty}\;
\E\!\left[\|\Vstar-V^{\widehat\pi_K}\|_{1,\mu_S}\right]
&\le R_{\mathrm{sc}}
+2\etascore\sum_{\ell=0}^{H-1}\geff^\ell\bar q_\ell.
\label{eq:deployed_neighborhood}
\end{align}
Without a margin the same two bounds hold with
$R_{\mathrm{sc}}:=4(1+\sqrt2)\phi^{(H)}_{\mu_S,\rho}\etascore$.
In particular, if $\etascorek\to0$ and $\etascoreK\to0$, then both the
true-score and deployed-policy losses converge to zero.
\end{corollary}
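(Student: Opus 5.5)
Part~(a) follows directly from the finite-$K$ bounds already proved. By Theorem~\ref{thm:hmpdrl_end_to_end},
$\E\|\Vstar-V^{\pi_K}\|_{1,\mu_S}\le\mathcal B_K+2\phi_{s,K}^{(H)}\overline e_{K,H,p}^{\mathrm{Bell}}$. For $K\ge H-1$ we have $\mathcal B_K=0$ by Lemma~\ref{lem:varying_propagation}, and $\phi_{s,K}^{(H)}=\phi_s^{(H)}$ is a fixed finite constant for fixed $H$. The window maximum runs over the $H-1$ indices $K-H<k<K$, all of which tend to infinity, so $e_{k,p}^{\mathrm{Bell}}\to0$ forces $\overline e_{K,H,p}^{\mathrm{Bell}}\to0$. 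For the deployed controller I would add the final term of Theorem~\ref{thm:deployed_transfer}. Using the universal choice $\bar q_\ell=1$, this term is at most $2\etascoreK\min\{H,(1-\geff)^{-1}\}$, which tends to zero when $\etascoreK\to0$.

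For part~(b) I would start from~\eqref{eq:nonasymptotic} with $K\ge H-1$, so that $\mathcal B_K=0$ and the multiplier is $2\phi^{(H)}_{\mu_S,\rho}$. I would then take $\limsup$ of the window maximum of the right-hand side of~\eqref{eq:rk_l2}. On the active window every index $k\to\infty$, and the following terms vanish:
\begin{itemize}
\item $\mathrm{stat}_k\to0$, since $n_k\to\infty$ gives $m_{n_k}\to\infty$, $\alpha^\star<1$ makes the logarithmic factor subdominant, and $\zeta_{n_k}\to0$;
\item $\epsbuf\to0$ and $\eps_k\Dbar\to0$, using the safe choice $\Dbar=2\Vmax$.
\end{itemize}
For the action term, $\Lambda_k=\geff\dnet+\etascorek\le\geff\dnet+\etascore$ eventually, so $\limsup\Lambda_k\le\etascore$. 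By the margin clause of Theorem~\ref{thm:nonasymptotic}, which is valid because $2\Lambda_k$ lies in the margin interval for large $k$, we have $\epsact\le(1-\eps_k)\sqrt{C_{\mathrm{marg}}}(2\Lambda_k)^{1+\alpha/2}$. The right-hand side is monotone in $\Lambda_k$, so its $\limsup$ is at most $\sqrt{C_{\mathrm{marg}}}(2\etascore)^{1+\alpha/2}$. Multiplying by $(1+\sqrt2)$ and by $2\phi^{(H)}_{\mu_S,\rho}$ gives $R_{\mathrm{sc}}$ in~\eqref{eq:score_neighborhood}.

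The step needing most care is exactly this interchange of $\limsup$ and the finite window maximum. It is legitimate because the window has fixed length $H-1$ and each term's bound is a continuous nondecreasing function of quantities whose $\limsup$ we control.

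Without a margin I would use the linear envelope $(1-\eps_k)2\Lambda_k$ instead, which has $\limsup$ at most $2\etascore$. This gives $R_{\mathrm{sc}}=2\phi(1+\sqrt2)\cdot2\etascore=4(1+\sqrt2)\phi^{(H)}_{\mu_S,\rho}\etascore$.

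For~\eqref{eq:deployed_neighborhood} I would rerun the same $\limsup$ on~\eqref{eq:deployed_transfer}. The only change is the additional term $2\etascoreK\sum_{\ell<H}\geff^\ell\bar q_\ell$, which is at most $2\etascore\sum_{\ell<H}\geff^\ell\bar q_\ell$ eventually, because $\etascoreK\le\etascore$.

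For the final claim, in both cases $R_{\mathrm{sc}}$ and the survival term are monotone and vanish as $\etascore\to0$. If $\etascorek,\etascoreK\to0$, then for any $\eta>0$ the hypothesis holds eventually with $\etascore=\eta$. Letting $\eta\downarrow0$ then gives convergence to zero of both the true-score and the deployed-policy losses. The constant monotonicity in $\eta$ makes this diagonal step routine.
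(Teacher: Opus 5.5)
Your proposal is correct and follows the paper's proof. Part (a) comes from Theorem~\ref{thm:hmpdrl_end_to_end} with $\mathcal B_K=0$ and the $(H-1)$-block window, plus the survival term of Theorem~\ref{thm:deployed_transfer}; part (b) takes the $\limsup$ of \eqref{eq:rk_l2} with the margin (or linear) $r=2$ action envelope. The only difference is cosmetic and concerns the final assertion: the paper notes that the linear action bound with $\Lambda_k\to0$ already gives $e_{k,2}^{\mathrm{Bell}}\to0$ and then invokes part (a), whereas your $\eta\downarrow0$ argument on the radii reaches the same conclusion.
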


\begin{proof}
In Theorem~\ref{thm:hmpdrl_end_to_end}, the boundary vanishes for $K\ge H-1$
and only the last $H-1$ Bellman residuals have nonzero weights, proving the
first claim in~(a); Theorem~\ref{thm:deployed_transfer} proves the second.
For~(b), Proposition~\ref{prop:l2_routing} and~\eqref{eq:act_upper} at $r=2$
give the
true-score radius $R_{\mathrm{sc}}$ after all statistical and nonaction
residuals vanish; the survival term of Theorem~\ref{thm:deployed_transfer}
gives the deployed radius.  Without a margin, use
\eqref{eq:act_upper_linear} at $r=2$ in the same calculation.
If $\etascorek\to0$, that linear bound and the remaining hypotheses give
$e_{k,2}^{\mathrm{Bell}}\to0$; with $\etascoreK\to0$, part~(a) then yields the
last assertion.
\end{proof}

\subsection{Convergence conditions for implementations}\label{subsec:limitations}

To apply the policy-loss bounds to $\Arun$, we bound its six residuals and
the final score error, as indicated in~\eqref{eq:object_contract}.
For observation-based navigation,
Proposition~\ref{prop:representation_routes} permits either a
Markov-sufficient simulator or belief state with an exposed clock, or a
measurable compression whose visible-target aliasing and score errors are
controlled by the corresponding residuals.

\paragraph{Sufficient conditions for FIFO/interleaved SGD.}
For fixed $H$, the same policy-loss theory applies to a concrete
FIFO/interleaved-SGD implementation when: (i) its state satisfies the
standard-Borel controlled Markov condition or its compression residuals are
controlled; (ii) finite concentrability constants hold uniformly over its
induced replay laws; and (iii)
\begin{equation}
\max_{K-H<k<K}\!\left(
\eps_{\mathrm{fit},k}+\eps_{\mathrm{ker},k}+\eps_{\mathrm{tgt},k}
+\epsbuf+\epsactp+\eps_k\Dbar\right)\longrightarrow0,
\qquad \etascoreK\longrightarrow0.
\label{eq:future_convergence_contract}
\end{equation}
The unconditional linear bound gives action-residual decay when $\dnet\to0$
and $\etascorek\to0$, while the margin condition upgrades this decay to the
sharp exponent.  For stored prediction labels, $\eps_{\mathrm{tgt},k}$ must
also cover the predictor staleness in~\eqref{eq:tgt_predictor_staleness};
current score accuracy and target-network staleness alone do not control it.
Setting $\eps_{\mathrm{ker},k}=0$ for stored labels requires a justification
such as the metadata-conditioned law~\eqref{eq:conditional_kernel_retention}.
Bounds for a particular replay scheme, optimizer, score estimator, or state
representation enter the propagation and deployment theorems through the
corresponding residuals.  Their decay gives convergence through
\eqref{eq:future_convergence_contract}.

\subsection{Related work}\label{subsec:related}

\paragraph{Fitted value iteration.}
Munos and Szepesv\'ari analyze fitted value iteration by estimating every
action's continuation and then maximizing~\cite{munos_szepesvari_2008}.  The
population response in our state-only reset construction instead averages the
executed action and carries a finite clock.  It also differs from the
action-indexed response used in fitted $Q$ regression
\cite{ernst2005tree,riedmiller2005neural}.  Once this operator distinction is
isolated, the propagation analysis follows the $L^p$ approximate-dynamic-
programming tradition~\cite{farahmand2010error,munos2007performance,scherrer2015ampi}.
As in that literature, stability depends on the sampling
and update structure~\cite{tsitsiklis1997analysis,baird1995residual}.

\paragraph{SARSA and Expected SARSA.}
SARSA provides the closest target semantics.  A current observed-successor
state-value target under fresh sampling averages the acting law, whereas Expected
SARSA updates $Q(s,a)$ and averages the next action~\cite{sutton_barto_2018}.
Convergence results for SARSA with function approximation require smooth
improvement, ergodicity, or linear approximation
\cite{perkins2002convergent,melo2008analysis,zou2019sarsa}.  Here the acting
law's same-state departure from its frozen-target greedy comparator remains a
separate residual.

\paragraph{Neural $Q$ and TD analyses.}
Neural fitted $Q$-iteration separates statistical and iterative error
\cite{fan2020theoretical_full}.  Its action-indexed response and the scalar
executed-action response studied here define distinct population operators.
Neural TD treats fixed-policy evaluation~\cite{cai2019neural}, while neural
$Q$-learning and DQN use different heads and sampling or update hypotheses
\cite{xu2020finite,zhang2023epsilon_dqn}.  Our analysis identifies replay,
scalar-target, and policy-drift residuals as the additional quantities for a
deep-$V$ implementation analysis.  DQN, Double $Q$ bias correction, and
residual-gradient TD address related algorithmic objects
\cite{mnih2015humanlevel,vanhasselt2016double,baird1995residual}.

\paragraph{Action gaps.}
Action-gap regularity can convert uniform value or score approximation into
superlinear greedy-policy bounds
\cite{farahmand2011action_gap,bellemare2016actiongap}.  The classical
fixed-optimal positive-gap law controls
$\{0<\Delta_Q^\star\le u\}$ and can retain mass on optimal ties.  The condition
used here controls $\{\Delta_Q^{(k)}\le u\}$, including ties, uniformly over
the random frozen iterates and replay laws.  Proposition
\ref{prop:reference_margin_transfer} connects the two: a fixed-$Q^\star$ margin
and an iterate score tube yield a shifted frozen-gap bound.  When the tie mass
is zero and the tube is on the action-error scale, the exponent is preserved.
The resulting Tsybakov-type bound~\cite{tsybakov2004optimal} has one-step
exponent $1+\alpha/p$ under conjugate $L^s/L^p$ coverage;
Proposition~\ref{prop:act_tight} attains each intermediate exponent.

\paragraph{Replay and dependent data.}
Antos et al.\ require a stationary exponentially mixing path for their
dependent fitted-policy analysis~\cite{antos2008learning}.  Replay-buffer,
$Q$-learning, and TD analyses impose different processes
\cite{dicastro2022replay,szlak2021experience,lim2024tdreplay}.  Their
process-specific techniques provide tools for bounding the named FIFO and SGD
residuals in~\eqref{eq:object_contract}.  In our notation, $\epsbuf$ measures a
same-state operator distance, while concentrability controls the relation
between replay and evaluation occupancy laws~\cite{kakade_langford}.

\section{Conclusion}\label{sec:conclusion}

We established a finite-horizon convergence framework that converts the six
deep-$V$-learning residuals into expected policy loss.  For fixed $H$, only the
last $H-1$ update blocks carry residual weight, together with an initialization
term for shorter runs.
For $K_H\ge H-1$, the near-unit one-state-per-level witness attains the optimal
shared-law propagation-coefficient order $\Theta(H^{5/2})$ at $s=2$;
bounded direct-level coefficients give $O(H^2)$.
A frozen-gap margin changes the action term to exponent
$1+\alpha(1-1/s)$, and the
one-step construction attains that exponent.  The fixed-$Q^\star$ transfer
keeps the tie mass visible.  Survival and final-law arguments extend the
bound to the policy selected by the implemented score.

The matched-budget result gives the unconstrained continuous optimum, the
constrained water-filling solution, and an integer schedule within a factor
$2^\nu$ of the latter.  On the near-unit one-state-per-level witness, root-$n$
rates give a clock term $H^3/\sqrt{\mathsf N}$ before regression constants;
under equal, sufficiently large terminal-window label budgets, direct tabular
reset improves the statistical bound by a factor of order $\sqrt H$ compared
with a shared $H$-state fit.  Bellman--H\"older closure and the
clock-routed ReLU class give the fixed-$H$ neural rate, while the tabular case
gives a log-free expected fit rate.  These results prove expected policy-loss
consistency for the exact-score generative-reset approximate-ERM procedure at
fixed $H$.  For FIFO/interleaved SGD,
\eqref{eq:future_convergence_contract} gives an explicit residual-decay criterion
linking replay, optimization, score-estimation, and representation rates to
full policy-loss convergence.

\bibliographystyle{unsrtnat}
\bibliography{bibliography}

@inproceedings{CADRL,
  author    = {Chen, Yu Fan and Liu, Miao and Everett, Michael and How, Jonathan P.},
  title     = {Decentralized Non-communicating Multiagent Collision Avoidance with Deep Reinforcement Learning},
  booktitle = {Proceedings of the IEEE International Conference on Robotics and Automation},
  pages     = {285--292},
  year      = {2017}
}

@inproceedings{SACADRL,
  author        = {Chen, Yu Fan and Everett, Michael and Liu, Miao and How, Jonathan P.},
  title         = {Socially Aware Motion Planning with Deep Reinforcement Learning},
  booktitle     = {Proceedings of the IEEE/RSJ International Conference on Intelligent Robots and Systems},
  year          = {2017},
  eprint        = {1703.08862},
  archivePrefix = {arXiv},
  primaryClass  = {cs.RO},
  url           = {https://arxiv.org/abs/1703.08862}
}

@inproceedings{SARL,
  author    = {Chen, Changan and Liu, Yuejiang and Kreiss, Sven and Alahi, Alexandre},
  title     = {Crowd-Robot Interaction: Crowd-Aware Robot Navigation with Attention-Based Deep Reinforcement Learning},
  booktitle = {Proceedings of the IEEE International Conference on Robotics and Automation},
  pages     = {6015--6022},
  year      = {2019}
}

@article{EBCADRL,
  title={Robot Navigation with Entity-Based Collision Avoidance using Deep Reinforcement Learning},
  author={Kolomeytsev, Yury and Golembiovsky, Dmitry},
  journal={arXiv preprint arXiv:2408.14183},
  year={2024},
  url={https://arxiv.org/abs/2408.14183}
}

@article{HMPDRL,
  title={Hybrid Motion Planning with Deep Reinforcement Learning for Mobile Robot Navigation},
  author={Kolomeytsev, Yury and Golembiovsky, Dmitry},
  journal={arXiv preprint arXiv:2512.24651},
  year={2025},
  url={https://arxiv.org/abs/2512.24651}
}

@article{antos2008learning,
  author  = {Antos, Andr{\'a}s and Szepesv{\'a}ri, Csaba and Munos, R{\'e}mi},
  title   = {Learning Near-Optimal Policies with Bellman-Residual Minimization Based Fitted Policy Iteration and a Single Sample Path},
  journal = {Machine Learning},
  volume  = {71},
  number  = {1},
  pages   = {89--129},
  year    = {2008}
}

@book{bertsekas1996neuro,
  author    = {Bertsekas, Dimitri P. and Tsitsiklis, John N.},
  title     = {Neuro-Dynamic Programming},
  publisher = {Athena Scientific},
  year      = {1996}
}

@book{bertsekas_shreve_1978,
  author    = {Bertsekas, Dimitri P. and Shreve, Steven E.},
  title     = {Stochastic Optimal Control: The Discrete-Time Case},
  publisher = {Academic Press},
  year      = {1978}
}

@book{hernandez_lerma_lasserre_1996,
  author    = {Hern{\'a}ndez-Lerma, On{\'e}simo and Lasserre, Jean B.},
  title     = {Discrete-Time {M}arkov Control Processes: Basic Optimality Criteria},
  publisher = {Springer},
  year      = {1996}
}

@inproceedings{cai2019neural,
  author    = {Cai, Qi and Yang, Zhuoran and Lee, Jason D. and Wang, Zhaoran},
  title     = {Neural Temporal-Difference Learning Converges to Global Optima},
  booktitle = {Advances in Neural Information Processing Systems},
  volume    = {32},
  year      = {2019}
}

@misc{fan2020theoretical_full,
  author        = {Fan, Jianqing and Wang, Zhaoran and Xie, Yuchen and Yang, Zhuoran},
  title         = {A Theoretical Analysis of Deep {Q}-Learning},
  year          = {2020},
  eprint        = {1901.00137},
  archivePrefix = {arXiv},
  primaryClass  = {cs.LG},
  url           = {https://arxiv.org/abs/1901.00137v3},
  note          = {arXiv:1901.00137v3}
}

@inproceedings{kakade_langford,
  author    = {Kakade, Sham M. and Langford, John},
  title     = {Approximately Optimal Approximate Reinforcement Learning},
  booktitle = {Proceedings of the Nineteenth International Conference on Machine Learning},
  pages     = {267--274},
  year      = {2002}
}

@book{kreyszig1989functional,
  author    = {Kreyszig, Erwin},
  title     = {Introductory Functional Analysis with Applications},
  publisher = {Wiley},
  year      = {1989}
}

@article{mnih2015humanlevel,
  author  = {Mnih, Volodymyr and others},
  title   = {Human-Level Control through Deep Reinforcement Learning},
  journal = {Nature},
  volume  = {518},
  number  = {7540},
  pages   = {529--533},
  year    = {2015},
  doi     = {10.1038/nature14236}
}

@article{munos_szepesvari_2008,
  author  = {Munos, R{\'e}mi and Szepesv{\'a}ri, Csaba},
  title   = {Finite-Time Bounds for Fitted Value Iteration},
  journal = {Journal of Machine Learning Research},
  volume  = {9},
  pages   = {815--857},
  year    = {2008}
}

@inproceedings{riedmiller2005neural,
  author    = {Riedmiller, Martin},
  title     = {Neural Fitted Q Iteration: First Experiences with a Data Efficient Neural Reinforcement Learning Method},
  booktitle = {Proceedings of the European Conference on Machine Learning},
  pages     = {317--328},
  year      = {2005},
  doi       = {10.1007/11564096_32}
}

@article{schmidt_hieber_2020,
  author  = {Schmidt-Hieber, Johannes},
  title   = {Nonparametric Regression Using Deep Neural Networks with ReLU Activation Function},
  journal = {The Annals of Statistics},
  volume  = {48},
  number  = {4},
  pages   = {1875--1897},
  year    = {2020},
  doi     = {10.1214/19-AOS1875}
}

@article{schmidt_hieber_vu_2024,
  author  = {Schmidt-Hieber, Johannes and Vu, Don},
  title   = {Correction to ``Nonparametric Regression Using Deep Neural Networks with ReLU Activation Function''},
  journal = {The Annals of Statistics},
  volume  = {52},
  number  = {1},
  pages   = {413--414},
  year    = {2024},
  doi     = {10.1214/24-AOS2351}
}

@article{singh1994upper,
  author  = {Singh, Satinder P. and Yee, Richard C.},
  title   = {An Upper Bound on the Loss from Approximate Optimal-Value Functions},
  journal = {Machine Learning},
  volume  = {16},
  number  = {3},
  pages   = {227--233},
  year    = {1994}
}

@book{sutton_barto_2018,
  author    = {Sutton, Richard S. and Barto, Andrew G.},
  title     = {Reinforcement Learning: An Introduction},
  edition   = {2},
  publisher = {MIT Press},
  year      = {2018}
}

@article{tsitsiklis1997analysis,
  author  = {Tsitsiklis, John N. and Van Roy, Benjamin},
  title   = {An Analysis of Temporal-Difference Learning with Function Approximation},
  journal = {IEEE Transactions on Automatic Control},
  volume  = {42},
  number  = {5},
  pages   = {674--690},
  year    = {1997},
  doi     = {10.1109/9.580874}
}

@inproceedings{xu2020finite,
  author    = {Xu, Pan and Gu, Quanquan},
  title     = {A Finite-Time Analysis of Q-Learning with Neural Network Function Approximation},
  booktitle = {Proceedings of the 37th International Conference on Machine Learning},
  series    = {Proceedings of Machine Learning Research},
  volume    = {119},
  pages     = {10555--10565},
  publisher = {PMLR},
  year      = {2020},
  url       = {https://proceedings.mlr.press/v119/xu20c.html}
}

@inproceedings{zhang2023epsilon_dqn,
  author    = {Zhang, Shuai and Li, Hongkang and Wang, Meng and Liu, Miao and Chen, Pin-Yu and Lu, Songtao and Liu, Sijia and Murugesan, Keerthiram and Chaudhury, Subhajit},
  title     = {On the Convergence and Sample Complexity Analysis of Deep Q-Networks with $\epsilon$-Greedy Exploration},
  booktitle = {Advances in Neural Information Processing Systems},
  volume    = {36},
  year      = {2023}
}

@inproceedings{dicastro2022replay,
  author    = {Di Castro Shashua, Shirli and Mannor, Shie and Di Castro, Dotan},
  title     = {Analysis of Stochastic Processes through Replay Buffers},
  booktitle = {Proceedings of the 39th International Conference on Machine Learning},
  series    = {Proceedings of Machine Learning Research},
  volume    = {162},
  pages     = {5039--5060},
  publisher = {PMLR},
  year      = {2022}
}

@misc{szlak2021experience,
  author        = {Szlak, Liran and Shamir, Ohad},
  title         = {Convergence Results for Q-Learning with Experience Replay},
  year          = {2021},
  eprint        = {2112.04213},
  archivePrefix = {arXiv},
  primaryClass  = {cs.LG},
  url           = {https://arxiv.org/abs/2112.04213}
}

@article{lim2024tdreplay,
  author  = {Lim, Han-Dong and Lee, Donghwan},
  title   = {Finite-Time Analysis of Temporal Difference Learning with Experience Replay},
  journal = {Transactions on Machine Learning Research},
  year    = {2024},
  url     = {https://openreview.net/forum?id=A5ulGfDBON}
}

@inproceedings{farahmand2010error,
  author    = {Farahmand, Amir-massoud and Munos, R{\'e}mi and Szepesv{\'a}ri, Csaba},
  title     = {Error Propagation for Approximate Policy and Value Iteration},
  booktitle = {Advances in Neural Information Processing Systems},
  volume    = {23},
  pages     = {568--576},
  year      = {2010}
}

@inproceedings{chen2019information,
  author    = {Chen, Jinglin and Jiang, Nan},
  title     = {Information-Theoretic Considerations in Batch Reinforcement Learning},
  booktitle = {Proceedings of the 36th International Conference on Machine Learning},
  series    = {Proceedings of Machine Learning Research},
  volume    = {97},
  pages     = {1042--1051},
  publisher = {PMLR},
  year      = {2019},
  url       = {https://proceedings.mlr.press/v97/chen19e.html}
}

@book{gyorfi2002distribution,
  author    = {Gy{\"o}rfi, L{\'a}szl{\'o} and Kohler, Michael and Krzy{\.z}ak, Adam and Walk, Harro},
  title     = {A Distribution-Free Theory of Nonparametric Regression},
  publisher = {Springer},
  series    = {Springer Series in Statistics},
  year      = {2002}
}

@inproceedings{farahmand2011action_gap,
  author    = {Farahmand, Amir-massoud},
  title     = {Action-Gap Phenomenon in Reinforcement Learning},
  booktitle = {Advances in Neural Information Processing Systems},
  volume    = {24},
  pages     = {172--180},
  year      = {2011}
}

@article{tsybakov2004optimal,
  author  = {Tsybakov, Alexandre B.},
  title   = {Optimal Aggregation of Classifiers in Statistical Learning},
  journal = {The Annals of Statistics},
  volume  = {32},
  number  = {1},
  pages   = {135--166},
  year    = {2004},
  doi     = {10.1214/aos/1079120131}
}

@inproceedings{baird1995residual,
  author    = {Baird, Leemon},
  title     = {Residual Algorithms: Reinforcement Learning with Function Approximation},
  booktitle = {Proceedings of the Twelfth International Conference on Machine Learning},
  pages     = {30--37},
  publisher = {Morgan Kaufmann},
  year      = {1995}
}

@article{mammen1999smooth,
  author  = {Mammen, Enno and Tsybakov, Alexandre B.},
  title   = {Smooth Discrimination Analysis},
  journal = {The Annals of Statistics},
  volume  = {27},
  number  = {6},
  pages   = {1808--1829},
  year    = {1999},
  doi     = {10.1214/aos/1017939240}
}

@inproceedings{melo2008analysis,
  author    = {Melo, Francisco S. and Meyn, Sean P. and Ribeiro, M. Isabel},
  title     = {An Analysis of Reinforcement Learning with Function Approximation},
  booktitle = {Proceedings of the 25th International Conference on Machine Learning},
  pages     = {664--671},
  publisher = {ACM},
  year      = {2008},
  doi       = {10.1145/1390156.1390240}
}

@inproceedings{perkins2002convergent,
  author    = {Perkins, Theodore J. and Precup, Doina},
  title     = {A Convergent Form of Approximate Policy Iteration},
  booktitle = {Advances in Neural Information Processing Systems},
  volume    = {15},
  pages     = {1627--1634},
  year      = {2002}
}

@inproceedings{wang2021statistical,
  author        = {Wang, Ruosong and Foster, Dean P. and Kakade, Sham M.},
  title         = {What Are the Statistical Limits of Offline {RL} with Linear Function Approximation?},
  booktitle     = {International Conference on Learning Representations},
  year          = {2021},
  eprint        = {2010.11895},
  archivePrefix = {arXiv},
  primaryClass  = {cs.LG},
  url           = {https://arxiv.org/abs/2010.11895}
}

@inproceedings{zanette2021exponential,
  author    = {Zanette, Andrea},
  title     = {Exponential Lower Bounds for Batch Reinforcement Learning: Batch {RL} Can Be Exponentially Harder Than Online {RL}},
  booktitle = {Proceedings of the 38th International Conference on Machine Learning},
  series    = {Proceedings of Machine Learning Research},
  volume    = {139},
  pages     = {12287--12297},
  publisher = {PMLR},
  year      = {2021}
}

@inproceedings{zou2019sarsa,
  author    = {Zou, Shaofeng and Xu, Tengyu and Liang, Yingbin},
  title     = {Finite-Sample Analysis for {SARSA} with Linear Function Approximation},
  booktitle = {Advances in Neural Information Processing Systems},
  volume    = {32},
  year      = {2019}
}

@inproceedings{vanhasselt2016double,
  author    = {van Hasselt, Hado and Guez, Arthur and Silver, David},
  title     = {Deep Reinforcement Learning with Double {Q}-Learning},
  booktitle = {Proceedings of the Thirtieth AAAI Conference on Artificial Intelligence},
  pages     = {2094--2100},
  year      = {2016},
  url       = {https://arxiv.org/abs/1509.06461}
}

@article{munos2007performance,
  author  = {Munos, R{\'e}mi},
  title   = {Performance Bounds in $L_p$-Norm for Approximate Value Iteration},
  journal = {SIAM Journal on Control and Optimization},
  volume  = {46},
  number  = {2},
  pages   = {541--561},
  year    = {2007},
  doi     = {10.1137/040614384}
}

@article{scherrer2015ampi,
  author  = {Scherrer, Bruno and Ghavamzadeh, Mohammad and Gabillon, Victor and Lesner, Boris and Geist, Matthieu},
  title   = {Approximate Modified Policy Iteration and its Application to the Game of {T}etris},
  journal = {Journal of Machine Learning Research},
  volume  = {16},
  pages   = {1629--1676},
  year    = {2015},
  url     = {https://jmlr.org/papers/v16/scherrer15a.html}
}

@inproceedings{bellemare2016actiongap,
  author    = {Bellemare, Marc G. and Ostrovski, Georg and Guez, Arthur and Thomas, Philip S. and Munos, R{\'e}mi},
  title     = {Increasing the Action Gap: New Operators for Reinforcement Learning},
  booktitle = {Proceedings of the Thirtieth AAAI Conference on Artificial Intelligence},
  pages     = {1476--1483},
  year      = {2016}
}

@article{ernst2005tree,
  author  = {Ernst, Damien and Geurts, Pierre and Wehenkel, Louis},
  title   = {Tree-Based Batch Mode Reinforcement Learning},
  journal = {Journal of Machine Learning Research},
  volume  = {6},
  pages   = {503--556},
  year    = {2005},
  url     = {https://jmlr.org/papers/v6/ernst05a.html}
}

@inproceedings{xie2021realizability,
  author    = {Xie, Tengyang and Jiang, Nan},
  title     = {Batch Value-Function Approximation with Only Realizability},
  booktitle = {Proceedings of the 38th International Conference on Machine Learning},
  series    = {Proceedings of Machine Learning Research},
  volume    = {139},
  pages     = {11404--11413},
  publisher = {PMLR},
  year      = {2021}
}

\appendix

\section{Deferred proofs}\label{app:proofs}

\paragraph{Structural and target calculations.}
\begin{proof}[Proof of Lemma~\ref{lem:clip_projection}]
At level $h$ the map projects $\R$ onto
\[
[-\Vmax^{(h)},\Vmax^{(h)}].
\]
It is therefore $1$-Lipschitz and idempotent.  If $g$ lies in the interval,
the scalar projection inequality gives
$|\operatorname{clip}_h(x)-g|\le |x-g|$ pointwise; integration proves the
$L^2$ metric-projection assertion, up to the usual a.e.\ identification.
Applying the scalar Lipschitz inequality statewise proves the supremum- and
$L^2$-nonexpansiveness claims.  The score bound follows
from~\eqref{eq:vmax_recursion} and
\[
|Q_V(s,h,a)|\le\Rmax+\geff\Vmax^{(h-1)}=\Vmax^{(h)}.
\]
The same recursion bounds every observed-transition label and, after
maximization over actions, places $\Tcal V$ in the level-wise band.
Nonexpansiveness maps every supremum-norm $\delta$-net of a raw class to a
$\delta$-net of its clipped image, proving the covering-number assertion.
Finally, because every Bellman target $\Tcal V$ lies in the band, the
pointwise projection inequality proves the Bellman-approximation assertion.
\end{proof}

\noindent\emph{The statewise version of the disintegration}
(Assumption~\ref{ass:residuals}).  A disintegration is determined only
$\mathsf M_{k,S}$-a.e., whereas $\epsbuf$ and $\epsactr{r}$ are norms under
$\rho_{k,S}$, so a version must be fixed on all of $\Stildeo$.  Each
round's policy is $\eps_k$-greedy for a score defined at every state, so
\begin{equation}
\begin{aligned}
f_{k,j}&:=\frac{d(\omega_{k,j}\mu_{k,j})}{d\mathsf M_{k,S}},
&w_{k,j}(\stilde)&:=\frac{f_{k,j}(\stilde)}{\sum_i f_{k,i}(\stilde)},\\
\pibar(\cdot\mid\stilde)&:=\sum_j w_{k,j}(\stilde)\,
\pi^{\mathrm{on}}_{k,j}(\cdot\mid\stilde),
\end{aligned}
\label{eq:disintegration_version}
\end{equation}
using the bounded density versions of Lemma~\ref{lem:measurable_framework}
and $w_{k,j}(\stilde):=\omega_{k,j}$ where the denominator vanishes.
This is a statewise Markov kernel agreeing with the disintegration
$\mathsf M_{k,S}$-a.e.; the same convention fixes $\beta^{\mathrm{rep}}_k$.
Without it, a residual norm under a law not dominated by
$\mathsf M_{k,S}$ would depend on values on an $\mathsf M_{k,S}$-null set,
which the disintegration does not determine; this is why
Assumption~\ref{ass:residuals} requires
$\rho_{k,S}\ll\mathsf M_{k,S}$ for the $\Arun$ comparison.
Corollary~\ref{cor:act_discharge} is stated for exactly this
state-dependent mixture.


\begin{lemma}[Detailed target table for Lemma~\ref{lem:tgt_residual}\obj{\Rabs}]\label{supp:lem:tgt_residual}
Let $A\in\{0,\ldots,k\}$ be the record's \emph{age}, the number of target copies between the
writing of a replayed record and the current block, and
$\bar\delta_k(A):=\|V_k-V_{k-A}\|_\infty$; $A$ is a function of the record and
hence in general state-dependent.  Use the current and writing-time
predictors $\Phat_k$ and $\Phat_{\mathrm{write}}$ and the metadata $M_k$
of Lemma~\ref{lem:tgt_residual}.  The four cells of slot~(S6) are
\begin{equation}
\begin{array}{ccl}
\textsc{when}&\textsc{from}&\multicolumn{1}{c}{Y}\\ \hline
\textsc{sample}&\textsc{obs}&r+\geff(1-d)V_k(\stilde')\\
\textsc{sample}&\textsc{pred}&r+\geff(1-d)V_k(\Phat_k(\stilde,a))\\
\textsc{store}&\textsc{obs}&r+\geff(1-d)V_{k-A}(\stilde')\\
\textsc{store}&\textsc{pred}&r+\geff(1-d)V_{k-A}(\Phat_{\mathrm{write}}(\stilde,a))
\end{array}
\label{eq:four_labels}
\end{equation}
in each of which the reward $r$ and the mask $d$ are the \emph{observed} ones;
only the bootstrapping network and evaluation successor change.  The ideal
response uses the same row with $Z^\circ$ in place of $Z$, retaining the
writing-time parameters and age as in~\eqref{eq:ideal_metadata_redraw}.
Define the
continuation component of the score error by
\[
\eta_k^{\mathrm{cont}}:=\sup_{V\in\mathcal F}\sup_{\stilde,a}
\geff\left|V(\Phat_k(\stilde,a))-\int V\,d\Pkernel(\cdot\mid\stilde,a)\right|.
\]
Then
$\eta_k^{\mathrm{cont}}\le\etascorek$, without a factor $2$, whenever the
point score is the score bounded in~\eqref{eq:envelopes}.  Under a different
slot~\textnormal{(S2)} score, retain $\eta_k^{\mathrm{cont}}$ separately.  For the
two $\textsc{pred}$ cells, and only those, assume in addition that the
terminal event is decided by the state and action,
\begin{equation}
\textnormal{(T)}\qquad
\Prob\bigl[\,d=1\bigm|\stilde,a\,\bigr]\in\{0,1\},
\label{eq:det_termination}
\end{equation}
for $\rho_{k,S}(d\stilde)\beta^{\mathrm{rep}}_k(da\mid\stilde)$-almost every
$(\stilde,a)$,
which requires the full conditioning state and joint dynamics to determine the
terminal event.  Condition~\textnormal{(T)} is therefore an explicit
assumption on this joint structure.  Then the
actual target-construction distance obeys
\begin{equation}
\begin{aligned}
\bigl\|G^{\mathrm{ideal}}_k-
\Tcal^{\beta^{\mathrm{rep}}_k}V_k\bigr\|_{2,\rho_{k,S}}
&\le \Indic\{\textsc{from}=\textsc{pred}\}\eta_k^{\mathrm{cont}}\\[-2pt]
&\quad+\Indic\{\textsc{when}=\textsc{store}\}\geff
\bigl\|\E[\bar\delta_k(A)\mid S,\mathcal J_k]\bigr\|_{2,\rho_{k,S}}\\
&\quad+\Indic\{(\textsc{when},\textsc{from})=(\textsc{store},\textsc{pred})\}
\|C_{\mathrm{pred},k}\|_{2,\rho_{k,S}}.
\end{aligned}
\label{supp:eq:tgt_residual_bound}
\end{equation}
so~\eqref{eq:tgt_residual} holds with $\eps_{\mathrm{tgt},k}$ equal to any
deterministic almost-sure majorant of the right-hand side.
Without \textnormal{(T)}, set $p_T(\stilde,a):=\Prob[d=1\mid\stilde,a]$ and
\begin{align}
C_{\mathrm{mask},k}(\stilde)
&:=\geff\!\int p_T(\stilde,a)
\bigl|V_k\bigl(\Phat_k(\stilde,a)\bigr)\bigr|\,
\beta_k^{\mathrm{rep}}(da\mid\stilde), \notag\\
\|C_{\mathrm{mask},k}\|_{2,\rho_{k,S}}
&\le\geff\Vmax\left\|\int p_T(\cdot,a)
\beta_k^{\mathrm{rep}}(da\mid\cdot)\right\|_{2,\rho_{k,S}}.
\label{eq:tgt_mask_correction}
\end{align}
The right side of~\eqref{supp:eq:tgt_residual_bound} then acquires
$\Indic\{\textsc{from}=\textsc{pred}\}\|C_{\mathrm{mask},k}\|_{2,\rho_{k,S}}$,
which $\eta_k^{\mathrm{cont}}$ does not cover.  Thus the replay-action average is
taken before the $L^2(\rho_{k,S})$ norm.  The conditional expectation in
either staleness term may not be replaced by an unconditional one, since the
age and writing-time predictor can depend on the replayed state.
For the target-network term, conditional Jensen
gives the $\mathcal J_k$-measurable majorant
$\geff(\E[\bar\delta_k(A)^2\mid\mathcal J_k])^{1/2}$; the conditional essential
supremum of $\geff\bar\delta_k(A)$ is another.  A deterministic envelope must
dominate either choice almost surely.
\end{lemma}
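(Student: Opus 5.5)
The plan is to compute, for each cell of~\eqref{eq:four_labels}, the ideal response $G^{\mathrm{ideal}}_k(\stilde)$ by conditioning on $(S,\mathbf a,M_k,\mathcal J_k)$ and integrating the redrawn outcome $Z^\circ$ against $\Pjoint(\cdot\mid S,\mathbf a)$, as~\eqref{eq:ideal_metadata_redraw} prescribes. Then average the result over the replayed action and metadata given $S=\stilde$ and $\mathcal J_k$. The comparison object $\Tcal^{\beta^{\mathrm{rep}}_k}V_k(\stilde)=\int Q_{V_k}(\stilde,a)\,\beta^{\mathrm{rep}}_k(da\mid\stilde)$ has the same outer structure, because the action law given $S$ is $\beta^{\mathrm{rep}}_k$ by definition. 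So in every cell I will write
\[
G^{\mathrm{ideal}}_k-\Tcal^{\beta^{\mathrm{rep}}_k}V_k=\E\bigl[\,\Xi\bigm|S,\mathcal J_k\bigr],
\]
where $\Xi$ is the per-record conditional discrepancy. I will bound $|\Xi|$ pointwise and then use $\|\E[|\Xi|\mid S,\mathcal J_k]\|_{2,\rho_{k,S}}$ together with Minkowski's inequality to split the contributions.

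\emph{Cell (\textsc{sample},\textsc{obs}).} Lemma~\ref{lem:v_q_backup}(i) gives $\Xi=0$.

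\emph{Cell (\textsc{store},\textsc{obs}).} Lemma~\ref{lem:v_q_backup}(i) applied to $V_{k-A}$, with $A$ frozen in $M_k$, gives $\Xi=Q_{V_{k-A}}-Q_{V_k}$. This is bounded by $\geff\bar\delta_k(A)$ as in Lemma~\ref{lem:act_identity}(i).

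\emph{Cell (\textsc{sample},\textsc{pred}).} Here the label equals $r+\geff(1-d)V_k(\Phat_k)$. Under (T), $1-d$ is a.s. a deterministic function $1-p_T(\stilde,a)\in\{0,1\}$. On nonterminal pairs, $\Xi=\geff[V_k(\Phat_k)-\int V_k\,d\Pkernel]$, which is bounded by $\eta_k^{\mathrm{cont}}$ because $V_k\in\mathcal F$. On terminal pairs both continuations vanish, since $\Pkernel$ is concentrated on $\stilde_{\mathrm{term}}$ and $V_k$ is zero there.

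To get $\eta_k^{\mathrm{cont}}\le\etascorek$ without a factor $2$, I take $\widehat R=R$ componentwise; failing that, I argue directly from the symmetric class. The step I will actually use is this: $\widehat Q_V-\widehat Q_{-V}$ isolates the continuation term, and by symmetry of $\mathcal F$, $\widehat Q_V-Q_V$ and $\widehat Q_{-V}-Q_{-V}$ differ by $2\geff[V(\Phat)-\int V\,d\Pkernel]$. Both are bounded by $\etascorek$, so half their difference is at most $\etascorek$.

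\emph{General termination.} The same cell now leaves an extra term $-\geff\,p_T\,V_k(\Phat_k)$. Averaging its absolute value over $\beta^{\mathrm{rep}}_k$ before taking the norm gives $C_{\mathrm{mask},k}$, and $|V_k|\le\Vmax$ gives~\eqref{eq:tgt_mask_correction}.

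\emph{Cell (\textsc{store},\textsc{pred}).} I telescope
\[
V_{k-A}(\Phat_{\mathrm{write}})\to V_k(\Phat_{\mathrm{write}})\to V_k(\Phat_k)\to\textstyle\int V_k\,d\Pkernel .
\]
The three links are bounded by $\geff\bar\delta_k(A)$, by the integrand of $C_{\mathrm{pred},k}$, and by $\eta_k^{\mathrm{cont}}$ respectively, with the mask handled as in the previous cell. Taking $\E[\cdot\mid S,\mathcal J_k]$ and applying Minkowski's inequality in $L^2(\rho_{k,S})$ yields~\eqref{supp:eq:tgt_residual_bound} with the stated indicators.

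The main obstacle is keeping the conditioning honest. $A$ and $\Phat_{\mathrm{write}}$ may be correlated with $S$ and with the action, so the staleness terms must stay inside $\E[\cdot\mid S,\mathcal J_k]$. Keeping them there is valid because $\Xi$ is computed after conditioning on the metadata, and the outcome redraw is independent of the metadata by construction. The closing majorant remarks then follow from conditional Jensen, $\E[\bar\delta\mid S,\mathcal J_k]^2\le\E[\bar\delta^2\mid S,\mathcal J_k]$, integrated over $\rho_{k,S}$, and from the trivial essential-supremum bound.
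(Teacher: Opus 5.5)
Your proposal is correct and follows essentially the same argument as the paper: cellwise conditional means after conditioning on $(S,\mathbf a,M_k,\mathcal J_k)$, the $V\mapsto -V$ symmetry of $\mathcal F$ to obtain $\eta_k^{\mathrm{cont}}\le\etascorek$ without a factor $2$, the $p_T$ mask correction averaged under $\beta^{\mathrm{rep}}_k$ before the norm, and the telescoping of the stored prediction label through $V_k(\Phat_{\mathrm{write}})$ and $V_k(\Phat_k)$. The only cosmetic difference is in the \textsc{store}/\textsc{obs} cell: you compare $Q_{V_{k-A}}-Q_{V_k}$ directly, whereas the paper couples the stored and sample-time labels on a common fresh outcome; both give the same $\geff\bar\delta_k(A)$ bound, and your aside about taking $\widehat R=R$ is unnecessary because the symmetry argument already covers that case.
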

\begin{proof}[Proof of Lemma~\ref{supp:lem:tgt_residual}]
When the point score is installed,
$\eta_k^{\mathrm{cont}}\le\etascorek$ \emph{without} the factor
$2$ that the triangle inequality would give: since
$\widehat Q_V-Q_V=(\widehat R-R)+\geff(V\!\circ\!\Phat_k-\!\int\!V\,d\Pkernel)$
and $\mathcal F$ is symmetric under $V\mapsto-V$ by construction, evaluating at
$V$ and at $-V$ and subtracting cancels the reward term, leaving
$2\geff|V\!\circ\!\Phat_k-\int V\,d\Pkernel|\le2\etascorek$
by~\eqref{eq:envelopes}.

Fix a cell of~\eqref{eq:four_labels} and condition on
$(S,\mathcal J_k)$.  Under
$(\textsc{sample},\textsc{obs})$ the ideal fresh-redraw label's conditional
mean under the true kernel is $(\Tcal^{\beta^{\mathrm{rep}}_k}V_k)(\stilde)$
by~\eqref{eq:ideal_metadata_redraw} and the terminal convention, so
$G^{\mathrm{ideal}}_k=\Tcal^{\beta^{\mathrm{rep}}_k}V_k$ and the left-hand side
vanishes.

In the ideal redraw, changing $\textsc{from}$ to $\textsc{pred}$ leaves
$r^\circ$ and $d^\circ$ untouched.
\emph{Both objects compared are conditional means, so the comparison is made
after conditioning, and the redrawn mask is random}: the $\textsc{pred}$ label
has conditional mean
$R(\stilde,a)+\geff(1-p_T(\stilde,a))V_k(\Phat_k(\stilde,a))$ and the
$\textsc{obs}$ label $R(\stilde,a)+\geff\int V_k\,d\Pkernel$, the terminal
convention having absorbed the mask on that side.  Adding and subtracting
$\geff V_k(\Phat_k(\stilde,a))$ bounds their difference by
$\eta_k^{\mathrm{cont}}+\geff p_T(\stilde,a)|V_k(\Phat_k(\stilde,a))|$,
whose average under $\beta_k^{\mathrm{rep}}(da\mid\stilde)$ is bounded by
$\eta_k^{\mathrm{cont}}+C_{\mathrm{mask},k}(\stilde)$.  Taking the state norm
gives~\eqref{eq:tgt_mask_correction}.  Under \textnormal{(T)} a direct cellwise
comparison removes this correction: where
$p_T(\stilde,a)=0$ by inspection, and where it is $1$ the fresh successor is
$\stilde_{\mathrm{term}}$ almost surely, so $\int V_k\,d\Pkernel=0$ while
$(1-d^\circ)$ annihilates the predicted continuation.  Only the \emph{continuation}
component of the score error is charged, the reward being observed and not modelled.

For $\textsc{store}/\textsc{obs}$, condition first on
$(S,\mathbf a,M_k,\mathcal J_k)$ and use the same fresh outcome for the stored
and sample-time labels.  Their pointwise difference is at most
$\geff\bar\delta_k(A)$, since $|1-d^\circ|\le1$.
For $\textsc{store}/\textsc{pred}$, there is also a change of predictor:
\begin{align*}
&V_{k-A}(\Phat_{\mathrm{write}}(S,\mathbf a))
       -V_k(\Phat_k(S,\mathbf a))\\
&\quad=[V_{k-A}-V_k](\Phat_{\mathrm{write}}(S,\mathbf a))\\
&\qquad+V_k(\Phat_{\mathrm{write}}(S,\mathbf a))
       -V_k(\Phat_k(S,\mathbf a)).
\end{align*}
Multiplication by $\geff(1-d^\circ)$, the triangle inequality, and
$|1-d^\circ|\le1$ bound the conditional mean of the absolute label difference
by $\geff\E[\bar\delta_k(A)\mid S,\mathcal J_k]+C_{\mathrm{pred},k}(S)$.
Here the action and metadata are averaged conditional on $(S,\mathcal J_k)$.
Taking the state norm gives the
second and third terms of~\eqref{supp:eq:tgt_residual_bound}.  The stated
target-network majorants follow from conditional Jensen and the essential
supremum bound.  Combining with the sample-time comparison proves the result,
including the general terminal-mask correction.
\end{proof}

\paragraph{Why both stored-label corrections are needed.}
Two finite-state examples isolate the issues.  Take $H=2$, $\Rmax=1$,
a level-two state $x$, zero reward there, and identical actions; all
level-one states terminate at the next step.  First, let the true successor
of $x$ be $v$, let the writing-time predictor return $u$, and let the current
predictor be exact.  Set $V_{k-A}=V_k=V$ with $V(u)=1$ and $V(v)=-1$.
Then a stored prediction label is $\geff$ while
$(\Tcal^{\beta_k^{\mathrm{rep}}}V_k)(x)=-\geff$.  Both current continuation
error and target-network staleness vanish, but
$C_{\mathrm{pred},k}(x)=2\geff$, exactly the missing distance.

Second, take $k\ge1$, let the true successor law at $x$ be uniform on $z_+$ and $z_-$,
let $V_k=0$, and set $V_{k-1}(z_+)=1$, $V_{k-1}(z_-)=-1$.
With fixed $\mathcal J_k$, suppose a selected stored observation record has
$(A,\stilde')=(1,z_+)$ or $(0,z_-)$, each with probability $1/2$.
Its outcome marginal given $(S,\mathbf a,\mathcal J_k)$ is the true kernel,
but $G_k(x)=\geff/2$.  Retaining $A$ and independently redrawing the
successor gives $G_k^{\mathrm{ideal}}(x)=0$.  Thus conditioning without
metadata cannot justify a zero kernel residual.  For $\Aiid$, the label
parameters are fixed before each fresh draw and the switches are
$\textsc{sample}/\textsc{obs}$, so both residuals remain zero.

\begin{proof}[Proof of Lemma~\ref{lem:varying_propagation}]
The decomposition follows the standard fitted-value-iteration argument
\cite[Lemmas~3--4]{munos_szepesvari_2008}; specific here are the truncation of
Step~3 and the weight sum of Step~4.

\emph{Step 1 (recursion).}  Let $z_k:=\Vstar-V_k$, so
$z_{k+1}=(\Tcal\Vstar-\Tcal V_k)-e_k$.  Lemma~\ref{lem:comparison_kernel} at
$(V,W)=(\Vstar,V_k)$ gives a substochastic
$\Pkernel^{(k)}_\circ:=\Pkernel^{\Vstar,V_k}_\circ$ with
$|\Tcal\Vstar-\Tcal V_k|\le\geff\Pkernel^{(k)}_\circ|z_k|$, so
$|z_{k+1}|\le\geff\Pkernel^{(k)}_\circ|z_k|+|e_k|$ pointwise, and unrolling
from $K$ to $0$,
\begin{equation}
|z_K|\le\geff^{K}\Pkernel^{(K-1)}_\circ\!\cdots\Pkernel^{(0)}_\circ|z_0|
+\sum_{k=0}^{K-1}\geff^{\,K-1-k}\,
\Pkernel^{(K-1)}_\circ\!\cdots\Pkernel^{(k+1)}_\circ|e_k| .
\label{eq:bK_unrolled}
\end{equation}

\emph{Step 2 (loss resolvent).}  With $\pi_K$ greedy for $V_K$ and $\pi^\star$
for $\Vstar$: from $\Vstar=\Tcal^{\pi^\star}\Vstar$,
$V^{\pi_K}=\Tcal^{\pi_K}V^{\pi_K}$, the greedy inequality
$\Tcal^{\pi^\star}V_K\le\Tcal V_K=\Tcal^{\pi_K}V_K$ and the splitting
$V_K-V^{\pi_K}=(V_K-\Vstar)+(\Vstar-V^{\pi_K})$,
$(I-\geff\Pkernel^{\pi_K}_\circ)(\Vstar-V^{\pi_K})
\le\geff(\Pkernel^{\pi^\star}_\circ-\Pkernel^{\pi_K}_\circ)z_K$.  The resolvent
$(I-\geff\Pkernel^{\pi_K}_\circ)^{-1}=\sum_{\ell\ge0}\geff^\ell(\Pkernel^{\pi_K}_\circ)^\ell$
is nonnegative;
applying it and $\pm z_K\le|z_K|$ yields the standard loss-resolvent bound in
substochastic form,
$\Vstar-V^{\pi_K}\le\sum_{\ell\ge0}\geff^{\ell+1}(\Pkernel^{\pi_K}_\circ)^\ell
(\Pkernel^{\pi^\star}_\circ+\Pkernel^{\pi_K}_\circ)|z_K|$.

\emph{Step 3 (truncation).}  Substituting~\eqref{eq:bK_unrolled} gives, for each
$|e_k|$, two families of kernel products of length $m=\ell+(K-k)$ and weight
$\geff^m$.  By the clock decrement~\eqref{eq:time_decrement} every
$\Pkernel^\pi_\circ$ strictly decreases $h$, so products of length $m\ge H$
vanish (the policies need not coincide), and the weight range is
$m<H$.  The families multiplying $|z_0|$ have length at least $K+1$
and vanish once $K\ge H-1$.

\emph{Step 4 (the exact realized occupancies).}  For
$m=\ell+K-k<H$, define the two subprobability measures
\begin{align*}
\nu^1_{K,k,\ell}
&:=\mu_S^\circ(\Pkernel^{\pi_K}_\circ)^\ell
  \Pkernel^{\pi^\star}_\circ
  \Pkernel^{(K-1)}_\circ\cdots\Pkernel^{(k+1)}_\circ,\\
\nu^2_{K,k,\ell}
&:=\mu_S^\circ(\Pkernel^{\pi_K}_\circ)^{\ell+1}
  \Pkernel^{(K-1)}_\circ\cdots\Pkernel^{(k+1)}_\circ,
\end{align*}
where the final product is the identity for $k=K-1$.  For conjugate
$s\in[2,\infty]$ and $p=s/(s-1)$, suppose only these realized measures are
absolutely continuous with respect to $\rho_{k,S}$, and put
$d^b_{K,k,\ell,s}:=\|d\nu^b_{K,k,\ell}/d\rho_{k,S}\|_{s,\rho_{k,S}}$.
H\"older's inequality applied separately to the two branches gives the exact
pathwise coefficient form
\begin{equation}
\|\Vstar-V^{\pi_K}\|_{1,\mu_S}
\le \mathcal B_K+
\sum_{k<K}\ \sum_{\substack{\ell\ge0:\\ \ell+K-k<H}}
\geff^{\ell+K-k}
\bigl(d^1_{K,k,\ell,s}+d^2_{K,k,\ell,s}\bigr)
\|e_k\|_{p,\rho_{k,S}}.
\label{eq:realized_occupancy_propagation}
\end{equation}
Thus the proof consumes coverage only for the two displayed families.  Their
coefficients remain multiplied by the residual norms; no independence or
product-of-expectations step is valid in general.

\emph{Step 5 (deterministic envelope, boundary, and count).}  Every constructed
kernel is a measurable policy kernel (Lemma~\ref{lem:comparison_kernel}).
Assumption~\ref{ass:concentrability} gives
$d^1_{K,k,\ell,s},d^2_{K,k,\ell,s}\le d_s(m)$, which reduces
\eqref{eq:realized_occupancy_propagation} to the residual sum in
\eqref{eq:varying_propagation}.  For the initialization term, start on clock
slice $h$.  A branch of total length $m$ vanishes for $m\ge h$ and otherwise
ends on slice $h-m$, where $|z_0|\le D_{0,h-m}$ almost surely.  Summing
the two branches against the initial masses $p_h$ bounds their contribution
by the deterministic $\mathcal B_K$ in~\eqref{eq:finite_K_boundary_phi}.
Level-wise clipping permits the choice $D_{0,j}=2V_{\max}^{(j)}$;
the boundary vanishes for $K\ge H-1$ regardless of that choice.

Finally, the coefficient of $\|e_k\|_{p,\rho_{k,S}}$ is
$w^{(H)}_{K,k}$.  With $m=\ell+K-k$, a fixed $1\le m<H$ admits exactly
$\min\{K,m\}$ indices $k$.  Fubini--Tonelli therefore gives
$\sum_{k<K}w^{(H)}_{K,k}=2\sum_{m<H}\min\{K,m\}\geff^m d_s(m)
=2\phi_{s,K}^{(H)}$.  The weights are deterministic, so expectations pass
term by term.
\end{proof}
\begin{proof}[Proof of Proposition~\ref{prop:model_floor_lower}]
\emph{MDP and laws.}  All unspecified rewards are zero.  Use $O=S$, the full
clipped tabular class, and exact fresh population
$(\textsc{sample},\textsc{obs})$ updates at every nonterminal state.
Nonterminal states are $s$ at $h=3$, $y_1,y_2$ at
$h=2$, and $z_1,z_2,w$ at $h=1$.  Transitions are deterministic and rewards
are nonzero only at $h=1$: $s:u_1\to y_1$, $s:u_2\to y_2$;
$y_1:b_1\to z_1$, $y_1:b_2\to z_2$; and both actions at $y_2$ lead to $w$.
At $z_1,z_2,w$, every action enters the zero-reward absorbing terminal state
after receiving, respectively, $R(z_1)=1$, $R(z_2)=1-u$, and $R(w)=v$, with
$u,v\in(0,1)$ fixed below.  Take $\mu_S=\delta_s$ and let $\rho_S$ be uniform
on these six nonterminal states; thus every nonterminal state has replay
probability $1/6$ and $\rho_S$ has full support.  Initialize the population
recursion at $V_0=0$.

\emph{Frozen values and the true gap.}  At $h=1$ the target carries no
bootstrap, so exact population regression gives $V_k(z_1)=1$, $V_k(z_2)=1-u$,
$V_k(w)=v$ for $k\ge1$; hence $Q_{V_k}(y_1,b_1)=\geff$ and
$Q_{V_k}(y_1,b_2)=\geff(1-u)$, so $b_1$ is target-greedy and
$\Delta_Q^{(k)}(y_1)=\geff u$.

\emph{A pure reward-model error of size $\eta$.}  Let $\Phat$ be exact and
$\widehat R=R$ except $\widehat R(y_1,b_1)=-\eta$, $\widehat R(y_1,b_2)=+\eta$.
Then $\|\widehat Q_{V_k}-Q_{V_k}\|_\infty=\|\widehat R-R\|_\infty=\eta$ for
every $k$.  Because the error is placed in $\widehat R$ alone, it does not
interact with the bootstrap.  The implemented branch prefers $b_2$ at $y_1$ exactly when
$\geff(1-u)+\eta>\geff-\eta$, i.e.\ $u<2\eta/\geff$.  Take
$0<\vartheta<\min\{\eta/\geff,\epsilon/(2\geff^2)\}$ and set
$u:=2\eta/\geff-\vartheta$, which lies in $(0,1)$ because
$\eta<\geff/2$.  The acting network is the frozen target, so
$\dnet=0$ and $\Lambda_k=\etascorek=\eta$: the harmful behavior
is \emph{produced} by the named score error, not stipulated.

\emph{Limit and loss.}  For every $k\ge1$, the recursion gives
$V_{k+1}(y_1)=\geff(1-u)$ and $V_{k+1}(y_2)=\geff v$; the clock makes all
relevant values exact after $H=3$ sweeps.  Choose
$v\in(1-u,\min\{1,1-u+\epsilon/(2\geff^2)\})$;
then $Q_{V_K}(s,u_1)=\geff^2(1-u)<\geff^2v=Q_{V_K}(s,u_2)$, so $\pi_K(s)=u_2$
strictly.  Meanwhile $\Vstar(y_1)=\geff$ through $b_1$, $\Vstar(y_2)=\geff v$
and $\Vstar(s)=\geff^2\max(1,v)=\geff^2$ since $v<1$, while
$V^{\pi_K}(s)=\geff\Vstar(y_2)=\geff^2v$; hence
$\|\Vstar-V^{\pi_K}\|_{1,\mu_S}=\geff^2(1-v)
>\geff^2u-\epsilon/2
=2\geff\eta-\geff^2\vartheta-\epsilon/2
>2\geff\eta-\epsilon$, which is~\eqref{eq:model_floor_lower}.

\emph{Other residuals.}  At each block the same current policy collects the
fresh data and is its replay law, so $\epsbuf=0$; the population update is
exact in the full tabular class, so $\eps_{\mathrm{fit},k}=0$;
fresh true-kernel outcomes with the default $(\textsc{sample},\textsc{obs})$
target give $G_k=G_k^{\mathrm{ideal}}=
\Tcal^{\beta_k^{\mathrm{rep}}}V_k$, hence
$\eps_{\mathrm{ker},k}=\eps_{\mathrm{tgt},k}=0$; collection is greedy, so
$\eps_k=0$.  Full support of $\rho_S$ gives finite $c_2(m)$ on this finite MDP.
\end{proof}
\paragraph{Statistical and consequence calculations.}
\begin{proof}[Proof of Lemma~\ref{lem:erm_oracle}]
Write $P$ and $P_n$ for the population and the empirical measure and, for
$V\in\mathcal C$, let $\ell_V:=(V-Y)^2-(g-Y)^2$ be the excess loss.  Expanding
and using $\E[Y\mid S]=g(S)$, the cross term vanishes, so
\[
P\ell_V=\|V-g\|^2_{2,\rho_{k,S}}=:\mathcal E(V)\ \ge0 .
\]
\emph{Step 1 (three elementary bounds).}  Since
$\ell_V=(V-g)(V+g-2Y)$ with $|V-g|\le2B$ and $|V+g-2Y|\le4B$, and since
$\ell_V$ is a difference of two quantities lying in $[0,4B^2]$,
\begin{equation}
|\ell_V|\le4B^2,\qquad
\E[\ell_V^2]\le16B^2\mathcal E(V),\qquad
|\ell_V-\ell_{V'}|\le4B\|V-V'\|_\infty ,
\label{eq:loss_ingredients}
\end{equation}
the last because $\ell_V-\ell_{V'}=(V-V')(V+V'-2Y)$.

\emph{Step 2 (reduction to a finite net).}  Let $\{V_1,\ldots,V_N\}$,
$N=N_\delta(\mathcal C)$, be a $\delta$-net of $\mathcal C$ in supremum norm and,
for $V\in\mathcal C$, let $j(V)$ index a net point with
$\|V-V_{j(V)}\|_\infty\le\delta$.  By the third bound
in~\eqref{eq:loss_ingredients}, the empirical-process difference is at most
$8B\delta$.  The reverse triangle inequality for
$\sqrt{\mathcal E(V)}=\|V-g\|_{2,\rho_{k,S}}$ gives, after separating the cases
$\sqrt{\mathcal E(V_{j(V)})}\gtreqless\delta$,
$-\mathcal E(V)/2\le-\mathcal E(V_{j(V)})/2+2B\delta$.  Hence, pointwise,
\begin{equation}
(P-P_n)\ell_V-\tfrac12\mathcal E(V)\;\le\;
\max_{j\le N}\Bigl[(P-P_n)\ell_{V_j}-\tfrac12\mathcal E(V_j)\Bigr]+10B\delta .
\label{eq:net_reduction}
\end{equation}
The right-hand side is a maximum of finitely many measurable functions.

\emph{Step 3 (Bernstein with a shifted threshold).}  The standard
localization device for bounded squared-loss
regression~\cite[Ch.~11]{gyorfi2002distribution}.  Fix $j$ and put
$\mathcal E_j:=\mathcal E(V_j)$.  By~\eqref{eq:loss_ingredients}, each centered
summand is bounded above by $8B^2$ and has variance at most
$16B^2\mathcal E_j$.  Bernstein's inequality at threshold
$t+\mathcal E_j/2$, followed by a union bound, yields
$\Prob(\max_j[(P-P_n)\ell_{V_j}-\mathcal E_j/2]>t)
\le N\exp\{-nt/(70B^2)\}$.  Integrating gives
\[
\E\Bigl[\max_{j\le N}\bigl((P-P_n)\ell_{V_j}-\tfrac12\mathcal E_j\bigr)\Bigr]_+
\le\frac{70B^2}{n}\bigl(1+\log N\bigr).
\]

\emph{Step 4 (assembly).}  Enumerate the fixed countable class
$\mathcal C_0$ and let $V^\star$ be the first element satisfying
$\|V^\star-g\|_{2,\rho_{k,S}}\le\mathrm{dist}_{2,\rho_{k,S}}(\mathcal C,g)+\tau$
for a slack $\tau>0$.  Supremum-norm density ensures existence and the
first-index rule is measurable.  The $L^2$ choice is essential because
$\mathcal E(V)=\|V-g\|^2_{2,\rho_{k,S}}$.  Approximate empirical optimality gives
$P_n\ell_{\widehat V_{k+1}}\le P_n\ell_{V^\star}+\zeta_n$, so
\[
\tfrac12\mathcal E(\widehat V_{k+1})
\le\Bigl[(P-P_n)\ell_{\widehat V_{k+1}}-\tfrac12\mathcal E(\widehat V_{k+1})\Bigr]
+(P_n-P)\ell_{V^\star}+\mathcal E(V^\star)+\zeta_n .
\]
Taking expectations, applying~\eqref{eq:net_reduction} and Step~3, and using
$\E(P_n-P)\ell_{V^\star}=0$ (conditionally on $\mathcal K$ in the conditional
form), yields
\[
\tfrac12\E \mathcal E(\widehat V_{k+1})\;\le\;
\frac{70B^2}{n}\bigl(1+\log N_\delta(\mathcal C)\bigr)+10B\delta
+\bigl[\mathrm{dist}_{2,\rho_{k,S}}(\mathcal C,g)+\tau\bigr]^2+\zeta_n .
\]
Multiplying by $2$ and letting $\tau\downarrow0$
gives~\eqref{eq:erm_oracle} for $C_1\ge140$.  Only boundedness and the
conditional mean of $Y$ were used, so bounded label noise is covered.

For~\eqref{eq:erm_oracle_hp}, use a fresh symbol $u\in(0,1)$ for the
confidence level during this proof.  Since $g\in\mathcal C$ and the supplied
ERM is exact over $\mathcal C$, comparison with $g$ gives
$P_n\ell_{\widehat V_{k+1}}\le P_n\ell_g=0$.  Hence
\[
\tfrac12\mathcal E(\widehat V_{k+1})
\le (P-P_n)\ell_{\widehat V_{k+1}}
-\tfrac12\mathcal E(\widehat V_{k+1}).
\]
By~\eqref{eq:net_reduction} and the tail bound of Step~3, with conditional
probability at least $1-u$ the right-hand side is at most
$70B^2\{\log N_\delta(\mathcal C)+\log(1/u)\}/n+10B\delta$.
Multiplication by $2$, enlargement to $C_1\ge140$, and renaming $u$ as
$\tau$ prove~\eqref{eq:erm_oracle_hp}.  This argument does not divide by any
state probability.
\end{proof}
\begin{proof}[Proof of Proposition~\ref{prop:vreg_rate}]
Throughout $g:=G_k$, so $\|g\|_\infty\le\Vmax$ by~\eqref{eq:vmax_recursion},
$|Y_i|\le\Vmax$ and $\E[Y_i\mid S_i]=g(S_i)$; Lemma~\ref{lem:erm_oracle}
applies with $B=\Vmax$ and $\mathcal C=\FVn$, being stated for an arbitrary bounded
measurable $g$ and \emph{not} requiring $g\in\Gclk$.

\emph{Step 1 (approximation).}  The routed approximation
bound~\eqref{eq:routed_approximation} already applies to the externally
clipped class:
\[
\mathrm{dist}^{\sup}_\infty(\FVn,\Gclk)
\le C_{\mathrm{appx}}m_n^{(\alpha^\star-1)/2}.
\]
Its proof uses Lemma~\ref{lem:clip_projection} head by head, exploiting that
every target in $\Gclk$ lies in the level-wise band.

\emph{Step 2 (covering).}  If $\Vmax=0$ the claim is trivial.  Otherwise set
$\delta_n=(1\wedge\Vmax)/n$.  The product entropy
bound~\eqref{eq:routed_entropy} gives
\begin{equation}
\begin{aligned}
\log N_{\delta_n}(\FVn)
&\le H\left\{\log m_n+(s_{m_n}+1)
\log\!\left(\frac{2(L_{m_n}+1)D_{m_n}^{2}}{\delta_n}\right)\right\}\\
&\lesssim Hm_n^{\alpha^\star}(\log n)^{1+2\xi^\star}.
\end{aligned}
\label{eq:covering}
\end{equation}
\emph{Step 3 (combination, comparator in $L^2(\rho_{k,S})$).}  Since
$\Tcal V_k\in\Gclk$ and $\|\cdot\|_{2,\rho_{k,S}}\le\|\cdot\|_\infty$,
the triangle inequality in $L^2(\rho_{k,S})$ gives
\begin{equation}
\mathrm{dist}_{2,\rho_{k,S}}(\FVn,g)
\le\mathrm{dist}^{\sup}_\infty(\FVn,\Gclk)+\bigl\|\Tcal V_k-G_k\bigr\|_{2,\rho_{k,S}}
\le C_{\mathrm{appx}}m_n^{(\alpha^\star-1)/2}+D^{(2)}_k .
\label{eq:vreg_split}
\end{equation}

\emph{Step 4 (take the square root before splitting).}  Apply Jensen, then
subadditivity of $\sqrt{\cdot}$, to the four summands of~\eqref{eq:erm_oracle}
at the radius $\delta_n=(1\wedge\Vmax)/n$:
\[
\begin{aligned}
\E\bigl[\|\widehat V_{k+1}-g\|_{2,\rho_{k,S}}\bigm|\mathcal H_k\bigr]
&\le\sqrt2\,\mathrm{dist}_{2,\rho_{k,S}}(\FVn,g)\\
&\quad+\Bigl(\tfrac{C_1\Vmax^2}{n}
  \bigl(1+\log N_{\delta_n}(\FVn)\bigr)\Bigr)^{1/2}\\
&\quad+\bigl(C_1\Vmax\delta_n\bigr)^{1/2}+\sqrt{2\zeta_n}.
\end{aligned}
\]
By~\eqref{eq:vreg_split} the first summand is at most
$\sqrt2C_{\mathrm{appx}}m_n^{(\alpha^\star-1)/2}+\sqrt2D^{(2)}_k$.  Since
$n\ge Hm_n$, equation~\eqref{eq:covering} yields
\[
\left(\frac{Hm_n^{\alpha^\star}}{n}\right)^{1/2}
\le m_n^{(\alpha^\star-1)/2}.
\]
The remaining $n^{-1/2}$ terms are no larger than a constant times this rate.
Collecting constants gives~\eqref{eq:vreg_offgreedy}.  For fixed $H$,
$m_n\asymp n/H$, so this has the same exponent in $n$ as the one-head rate;
the displayed $m_n$ records the router's $H$-head complexity cost.
\emph{Step 5 (greedy collection).}  If $\mathbf a_i=a^\star(S_i;V_k)$ then
$G_k=\Tcal V_k$ by Lemma~\ref{lem:v_q_backup}(iii), so $D^{(2)}_k=0$
and~\eqref{eq:vreg_offgreedy} reduces to~\eqref{eq:vreg_derived}.  Noisy
targets are covered because Lemma~\ref{lem:erm_oracle} requires only
$\E[Y_i\mid S_i]=g(S_i)$ with $|Y_i|\le\Vmax$.
\end{proof}
\begin{proof}[Proof of Corollary~\ref{cor:tabular_discharge}]
Lemma~\ref{lem:clip_projection} gives exact closure.  Empirical means at
visited states (zero at unvisited states) give a measurable exact ERM in the
level-wise band because every label at level $h$ is bounded by
$\Rmax+\geff\Vmax^{(h-1)}=\Vmax^{(h)}$.  Condition on $\mathcal H_k$.
Given $N_{k,s}=r>0$, the coordinatewise sample mean is unbiased with squared
error $\sigma_{k,s}^2/r$; when $r=0$, its squared error is $g_{k,s}^2$.
Multiplying by $p_{k,s}$ and summing proves the exact
identity~\eqref{eq:tabular_rate}; coordinates with $p_{k,s}=0$ vanish.

For $r\ge1$, $1/r\le2/(r+1)$.  If $M\sim\mathrm{Bin}(n,p)$, then
\[
 \E\frac1{M+1}=\int_0^1(1-p+px)^n\,dx
 =\frac{1-(1-p)^{n+1}}{(n+1)p},
\]
so $p\E[\mathbf1\{M>0\}/M]\le2/(n+1)$.  Also
$p\Pr(M=0)=p(1-p)^n\le(n/(n+1))^n/(n+1)$, by maximizing over $p$.
Using $\sigma_{k,s}^2,g_{k,s}^2\le\Vmax^2$, summing over the $N$ states and
applying Jensen proves the first bound in~\eqref{eq:tabular_rate_expected}.
Finally $(1+1/n)^n\ge2$, and for $n\ge2$ its first three binomial terms are
at least $9/4$, giving the constants $5/2$ and $22/9$.  These bounds are
deterministic after conditioning, so they also hold unconditionally.

The additional design choices respectively
remove kernel, target, replay, aliasing, and drift links; full support makes
every finite-state density coefficient finite.  Since
$G_k\in\mathcal F^{\mathrm{tab}}$, no policy-image approximation term is
needed, so the composition chain charges the action and exploration links
only once.  Substitution of~\eqref{eq:tabular_rate_expected} and
Theorem~\ref{thm:act_upper} at $r=p$ into
Theorem~\ref{thm:hmpdrl_end_to_end}, with
$K\ge H-1$, gives~\eqref{eq:fully_discharged}.

For fixed $K$, name the $k$th confidence event
\[
\mathcal E_{k,K}(\delta):=
\left\{\|V_{k+1}-G_k\|_{2,\rho_{k,S}}
\le\mathrm{stat}^{\mathrm{tab,hp}}_{k,K}(\delta)\right\}.
\]
Apply~\eqref{eq:erm_oracle_hp} conditionally on $\mathcal H_k$ with confidence
$1-\delta/(H-1)$ and covering radius $\Vmax/n_k$.  It gives
$\Prob(\mathcal E_{k,K}(\delta)^c\mid\mathcal H_k)\le\delta/(H-1)$ directly in
population $L^2$, including the zero values assigned to unvisited states, so
no minimum-state-mass factor occurs.  The tower property and a union bound over
the at most $H-1$ relevant blocks make all these events simultaneous with
probability at least $1-\delta$, despite adaptive histories.  On their
intersection, the pathwise triangle chain of Lemma~\ref{lem:composition} and
Lemma~\ref{lem:varying_propagation} gives~\eqref{eq:fully_discharged_hp}; no
cross-block independence is used.
\end{proof}
\begin{proof}[Proof of Theorem~\ref{thm:nonasymptotic}]
Substitution of Proposition~\ref{prop:l2_routing} into
Proposition~\ref{prop:vreg_rate} gives
\[
\eps_{\mathrm{fit},k}\;\le\;\mathrm{stat}_k
+\sqrt2\bigl(\epsbuf+\epsact+\eps_k\Dbar\bigr),
\]
the omitted kernel and target links being zero.  Since $p=2$ here,
$\epsactp=\epsact$; adding the remaining terms of $e_{k,2}^{\mathrm{Bell}}$
proves~\eqref{eq:rk_l2}.
Equation~\eqref{eq:nonasymptotic} follows from~\eqref{eq:hmpdrl_end_to_end},
since $w^{(H)}_{K,k}=0$ for $k\le K-H$ and
$\sum_kw^{(H)}_{K,k}=2\phi_{2,K}^{(H)}$.
\end{proof}
\section{Further refinements}\label{app:deferred_results}

\begin{proposition}[One-sided propagation]\label{supp:prop:onesided}
Under Theorem~\ref{thm:hmpdrl_end_to_end}, suppose additionally that, almost
surely,
\begin{equation}
V_0\le\Vstar,
\qquad e_k:=V_{k+1}-\Tcal V_k\le0
\quad\text{pointwise on $\Stildeo$, for every $k<K$}.
\label{supp:eq:onesided_hyp}
\end{equation}
Then $V_k\le\Vstar$ for every $k\le K$, and
\begin{equation}
\E\|\Vstar-V^{\pi_K}\|_{1,\mu_S}
\le\frac{\mathcal B_K}{2}
+\frac12\sum_{k=0}^{K-1}w^{(H)}_{K,k}e_{k,p}^{\mathrm{Bell}}
\le\frac{\mathcal B_K}{2}
+\phi_{s,K}^{(H)}\overline e_{K,H,p}^{\mathrm{Bell}}.
\label{supp:eq:onesided_bound}
\end{equation}
The sign premise is pointwise; an $L^p$ or replay-almost-everywhere sign
condition is insufficient.
\end{proposition}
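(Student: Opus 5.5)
Under the sign hypothesis, the plan is to redo the proof of Lemma~\ref{lem:varying_propagation}, replacing absolute values by signed quantities. This removes one of the two resolvent branches, which gives the factor $1/2$.

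\emph{Step 1 (ordering).} Induct on $k$ to show $V_k\le\Vstar$. The base case is the hypothesis $V_0\le\Vstar$. For the inductive step, $\Tcal$ is monotone, so $V_k\le\Vstar$ gives $\Tcal V_k\le\Tcal\Vstar=\Vstar$. Then $V_{k+1}=\Tcal V_k+e_k\le\Tcal V_k\le\Vstar$. Both inequalities are pointwise on $\Stildeo$, and the terminal state carries zero values. This is where the pointwise sign premise is used: an a.e.\ sign would not let monotonicity of $\Tcal$ propagate, since $\Tcal$ integrates against kernels that need not be dominated by $\rho_{k,S}$.

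\emph{Step 2 (signed recursion).} Set $z_k=\Vstar-V_k\ge0$. Then $z_{k+1}=(\Tcal\Vstar-\Tcal V_k)-e_k$. Since $V_k\le\Vstar$, we have $0\le\Tcal\Vstar-\Tcal V_k\le\geff\Pkernel^{\pi^\star}_\circ z_k$, using the optimal action in the upper estimate. Hence $0\le z_{k+1}\le\geff\Pkernel^{\pi^\star}_\circ z_k+|e_k|$. Unrolling as in \eqref{eq:bK_unrolled} gives a single kernel family, built from the fixed policy $\pi^\star$, acting on $z_0$ and on the $|e_k|$.

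\emph{Step 3 (one-sided resolvent).} In Step~2 of the proof of Lemma~\ref{lem:varying_propagation} we had $(I-\geff\Pkernel^{\pi_K}_\circ)(\Vstar-V^{\pi_K})\le\geff(\Pkernel^{\pi^\star}_\circ-\Pkernel^{\pi_K}_\circ)z_K$. Because $z_K\ge0$, the term $-\geff\Pkernel^{\pi_K}_\circ z_K$ is nonpositive and can be dropped instead of being bounded by $+\geff\Pkernel^{\pi_K}_\circ|z_K|$. This leaves $\Vstar-V^{\pi_K}\le\sum_\ell\geff^{\ell+1}(\Pkernel^{\pi_K}_\circ)^\ell\Pkernel^{\pi^\star}_\circ z_K$. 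That is exactly the $\nu^1$ branch alone.

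\emph{Step 4 (conclusion).} Substitute the Step~2 unrolling into Step~3 and repeat the truncation, H\"older, and concentrability steps, where every product is again a Markov policy kernel. Each depth-$m$ coefficient is now $\geff^m d_s(m)$ instead of $2\geff^m d_s(m)$, so the weights are $w^{(H)}_{K,k}/2$. The initialization contribution is likewise half of $\mathcal B_K$, since one branch times $D_{0,h-m}$ is half of the two-branch sum \eqref{eq:finite_K_boundary_phi}. Taking expectations termwise with Lemma~\ref{lem:composition} at $r=p$, as in Theorem~\ref{thm:hmpdrl_end_to_end}, gives the first inequality of \eqref{supp:eq:onesided_bound}. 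The weight-sum identity $\sum_kw^{(H)}_{K,k}=2\phi^{(H)}_{s,K}$ gives the second.

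The main obstacle is Step~1 together with checking that $z_K\ge0$ holds pathwise, so that dropping the $\Pkernel^{\pi_K}_\circ$ branch is legitimate. I would also add a short remark that an $L^p$ sign condition fails, because the kernels may charge $\rho_{k,S}$-null sets where $e_k>0$.
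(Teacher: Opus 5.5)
Your proposal is correct and follows the paper's proof essentially step for step. Monotonicity of $\Tcal$ gives $z_k=\Vstar-V_k\ge0$, the recursion collapses to the single $\Pkernel^{\pi^\star}_\circ$ branch, and $z_K\ge0$ lets you drop $-\geff\Pkernel^{\pi_K}_\circ z_K$ at the resolvent step, which halves both $\mathcal B_K$ and every weight $w^{(H)}_{K,k}$.

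The paper only asserts the closing insufficiency sentence and does not prove it. If you add your remark, note that under \textnormal{(C-strong)} the $\mu_S$-occupancies are themselves dominated by every $\rho_{k,S}$. So ``kernels charging $\rho_{k,S}$-null sets'' explains why the pointwise conclusion $V_k\le\Vstar$ needs a pointwise premise, but it does not by itself show insufficiency for the displayed bound.
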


\begin{proof}
For $z_k:=\Vstar-V_k$, monotonicity and~\eqref{supp:eq:onesided_hyp} give
$z_k\ge0$ inductively and
$z_{k+1}\le\geff\Pkernel^{\pi^\star}_\circ z_k+|e_k|$, so only the
$\pi^\star$ branch propagates.  At the loss-resolvent step,
$\geff(\Pkernel^{\pi^\star}_\circ-\Pkernel^{\pi_K}_\circ)z_K
\le\geff\Pkernel^{\pi^\star}_\circ z_K$.  Thus the pathwise proof of
Lemma~\ref{lem:varying_propagation} has half the initialization boundary and
half every residual coefficient; the conditional envelopes of
Theorem~\ref{thm:hmpdrl_end_to_end} give~\eqref{supp:eq:onesided_bound}.
\end{proof}

\begin{corollary}[$L^2$ final-score transfer]\label{supp:cor:deployment_l2}
In the setting of Theorem~\ref{thm:deployed_transfer}, define
\[
E_K(\stilde):=\max_{a\in\Aspace}
|\widehat Q_{V_K}(\stilde,a)-Q_{V_K}(\stilde,a)|,
\qquad \eta_{K,2}:=\|E_K\|_{2,\rho_{K,S}}.
\]
Assume the final-law density bound~\eqref{eq:final_concentrability} at $s=2$,
and let the deterministic $\bar\eta_{K,2}$ satisfy
$\eta_{K,2}\le\bar\eta_{K,2}$ almost surely.  Then
\begin{equation}
\E\|\Vstar-V^{\widehat\pi_K}\|_{1,\mu_S}
\le\mathcal B_K+\sum_{k=0}^{K-1}w^{(H)}_{K,k}e_{k,p}^{\mathrm{Bell}}
+2\bar\eta_{K,2}\sum_{\ell=0}^{H-1}\geff^\ell d_2^{\mathrm{fin}}(\ell).
\label{supp:eq:deployment_l2}
\end{equation}
If $\eta_{K,2}$ is integrable, the last term may instead retain
$2\E[\eta_{K,2}]$ times the same deterministic sum.
\end{corollary}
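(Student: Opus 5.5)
The plan follows the proof of Corollary~\ref{cor:deployed_transfer_margin}, with the pointwise defect controlled in $L^2(\rho_{K,S})$ instead of through a margin. As in Theorem~\ref{thm:deployed_transfer}, put $d_K:=\Tcal V_K-\Tcal^{\widehat\pi_K}V_K$, which is pointwise nonnegative. Rerunning the loss-resolvent step of Lemma~\ref{lem:varying_propagation} for the deployed deterministic stationary policy $\widehat\pi_K$ leaves the initialization boundary and every residual branch unchanged, since $\widehat\pi_K$ is a measurable Markov policy covered by Assumption~\ref{ass:concentrability}. The only addition is the single extra term
\[
\sum_{\ell=0}^{H-1}\geff^\ell\int d_K\,d\bigl\{\mu_S^\circ(\Pkernel^{\widehat\pi_K}_\circ)^\ell\bigr\},
\]
which is truncated at $H-1$ by clock nilpotence. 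So it suffices to bound each depth-$\ell$ integral by $2\eta_{K,2}\,d_2^{\mathrm{fin}}(\ell)$.

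\textbf{Pointwise step.} I will prove $0\le d_K\le 2E_K$ statewise, refining Lemma~\ref{lem:act_identity}(iii) with a local instead of uniform perturbation level. At a state $\stilde$, write $a^\star=a^\star(\stilde;V_K)$ and $\hat a=\widehat\pi_K(\stilde)$. Then
\[
Q_{V_K}(\stilde,\hat a)\ge\widehat Q_{V_K}(\stilde,\hat a)-E_K(\stilde)\ge\widehat Q_{V_K}(\stilde,a^\star)-E_K(\stilde)\ge Q_{V_K}(\stilde,a^\star)-2E_K(\stilde).
\]
Since $E_K$ is a finite maximum of measurable functions, it is measurable.

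\textbf{H\"older step.} For each $\ell$, the measure $\mu_S^\circ(\Pkernel^{\widehat\pi_K}_\circ)^\ell$ is one of the products in~\eqref{eq:final_concentrability}, taking $\pi_1=\dots=\pi_\ell=\widehat\pi_K$ and the identity at $\ell=0$. Cauchy--Schwarz in $L^2(\rho_{K,S})$ then gives
\[
\int d_K\,d\{\cdots\}\le 2\|E_K\|_{2,\rho_{K,S}}\,d_2^{\mathrm{fin}}(\ell)=2\eta_{K,2}\,d_2^{\mathrm{fin}}(\ell)\quad\text{almost surely}.
\]
Summing over $\ell$ with weights $\geff^\ell$ gives a pathwise bound. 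Taking expectations, the deterministic weights and the conditional envelopes handle the residual terms exactly as in Theorem~\ref{thm:hmpdrl_end_to_end}. The extra term is then bounded either by the deterministic majorant $\bar\eta_{K,2}$, which yields~\eqref{supp:eq:deployment_l2}, or, when $\eta_{K,2}$ is integrable, by $\E[\eta_{K,2}]$ times the same deterministic sum.

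\textbf{Main obstacle.} The delicate step is confirming that the deployed-policy occupancy is covered by the final-law coefficient $d_2^{\mathrm{fin}}$ under $\rho_{K,S}$, rather than by the block laws $\rho_{k,S}$. This holds because~\eqref{eq:final_concentrability} takes a supremum over all policy sequences after the history is fixed, so it includes the history-dependent $\widehat\pi_K$. The expectation step is also safe: the realized density norm is dominated almost surely by the deterministic $d_2^{\mathrm{fin}}(\ell)$, so no product of expectations of dependent random quantities is ever formed.
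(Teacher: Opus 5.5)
Your proposal is correct and follows essentially the same route as the paper. You use the pointwise score comparison $0\le d_K\le 2E_K$ and the single extra resolvent term $\sum_{\ell<H}\geff^\ell\int d_K\,d\{\mu_S^\circ(\Pkernel^{\widehat\pi_K}_\circ)^\ell\}$; you bound that term pathwise by Cauchy--Schwarz against $d_2^{\mathrm{fin}}(\ell)$, and finish with either the deterministic majorant $\bar\eta_{K,2}$ or $\E[\eta_{K,2}]$, differing from the paper only in spelling out the statewise version of Lemma~\ref{lem:act_identity}(iii) that it compresses into ``score comparison.''
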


\begin{proof}
Score comparison gives
$0\le\Tcal V_K-\Tcal^{\widehat\pi_K}V_K\le2E_K$, whose resolvent contribution
after integration is
$2\sum_{\ell<H}\geff^\ell
\int E_K\,d\{\mu_S^\circ(\Pkernel^{\widehat\pi_K}_\circ)^\ell\}$.
Cauchy--Schwarz and~\eqref{eq:final_concentrability} bound it pathwise by
$2\eta_{K,2}\sum_{\ell<H}\geff^\ell d_2^{\mathrm{fin}}(\ell)$.
Use either its deterministic majorant or its expectation to conclude.
\end{proof}

\subsection*{OA.1: History-space moment coverage}\label{supp:oa1}

For the two realized occupancy branches in the proof of
Lemma~\ref{lem:varying_propagation}, define
\begin{equation}
A_{K,k}:=\sum_{\substack{\ell\ge0:\ell+K-k<H}}
\geff^{\ell+K-k}
\bigl(d^1_{K,k,\ell,s}+d^2_{K,k,\ell,s}\bigr),
\qquad R_k:=\|e_k\|_{p,\rho_{k,S}}.
\label{supp:eq:random_coefficients}
\end{equation}
The exact pathwise proof gives
$\|\Vstar-V^{\pi_K}\|_{1,\mu_S}\le\mathcal B_K+
\sum_{k<K}A_{K,k}R_k$.  Consequently, for conjugate history-space exponents
$u,v\in[1,\infty]$,
\begin{equation}
\E\|\Vstar-V^{\pi_K}\|_{1,\mu_S}
\le\mathcal B_K+
\sum_{k<K}\|A_{K,k}\|_{L^u(\Omega)}\|R_k\|_{L^v(\Omega)}.
\label{supp:eq:history_holder}
\end{equation}
This is H\"older's inequality on the history probability space.  Because
$A_{K,k}$ generally depends on future iterates through $\pi_K$ and the
comparison kernels, factorizing $\E[A_{K,k}R_k]$ is invalid; the deterministic
worst-policy theorem is the $u=\infty$ case.

Realized coverage is strictly weaker: in an $H=2$ deterministic MDP, let both
constructed branches select a covered bottom state $x$, set the design law to
$\delta_x$, and add an unused action leading to $y$.  All realized measures are
covered, whereas the unused policy produces $\delta_y\not\ll\delta_x$; hence
the all-policy supremum assumption fails.

\subsection*{OA.2: Geometric verification of the margin}\label{supp:oa2}

\begin{proposition}[Tube and transverse-growth criterion for one gap law]\label{supp:prop:tube_margin}
Let $\Sigma$ be a measurable switching set.  Suppose, for constants
$c,r,T,\kappa,u_0>0$,
\[
\Delta_Q(x)\ge\min\{c\,\operatorname{dist}(x,\Sigma)^r,u_0\},
\quad
\rho\{\operatorname{dist}(x,\Sigma)\le\epsilon\}\le T\epsilon^\kappa
\]
whenever $0<\epsilon\le(u_0/c)^{1/r}$.  Then the global action-gap margin holds
with
\begin{equation}
\alpha=\kappa/r,
\qquad
C_{\mathrm{marg}}=\max\{Tc^{-\kappa/r},u_0^{-\kappa/r}\}.
\label{supp:eq:geometric_margin}
\end{equation}
\end{proposition}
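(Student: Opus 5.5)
The plan is to verify \eqref{eq:margin} directly for every $u>0$, splitting at the saturation level $u_0$ of the assumed gap floor. The exponent $\kappa/r$ comes from converting a gap threshold into a distance threshold via the transverse-growth bound, and then feeding that distance into the tube-mass bound. Throughout I take $\operatorname{dist}(\cdot,\Sigma)$ measurable, so that all displayed sets are $\rho$-measurable.

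\emph{Case $0<u<u_0$.} Fix $x$ with $\Delta_Q(x)\le u$.
\begin{itemize}
\item The hypothesis gives $\min\{c\operatorname{dist}(x,\Sigma)^r,u_0\}\le\Delta_Q(x)\le u<u_0$.
\item Because the left side is strictly below $u_0$, the minimum is attained by its first argument.
\item Hence $c\operatorname{dist}(x,\Sigma)^r\le u$, so with $\epsilon:=(u/c)^{1/r}$,
\[
\{\Delta_Q\le u\}\subseteq\{\operatorname{dist}(\cdot,\Sigma)\le\epsilon\}.
\]
\item We have $0<\epsilon<(u_0/c)^{1/r}$, so the tube-mass bound applies and gives
\[
\rho\{\Delta_Q\le u\}\le T(u/c)^{\kappa/r}=Tc^{-\kappa/r}u^{\kappa/r}.
\]
\end{itemize}

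\emph{Case $u\ge u_0$.} Here no geometric information is needed. We use the trivial bound $\rho\{\Delta_Q\le u\}\le 1$ and observe $1\le(u/u_0)^{\kappa/r}=u_0^{-\kappa/r}u^{\kappa/r}$, since $\kappa/r>0$.

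The boundary value $u=u_0$ is deliberately placed in the second case. At that value the minimum may equal $u_0$ at points far from $\Sigma$, so the distance conversion of the first case fails there.

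\emph{Combining the cases.} Taking the larger of the two constants gives $\rho\{\Delta_Q\le u\}\le C_{\mathrm{marg}}u^{\alpha}$ for all $u>0$, with the values in \eqref{supp:eq:geometric_margin}. This is the global form of Definition~\ref{def:margin}.

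The only delicate step is the strict inequality $u<u_0$ in the first case. It is what lets us drop the minimum, and it is also what keeps $\epsilon$ inside the range where the tube-mass bound holds. Everything else is a direct substitution.
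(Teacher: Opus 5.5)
Your proof is correct and follows the paper's argument essentially verbatim: split at $u_0$, place $\{\Delta_Q\le u\}$ inside the tube of radius $(u/c)^{1/r}$ when $u<u_0$, and use the trivial bound $1\le u_0^{-\kappa/r}u^{\kappa/r}$ when $u\ge u_0$. The paper also remarks that the tube bound forces $\rho\{\operatorname{dist}(\cdot,\Sigma)=0\}=0$, so the zero-gap switching set is null; this is a side observation and is not needed for the stated margin.
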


\begin{proof}
For $0<u<u_0$, the event $\{\Delta_Q\le u\}$ lies in the tube of radius
$(u/c)^{1/r}$ and has probability at most $Tc^{-\kappa/r}u^{\kappa/r}$.
For $u\ge u_0$, use $1\le u_0^{-\kappa/r}u^{\kappa/r}$.  The tube condition
also makes the zero-gap switching set null.
\end{proof}

For Definition~\ref{def:margin}, require this criterion almost surely for
each $(Q_{V_k},\rho_{k,S})$; alternatively verify it uniformly for
$(Q_{\Vstar},\rho_{k,S})$ and apply
Proposition~\ref{prop:reference_margin_transfer}.

\subsection*{OA.3: Matched-budget allocation and horizon calculations}\label{supp:oa3}

\begin{proof}[Proof of Theorem~\ref{thm:matched_budget}]
Substitute~\eqref{eq:budget_rate_envelopes} into the first inequalities of
Theorems~\ref{thm:hmpdrl_end_to_end} and~\ref{thm:level_indexed}.  Inactive
coordinates have zero propagation weight, while the nonstatistical terms give
$F_K^{\mathrm{sh}}$ and $F_K^{\mathrm{lev}}$.  It remains to minimize
$\sum_i c_i n_i^{-\nu}$ under a common terminal-window budget.

For positive $c_i$, the objective is strictly convex on the positive orthant.
The Lagrange equations
\[
-\nu c_i n_i^{-\nu-1}+\lambda=0
\]
give $n_i\propto c_i^{1/(1+\nu)}$.  Normalizing by the budget and substituting
back proves~\eqref{eq:matched_budget_allocation}, hence
\eqref{eq:shared_matched_budget}--\eqref{eq:level_matched_budget}.

With lower bounds, strict convexity still gives a unique minimizer.  The KKT
conditions say that an interior coordinate satisfies
$n_i=(\nu c_i/\lambda)^{1/(1+\nu)}$, whereas a coordinate whose unconstrained
value is at most $L_i$ is fixed at $L_i$.  This is exactly
\eqref{eq:water_filling_allocation}.  For
$\mathsf N>\sum_iL_i$, the sum of its right-hand side is continuous and
strictly decreasing in $\lambda$ over the range relevant to the budget, from
infinity to $\sum_iL_i$; hence the required $\lambda$ is unique.  Substitution
defines~\eqref{eq:constrained_budget_value} and proves the stated constrained
bounds.  The unconstrained closed form applies precisely when none of
its coordinates violates a lower bound.

For integer $L_i\ge1$ and integer $\mathsf N$, each
$\lfloor n_i^L\rfloor\ge L_i$, and the number of undistributed labels is the
nonnegative integer $\mathsf N-\sum_i\lfloor n_i^L\rfloor$.  Allocating each
one according to~\eqref{eq:integer_budget_allocation} preserves feasibility
and uses the entire budget.  Moreover,
$\lfloor n_i^L\rfloor\ge n_i^L/2$ because $n_i^L\ge1$, while adding labels can
only decrease the objective.  Therefore the final integer allocation obeys
\[
\sum_i c_i n_i^{-\nu}
\le2^\nu\sum_i c_i(n_i^L)^{-\nu}
=2^\nu\Psi_\nu(c,L,\mathsf N).
\]
Finally, setting $j=K-k$ gives
$|\mathcal K_K|=J$ and
$|\mathcal I_K|=\sum_{j=1}^{J}(H-j)=JH-J(J+1)/2$ for
$J=\min\{K,H-1\}$.
\end{proof}

\begin{proof}[Proof of Corollary~\ref{cor:matched_budget_geometry}]
Put $q:=1/(1+\nu)$ and index the active shared-reset blocks by
$j=K_H-k\in\{1,\ldots,H-1\}$.  On the one-state-per-level chain, the uniform
clock law has $d_{2,H}(m)=\sqrt H$, and therefore
\begin{equation}
w^{\mathrm{unif}}_{H,j}
=2\sqrt H\sum_{m=j}^{H-1}\geff_H^m.
\label{supp:eq:uniform_budget_weight}
\end{equation}
In the near-unit regime, $\geff_H^m$ is bounded above and below by positive
constants uniformly for $m<H$.  Thus
$w^{\mathrm{unif}}_{H,j}=\Theta(H^{3/2})$ for $j\le H/2$ and is
$O(H^{3/2})$ everywhere.  Consequently
\[
\left\{\sum_{j=1}^{H-1}
(b_H^{\mathrm{sh}}w^{\mathrm{unif}}_{H,j})^q\right\}^{1/q}
=\Theta\bigl(b_H^{\mathrm{sh}}H^{\nu+5/2}\bigr).
\]

For the coefficient-optimal shared law at $s=2$ and $K_H\ge H-1$, write
\[
S_H:=\sum_{m=1}^{H-1}(m\geff_H^m)^{2/3},\qquad
r_{H-m}=\frac{(m\geff_H^m)^{2/3}}{S_H}.
\]
Then
$d_{2,H}(m)=S_H^{1/2}(m\geff_H^m)^{-1/3}$ and
\begin{equation}
w^{\mathrm{opt}}_{H,j}
=2S_H^{1/2}\sum_{m=j}^{H-1}m^{-1/3}\geff_H^{2m/3}.
\label{supp:eq:optimal_budget_weight}
\end{equation}
Near unit, $S_H=\Theta(H^{5/3})$; the last sum is
$\Theta(H^{2/3})$ for $j\le H/2$ and $O(H^{2/3})$ everywhere.  Hence the same
calculation gives
$\mathsf C_{\nu,K_H}^{\mathrm{sh,opt}}
=\Theta(b_H^{\mathrm{sh}}H^{\nu+5/2})$.

For direct reset, a fixed depth $m$ occurs in exactly $m$ active pairs and
$A^{\mathrm{lev}}_{k,H-m}=2\geff_H^m$.  Therefore, in the near-unit regime,
\[
\mathsf C_{\nu,K_H}^{\mathrm{lev}}
\asymp b_H^{\mathrm{lev}}
\left(\sum_{m=1}^{H-1}m\right)^{1/q}
=\Theta\bigl(b_H^{\mathrm{lev}}H^{2\nu+2}\bigr),
\]
proving~\eqref{eq:matched_near_unit_geometry}.

Under fixed discount,~\eqref{supp:eq:uniform_budget_weight} is
$\Theta(\sqrt H\,\geff^j)$, so its $q$th-power sum gives
$\Theta(b_H^{\mathrm{sh}}\sqrt H)$.  In
\eqref{supp:eq:optimal_budget_weight}, $S_H$ is bounded above and below and the
weights decay geometrically; their $q$th-power sum is finite and bounded away
from zero.  Likewise
$\sum_{m\ge1}m(2\geff^m b_H^{\mathrm{lev}})^q$ is finite and positive.  This
proves~\eqref{eq:matched_fixed_geometry}.  Finally, the tabular fit envelope
\eqref{eq:tabular_rate_expected} has statistical constant
$\Theta(\Vmax\sqrt H)$ for a shared $H$-state clock law and
$\Theta(\Vmax)$ on each singleton slice.  Substitution at $\nu=1/2$ proves
\eqref{eq:matched_tabular_geometry}.
\end{proof}

\subsection*{OA.4: Tie-sensitive equality at the score floor}\label{supp:oa4}

In Proposition~\ref{prop:model_floor_lower}, take $u=2\eta/\geff$ and $v=1-u$,
$0<\eta<\geff/2$.  If the harmful actions win the resulting ties at $y_1$
and the root, exact population updates give
\begin{equation}
V^\star(s)-V^{\pi_K}(s)=\geff^2(1-v)=\geff^2u=2\geff\eta.
\label{supp:eq:tie_floor}
\end{equation}
All other residuals vanish; without coordinated tie-breaking, use the main
$2\geff\eta-\epsilon$ bound.

\subsection*{OA.5: Polynomial-schedule consistency rate}\label{supp:oa5}

Let $a=(1-\alpha^\star)/2$ and $b=(1+2\xi^\star)/2$.  In the setting of
Corollary~\ref{cor:fresh_consistency}, suppose
$n_k\asymp k^r$, $\zeta_{n_k}=O(n_k^{-t})$, and
$\eps_k=O(k^{-s_0})$, with $r,t,s_0>0$.  For fixed $H$ and $K\ge2H$,
\begin{equation}
\E\|\Vstar-V^{\pi_K}\|_{1,\mu_S}
=O\!\left(\phi_2^{(H)}\left[
(\log K)^bK^{-ra}+K^{-rt/2}+K^{-s_0}\right]\right).
\label{supp:eq:polynomial_schedule}
\end{equation}
For $K\ge2H$, the boundary vanishes and active $k\asymp K$;
$m_{n_k}\asymp n_k$ at fixed $H$, so~\eqref{eq:fresh_consistency} applies.
The statistical term governs if $t\ge2a$ and $s_0\ge ra$.

\subsection*{OA.6: Monte Carlo score bound}\label{supp:oa6}

Assume arbitrary-query access, and let $A:=|\Aspace|$.  For each action, let
$Z_a$ be a centered $M$-sample average of conditionally independent variables
in $[-\Vmax,\Vmax]$; dependence across actions is allowed.  With deterministic
reward error $\eta_R$, the score error $e^M$ satisfies
\begin{equation}
\bigl(\E[e^M(s)^2]\bigr)^{1/2}
\le\eta_R+\frac{\geff\Vmax}{\sqrt M}
\min\!\left\{\sqrt A,\sqrt{2\{\log(2A)+1\}}\right\}.
\label{supp:eq:mc_score}
\end{equation}
Hoeffding and a union bound give
$\Pr(\max_a|Z_a|\ge t)\le
\min\{1,2A\exp[-Mt^2/(2\Vmax^2)]\}$.  Integrating at
$t_0^2=2\Vmax^2\log(2A)/M$ yields
$\E\max_a|Z_a|^2\le2\Vmax^2\{\log(2A)+1\}/M$; also
$\E\max_a|Z_a|^2\le\sum_a\E Z_a^2\le A\Vmax^2/M$.
Minkowski and $e^M\le\eta_R+\geff\max_a|Z_a|$ prove
\eqref{supp:eq:mc_score}.  The reward error is added once, not per action.

\subsection*{OA.7: Moment-localized score perturbations}\label{supp:oa7}

Let $E(x):=\max_a|q(x,a)-Q_V(x,a)|$ and let $D(x)$ be the true regret of the
$q$-greedy action.  Assume $\|E\|_{r,\rho}\le\delta_r$ with $p<r<\infty$.
For every $t>0$ at which the margin is valid at scale $2t$,
\begin{equation}
\|D\|_{p,\rho}^p
\le C_{\mathrm{marg}}(2t)^{p+\alpha}
+2^p\delta_r^r t^{p-r}.
\label{supp:eq:moment_localized}
\end{equation}
Indeed, on $\{E\le t\}$, $D\le2t$ and $D>0$ implies $\Delta_Q\le2t$; on
$\{E>t\}$, $D\le2E$ and
$\E[E^p\mathbf1\{E>t\}]\le\delta_r^r t^{p-r}$.

For $C_{\mathrm{marg}},\delta_r>0$, optimizing the right-hand side gives
\begin{equation}
\begin{aligned}
t_*&=\left[\frac{(r-p)\delta_r^r}
{2^\alpha(p+\alpha)C_{\mathrm{marg}}}\right]^{1/(r+\alpha)},\\
\|D\|_{p,\rho}&\lesssim_{p,r,\alpha}
C_{\mathrm{marg}}^{(r-p)/(p(r+\alpha))}
\delta_r^{r(p+\alpha)/(p(r+\alpha))}.
\end{aligned}
\label{supp:eq:moment_optimizer}
\end{equation}
This optimized bound requires the margin at scale $2t_*$; otherwise minimize
\eqref{supp:eq:moment_localized} over admissible thresholds and compare with
$\|D\|_{p,\rho}\le2\delta_r$.  If $\delta_r=0$, then $D=0$ almost everywhere.
At $r=p$ the split gives no improved power; the formal $r\to\infty$ limit
recovers the uniform-error exponents only with $L^\infty$ control.

\end{document}